\let\Pr\undefined
\def\Rset{\mathbb{R}}
\def\Hset{\mathbb{H}}
\DeclareMathOperator*{\E}{\mathbb{E}}
\DeclareMathOperator*{\Pr}{\mathbb{P}}
\DeclareMathOperator*{\argmax}{\rm argmax}
\DeclareMathOperator*{\argmin}{\rm argmin}
\DeclareMathOperator*{\var}{var}
\definecolor{Gray}{gray}{0.85}
\newcolumntype{g}{>{\columncolor{Gray}}c}
\newcommand{\iwal}{\textsc{iwal}}
\newcommand{\margin}{\textsc{margin}}
\newcommand{\eiwal}{\textsc{eiwal}} % e-iwal
\newcommand{\riwal}{\textsc{riwal}}
\newcommand{\oriwal}{\textsc{oriwal}} % opt-rb-iwal
\newcommand{\arbal}{\textsc{arbal}} % opt-rb-iwal
\newcommand{\splitregion}{\textsc{Split}} % opt-rb-iwal
\newcommand{\prodH}{\sH_{[\kappa]}}
\newcommand{\tp}{\mathtt{p}}
\newtheorem*{rep@theorem}{\rep@title}
\newcommand{\newreptheorem}[2]{%
\newenvironment{rep#1}[1]{%
 \def\rep@title{#2 \ref{##1}}%
 \begin{rep@theorem}}%
 {\end{rep@theorem}}}
\newcommand{\one}{1}
\newcommand{\set}[2][]{#1 \{ #2 #1 \} }
\newcommand{\h}{\widehat}
\newcommand{\cF}{\mathcal{F}}
\newcommand{\cH}{\mathscr{H}}
\newcommand{\cX}{\mathscr{X}}
\newcommand{\cY}{\mathscr{Y}}
\newcommand{\cZ}{\mathscr{Z}}
\newcommand{\gain}{\rho}
\newcommand{\e}{\epsilon}
\newcommand{\sD}{\mathscr{D}}
\newcommand{\sH}{\mathscr{H}}
\newcommand{\condsD}{\sD}
\newcommand{\slack}{\sigma_{T}}
\newcommand{\Slack}[1]{{\sigma_{T}}}
\newtheorem{theorem}{Theorem}
\newtheorem{lemma}[theorem]{Lemma}
\newtheorem{corollary}[theorem]{Corollary}
\newtheorem{proposition}[theorem]{Proposition}
\newcommand{\ignore}[1]{}
\icmltitlerunning{Adaptive Region-Based Active Learning}
\begin{document}

\twocolumn[
\icmltitle{Adaptive Region-Based Active Learning}

\icmlsetsymbol{equal}{*}
\begin{icmlauthorlist}
\icmlauthor{Corinna Cortes}{google}
\icmlauthor{Giulia DeSalvo}{google}
\icmlauthor{Claudio Gentile}{google}
\icmlauthor{Mehryar Mohri}{google,courant}
\icmlauthor{Ningshan Zhang}{stern}
\end{icmlauthorlist}

\icmlaffiliation{google}{Google Research, New York, NY}
\icmlaffiliation{courant}{Courant Institute of Mathematical Sciences, 
New York, NY}
\icmlaffiliation{stern}{New York University, New York, NY}

\icmlcorrespondingauthor{Ningshan Zhang}{nzhang@stern.nyu.edu}

\icmlkeywords{Active Learning, Region-Based}

\vskip 0.3in
]

\printAffiliationsAndNotice{}  % leave blank if no need to mention equal contribution
\begin{abstract}
  We present a new active learning algorithm that adaptively
  partitions the input space into a finite number of regions, and
  subsequently seeks a distinct predictor for each region, both phases
  actively requesting labels. We prove theoretical guarantees for
  both the generalization error and the label complexity of our
  algorithm, and analyze the number of regions defined by the algorithm
  under some mild assumptions. We also report the results of an extensive
  suite of experiments on several real-world datasets demonstrating substantial
  empirical benefits over existing single-region and non-adaptive
  region-based active learning baselines.
\end{abstract}

\section{Introduction}
\label{sec:intro}
\vskip -0.05in

In many learning problems, including document classification, image
annotation, and speech recognition, large amounts of unlabeled data
are at the learner's disposal at practically no cost. In contrast,
reliable labeled data is often more costly to acquire, since it
requires careful assessments by human labelers. To limit that cost, in
\emph{active learning}, the learner seeks to request as few labels as
possible to learn an accurate predictor. This is an attractive
learning scenario with significant practical benefits, which remains a
challenging theoretical and algorithmic setting.

\ignore{
Several active learning algorithms have been designed in the past for
both the \emph{pool setting}, where the learner has access to the full
set of unlabeled data, and the \emph{on-line setting} where unlabeled
i.i.d.\ samples are received sequentially and where the decision about
requesting a label must be made on-line. 
}

The literature on active learning is very broad. Thus, we 
give only a brief discussion of previous work here and refer the
reader to \citep{Dasgupta2011} for an in-depth survey of the main
algorithmic and theoretical ideas, as well as its current challenges.
For separable problems, \citet{cohn1994improving} introduced the
\textsc{cal} algorithm, which only requires a logarithmic number of
label requests, $\log (\frac 1 \e)$, to obtain $\e$-accuracy.  Later,
other on-line active learning algorithms for general hypothesis
classes and distributions were designed with guarantees both for
generalization and label complexity in the agnostic setting
\citep{freund1997selective,Balcan2006AgnosticAL,hanneke2007bound,
  dasgupta2008general,beygelzimer2009importance,
  beygelzimer2010agnostic,huang2015efficient,zhang2014beyond}, and in the
separable settings \cite{das04,gk17,no11,td17}.

The theoretical analysis of the label complexity 
of active learning for various hypothesis classes and data
distributions has been discussed in several publications
\citep{dasgupta2006coarse,castro2008minimax,koltchinskii2010rademacher,
  hanneke2015minimax,Hanneke2014,ml18}.
In particular, for hypothesis sets consisting of
linear separators, a series of publications gave
margin-based on-line active learning algorithms that admit
guarantees under some specific distributional assumptions \citep{dasgupta2005analysis,balcan2007margin,
  balcan2013active,awasthi2015efficient,Zhang2018EfficientAL}.

For all these algorithms, the hypothesis set or \emph{version space}
$\sH$ is fixed beforehand and, over time, as more labeled information
is acquired, it is gradually shrunk 
to rule out hypotheses too far from the best-in-class hypothesis.
This paper initiates the study of an
alternative family of algorithms where the hypothesis set $\sH$ is
first expanded over time before shrinking. Specifically, we consider
active learning algorithms that \emph{adaptively} partition the input
space into a finite number of disjoint regions, each equipped with the
hypothesis set $\sH$, and that subsequently seek a distinct predictor
for each region.  Such algorithms can achieve a substantially better
performance, as shown by our theoretical analysis and largely demonstrated
by our experiments.

\ignore{
This region-specific predictor is at least as accurate as the
global best-in-class predictor, but often it achieves a significantly
higher accuracy, as demonstrated by our experiments.

In fact, there are many real-world applications where the input space
consists of natural regions that define a partition, and the
best-in-class predictor within each region is likely more accurate and
distinct from the single overall best-in-class predictor. For
instance, in speech recognition, the input data may admit natural
clusters, each corresponding to different types of speech data, like
conversational or casual speech, broadcast news, and dictation
speech. There are significant differences among the properties of such
datasets. For example, in conversational speech, the topic, speaking
rate, and sentence length may vary considerably, while this may occur
far less frequently in broadcast news or dictation. Similar natural
clusters can often be found in other learning tasks.
}

The design of such algorithms raises several questions: How should the
input space be partitioned to ensure an improvement in overall
performance? How can labels be requested most effectively across
regions to learn an accurate predictor per region? Can we provide
generalization and label complexity guarantees? In this paper, we tackle these 
questions and devise an algorithm for this problem, called
Adaptive Region-Based Active Learning (\arbal), benefiting from
favorable theoretical guarantees. From a theoretical standpoint, there
are several challenging problems: ensuring that the region-specific
best-in-class hypothesis is not discarded, the selection of the
splitting criteria, and the dependency of the final generalization
bound on such criteria.

Of course, if a beneficial partition of the input space is available
to the learner, as assumed in the related work of
\citet{cortes2019rbal}, then no further work is needed to adaptively
seek one. In practice, however, such strong oracle information may not
be available and, even when a natural pre-partitioning of the input
space is available, without recourse to labeled data, it is not
guaranteed to help improve the generalization error. Furthermore, we
will not assume that dividing the input space is always
beneficial. However, if there exists indeed a partition such that a
region-specific predictor performs significantly better than a global
one, then, with high probability, \arbal\ will find it. Otherwise, no
split is made and \arbal\ works just like a single-region active
learning algorithm.  In practice, in almost all cases we tested,
\arbal\ splits the input space into multiple regions and achieves a
significant performance improvement.

Another line of work somewhat related to our paper is the hierarchical
sampling approach of \citet{dasgupta2008hierarchical} in the
\emph{pool-based setting} of active learning, further analyzed by
\cite{urner2013plal} and \cite{kpotufe2015hierarchical}, where the
learner receives as input a batch of unlabeled points to select
from. However, it is important to stress that the methods proposed in
those papers rely on (hierarchical) clusterability assumptions of the
data that help save labels, while, here, we are more concerned with a
problem in model selection for active learning, where splitting the
input space is likely to improve generalization rather than reducing
label complexity.

In summary, we present an active learning algorithm, \arbal, that
adaptively partitions the input space and performs region-based active
learning.
Our theoretical results (Theorem~\ref{thm:splitiwal2} and
Theorem~\ref{thm:splitiwal}) show that, remarkably, when the algorithm
splits the input space into $K$ regions, modulo a standard term in
$O(1/\sqrt{T})$ decreasing with the number of rounds $T$, the
generalization error of \arbal\ is close to $R^* - \gamma (K - 1)$,
where $R^*$ is the best-in-class error for the unpartitioned original
input space and $\gamma > 0$ a parameter of the algorithm. Thus, when
at least one split is made by \arbal\, ($K > 1$), then, for $T$
sufficiently large, the error of the algorithm is close to a quantity
strictly smaller than the original best-in-class error!
Moreover, we show that, under mild theoretical assumptions, \arbal\
indeed splits the original input space into multiple subregions
(Proposition~\ref{lemma:first_split_time} and
Corollary~\ref{cor:num_splits}). Our experiments confirm that this
almost always occurs (Section~\ref{sec:experiments}).
This significant theoretical improvement over even the original
best-in-class error is further corroborated by our extensive
experimental study with $25$ datasets where, in most cases,
\arbal\ achieves a better performance than the best 
active learning algorithm working with the original single region.

\ignore{
To summarize, we propose an active learning algorithm, \arbal, that
adaptively partitions the input space and performs region-based active
learning.  Our theoretical results (Theorem~\ref{thm:splitiwal2}) show
generalization guarantees for \arbal\ with respect to the {\em
  region-based} best-in-class error, which admits significant
improvement over single-region best-in-class error.  When the
algorithm splits the original input space into multiple subregions
which, under mild theoretical assumptions, does occur
(Proposition~\ref{lemma:first_split_time} and
Corollary~\ref{cor:num_splits}) and which experimentally almost always
occurs (Section~\ref{sec:experiments}), the improvement in
generalization error is very significant.  This improvement is also
corroborated by our extensive array of experiments.
}

The rest of this paper is structured as follows. In
Section~\ref{sec:prelim}, we introduce the preliminaries relevant to
our discussion and give a more formal definition of the learning
scenario. In Section~\ref{sec:alg}, we present our new
learning algorithm, \arbal, and justify its splitting criterion via
theoretical guarantees.
\ignore{
 which relies on the \iwal\ algorithm
\citep{beygelzimer2009importance} as a subroutine.  As explained in
Section \ref{sec:alg}, other base active learning algorithms could be
used instead of \iwal. \arbal\ learns to partition while actively
requesting labels, the partition being guided by expected improvements
in the generalization error of the best-in-class hypothesis after each
split. We justify our splitting criterion with theoretical
guarantees.
} 
In Section~\ref{sec:theory}, we provide generalization and
label complexity bounds for \arbal\ in terms of a key parameter for the
splitting criterion, and the number of regions partitioned. Moreover,
in Section~\ref{subsec:adagamma} 
we show that, under some natural assumptions about the data
distribution, \arbal\ benefits from guaranteed improvement over \iwal\
\citep{beygelzimer2009importance}. In Section~\ref{sec:experiments}, we
report the results of a series of experiments on multiple datasets,
demonstrating the substantial benefits of \arbal\ over existing
non-region-based active learning algorithms, such as \iwal\, and
margin-based uncertainty sampling, and over the nonadaptive
region-based active learning baseline \oriwal\ \citep{cortes2019rbal}.

\vskip -0.1in
\section{Learning scenario}
\label{sec:prelim}
\vskip -0.05in

We now discuss the learning scenario, starting with some
preliminary definitions.
Let $\cX \subseteq \Rset^D$ denote the input space,
$\cY = \set{-1, +1}$ the output space, and $\sD$ an unknown
distribution over $\cX \times \cY$. We denote by $\sD_\cX$ the
marginal distribution of $\sD$ over $\cX$ and, given a prediction
space $\cZ\subseteq \Rset$, we denote by
$\ell \colon \cZ \times \cY \to [0, 1]$ a loss function, which we
assume to be $\mu$-Lipschitz with respect to its first argument, for
some constant $\mu>0$. Let $\sH$ be a family of hypotheses consisting
of functions mapping $\cX$ to $\cZ$.
%a prediction space $\cZ\subseteq \Rset$.  
Then, the generalization error
or expected loss of a hypothesis $h \in \sH$ is denoted by $R(h)$ and
defined as $R(h) = \E_{(x, y) \sim \sD}[\ell(h(x), y)]$.

We consider the on-line setting of active learning where, at each
round $t \in [T] = \set{1, \ldots, T}$, the learner receives as input
a point $x_t \in \cX$ drawn i.i.d.\ according to $\sD_\cX$ and must
decide to request or not its label $y_t$. The decision is final
and cannot be retroactively changed.\ignore{ of not requesting the
  label $y_t$ is permanent, meaning that at any future round $t' > t$,
  the learner cannot change her past decision and ask for $y_t$.} At
the end of $T$ rounds, the learner returns a hypothesis
$\h h_T \in \sH$.
In this setting, two conflicting quantities determine the performance
of an on-line active learning algorithm: its label complexity, that
is, the number of labels it has requested over $T$ rounds, and the
generalization error $R(\h h_T)$ of the hypothesis it returns.

In the standard case where the hypothesis set $\sH$ is given
beforehand, the learner seeks a single best predictor from $\sH$.
Here, we consider instead the setup where the algorithm adaptively
partitions the input space $\cX$ into $K$ regions
$\cX_1, \ldots, \cX_{K}$, each equipped with a copy of the hypothesis set $\sH$ and with $K$ upper-bounded by some parameter
$\kappa \geq 1$.
Given the partition $\cX_1, \ldots, \cX_{K}$, the hypothesis $\h h_T$
returned by the algorithm after $T$ rounds admits the following form:
$\h h_T(x) = \sum_{k = 1}^{K} 1_{x \in \cX_k} \h h_{k, T}(x)$, where
$\h h_{k, T}$ is the hypothesis chosen after $T$ rounds by the algorithm for region
$\cX_k$.

Let $\tp_k = \Pr(\cX_k)$ denote the probability of region $\cX_k$ with
respect to $\sD_\cX$, $k \in [K]$, and let $R_k(h)$ denote the
conditional expected loss of a hypothesis $h$ on region $\cX_k$,
that is $R_k(h) = \E_{(x, y) \sim \sD}[\ell(h(x), y) | x \in \cX_k]$.  By
definition, we have $R(h) = \sum_{k = 1}^{K} \tp_k R_k(h)$ for any
hypothesis $h$. 
We assume the learner has access to large amounts of \emph{unlabeled}
data, which can be used to accurately estimate $\tp_k$.  In fact, our
results can be easily adapted to the case where the $\tp_k$s are
estimated via a collection of unlabeled examples requested on-the-fly.
While this would not add much to our analysis in terms of technical
difficulty, it would make the entire theoretical effort unnecessarily
more cluttered.

We denote by $h^* \in \sH$ the overall best-in-class hypothesis over $\cX$ (single region before any splitting)
 and by
$h_k^* \in \sH$ the $k$-th region's best-in-class hypothesis, that is,
$h^* = \argmin_{h \in \sH} R(h)$ and
$h_k^* = \argmin_{h \in \sH} R_k(h)$. We will also use as shorthand
the following notation: $R^* = R(h^*)$ and $R_k^* = R_k(h^*_k)$.

Observe that minimizing the generalization error within each region
$\cX_k$ individually is equivalent to minimizing the overall error
over the larger set
$\sH_{[K]}= \big\{ \sum_{k = 1}^{K} \one_{x \in \cX_k} h_k(x)
\colon h_k \in \sH \big\}$. Clearly, the performance of the
best predictor in $\sH_{[K]}$ is always at least as favorable as that
of the best predictor in $\sH$, but it can be considerably better,
especially when the algorithm chooses a large $K$, or when the local
performances of $h_k^*$s with large $\tp_k$ are substantially superior
to that of $h^*$ on the same regions.

\vskip -0.1in
\section{Algorithm}
\vskip -0.05in
\label{sec:alg}

\ignore{
In this section, we describe an 
We will show in the next section that this algorithm benefits
from favorable generalization and label complexity guarantees.
}

Our algorithm, called \arbal\, (Adaptive Region-Based Active Learning),
is an on-line active learning algorithm 
that \emph{adaptively} partitions the input space into subregions. 
\arbal\, adopts a label requesting policy similar to
that of the single-region \iwal\ algorithm of \citet{beygelzimer2009importance},
which is based on the largest possible disagreement among the current
set of hypotheses on the current input: at round $t$, given the hypothesis
set $\sH_t$ and input point $x_t \in \cX$, \iwal\ flips a coin
$Q_t \in \set{0, 1}$ with bias $p_t = p(x_t)$ defined by
\begin{equation*}
  p_t = \max_{h, h' \in \sH_t, y \in \cY} \ell(h(x_t), y) - \ell(h'(x_t), y). 
\end{equation*}
If $Q_t = 1$, then the label of $x_t$ is requested and the algorithm
receives $y_t$, otherwise no label is revealed. Since the loss
function $\ell$ takes values in $[0, 1]$, the requesting probability
$p_t \in [0, 1]$ is well defined. \iwal\ then seeks to shrink the
current set $\sH_t$ to reduce the querying probability $p_t$ for
future inputs, while, at the same time, keeping (with high
probability) the overall best-in-class hypothesis in this set.
At the end of $T$ rounds, \iwal\ returns the importance-weighted
empirical risk minimizer
$\h h_T = \argmin_{h\in\cH_T} \sum_{t=1}^T Q_t \ell(h(x_t), y_t)/
p_t$.

Our techniques and ideas for splitting are illustrated with \iwal, 
since \iwal\ works with any hypothesis set and bounded loss function,
and admits generalization guarantees with no distributional assumption.
In contrast, CAL \citep{cohn1994improving} assumes a separable case; 
DHM \citep{dasgupta2008general} and A$^2$ \citep{Balcan2006AgnosticAL} 
are designed for the 0-1 loss, and many other margin-based algorithms 
only work for linear classifiers. 
Furthermore, for the separable case 
($R^* = 0$), the recent work of \citep{cortes2019rbal} proposes an
enhanced version of \iwal, called \eiwal\,, whose label complexity is in
the order of $\log\big( \frac{|\sH|}{\epsilon} \big)$, thereby matching
the bound of CAL and DHM. That being said, our techniques can be easily 
applied to other algorithms available in the literature, so long as they have valid 
concentration bounds, such as Corollary 1 of the DHM paper \citep{dasgupta2008general},
and Theorem 1 of \citet{cortes2019rbal}.
In that case, we just need to change the splitting criterion accordingly,
and our theoretical analysis can then be easily adapted to the new
concentration bound.

Our algorithm can be viewed as an adaptive region-based version of
\iwal, where the label requesting policy just described and the
shrinking procedure are applied at the regional level.  As already
mentioned, the following
questions arise when designing the algorithm: (1) How should
we determine the regions? (2) Can we learn to adaptively partition the
input space into favorable subregions, using actively requested
labels? We now explicitly address both questions and describe
our algorithm in detail.

The pseudocode of \arbal\ is given in Algorithm~\ref{alg:arbal}. The
algorithm admits two phases: in the first phase (\emph{split phase}),
the algorithm partitions the input space into $K$
disjoint regions while actively requesting labels according to \iwal's policy
on the regional level. 
This phase is constrained by two input parameters:
$\kappa$ limits the maximum number of regions generated ($K \leq \kappa$),
and $\tau$ caps the maximal number of online rounds for this phase.
Section~\ref{subsec:split} describes in detail
the main subroutine of this phase, \splitregion\
(Algorithm~\ref{alg:split}), including the splitting conditions that
guarantee a significant improvement in generalization ability
resulting from the split. Whenever the algorithm decides to split,
each resulting region is equipped with a copy of the original
hypothesis set $\sH$. Notice that the algorithm actively selects
labels in this phase, even if it does not shrink the hypothesis set(s),
and thus it still requests fewer labels than passive learning.
For simplicity, the regions will be axis-aligned
rectangles, though more convoluted splitting shapes are clearly possible
(see Section \ref{sec:experiments}).

In the second phase (\emph{\iwal\ phase}), \arbal\ runs \iwal\
separately on each of the regions produced by the first phase,
to learn a good predictor per region. 
After $T$ rounds, \arbal\ returns $\h h_T$, which combines
region-specific importance-weighted empirical risk minimizers
$\h h_{k, T}$.
In Section~\ref{subsec:rbal}, we describe the \iwal\ phase, and
discuss its connections to \oriwal\  \citep{cortes2019rbal}.

%\begin{figure}[ht!]
%\begin{minipage}[t]{0.48\textwidth}
\begin{algorithm}[t!]
  \small
  \caption{\arbal$(\sH, \tau, \kappa, (\gamma_t)_{t \in [T]} )$} 
  \label{alg:arbal}
  \begin{algorithmic}
    \STATE $K \gets 1$, $\cX_1 \gets \cX$, $\cH_1 \gets \cH$
    \FOR {$t \in [T]$}
    \STATE Observe $x_t$; set $k_t \gets k$ such that $x_t \in \cX_k$
      \STATE $p_t \gets \displaystyle \max_{h,h' \in \sH_{k_t}, y\in\cY} 
    \ell(h(x_t),y_t) - \ell(h'(x_t),y_t)$
    \STATE $Q_t \gets \textsc{Bernoulli}(p_t)$
    \IF {$Q_t=1$}
    \STATE $y_t \gets \textsc{Label}(x_t)$
    \ENDIF
    \IF {$t\leq \tau$ and $K<\kappa$} 
        \STATE $\cX_l,\cX_r \gets \splitregion(\cX_{k_t}, \gamma_t)$ \COMMENT{\texttt{\small \small split phase}}
    \IF {split}
    \STATE $K\gets K+1$,
    $\cX_{k_t} \gets \cX_l$, $\cX_{K} \gets \cX_r$ 
    \STATE $\cH_{K} \gets \cH$, $\cH_{k_t} \gets \cH$ 
    \ENDIF
    \ELSE 
    \STATE $\sH_{k_t} \gets \textsc{Update}(\sH_{k_t})$ \COMMENT{\texttt{\small \small IWAL phase}}
    \ENDIF
    \ENDFOR
    \STATE {\bf return $ \h h_T \gets \sum_{k=1}^{K} 1_{x\in \cX_k} \h h_{k, T}$} 
  \end{algorithmic}
\end{algorithm}
%\end{minipage}
\hfill
%\begin{minipage}[t]{0.48\textwidth}
\begin{algorithm}
  \small
  \caption{\splitregion$(\cX_k, \gamma)$} 
  \label{alg:split}
  \begin{algorithmic}
    \FOR {$d\in [D]$ and $c\in \Rset$}
    \STATE $(\cX_l, \cX_r) \gets \textsc{RegSplit}(\cX_k, d, c)$
    \STATE $\gamma_{d, c} \gets \tp_{k} 
    \Big[ L_{k, t}(\h h_{k, t}) - L_{k, t}(\h h_{lr,t}) -
    \sqrt{\frac{2 \slack}{T_{k,t}}}\Big]$ %\eta_{k, t} \Big]$
    \ENDFOR
    \STATE $(d^*, c^*) \gets \argmax_{d\in[D], c\in \Rset} \gamma_{d, c}$
    \IF {$\gamma_{d^*, c^*} \geq \gamma$ }
    \STATE $\cX_l^* \gets \set{x\in\cX_k \colon x[d^*] \leq c^*}$ \COMMENT{\texttt{\small \small split}}
    \STATE $\cX_r^* \gets \set{x\in\cX_k \colon x[d^*] > c^*}$
    \STATE \textbf{return} $\cX_l^*,\cX_r^*$
    \ELSE
    \STATE \textbf{return} $\emptyset$ \COMMENT{\texttt{\small \small no split}}
    \ENDIF
  \end{algorithmic}
\end{algorithm}
%\end{minipage}
\vskip -0.2in
\ignore
{
\caption{The \arbal\, algorithm. In the pseudocode of \splitregion, the subroutine 
$\textsc{\small RegSplit}(\cX_k, d, c)$ computes the axis-aligned partitioning of region $\cX_k$
  according to coordinate $d$ and threshold $c$, and 
  $\eta_{k, t} = \sqrt{{2\kappa D \log(8|\sH|^3 T_{k, t}
      (T_{k, t} + 1)\kappa TD/\delta)}/{T_{k, t}}}$, where $T_{k,t}$ is defined in Section \ref{subsec:split}.}
}
%\vskip -0.2in
%\end{figure}

One question naturally arises: Why do we separate the learning horizon
into two phases, where we first determine the partition, and then
perform region-based learning?  Given all possible partitions of the
input space, why not running \iwal\ with the family of hypotheses
containing all possible partitions of $\cX$ with leaf predictors
$h_k \in \cH$, that is,
$\Hset = \set[\big]{\sum_{k = 1}^\kappa 1_{x \in \cX_k} h_k \colon h_k
  \in \cH, \cup_{k = 1}^\kappa \cX_k = \cX, \cX_k \cap \cX_{k'} =
  \emptyset \text{ for } k\ne k'}$?  First, $\Hset$ is an exceedingly
complex hypothesis set, whose complexity can lead to vacuous learning
guarantees. Second, its computational cost makes it prohibitive to use
with \iwal.  Moreover, even if we fix the
partition and only vary the predictors in the leaf nodes, as proven in
Appendix~\ref{app:disagreement}, running \iwal\ with $\Hset$ may cost
up to $\kappa$ times more labels than running \iwal\ separately within
each partitioned region.  For all these reasons, we adopt the
  two-phases learning framework.

\vskip -0.1in
\subsection{\splitregion\ phase}
\label{subsec:split}
\vskip -0.05in

The advantage of region-based learning hinges on the improvement in
the best-in-class error after each split, which motivates 
our splitting subroutine: \splitregion\ 
splits a region if and only if the best-in-class error is likely to improve 
by a strictly positive amount. We will show in Corollary~\ref{thm:split} 
that, with high-probability,
the best-in-class error is guaranteed to decrease from each split.

\ignore{Ideally, \arbal\ splits a region if and only
if the best-in-class error is likely to improve by a strictly positive
amount.  The splitting subroutine \splitregion\, follows this
criterion.  Corollary~\ref{thm:split} below gives high-probability
guarantees for the error reduction resulting from the split.}

\ignore{The pseudocode of \splitregion\ is given in
Algorithm~\ref{alg:split}. At time $t$, \splitregion\ determines
whether and how to split the current region $\cX_{k}$ into two
subregions.Thus far, we have not made any assumption about the
definition of the subregions. Our splitting criterion is agnostic
to their shape, thus any hierarchical partitioning method could be used in
the splitting phase. For instance, we can split a region via an arbitrary
separating hyperplane or via hierarchical clustering, that is,
determine two new centers and assign points to the closest center. To
fix ideas, in the rest of this paper, we will adopt the axis-aligned
splitting method commonly used for (binary) decision trees: region
$\cX_{k}$ is split according to a single-coordinate inequality of the
form $x_d \leq c$, where $x_d$ is the $d$-th coordinate of input $x$ \ignore{ in the input
space $ \cX \subseteq \Rset^D$}, $d \in [D] := \set{1, \ldots, D}$, and
$c \in \Rset$ is an appropriate threshold.
In the pseudocode of \splitregion, the subroutine 
$\textsc{\small RegSplit}(\cX_k, d, c)$ computes the axis-aligned partitioning of region $\cX_k$
according to coordinate $d$ and threshold $c$, and 
$\eta_{k, t} = \sqrt{{2\kappa D \log(8|\sH|^3 T_{k, t}(T_{k, t} + 1)\kappa TD/\delta)}/{T_{k, t}}}$, 
where $T_{k,t}$ is defined below.
}

The pseudocode of \splitregion\ is given in
Algorithm~\ref{alg:split}. 
At time $t$, \splitregion\ searches for the most favorable choice of the 
splitting parameters $(d, c)$ as follows.
Adopting the axis-aligned
splitting method commonly used for (binary) decision trees:
for a fixed pair $(d, c)$, the algorithm calls subroutine 
$\textsc{\small RegSplit}(\cX_k, d, c)$ to split
$\cX_{k}$ into a left region $\cX_l$ ($x_d\leq c$) and a right region $\cX_r$ ($x_d > c$),
and then computes a \emph{confidence gap}
$\gamma_{d, c}$ as defined in Algorithm~\ref{alg:split},
where $L_{k, t}(h)$ denotes the importance-weighted empirical risk of
hypothesis $h$ on region $\cX_k$,
\[L_{k, t}(h) = \frac{1}{T_{k, t}} \sum_{s \in [t], x_s \in \cX_k }
\frac{Q_s}{p_s} \, \ell(h(x_s),y_s) ,\] 
where $T_{k, t} = |\set{s \in [t] \colon x_s \in \cX_k}|$ is the number
of samples that have been observed in region $\cX_k$ up to time $t$, 
and $\slack = \kappa D \log\Big[\mfrac{8T^3|\sH|^3 \kappa D}{\delta}\Big]$ denotes the slack term.
Furthermore,
$\h h_{k, t} =\argmin_{h \in \sH} L_{k, t}(h)$ denotes the empirical risk
minimizer (ERM) on $\cX_{k_t}$.
Similarly, $\h h_{l,t}$ and $\h h_{r,t} $ denote the ERM of
region $\cX_l$ and $\cX_r$, respectively, and 
$\h h_{lr, t} = 1_{x \in \cX_l} \h h_{l, t} + 1_{x \in \cX_r} \h h_{r,
t}$ is the combination of the two region-specific ERMs.   
The confidence gap $\gamma_{d,c}$ serves as a conservative estimate of 
the improvement in the best-in-class error.
\splitregion\ searches for the maximum confidence gap
over all distinct pairs:
$(d^*, c^*) = \argmax_{d, c} \gamma_{d, c}$. When $\gamma_{d^*, c^*}$
is larger than the pre-specified threshold parameter $\gamma$,
it splits with $(d^*, c^*)$ and allocates to the two newly created 
regions the same initial hypothesis set $\sH$ (see Algorithm~\ref{alg:arbal}),
otherwise it does not split.

To implement the \splitregion\ subroutine, we maintain an array of region labels of past samples,
and $D$ sorted arrays of past samples according to each of the $D$ coordinates.
At time $t$, for each coordinate $d\in[D]$, it takes $O(\log(t))$ to insert 
$x_t$ into the sorted array, and $O(t)$ to compute the key term $L_{k,t}$ for all 
$t+1$ splitting thresholds on the sorted array.
Here, we use the fact that although there are infinitely many
possible splitting threshold values, we only need to consider $t + 1$ many 
thresholds to distinguish the $t$ feature values $\set{x_{s, d}\colon s \in [t]}$.
It also takes $O(t)$ to update the region labels of past samples after split, and
thus a total of $O(tD)$ to run \splitregion\ at time $t$.
Furthermore, as already mentioned in Section~\ref{sec:prelim}, we assume access 
to a set $U$ of unlabeled samples to estimate all the $\tp_k$s. To do so, we maintain a binary
tree corresponding to the splits. A new split converts a leaf node
$u_i$ with number of elements $|u_i| \leq |U|$
into an internal node at the cost of $O(|u_i|)$. The cost of updating the tree for the $\kappa$ splits in order to
estimate all $\tp_k$s  is hence $O(\sum_{i=1}^\kappa |u_i|)$, 
where the sum is over all
the internal nodes of the tree.

Alternatively, these probabilities can be
estimated incrementally during the on-line execution of the algorithm, and our theoretical analysis 
can be extended along these lines using a union bound similar to the one 
in Lemma~\ref{lemma:split_concentrate}'s proof. 
\ignore
{
Nevertheless, in our experiments in 
Section~\ref{sec:experiments}, the $\tp_k$s have been
estimated on a held-out set of unlabeled samples collected before 
running the on-line active learning algorithms.
}

We now introduce some additional notation before discussing the theoretical guarantees of the \splitregion\
algorithm. Let $h_k^*, h_{l}^*, h_{r}^*$ be the 
best-in-class predictors on region $\cX_k$, $\cX_l$, and $\cX_r$, respectively,
and denote by $h_{lr}^* = 1_{x \in \cX_l} h_{l} ^* + 1_{x \in \cX_r} h_{r}^*$.
Then, the improvement in the best-in-class error after this split is
$\tp_k [R_k(h_k^*) - R_k(h_{lr}^*)]$.  The following
concentration lemma relates the improvement in the best-in-class error
to its empirical counterparts, which leads to the theoretical guarantee 
for the \splitregion\ subroutine (Corollary~\ref{thm:split}).
Its proof uses a martingale concentration bound, as well as covering number techniques 
to guarantee that the high-probability bound holds uniformly for any possible sequence of splitting.
The proof is given in Appendix~\ref{app:proof}.

\begin{lemma}
\label{lemma:split_concentrate}
  With probability at least $1 - \delta/4$, for all binary trees with (at most) $\kappa$ leaf nodes,
  the improvement in the minimal empirical error by splitting 
  concentrates around the improvement in the best-in-class error:
   \begin{align*}
     & \Big|
     \big[R_k(h_k^*) \!-\! R_k(h_{lr}^*)\big] \!-\! 
     \big[L_{k, t}(\h h_{k, t}) \!-\! L_{k, t}(\h h_{lr,t})\big]\Big|  
     \! \leq \!\sqrt{\mfrac{2\slack}{T_{k,t}}}.
     \ignore{&\leq \sqrt{{2\kappa D \log\big(2|\sH|^3 T_{k, t} (T_{k, t}+1) 
     \kappa TD/\delta\big)} / {T_{k, t}}}.}
   \end{align*}
 \end{lemma}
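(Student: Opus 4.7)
The plan is to reduce the statement to a uniform concentration inequality for the importance-weighted empirical risk $L_{k,t}(h)$ around the true risk $R_k(h)$, simultaneously over all regions and hypotheses that could arise, and then to transfer this back to the ERM-vs-best-in-class comparison via ERM optimality and a triangle inequality.

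First, I would fix a candidate tree structure (hence fixed regions $\cX_k,\cX_l,\cX_r$) and a fixed triple of hypotheses $(h_k,h_l,h_r)\in\sH^3$, and apply a martingale concentration bound (Azuma--Hoeffding) to the martingale differences
\[
\tfrac{Q_s}{p_s}\bigl[\ell(h(x_s),y_s)-\ell(h'(x_s),y_s)\bigr]\,\1_{x_s\in\cX_k}
-\bigl[\ell(h(x_s),y_s)-\ell(h'(x_s),y_s)\bigr]\,\1_{x_s\in\cX_k}.
\]
The crucial observation is that $|\ell(h(x_s),y_s)-\ell(h'(x_s),y_s)|\le p_s$ by definition of $p_s$, so the first term is bounded by $1$ in absolute value, and Azuma applies to give concentration on the scale $\sqrt{\log(1/\delta)/T_{k,t}}$. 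Working with differences (rather than raw $L_{k,t}(h)$) is what makes the importance weights well-behaved.

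Second, I would take a union bound over three sources of choice to make the concentration uniform. (i) Over hypothesis triples $(h_k,h_l,h_r)\in\sH^3$, contributing a $|\sH|^3$ factor. (ii) Over all binary trees with at most $\kappa$ leaves built from axis-aligned splits: at time $t$, only $tD$ distinct (coordinate, threshold) pairs matter, and with at most $\kappa-1$ internal nodes a covering-number / growth-function argument bounds the number of realizable partitions by roughly $(tD)^{\kappa D}$ (or a similar polynomial), contributing a $\kappa D\log(tD)$ factor in the exponent. (iii) Over the time horizon $t\in[T]$, contributing a $\log T$ factor. Combining these is exactly what produces the quantity $\slack=\kappa D\log\bigl[8T^3|\sH|^3\kappa D/\delta\bigr]$ inside the square root.

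Third, I would pass from the uniform deviation statement to the lemma via ERM optimality. Writing $A=R_k(h_k^*)-L_{k,t}(\h h_{k,t})$ and $B=R_k(h_{lr}^*)-L_{k,t}(\h h_{lr,t})$, the inequalities $L_{k,t}(\h h_{k,t})\le L_{k,t}(h_k^*)$ and $R_k(h_k^*)\le R_k(\h h_{k,t})$ (and their analogues for the $lr$-pair) imply $|A|,|B|\le\sup_{h\in\sH}|L_{k,t}(h)-R_k(h)|$ on the region $\cX_k$. Expressing the desired quantity as $A-B$ and applying the triangle inequality then gives the stated bound.

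The main obstacle is step (ii): the partitions are chosen \emph{adaptively} by \arbal\ based on observed labels, so one cannot simply union-bound over a pre-fixed class of trees. This is why the proof must proceed via a covering number / growth function argument over the class of all axis-aligned $\kappa$-leaf partitions whose cuts lie among the $tD$ data-induced candidates, so that the high-probability bound holds for every possible execution of the algorithm. A secondary subtlety is the random denominator $T_{k,t}$, which I would handle either by conditioning on $(\1_{x_s\in\cX_k})_s$ so that $T_{k,t}$ becomes deterministic before applying Azuma, or by a standard peeling argument over dyadic values of $T_{k,t}$ (absorbed into the $\log T$ factor in $\slack$).
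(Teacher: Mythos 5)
Your overall plan (pairwise concentration, union bound over hypotheses/trees/time, discretization of thresholds, handling of the random denominator $T_{k,t}$) matches the paper's structure. But the ERM step as you wrote it has a genuine gap, and it is precisely the step where the pairwise trick has to be carried all the way through.

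You set up the martingale increments as \emph{differences} $\frac{Q_s}{p_s}[\ell(h(x_s),y_s)-\ell(h'(x_s),y_s)]\1_{x_s\in\cX_k}$, which is correct and essential: only the difference is bounded by $p_s$, so only the difference gives a bounded importance-weighted term. The paper does the same, working with pairs $(f,g)\in\sH\times\sH^2$ where $\sH^2 = \{\1_{x\in\cX_l}h_1 + \1_{x\in\cX_r}h_2\}$ (an equivalent packaging of your $(h_l,h_r)$, with the same $|\sH|^3$ union count), and notes that $\sup_{g\in\sH^2}|\ell(f,g;x,y)| = \sup_{g\in\sH}|\ell(f,g;x,y)|\le p_t$. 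But your third step then reduces the target to $|A|+|B|$ with $|A|,|B|\le\sup_{h}|L_{k,t}(h)-R_k(h)|$. That is a \emph{single-function} uniform deviation, and it is exactly what the pairwise trick was introduced to avoid: $\frac{Q_s}{p_s}\ell(h(x_s),y_s)$ is not bounded, so nothing you proved controls $\sup_h|L_{k,t}(h)-R_k(h)|$. The concentration you have is only for pairwise differences.

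The fix, which is what the paper does, is to chain the ERM optimality inequalities so that you only ever invoke pairwise concentration on the two specific cross pairs $(h_k^*,\h h_{lr,t})\in\sH\times\sH^2$ and $(\h h_{k,t},h_{lr}^*)\in\sH\times\sH^2$:
\begin{align*}
R_k(h_k^*)-R_k(h_{lr}^*) &\ge R_k(h_k^*)-R_k(\h h_{lr,t})\\
&\ge L_{k,t}(h_k^*)-L_{k,t}(\h h_{lr,t})-\Delta_{T_k}\\
&\ge L_{k,t}(\h h_{k,t})-L_{k,t}(\h h_{lr,t})-\Delta_{T_k},
\end{align*}
and symmetrically for the upper direction. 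This uses only pairwise concentration, and also avoids the factor-of-$2$ you would incur from $|A-B|\le|A|+|B|$, which would not match the stated constant.

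Two minor points. First, you invoke Azuma, but during the split phase $\sH$ is held fixed, so $p_t=p(x_t)$ depends only on $x_t$; the paper exploits this to make the $Z_t$ literally i.i.d.\ and apply Hoeffding directly (your martingale route also works, it is just unnecessary here, and the i.i.d.\ observation is worth recording since it is what distinguishes this phase from ordinary \iwal). Second, your estimate $(tD)^{\kappa D}$ for the number of realizable trees differs in exponent from the paper's $(\kappa C D)^{\kappa}$ count of trees times the $(1/\e)^D$ covering of thresholds (with $\e = 1/T$ and the $\mu$-Lipschitzness of the loss to justify the discretization); either bookkeeping yields a term of order $\kappa D\log(T|\sH|\kappa D/\delta)$, but you should be explicit about the Lipschitz covering step rather than appealing to a growth function, since the loss itself (not just the partition) must be covered.
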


\begin{corollary}
\label{thm:split}
  With probability at least $1-\delta/4$, for all splits made by \arbal,
  the improvement in the best-in-class error is at least $\gamma_t$,
  where $\gamma_t$ is the threshold at {the time of split}.
\ignore{Let region $\cX_{k}$ be split at time $t$ into $\cX_l$ and $\cX_r$ with
threshold $\gamma$. 
Then with probability at least $1 - \delta$ the improvement in the error of the best in class
satisfies $\tp_k \big[R_k(h_k^*) - R_k(h_{lr}^*) \big] \geq \gamma$.}
\end{corollary}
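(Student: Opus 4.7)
The plan is to derive Corollary~\ref{thm:split} directly from Lemma~\ref{lemma:split_concentrate}, exploiting the fact that the lemma's high-probability statement is already uniform over all binary trees with at most $\kappa$ leaves. So the one-shot event in the lemma already covers every candidate split that \arbal\ could ever entertain, and no additional union bound will be needed.

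First, I will condition on the event of Lemma~\ref{lemma:split_concentrate}, which holds with probability at least $1 - \delta/4$. On this event, for every region $\cX_k$ encountered and every candidate axis-aligned split of $\cX_k$ into $(\cX_l, \cX_r)$, we have
\[
R_k(h_k^*) - R_k(h_{lr}^*) \;\geq\; L_{k,t}(\h h_{k,t}) - L_{k,t}(\h h_{lr,t}) - \sqrt{\tfrac{2\slack}{T_{k,t}}}.
\]
Multiplying both sides by $\tp_k \ge 0$ yields $\tp_k\bigl[R_k(h_k^*) - R_k(h_{lr}^*)\bigr] \geq \gamma_{d,c}$, where $\gamma_{d,c}$ is the confidence gap computed in Algorithm~\ref{alg:split} for the split coordinates $(d,c)$.

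Next, I will invoke the decision rule of \splitregion. By construction, \arbal\ executes a split at time $t$ only if the maximum $\gamma_{d^*,c^*} = \max_{d,c}\gamma_{d,c}$ is at least the current threshold $\gamma_t$. Combining this with the inequality above, for the split actually performed we obtain
\[
\tp_k\bigl[R_k(h_k^*) - R_k(h_{lr}^*)\bigr] \;\geq\; \gamma_{d^*,c^*} \;\geq\; \gamma_t,
\]
which is exactly the improvement claim of the corollary. Since the underlying event has probability at least $1 - \delta/4$, the conclusion holds with this same probability simultaneously across all splits performed by the algorithm.

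I do not anticipate a real obstacle here: the entire substance is already packed into Lemma~\ref{lemma:split_concentrate}, whose uniform concentration (via martingale plus covering arguments over all binary trees of at most $\kappa$ leaves) is precisely what ensures that the single failure event of mass $\delta/4$ controls every split, rather than having to pay a union bound over time steps or tree nodes. The only subtlety to flag explicitly in the write-up is that the lemma's uniformity already accounts for all possible sequences of splits, so applying it to the specific random sequence chosen by \arbal\ is legitimate; then the corollary reduces to a mechanical substitution into the splitting rule.
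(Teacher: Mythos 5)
Your proof is correct and takes essentially the same route as the paper: condition on the uniform high-probability event of Lemma~\ref{lemma:split_concentrate}, multiply the resulting inequality by $\tp_k$ to recover the confidence gap $\gamma_{d,c}$, and then invoke the \splitregion\ rule $\gamma_{d^*,c^*}\ge\gamma_t$ to conclude. The paper's proof is a compressed version of exactly these steps, so there is nothing substantive to add.
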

Corollary~\ref{thm:split} guarantees that, with high-probability, 
whenever \arbal\ splits, the best-in-class error is strictly improved by at least $\gamma_t>0$.
This yields the fundamental advantage of region-based learning.

One challenge \arbal\ faces is that, whenever it chooses to split, it
commits to competing against a more accurate predictor, that is the
region-specific best-in-class hypothesis on the refined regions. To
ensure success, we need to guarantee not only that the best-in-class
over the current region or those over subregions after the split are
not pruned out, but also that the best-in-class hypothesis over any
\emph{future}\/ region produced after further splitting remains in the
hypothesis space that will be given as input to \arbal's second phase.

One can show that, if \arbal\ prunes out some hypotheses before the
split phase has ended, it may lose the future best-in-class predictor,
and thus fail dramatically.  As a simple example, consider the binary
classification problem depicted in Figure~\ref{fig:region-bic}, where
the unlabeled data is uniformly distributed within a square, and the
true classification boundary admits a zig-zag shape
(the left plot of Figure~\ref{fig:region-bic}).
If the learner uses the class of linear separators as the initial hypothesis
set $\sH$, then, after receiving a certain number of labeled samples,
it finds that the best performing hypothesis is
approximately the diagonal separator from bottom left to top right. 
Suppose the algorithm would now trim $\sH$ to only maintain
separators performing similarly to the diagonal separator, with
decision surfaces indicated by the shaded area in the middle plot of
Figure~\ref{fig:region-bic}. If later on, the learner splits
the input space (the square) into two regions (left and right
rectangles in the right plot of Figure~\ref{fig:region-bic}), then the
best-in-class separators for the two rectangles are horizontal
separators. Clearly, the two horizontal best-in-class separators are
not contained in the current $\sH$ (which is meant to apply to the
entire input space).  In summary, trimming $\sH$ before
making splits introduces the risk of losing the best-in-class
separators on the partitioned regions.  This is the reason why \arbal\
maintains throughout the split phase the original hypothesis space
$\sH$. The shrinkage of $\sH$ only takes place during the \iwal\
phase, presented next.

\begin{figure}[t!]
 \begin{center}
     \centerline{\includegraphics[width=\columnwidth]{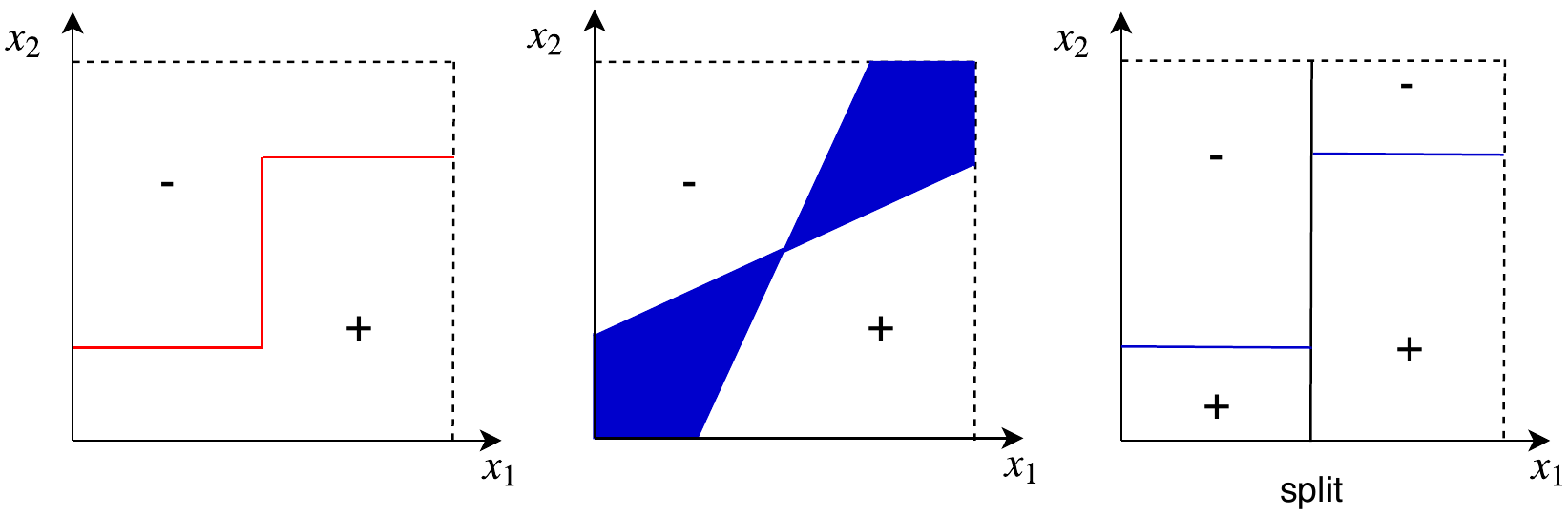}}
     \caption{ The input space $\cX$ is a (2-dimensional) square. Left:
   the true classification boundary (in
   \textcolor{red}{red}) as a function of $x_1$ and $x_2$. Middle:
   (approximately) the current hypothesis space
   (\textcolor{blue}{shaded blue} area) after trimming an initial set
   of linear separators given several labeled samples.
   Right: the best-in-class predictors (in \textcolor{blue}{blue}) when the
   input space $\cX$ splits into two regions $\cX_l$ and $\cX_r$ from
   the middle of $x_1$. }
\label{fig:region-bic}
 \end{center}
  \vskip -0.2in
\end{figure}

\vskip -0.1in
\subsection{\iwal\ phase}
\label{subsec:rbal}
\vskip -0.05in

In this phase, with the regions $\cX_1,\ldots, \cX_{K}$ being fixed,
\arbal\ runs a separate \iwal\ subroutine on each one of them,
requesting labels and reducing the hypothesis space from $\sH$ to
region-specific $\sH_{k}, \forall k\in[K]$. As the algorithm requests
labels, $\sH_{k}$ shrinks towards the best-in-class hypothesis on
region $\cX_k$. The hypothesis space is updated according to the
\iwal\ update rule, which is derived from the concentration bound.
Specifically, we
update the hypothesis space $\sH_{k, t}$ sitting on region $\cX_k$ at time $t$ by
\[
   \sH_{k, t} \!= \!\set[\Big]{\!h \in \sH_{k, t-1}\colon \!L_{k, t}(h) 
   \!\leq \! \min_{h\in \sH_{k, t-1}} \!\! L_{k, t}(h) 
   + \sqrt{\mfrac{8\slack}{T_{k, t}}}}.
\]
Thus, in this phase, \arbal\ freezes the regions
$\cX_1,\ldots, \cX_{K}$, allowing no further splits, and requests
labels and shrinks the set of hypotheses hosted by each such region.

Starting with a fixed partition makes the second
phase of \arbal\ very similar to the learning scenario recently
investigated by \citet{cortes2019rbal}, who proposed the \oriwal\
algorithm for this learning scenario.  In particular, during the
second phase, we could also run the \oriwal\ algorithm to achieve
additional improvement in generalization error.  Since \oriwal\ is
orthogonal to the main contribution of this paper, we do not discuss
it at length here.  \ignore{\citet{cortes2019rbal} additionally proposed an
enhanced version of \iwal, called \eiwal, which uses a more greedy
policy for pruning the hypothesis space, resulting in sharper
performance guarantees than the original \iwal, especially when the
best-in-class error $R(h^*)$ is small. We could in fact use \eiwal\
instead of \iwal\ in the second phase of \arbal\ on each
region. Extending the guarantees from \eiwal\ to \arbal\ is no
different than extending from \iwal\ to \arbal, hence for ease of
presentation this extension is omitted from this paper.}

\vskip -0.1in
\section{Theoretical analysis}
\label{sec:theory}
\vskip -0.05in

In this section, we present generalization error and label complexity 
guarantees for the \arbal\ algorithm. We first need some definitions and concepts from
\citet{beygelzimer2009importance}.  Define the distance $\rho(f, g)$
between two hypotheses $f, g \in \sH$ as
$\rho(f, g) = \E_{(x,y) \sim \sD} \left| \ell(f(x), y) - \ell(g(x), y)
\right|$.\footnote{This definition of $\rho(f,g)$ slightly differs
  from the original definition in \citet{beygelzimer2009importance},
  and it improves the label complexity bound of \iwal\ by a constant.
  See Appendix~\ref{app:iwal} for more details.}  \ignore{Given $r > 0$, let
$B(f, r)$ denote the ball of radius $r$ centered in $f \in \sH$:
$B(f, r) = \set{g \in \sH \colon \rho(f, g) \leq r}$.}  The generalized
disagreement coefficient $\theta(\sD, \sH)$ of a class of functions
$\sH$ with respect to distribution $\sD$ is defined as the minimum value of
$\theta$, such that for all $r > 0$,
\[
    \E_{x \sim \sD_\cX}\bigg[\sup_{h \in \cH,\rho(h,h^*)\leq r, y \in \cY}
\big| \ell(h(x), y) - \ell(h^{*}(x), y) \big| \bigg] \leq \theta r~.
\]
Since \arbal\ calls \iwal\ as a subroutine, the theoretical results of
\arbal\ directly depend on those of \iwal , which
are summarized in Theorem~\ref{thm:iwal} in Appendix~\ref{app:iwal}.

Recall the definition of the confidence gap $\gamma$ in
Algorithm~\ref{alg:split}, which is the minimum value of the confidence
gap $\gamma_{d,c}$ that allows \arbal\ to split a region. 
We discuss \arbal\ under two settings: using a fixed threshold
$\gamma$, and using a time-varying and data-dependent adaptive threshold $\gamma_t$.

\vskip -0.1in
\subsection{\arbal\ with a fixed $\gamma$}
\label{sec:theory1}
\vskip -0.05in

Suppose we run \arbal\ with a fixed threshold $\gamma$.  The label
complexity of the algorithm depends on the region-based disagreement
coefficient $\theta_k = \theta(\condsD_k, \sH)$, where
$\condsD_k = \sD|\cX_k$ is defined as the conditional distribution of
$x$ on region $\cX_k$.  Let
$\theta_{\max} = \max_{k\in K} \theta_k$ denote the maximum
disagreement coefficient across regions, and let
$r_0 = \max_{h \in \sH} \rho(h, h^*)$. 
Let $\cF_t$ denotes the $\sigma$-algebra generated by
$(x_1, y_1, Q_1), \ldots, (x_t, y_t ,Q_t)$. 
\begin{theorem}
\label{thm:splitiwal2}
Assume that a run of \arbal\ over $T$ rounds has split the input space into
$K$ regions.  Then, for any $\delta > 0$, with probability at least
$1 - \delta$, the following inequality holds:
  \begin{align*}
    R(\h h_T)
    \leq R_U +  \sqrt{\frac{32 K \slack}{T} }
    + \frac{16K\slack}{T},
  \end{align*}
where $R_U = R^* - \gamma (K-1)$ 
is an upper bound on the best-in-class error obtained by \arbal.
Moreover, with probability at least $1 - \delta$, the expected number of 
labels requested, $\tau_T= \sum_{t=1}^T \E_{x_t\sim\sD_{\cX}} \big[p_t |\cF_{t-1}\big]$,
satisfies
  \begin{align*}
    \tau_T
    & \leq \min \set{2\theta r_0, 1} \tau 
    + 4\theta_{\max}(T - \tau) 
    \Big[ R_U 
    + 8 \sqrt{\mfrac{K \slack}{T - \tau}} \Big] \\
& +  \sqrt{32}K\slack .
  \end{align*}
\end{theorem}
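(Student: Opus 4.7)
The plan is to decompose the analysis according to \arbal's two-phase structure and, within each phase, invoke the single-region guarantees of \iwal\ (Theorem~\ref{thm:iwal}) together with the split-quality guarantee of Corollary~\ref{thm:split}. For the generalization bound, I would start from the identity $R(\h h_T) = \sum_{k=1}^{K} \tp_k R_k(\h h_{k,T})$. Conditional on the $K$-region partition produced by the split phase, $\h h_{k,T}$ is exactly the output of an \iwal\ run on $\cX_k$ based on the $T_k$ samples that landed in that region, so Theorem~\ref{thm:iwal} gives $R_k(\h h_{k,T}) - R_k(h_k^*) = O\bigl(\sqrt{\slack/T_k} + \slack/T_k\bigr)$ on a $1-\delta/(4K)$ event per region. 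A union bound over the $K \le \kappa$ regions and over the event of Lemma~\ref{lemma:split_concentrate} costs only constants inside $\slack$. Corollary~\ref{thm:split} then yields $\sum_k \tp_k R_k(h_k^*) \le R^* - \gamma(K-1) = R_U$, because each of the $K-1$ splits executed by \arbal\ reduced the combined best-in-class error by at least its threshold $\gamma$.

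The regional slacks are aggregated by Cauchy--Schwarz: writing $T_k \approx \tp_k T$, we get $\sum_k \tp_k \sqrt{\slack/T_k} \le \sqrt{\sum_k \tp_k \cdot \sum_k \tp_k \slack/T_k} \le \sqrt{K\slack/T}$ and $\sum_k \tp_k \slack/T_k \le K\slack/T$. Propagating through the constants in Theorem~\ref{thm:iwal} then gives exactly $\sqrt{32K\slack/T} + 16K\slack/T$ on top of $R_U$, which is the first half of the statement.

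For the label complexity, I would split $\tau_T = \sum_{t=1}^{T} \E[p_t \mid \cF_{t-1}]$ into the split phase ($t \le \tau$) and the \iwal\ phase ($t > \tau$). In the split phase the hypothesis set on every region is still the untrimmed $\sH$, so $\E[p_t \mid x_t] \le 2\theta r_0$ follows from the definition of the global disagreement coefficient at radius $r_0 = \max_{h \in \sH}\rho(h,h^*)$, while $p_t \le 1$ always holds; summing over $t \le \tau$ produces the first term $\min\{2\theta r_0,1\}\tau$. In the \iwal\ phase I would invoke the label-complexity half of Theorem~\ref{thm:iwal} on each region $\cX_k$ with effective horizon $T_k^{\mathrm{iwal}} \approx \tp_k(T-\tau)$ and region-specific disagreement coefficient $\theta_k$, upper-bound each $\theta_k$ by $\theta_{\max}$, replace $\sum_k \tp_k R_k^*$ by $R_U$ via Corollary~\ref{thm:split}, and aggregate the $\sqrt{T_k^{\mathrm{iwal}}\slack}$ terms via Cauchy--Schwarz to get $O(\sqrt{K(T-\tau)\slack})$; the constant-order residuals from Theorem~\ref{thm:iwal} then combine into the additive $\sqrt{32}K\slack$ tail.

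The main obstacle is the bookkeeping around the random per-region counts $T_k$, which depend adaptively on the entire stream and on every split decision taken so far. I would handle this by a Chernoff bound on $T_k$ around $\tp_k T$, absorbing the deviation into the logarithmic factor inside $\slack$ via the same union bound over binary trees of at most $\kappa$ leaves that Lemma~\ref{lemma:split_concentrate} already performs (which is why $\slack$ is defined with the $\kappa D \log(|\sH|^3 T \kappa D/\delta)$ payload). A smaller subtlety is that samples gathered during the split phase are reused inside the \iwal\ phase's importance-weighted risk estimates $L_{k,t}$; since they are i.i.d.\ and queried according to a valid probability $p_t$, the martingale concentration underlying Theorem~\ref{thm:iwal} applies to them verbatim. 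Carefully tracking the explicit constants through the two Cauchy--Schwarz aggregations pins down the numerical factors $32$, $16$, $4$, $8$, $\sqrt{32}$ in the statement.
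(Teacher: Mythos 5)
Your proposal matches the paper's two-step proof almost line for line: first establish a version of the theorem with the random per-region counts $T_k$, $T_k'$ (the paper's intermediate Theorem~\ref{thm:splitiwal}), using the regional \iwal\ guarantees of Theorem~\ref{thm:iwal} together with Corollary~\ref{thm:split} and a union bound over the $(\kappa CD)^\kappa$ binary trees; then convert to the deterministic bound via Chernoff on $T_k$ and aggregate with $\sum_k \sqrt{\tp_k} \le \sqrt{K}$. The one small imprecision is your attribution of the additive $O(\slack/T)$ term: it does not come from Theorem~\ref{thm:iwal} nor from inflating $\slack$, but from the explicit algebraic replacement $\tp_k/\sqrt{T_k} \le \sqrt{\tp_k/T} + 2\sqrt{2\slack}/T$ after the Chernoff step; otherwise the bookkeeping you sketch is the paper's.
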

The proof is given in Appendix~\ref{app:proof}. It combines the
learning guarantee of \iwal\ (Theorem~\ref{thm:iwal}) with those for
splitting (Corollary~\ref{thm:split}).  Theorem~\ref{thm:splitiwal2} shows that, with high probability, the generalization
  error of the hypothesis returned by \arbal\ is close to $R_U = R^* -
  \gamma (K - 1)$, which
  is a more favorable benchmark than the single-region best-in class error
  $R^*$ by $\gamma (K - 1)$. 
  We will show later that, under natural assumptions, with high probability, there is at least
  one split, which implies $\gamma (K - 1) > 0$ (Proposition~\ref{lemma:first_split_time}). 
  Furthermore, the reduction in the best-in-class error
also improves label complexity: when $T \gg \tau$, the label
complexity of \arbal\ is $O\big(R_U T\big)$ compared to \iwal's
$O\big(R^* T)$.

In practice, we set $\gamma=\Omega(\sqrt{{\slack}/{T}}\,)$ to ensure
that the generalization bound in Theorem~\ref{thm:splitiwal2} is more
favorable than the generalization bound of \iwal\
(Theorem~\ref{thm:iwal}).  We give more details on how to set this
fixed $\gamma$ in Appendix~\ref{app:proof} (see comments following the
proof of Theorem~\ref{thm:splitiwal2}).

There is a critical trade-off when determining the key parameters $\tau$ and $\kappa$. 
With a larger $\tau$ and $\kappa$, \arbal\ is likely to split into more regions and thus admits a smaller $R_U$.
On the other hand, a larger $\tau$ means a longer split phase, 
where \arbal\ requests labels more often compared to the original \iwal\ algorithm
since \arbal\ does not shrink the hypothesis set $\sH$ during this phase,
and a larger $\kappa$ yields a larger $\slack$,
which slightly affects the generalization error.
Nevertheless, our experimental results show that larger values of $\tau$ and $\kappa$ almost always improve the 
final excess risk, at the expense of higher computational cost.

\vskip -0.1in
\subsection{\arbal\ with adaptive $\gamma_t$}
\label{subsec:adagamma}
\vskip -0.05in

The learning guarantees of Theorem~\ref{thm:splitiwal2}
depend on the number of regions $K$ defined by the algorithm.
Given any fixed value of $\gamma$, however, there is
no guarantee on the number of times \arbal\ splits within the first $\tau$ rounds
(the duration of the first phase). In the worst case when $K = 1$,
\arbal\ offers no improvement over \iwal, yet
\arbal\ requests more labels than \iwal\ during the initial $\tau$ rounds.

In this section, we show that by adopting a time-varying
and data-dependent splitting threshold $\gamma_t$, we can enable
\splitregion\ to split more often, and thus achieve an enhanced
performance guarantee. To do so, we make additional assumptions
on the potential gain of splitting.

Let $\cX_k$ be an intermediate region created during the split phase,
possibly the original input space $\cX$. Assume that for any
such $\cX_k$, there exists at least one way of splitting $\cX_k$ into
$\cX_l \cup \cX_r$ such that the conditional improvement in the
best-in-class error is at least $\gain$:
$ R_k(h_k^*) - R_k(h_{lr}^*) \geq \gain $, where $\gain>0$ is a
positive constant.
With this assumption, we can derive upper bounds on the time
 \arbal\ splits when run with a time-varying adaptive
$\gamma_t = \Pr(\cX_{k_t}) \gain/2$.
\begin{proposition}
\label{lemma:first_split_time}
  Let \arbal\ be run with $\gamma_t = \gain \Pr(\cX_{k_t}) /2$.  Then,
  for any $\delta > 0$, with probability at least $1 - \delta/2$, the
  first split occurs before round
  $\Big\lceil{2\slack \big(\frac 4 {\gain} +1 \big)^2}\Big\rceil$.
\end{proposition}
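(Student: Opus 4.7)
The plan is to identify a round $t^*$ by which the confidence gap computed by \splitregion\ provably exceeds the adaptive threshold $\gamma_t = \gain\,\tp_{k_t}/2$, forcing the first split to occur at some round $\leq t^*$. Prior to any split, $K = 1$, $\cX_{k_t} = \cX_1 = \cX$, $\tp_{k_t} = 1$, and $T_{1, t} = t$, so the threshold simplifies to $\gamma_t = \gain/2$.

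By assumption, there exists at least one axis-aligned split $(d_0, c_0)$ of $\cX_1$ into $\cX_{l_0} \cup \cX_{r_0}$ with $R_1(h_1^*) - R_1(h_{l_0 r_0}^*) \geq \gain$. I would then invoke Lemma~\ref{lemma:split_concentrate}, which holds uniformly over all candidate binary trees with at most $\kappa$ leaves (and thus in particular for the one-step split $(d_0, c_0)$ of the root) with probability at least $1 - \delta/4 \geq 1 - \delta/2$. On this event,
\[
L_{1, t}(\h h_{1, t}) - L_{1, t}(\h h_{l_0 r_0, t}) \;\geq\; \gain - \sqrt{\tfrac{2\slack}{t}}.
\]
Plugging this into the definition of the confidence gap in Algorithm~\ref{alg:split}, and using $\tp_1 = 1$, yields
\[
\gamma_{d_0, c_0} \;\geq\; \gain - 2\sqrt{\tfrac{2\slack}{t}}.
\]
Because \splitregion\ selects $(d^*, c^*) = \argmax_{d, c} \gamma_{d, c}$, we have $\gamma_{d^*, c^*} \geq \gamma_{d_0, c_0}$, so the split is triggered as soon as $\gain - 2\sqrt{2\slack/t} \geq \gain/2$, i.e.\ $t \geq 32\slack/\gain^2$. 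The integer bound $\lceil 2\slack(4/\gain + 1)^2 \rceil$ is a convenient sufficient choice, since $(4/\gain + 1)^2 \geq 16/\gain^2$ absorbs both the multiplicative constant and the ceiling.

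The main obstacle is handling the uniformity of the concentration across all candidate splits: the threshold $c$ ranges continuously in $\Rset$ for each $d \in [D]$, so a naive union bound is not available. Lemma~\ref{lemma:split_concentrate} already delivers the required uniform control via a covering argument that collapses the continuum into the at most $t + 1$ distinct threshold values induced by the observed feature coordinates, together with a union bound over rounds and over $\kappa$-leaf binary trees. The only remaining bookkeeping is to verify that the lemma's failure probability $\delta/4$ fits within the proposition's budget $\delta/2$, and to note that although $(d_0, c_0)$ is merely a witness (the actual split chosen by \arbal\ may differ), the maximiser inequality $\gamma_{d^*, c^*} \geq \gamma_{d_0, c_0}$ is all that is needed to conclude.
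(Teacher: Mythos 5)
Your proof is correct, and it takes essentially the same route as the paper's: apply Lemma~\ref{lemma:split_concentrate} to the witness split, use the maximizer inequality $\gamma_{d^*,c^*} \geq \gamma_{d_0,c_0}$, and solve for the round at which the confidence gap must clear the threshold $\gain/2$. The main difference is that the paper derives Proposition~\ref{lemma:first_split_time} as the special case $\tp_k = 1$ of a more general statement (Lemma~\ref{lemma:split_time} in the appendix) that bounds the split time for \emph{any} intermediate region $\cX_k$. That general lemma needs a Chernoff bound to relate the random count $T_{k,t}$ to its mean $t\tp_k$, which accounts for the extra $(4/\gain+1)^2$ inflation of $32/\gain^2$ and for spending a second $\delta/4$ of the failure budget. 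You correctly observe that for the \emph{first} split $T_{1,t}=t$ holds deterministically, so the Chernoff step is vacuous; this gives you the sharper condition $t \geq 32\slack/\gain^2$ with failure probability only $\delta/4$, and the stated $\bigl\lceil 2\slack(4/\gain+1)^2\bigr\rceil$ bound with budget $\delta/2$ is then a strictly weaker claim that you recover by relaxation. In short, your argument is a cleaner self-contained proof of the proposition, while the paper's is a corollary of a more general lemma whose extra machinery is not needed here.
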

Thus, when $\tau\geq \big\lceil{2\Slack{2}
  \big(\frac 4 {\gain} + 1 \big)^2}\big\rceil$,
with high probability, \arbal\ will split $\cX$ and reduce
the best-in-class error by at least $\gain/2$, according to 
Proposition~\ref{lemma:first_split_time} and Corollary~\ref{thm:split}.
In Appendix~\ref{app:proof}, we prove a more general version 
(Lemma~\ref{lemma:split_time}) that upper bounds 
the time of split for all regions created during the split phase. 

If we further assume that the
splitting with at least $\gain$ improvement in the best-in-class error 
results in regions that are not too small,
i.e., $\min\{\Pr(\cX_l), \, \Pr(\cX_r)\} \geq c \Pr(\cX_k)$, with $0<c<0.5$,
then we can also prove a lower bound on the number of splits.
\begin{corollary}\label{cor:num_splits}
  Let \arbal\ run with $\gamma_t = \Pr(\cX_{k_t}) \gain/2$. Then, with probability 
  at least $1 - \delta/2$, \arbal\ splits more than 
  $\min \set[\Big]{\log_{1/c} \Big[\frac{\tau}
          {2 \slack
  \left(\frac{4}{\gain} + 1\right)^2}\Big], \kappa-1}$
  times by the end of the split phase.
\end{corollary}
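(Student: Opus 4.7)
The plan is to argue by induction on the depth of the binary tree of subregions that \arbal produces during its split phase. The crucial tool is the generalization of Proposition~\ref{lemma:first_split_time} alluded to in Appendix~\ref{app:proof}, namely Lemma~\ref{lemma:split_time}, which guarantees that every intermediate region $\cX_k$ with $\Pr(\cX_k) = \tp_k$ created during the split phase is itself split within on the order of $M/\tp_k$ rounds after its creation, where $M = 2\slack(4/\gain+1)^2$. Under the additional assumption of the corollary, each valid split produces two children each of whose probability is at least $c$ times that of the parent.

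I would first condition on the intersection of the good events underlying Lemma~\ref{lemma:split_concentrate} and (the general) Lemma~\ref{lemma:split_time}, whose combined failure probability is at most $\delta/2$ after a union bound over the at most $\kappa$ regions ever created. Next, I would recursively define $T_d$ as an upper bound on the time by which the tree has reached depth $d$. Proposition~\ref{lemma:first_split_time} applied to the root region (with probability $1$) gives $T_1 \leq M$. For the inductive step, once depth $d$ has been reached, some region at depth $d$ has probability at least $c^d$ by the assumption on valid splits; applying the general split-time lemma to this region yields $T_{d+1} \leq T_d + M/c^d$.

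Summing this geometric recursion gives $T_d \leq M \cdot ((1/c)^d - 1)/(1/c - 1)$. Since $c < 1/2$, the factor $1/(1/c - 1)$ is bounded by $1$, and inverting the inequality $T_d \leq \tau$ shows that the achieved depth, and hence the number of splits, is strictly more than $\log_{1/c}(\tau/M)$. If along the way the cap $\kappa$ on the number of regions is reached, then by construction exactly $\kappa - 1$ splits have occurred and the algorithm stops splitting, which explains the minimum with $\kappa - 1$ in the statement. Taking the worse of the two bounds yields the corollary.

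The main obstacle is the correct formulation of Lemma~\ref{lemma:split_time}: transferring the single-region guarantee of Proposition~\ref{lemma:first_split_time} from the root to any subregion $\cX_k$ requires conditioning on the region actually being created, a union bound over the at most $\kappa$ such regions, and a concentration argument on the number of samples falling into $\cX_k$ after its creation; the exact constants that appear inside the logarithm are driven by this union bound, so recovering precisely $\log_{1/c}[\tau/(2\slack(4/\gain+1)^2)]$ (rather than a slightly looser expression) is the delicate step.
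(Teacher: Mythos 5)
Your approach is valid but takes a genuinely different route from the paper, and it also contains a misreading of Lemma~\ref{lemma:split_time} that, while not fatal, makes the argument more convoluted than necessary.

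The paper's proof is a short proof by contradiction, not an induction on depth. It assumes \arbal\ has made only $S < \kappa-1$ splits by the end of the split phase, observes that \emph{every} leaf region then satisfies $\Pr(\cX_k) \geq c^S$ (since any leaf is reached from the root by at most $S$ multiplications of the probability by a factor $\geq c$), and then applies Lemma~\ref{lemma:split_time} once: if $\tau \geq 2\slack(4/\gain+1)^2 / c^S$ then some region must have been split, contradicting that $S$ was final. Rearranging the negation of this condition gives $S > \log_{1/c}[\tau / (2\slack(4/\gain+1)^2)]$, with the cap $\kappa-1$ handled exactly as you say. No recursion or geometric sum is needed.

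The key technical point you have misread is the meaning of Lemma~\ref{lemma:split_time}: it bounds the \emph{absolute} round number by which a region $\cX_k$ (of probability $\tp_k$) is split, namely before round $\lceil 2\slack(4/\gain+1)^2 / \tp_k \rceil$, \emph{not} the number of rounds elapsed ``after its creation.'' This is because the empirical losses $L_{k,t}$ accumulate all samples falling in the set $\cX_k$ from round~1 onward (region membership is a property of the point, not of when the region was instantiated as a leaf), and the union-bound/covering argument in Lemma~\ref{lemma:split_concentrate} makes the concentration hold uniformly over all binary trees and all time $t$. Consequently your recursion $T_{d+1} \leq T_d + M/c^d$ is weaker than what the lemma actually delivers ($T_{d+1} \leq M/c^d$ directly), and the geometric sum is superfluous. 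Your induction does still produce a bound of the correct order once you bound $1/(1/c-1) \leq 1$, and your observation that the number of splits is at least the tree depth is sound, but the paper's argument sidesteps both concerns at once: by bounding $\Pr(\cX_k)$ below by $c^S$ (rather than $c^{\text{depth}}$) it directly lower-bounds the split count $S$, not the depth, which is the quantity the corollary is actually about.

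In short: both proofs lean on Lemma~\ref{lemma:split_time}; yours builds an inductive schedule over tree depths while the paper's proof uses a one-step contradiction exploiting that the smallest leaf has probability at least $c^S$. The latter is simpler, matches the stated constant exactly, and avoids your ``delicate step'' entirely because the lemma's time bound is absolute.
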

Corollary~\ref{cor:num_splits} gives the minimal number of splits under
the assumptions made in this section.
It states that, as the duration of the split phase $\tau$ increases,
or as the conditional improvement $\gain$ increases,
or as the minimal proportion of subregion size $c$ increases,
\arbal\ tends to make more splits and therefore achieves a better 
generalization guarantee.
Note that the lower bound in Corollary~\ref{cor:num_splits} tends to be loose,
as it assumes that \arbal\ keeps splitting the smallest region,
which is unlikely to be the case in practice.
In Appendix~\ref{app:proof}, we combine Proposition~\ref{lemma:first_split_time} 
and Corollary~\ref{cor:num_splits} to give an upper bound on the final 
best-in-class error after the splits by \arbal.

Note that the true value of $\gain$ is the property of the underlying distribution, 
and to accurately estimate $\gain$ is an open question that is beyond the scope of this paper.
One practical solution is to explore $\rho$ on various orders of magnitude, e.g. $(0.1, 0.01) $ etc., 
such that \arbal\ makes a reasonable number of splits. 
We set $\gain=0.01$ in our experiments. 

\vskip -0.1in
\section{Experiments}
\label{sec:experiments}
\vskip -0.05in

In this section, we report the results of a series of experiments.
We tested 24 binary classification datasets from the UCI and openml repositories,
and also the MNIST dataset with 3 and 5 as the two classes,
which is standard binary classification task extracted from the MNIST dataset (e.g., \cite{ckd09}).
Table~\ref{tb:data_summary} in Appendix~\ref{app:moreexp} lists summary statistics for these datasets. 
For ease of experimental comparison, for datasets with large input
dimension $D$, we followed the preprocessing step in \cite{cortes2019rbal},
retaining only the first $10$ principal components of the original
feature vectors. Due to space limitations, in this section we show the 
results on several medium-sized datasets.
The results for the remaining datasets are provided in Appendix~\ref{app:moreexp}.
For each experiment, we randomly shuffled the dataset, ran the algorithms
on the first half of the data (so that the number of active learning rounds $T$ equals $N/2$),
and tested the classifier returned on the remaining half to measure
misclassification loss. We only showed results on the first $10^{3.5}\approx 3000$ requested labels,
which are enough to differentiate the performances among various algorithms.
We repeated this process $50$ times on each
dataset, and report average results with standard error across the 50
repetitions. 
We use the logistic loss function $\ell$ defined for all
$(x, y) \in \cX \times \cY$ and hypotheses $h\colon \cX \to \Rset$ by
$\ell(h(x), y) = \log(1 + e^{-yh(x)})$, which we then rescale to
$[0, 1]$. 
The initial hypothesis set $\sH$ consists of $3\mathord,000$ randomly
drawn hyperplanes with bounded norms.
As mentioned in Section~\ref{sec:theory1}, larger values of $\tau$ and $\kappa$ almost always yield
better final excess risk. Thus, we chose $\kappa=20$ and allow the first phase to run at most $\tau=800$ rounds 
so as to make \arbal\ fully split into the desired number of regions on almost all datasets.
Since the slack term $\slack$ derived from high-probability analyses are typically
overly conservative, we simply 
use $0.01/\sqrt{T_k}$ in the \splitregion\ subroutine.

\begin{figure*}[ht]
  \begin{center}
  \subfigure{\centering\includegraphics[width=0.24\textwidth,,trim= 5 10 10 5,clip=true]{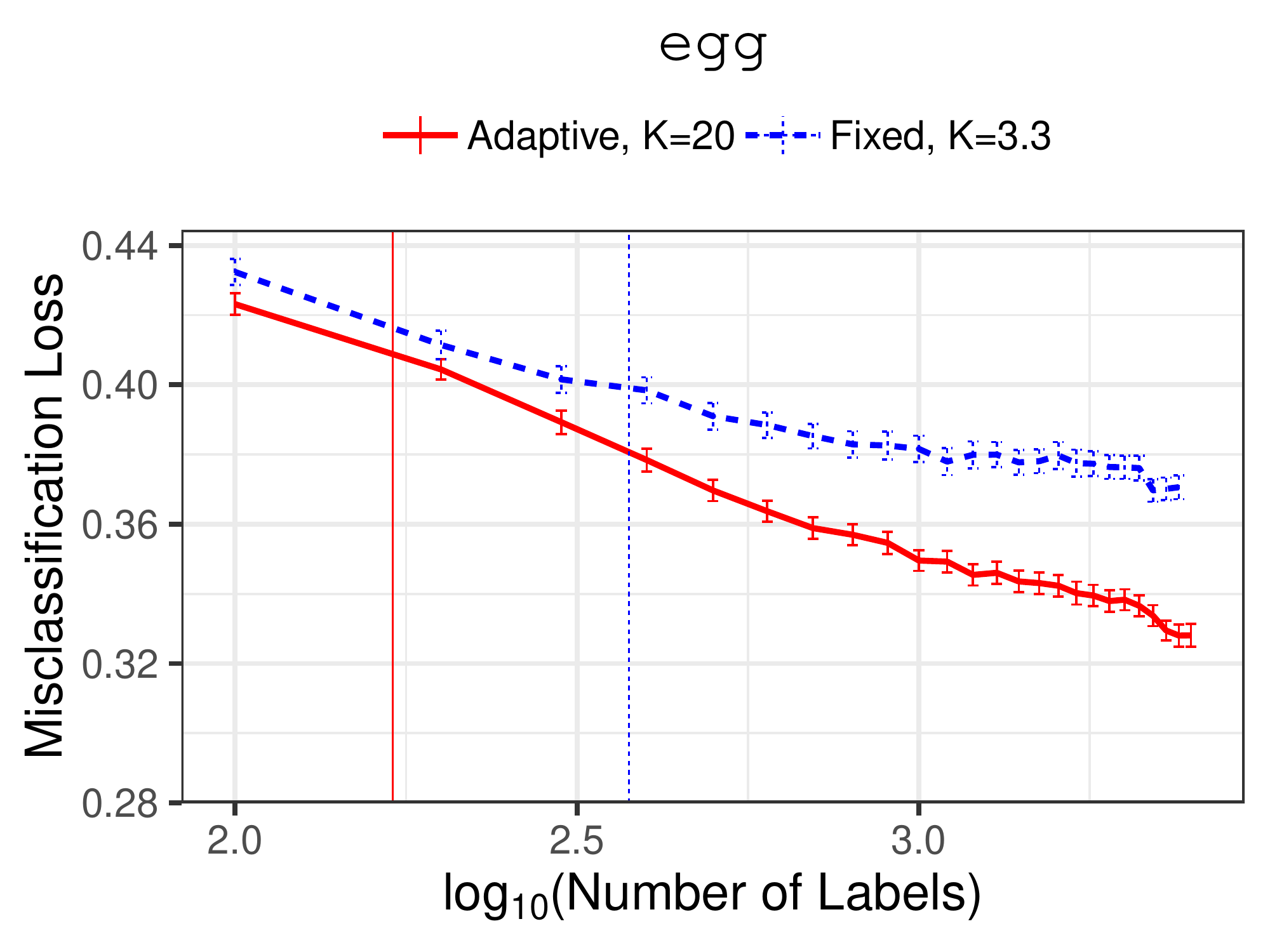}}
  \subfigure{\centering\includegraphics[width=0.24\textwidth,,trim= 5 10 10 5,clip=true]{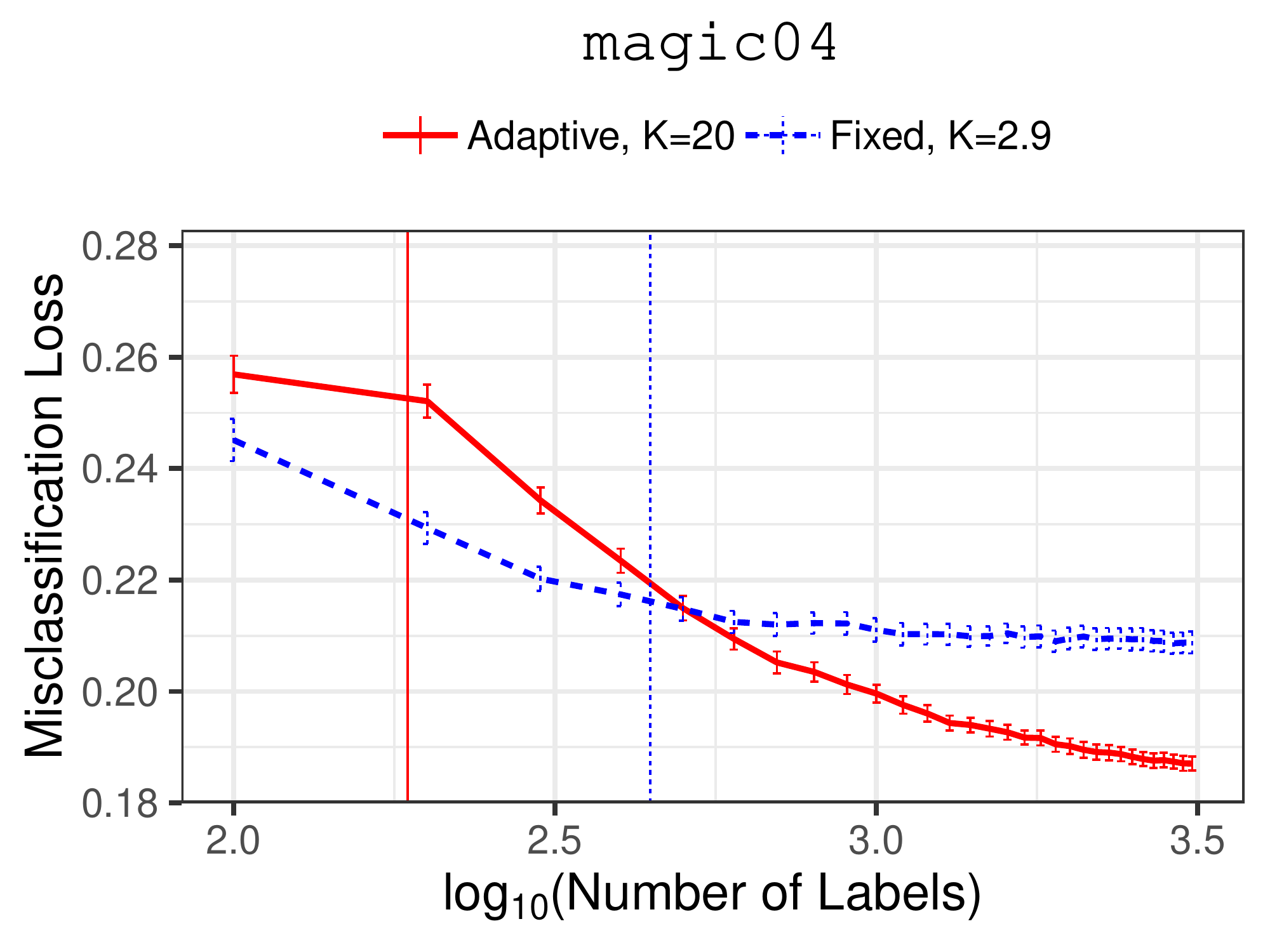}}
  \subfigure{\centering\includegraphics[width=0.24\textwidth,,trim= 5 10 10 5,clip=true]{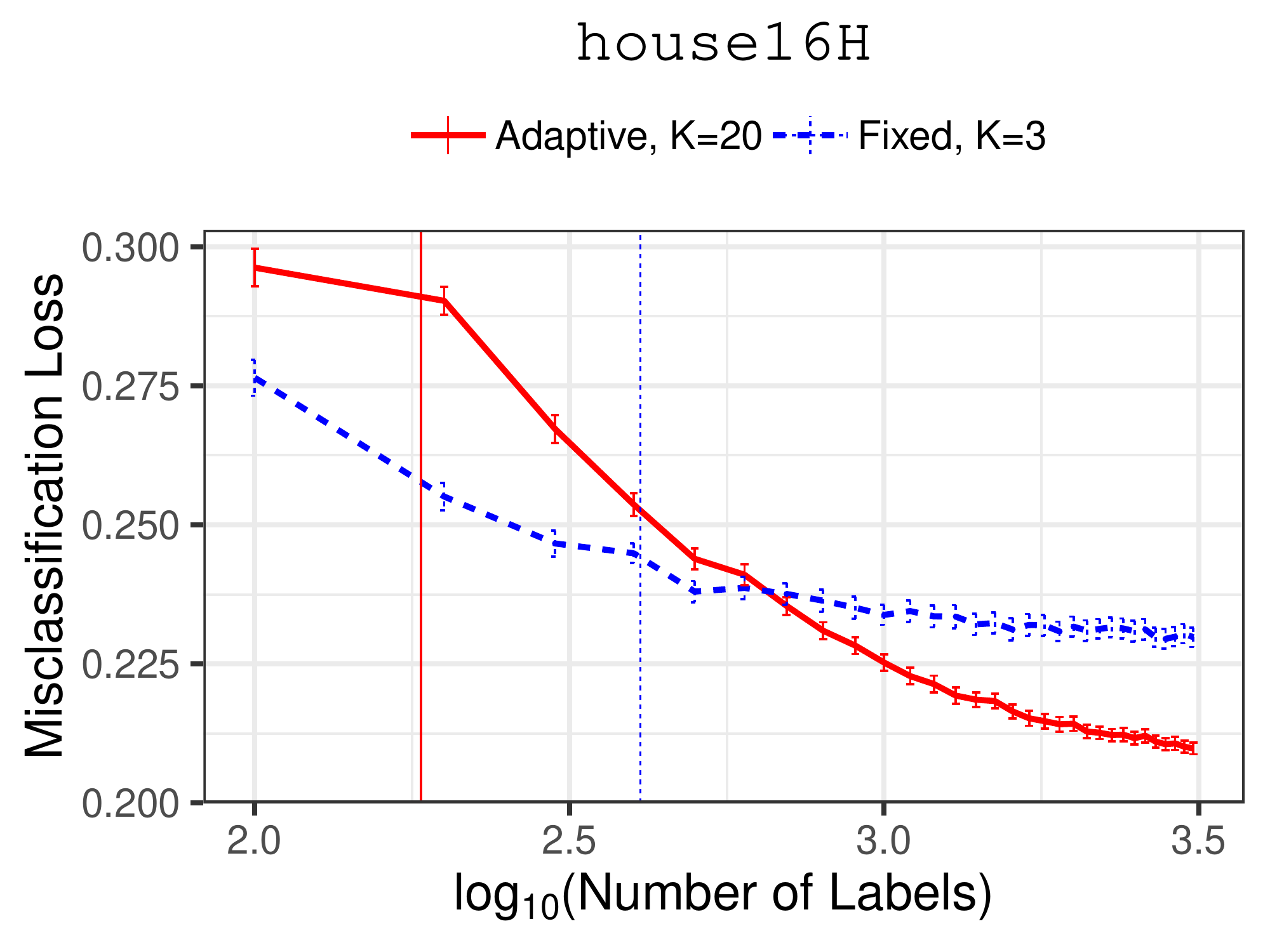}} 
  \subfigure{\centering\includegraphics[width=0.24\textwidth,,trim= 5 10 10 5,clip=true]{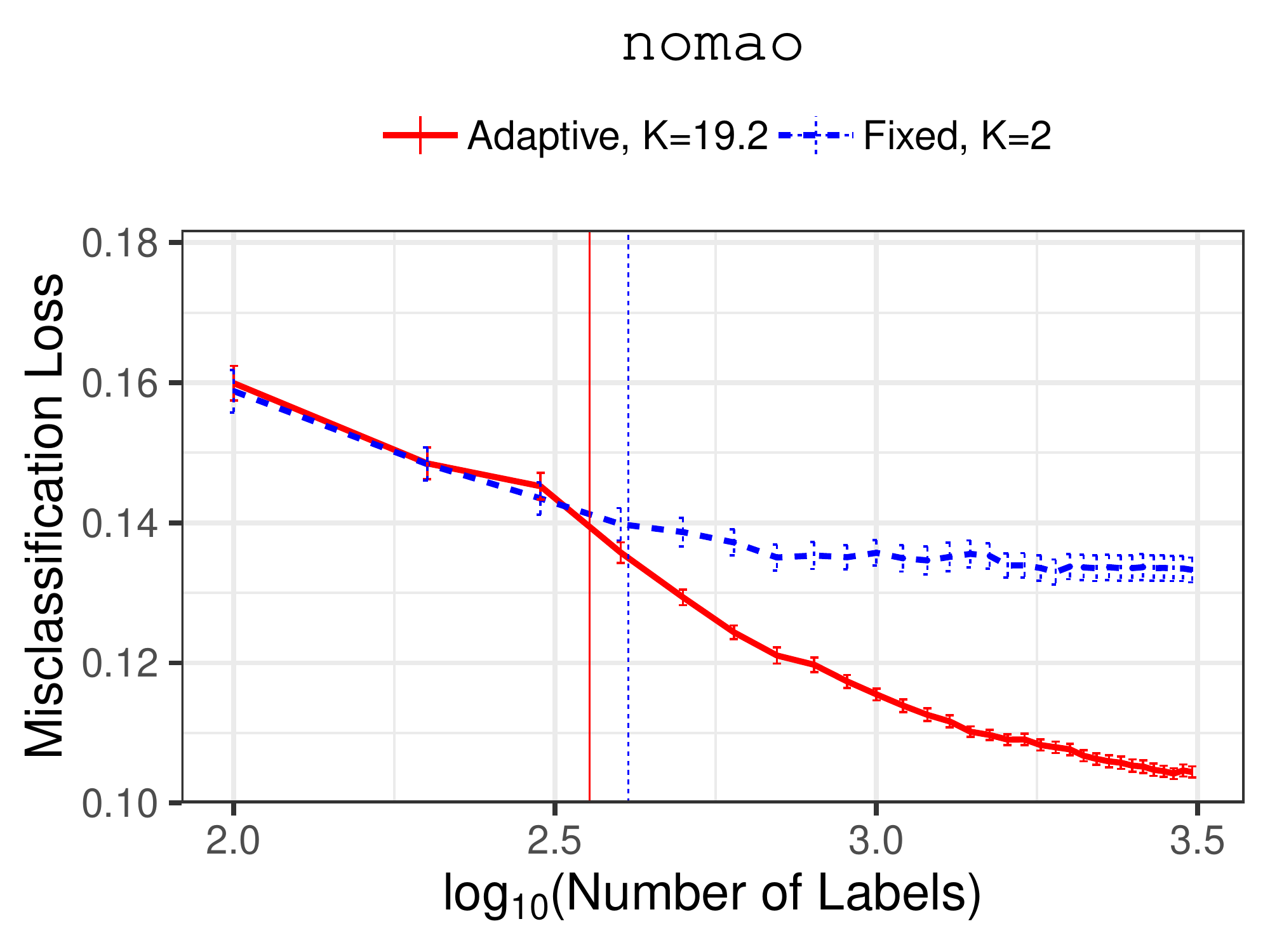}} 
  \end{center}
  \vskip -0.2in
  \caption{Misclassification loss of \arbal\ with fixed and adaptive
    threshold $\gamma$ on held out test data vs. number of labels
    requested ($\log_{10}$ scale), with $\kappa=20$ and $\tau=800$.  
     The vertical lines indicate the end of the first (split) phase.}
\label{fig:expmis_gamma_tau800k20_four}
\vskip -0.1in
\end{figure*}

\begin{figure*}[ht]
  \begin{center}
  \subfigure{\centering\includegraphics[width=0.24\textwidth,,trim= 5 10 10 5,clip=true]{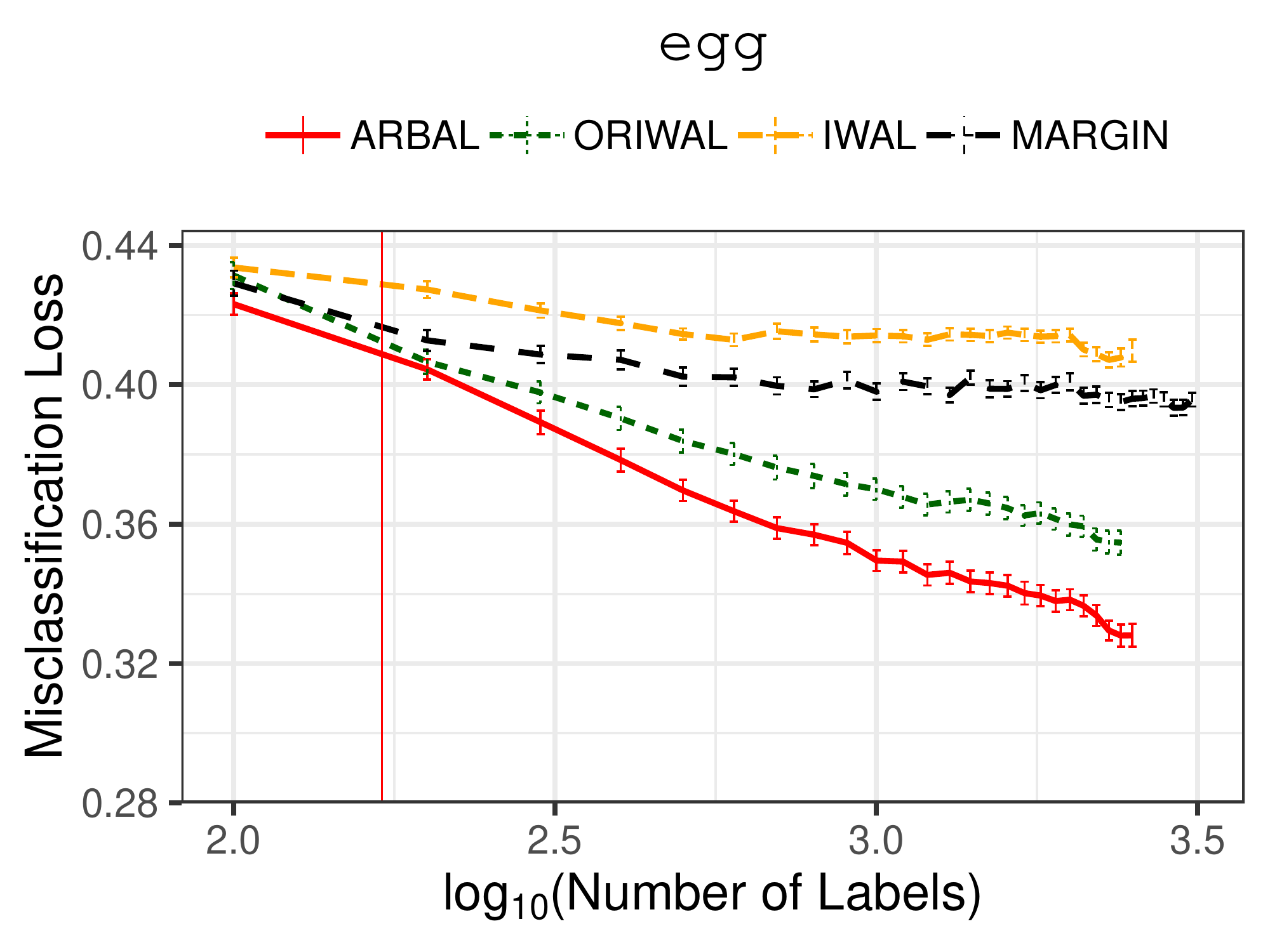}}
  \subfigure{\centering\includegraphics[width=0.24\textwidth,,trim= 5 10 10 5,clip=true]{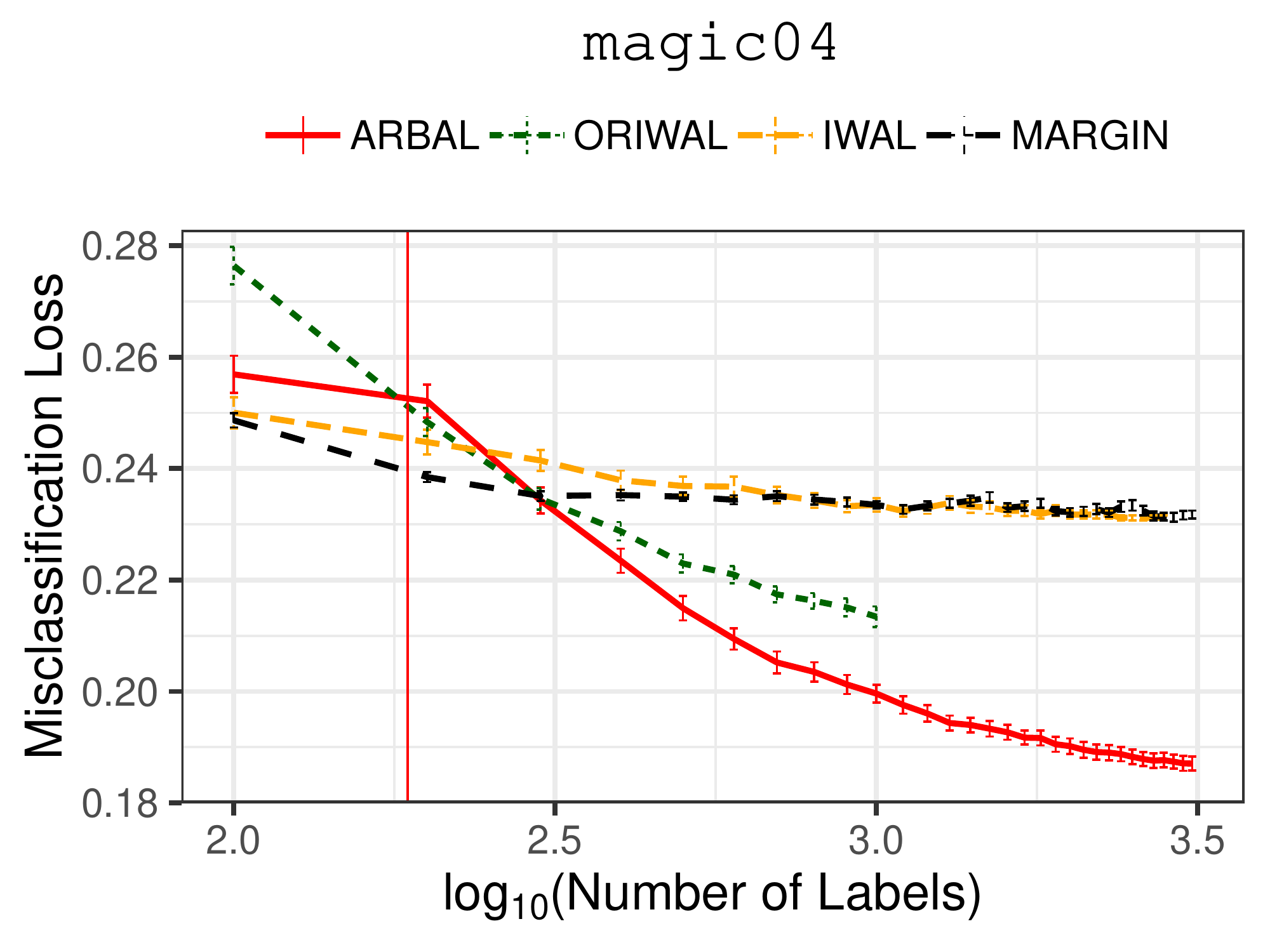}}
  \subfigure{\centering\includegraphics[width=0.24\textwidth,,trim= 5 10 10 5,clip=true]{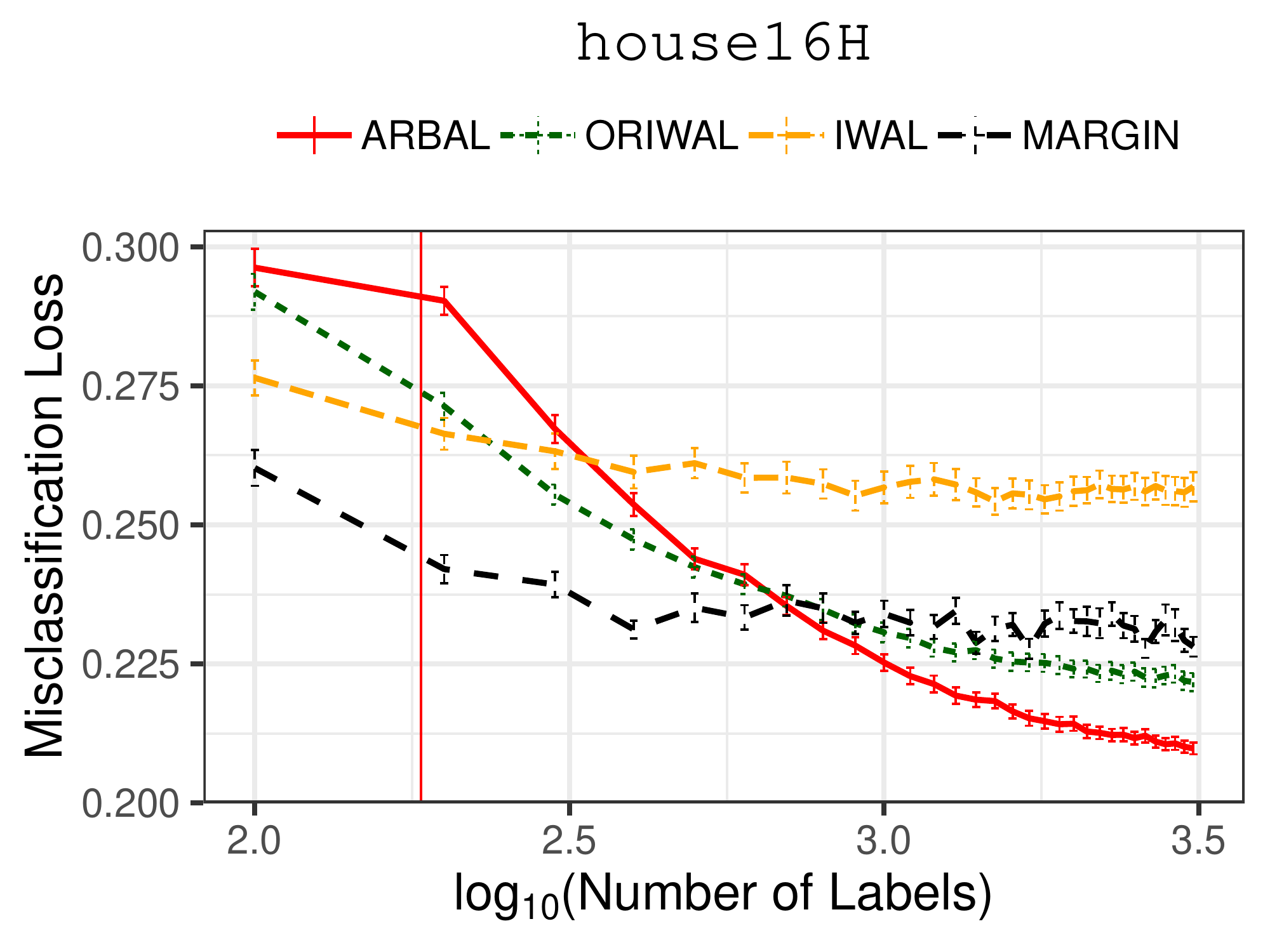}}
  \subfigure{\centering\includegraphics[width=0.24\textwidth,,trim= 5 10 10 5,clip=true]{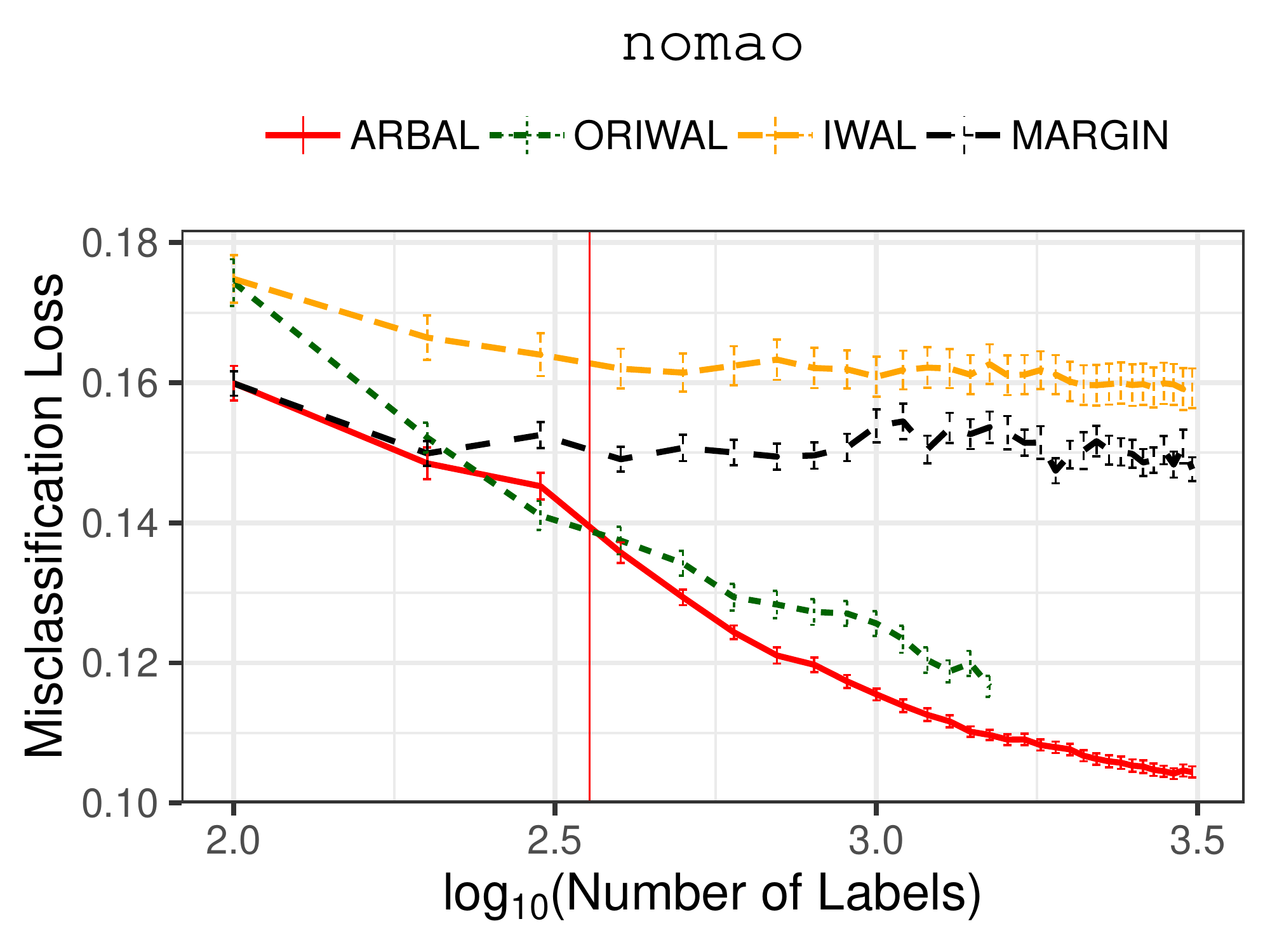}}\\
  \vskip -0.24in
  \subfigure{\centering\includegraphics[width=0.24\textwidth,,trim= 5 10 10 5,clip=true]{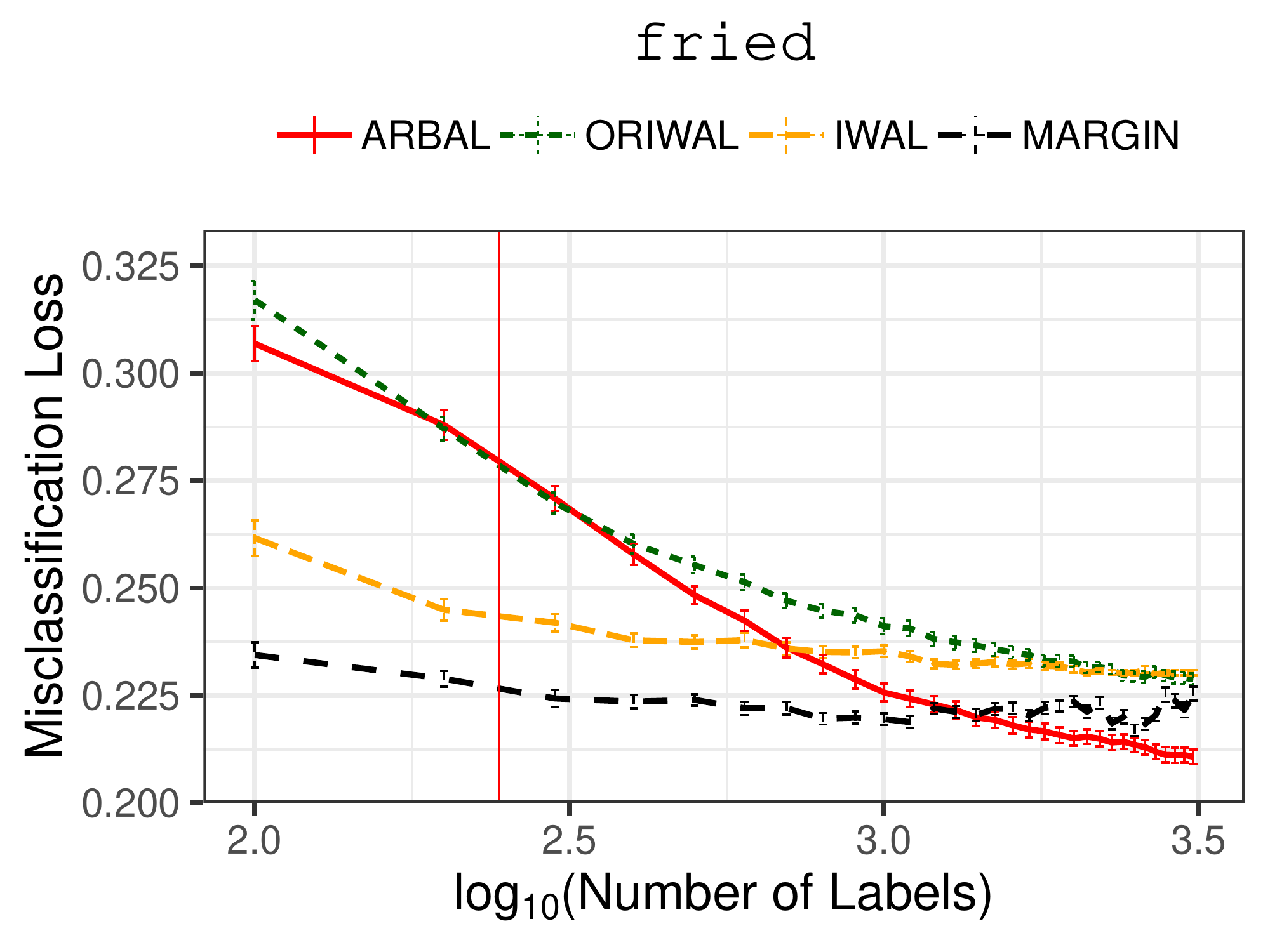}}
  \subfigure{\centering\includegraphics[width=0.24\textwidth,,trim= 5 10 10 5,clip=true]{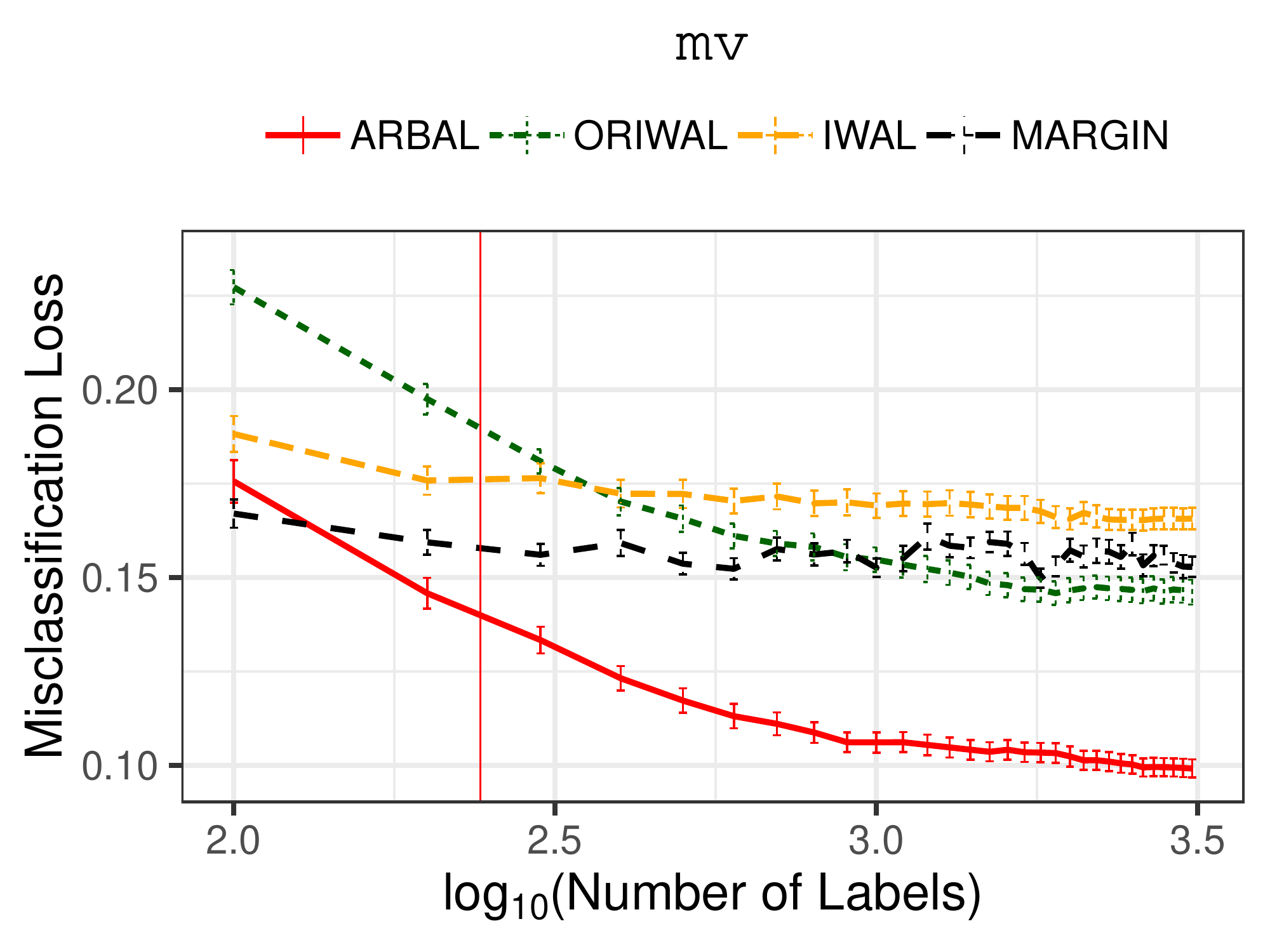}}
  \subfigure{\centering\includegraphics[width=0.24\textwidth,,trim= 5 10 10 5,clip=true]{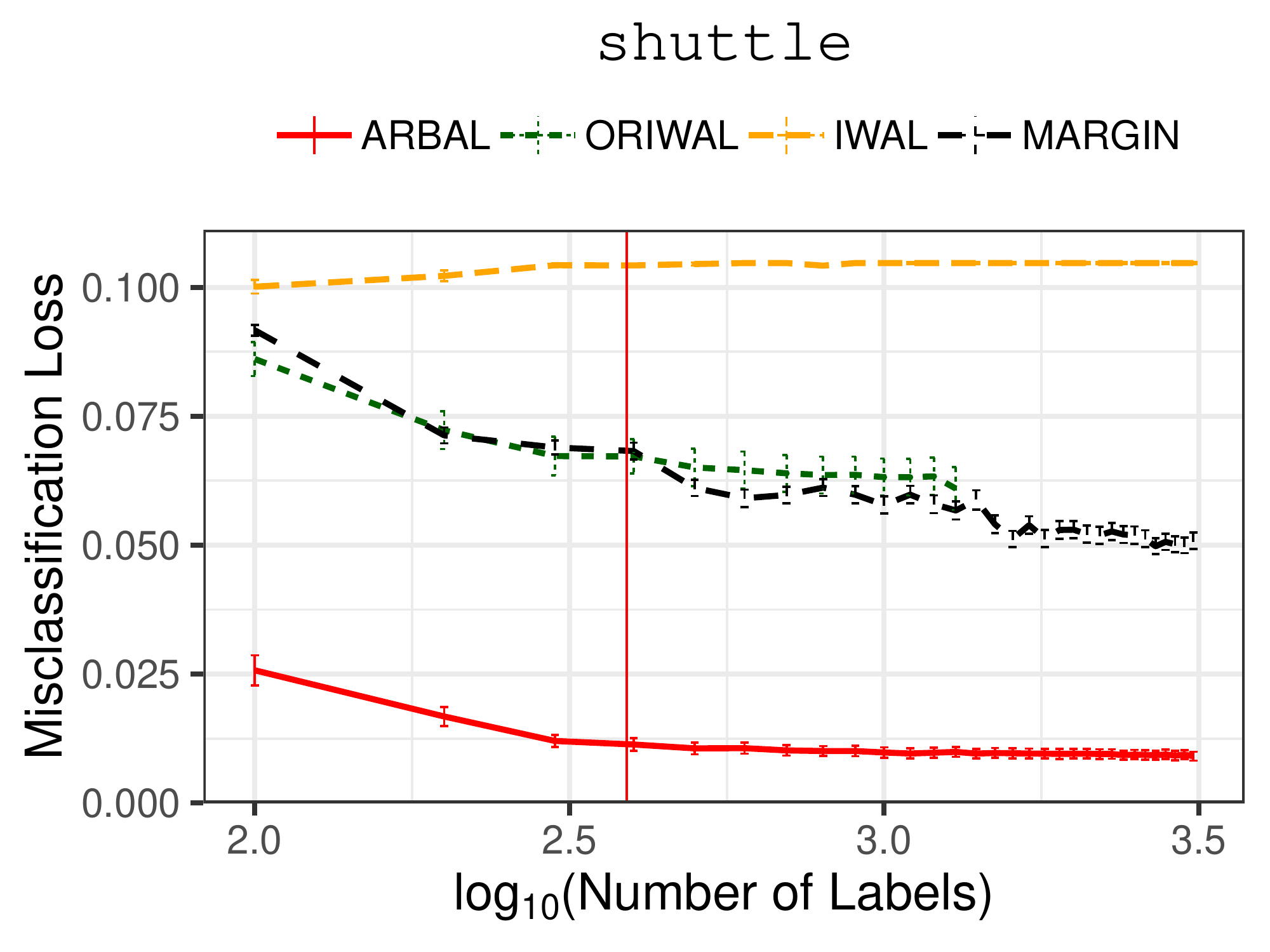}}
  \subfigure{\centering\includegraphics[width=0.24\textwidth,,trim= 5 10 10 5,clip=true]{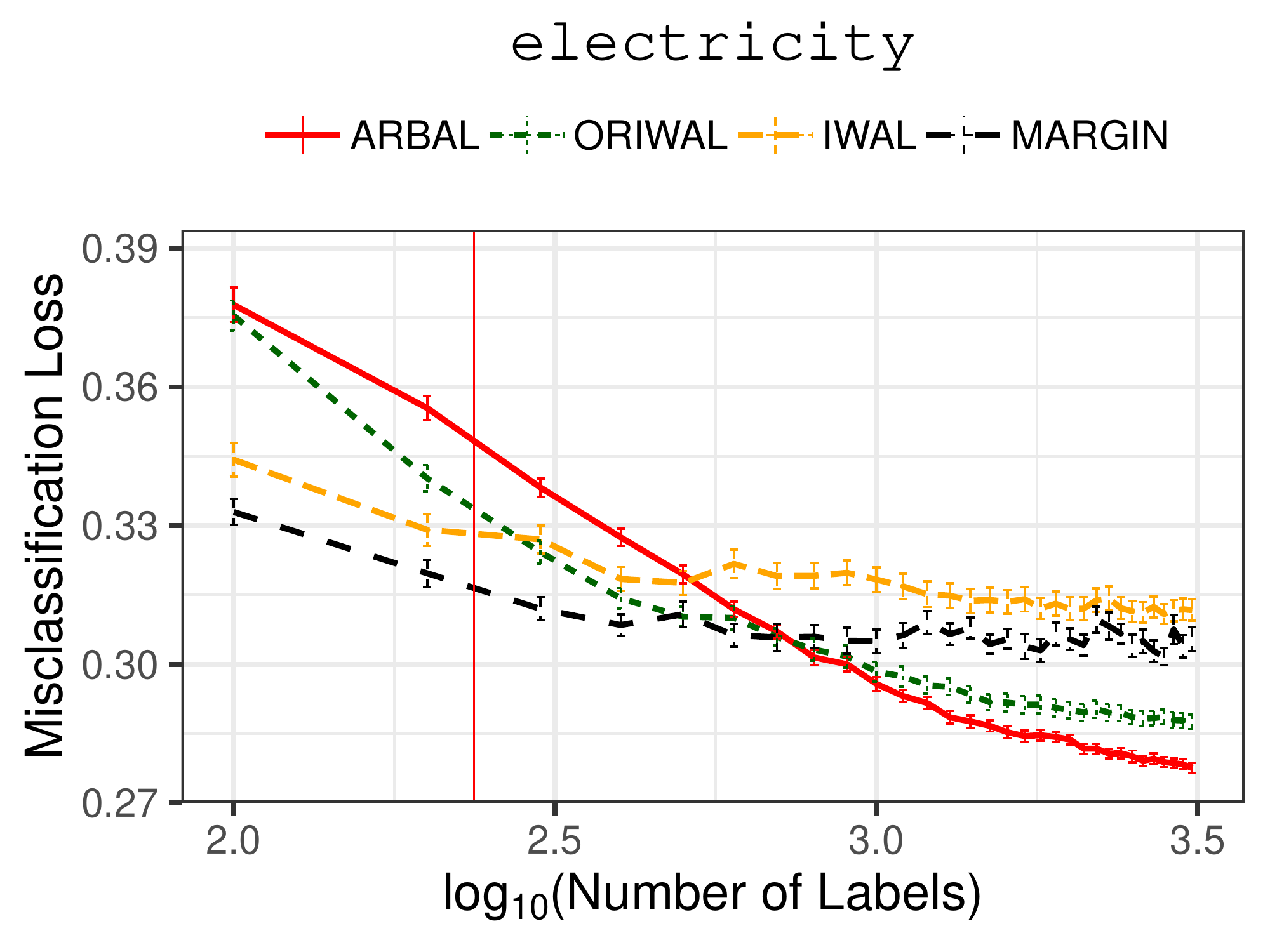}}
  \end{center}
  \vskip -0.2in
  \caption{Misclassification loss of \arbal (with adaptive
    $\gamma_t$), \oriwal, \iwal, and \margin\ on hold out test data
    vs. number of labels requested ($\log_{10}$ scale), with
    $\kappa=20$ and $\tau=800$.  The \arbal\ curves are repetitions
    from Figure~\ref{fig:expmis_gamma_tau800k20_four}. }
\vskip -0.1in
\label{fig:expmis_tau800k20_four}
\end{figure*}
\textbf{\arbal\ with fixed or adaptive $\gamma$. }
We first compare \arbal\ with fixed $\gamma$ to \arbal\ with
an adaptive $\gamma_t$.  Figure~\ref{fig:expmis_gamma_tau800k20_four} plots the
misclassification loss versus the number of labels requested on four datasets. 
The vertical lines indicate the label counts when \arbal\
transitions from the first to the second phase, and the legends give the average 
number of resulting regions $K$ the algorithms produce.\ignore{Note 
that all vertical lines are at label counts often far less than 
$\log_{10}(800)\approx 2.9$, signifying that in the
first phase both versions of \arbal, but especially adaptive $\gamma_t$, selectively
request much fewer labels than passive learning.}
Adaptive $\gamma_t$ tends to split into more regions and to exit the split phase earlier, 
and hence often results in superior prediction performance over fixed $\gamma$. 
Thus, in the rest of this section, we show the performance with adaptive $\gamma_t$.
Results on other datasets (see Appendix~\ref{app:moreexp}) show similar patterns.
During the active learning split phase, even though \arbal\ does not shrink the hypothesis set(s),
both versions are observed to request labels in only 50\% - 90\% of the rounds, 
which is far less than passive learning.

\textbf{\arbal\ vs. \oriwal. }
Since the key idea of \arbal\ is the informed adaptive splitting criterion,
we compare \arbal\ with the \oriwal\ algorithm of \citet{cortes2019rbal}, 
a ``non-adaptive splitting'' algorithm that first randomly generates $\kappa$ regions, 
and then runs region-based active learning on these regions.
The regions of \oriwal\ are obtained from terminal nodes of
random binary trees, that is, binary trees with random splitting
coordinates and thresholds (hence, they are axis-aligned rectangles, as for \arbal). 
Figure~\ref{fig:expmis_tau800k20_four} shows that \arbal\ quickly takes
over (recall that the $x$ axis is on $\log$ scale) and performs substantially better 
than \oriwal, on eight datasets covered by Figure~\ref{fig:expmis_tau800k20_four}.
Results on other datasets show similar 
patterns, even though \oriwal\ sometimes uses more regions
than \arbal, since \arbal\ may not always fully split into $\kappa$ regions. 
These results empirically verify the advantage of an adaptive splitting criterion. 
\ignore{\footnote{ \cite{cortes2019rbal} tested \oriwal\ on a subset of the datasets found in this paper 
showing similar figures as us, but note that the scaling of the axes of the figures between 
the two papers is slightly different.}}

\textbf{\arbal\ vs. non-splitting baselines. }
We also compare \arbal\ with the single-region \iwal\ algorithm,
and the single-region \margin\ algorithm, which is a standard uncertainty sampling algorithm 
that requests the label closest to the decision boundary of the current empirical risk minimizer
(note that \margin\ runs under a pool-based setting and thus sees more information than on-line algorithms).
Figure~\ref{fig:expmis_tau800k20_four} shows that \margin\ is a strong baseline that outperforms \iwal\
on almost all the datasets, sometimes even \oriwal\, (e.g.\ \texttt{house16H}),
but \arbal\ is still more favorable than \margin.  
The difference of errors observed in these plots after consuming much of the sample is 
essentially due to the difference of the split-region and single-region best-in-class errors, 
that is, $R_U$ vs. $R^*$, which further corroborates our theory. 
The results for most other datasets show similar patterns. 
In fact, \arbal\ can also be used with the \margin\ algorithm as a subroutine, which is likely to lead to even better performance but, 
as with the \margin\ algorithm, that extension would not benefit from any general theoretical guarantee and might actually underperform in 
some instances where the \margin\ technique can fail.

Finally, as mentioned in Section~\ref{subsec:split}, \arbal\ 
is agnostic to the shape of subregions, thus any hierarchical partitioning 
method could be used in the splitting phase. For instance, we can split 
a region via an arbitrary separating hyperplane or via hierarchical clustering, 
that is, determine two new centers and assign points to the closest center. 
In Appendix~\ref{app:moreexp}, we compare axis-aligned binary tree 
splitting method with hierarchical clustering splitting, using adaptive $\gamma_t$.
Our results suggest that, for most datasets, splitting via
binary trees is more favorable than via hierarchical clustering.

\vskip -0.1in
\section{Conclusion}
\label{sec:concl}
\vskip -0.05in
We presented a novel algorithm for adaptive region-based active
learning, and proved that it benefits from favorable generalization and
label complexity guarantees. We also studied the extent to which
splitting the input space is likely to lead to improved prediction
performance. We complemented our theoretical findings by reporting the
results of several experiments with our algorithm on standard benchmarks.
Our extensive experiments 
demonstrate substantial performance improvements over existing active learning
algorithms such as \iwal\, and margin-based uncertainty sampling, 
as well as other region-based baselines that do not rely on adaptively
splitting of the input space.
Our techniques have been showcased through \iwal-like algorithms~\citep{beygelzimer2009importance,cortes2019rbal}, but they can be 
straightforwardly combined with other base active learning algorithms, such as the DHM algorithm from \cite{dasgupta2008general}, achieving 
similar generalization and label complexity guarantees. 
\ignore{Recently in \cite{cd19}, \oriwal\ has been combined with the margin algorithm and, although the algorithm does not currently benefit from theoretical guarantees, it is substantially improving upon the margin algorithm. 
Given the close connection between \oriwal\ and \arbal, this suggest that similar benefits could be attained by \arbal. }

Altogether, our theory, algorithms, and empirical results provide a
new promising solution to active learning, with very important
practical benefits. These results also suggest further investigation
of the general idea of adaptively refining and enriching the
hypothesis set for active learning.

\bibliography{arbal}
\bibliographystyle{icml2020}

\clearpage
\appendix
\section{Guarantees for \iwal}
\label{app:iwal}
\ignore{
We first need definitions and concepts from \citet{beygelzimer2009importance}. 
Define the distance $\rho(f, g)$
between two hypotheses $f, g \in \sH$ as follows:
\begin{equation}\label{eq:iwal_rho}
  \rho(f, g) = \E_{(x,y) \sim \sD} \left| \ell(f(x), y) - \ell(g(x), y) \right|.
\end{equation}
Note that, this definition of 
$\rho(f,g)$ slightly differs from the original definition 
in \citet{beygelzimer2009importance}.
We give more details near the end of this section.
Given $r > 0$, let $B(f, r)$ denote the ball of radius $r$ centered in
$f \in \sH$: $B(f, r) = \set{g \in \sH \colon \rho(f, g) \leq r}$.
The generalized disagreement coefficient $\theta(\sD, \sH)$ of a class of 
functions $\sH$ with respect to distribution $\sD$ is defined as follows:
\begin{align*}
  & \theta(\sD, \sH) = \inf_{\theta > 0} \set[\bigg]{\forall r \geq 0, \\
    &  \E_{x \sim \sD_\cX}\bigg[\sup_{h \in B(h^{*}, r)}\sup_{y 
    \in \cY}
\big| \ell(h(x), y) - \ell(h^{*}(x), y) \big| \bigg] \leq \theta r }\,.
\end{align*}
}
The disagreement coefficient
$\theta$ is a complexity measure widely used in disagreement-based
active learning problems. In particular, \citet{hanneke2007bound}
proved upper bounds for the label complexity for the $\textsc{a}^2$
algorithm in terms of %the disagreement coefficient
$\theta$. \citet{dasgupta2008general} also gave an upper bound for their
DHM algorithm using $\theta$. The reader is referred to \citet{Hanneke2014} for a more
extensive analysis of the disagreement coefficient as related to active learning.

In \citet{beygelzimer2009importance}, the distance $\rho$ is defined as
\[  
\rho(f, g) = \E_{x \sim \sD_\cX} \sup_{y\in\cY}
\left| \ell(f(x), y) - \ell(g(x), y) \right|~,
\]
while the distance $\rho$ in Section~\ref{sec:theory} is defined in a slightly 
different manner as
\begin{equation}\label{eq:iwal_rho}
  \rho(f, g) = \E_{(x,y) \sim \sD} \left| \ell(f(x), y) - \ell(g(x), y) \right|.
\end{equation}
\cite{cortes2019disgraph} showed that the new definition of 
$\rho$ in Eq.~\eqref{eq:iwal_rho}
removes a constant $K_{\ell}$ from the label complexity bound of 
\iwal, where $K_{\ell}$ depends on the loss function
and is always greater than 1. Thus this new definition of $\rho$
improves the label complexity bound of \iwal.

\begin{theorem}[\cite{beygelzimer2009importance}]
\label{thm:iwal}
Let $\h h_T$ be the hypothesis output by \iwal\ after $T$ rounds.
For all $\delta>0$, with probability at least $1 - \delta$, 
for any $t\in[T]$,
\begin{align}
& R(\h h_T) 
\leq 
R^* + 2 \sqrt{\frac{8 \log \big[ \frac{2T (T + 1) |\sH|^2 }{\delta} \big]}{T}},\label{eq:iwalgen}\\
& \E_{x_t\sim \sD_\cX} \!\big[p_t | \cF_{t-1}\big] \!
\leq 
\!4\theta \bigg[ \!R^* \!+ \!\sqrt{\frac{8\log \big[ \frac{2(t - 1) t |\sH|^2}{\delta}\big]}{t-1}} \bigg]~,\label{eq:iwallabel}
\end{align}
where $\cF_t$ denotes the $\sigma$-algebra generated by
$(x_1, y_1, Q_1), \ldots, (x_t, y_t ,Q_t)$. 
\end{theorem}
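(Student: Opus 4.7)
The plan is to derive both inequalities from a single uniform concentration statement for the importance-weighted empirical risk $L_t(h) = \tfrac{1}{t}\sum_{s=1}^{t} \tfrac{Q_s}{p_s}\ell(h(x_s),y_s)$ and then use (i) the ERM optimality of $\hat h_T$ for the generalization bound and (ii) the IWAL shrinking rule combined with the disagreement coefficient for the label complexity bound. The key observation is that, conditional on $\mathcal{F}_{s-1}$ and on $x_s$, the bias of $\tfrac{Q_s}{p_s}\ell(h(x_s),y_s)$ is exactly $\ell(h(x_s),y_s)$ because $Q_s \sim \mathrm{Bernoulli}(p_s)$, so the differences $\tfrac{Q_s}{p_s}\ell(h(x_s),y_s) - \ell(h(x_s),y_s)$ form a martingale difference sequence. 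Moreover, because $p_s$ is defined as the maximum disagreement over $\sH_s\times\cY$, the range of $\tfrac{1}{p_s}\big[\ell(h(x_s),y_s) - \ell(h'(x_s),y_s)\big]$ is controlled by $1$ whenever $h,h'\in\sH_s$, which is what keeps the martingale increments bounded.

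First I would establish, via a Bernstein/Azuma-type martingale inequality followed by a union bound over $h,h'\in\sH$ and over $t\in[T]$, the following concentration statement: with probability at least $1-\delta$, for every $t\in[T]$ and every pair $h,h'$ that simultaneously lie in $\sH_s$ for all $s\le t$,
\begin{equation*}
\big|[L_t(h) - L_t(h')] - [R(h) - R(h')]\big| \;\le\; \sqrt{\tfrac{8\log(2T(T+1)|\sH|^2/\delta)}{t}}.
\end{equation*}
By a straightforward induction using this concentration inequality and the IWAL update rule (which only discards $h$ when its empirical risk is clearly worse than the minimum by more than the confidence slack), I would show that $h^*\in\sH_t$ for all $t$ on this high-probability event. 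Then for the generalization bound \eqref{eq:iwalgen}, I would invoke $L_T(\hat h_T)\le L_T(h^*)$ (ERM optimality within $\sH_T$, since $h^*\in\sH_T$), combined with the concentration statement applied to the pair $(\hat h_T, h^*)$, yielding $R(\hat h_T)-R^* \le 2\sqrt{8\log(\cdot)/T}$.

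For the label complexity bound \eqref{eq:iwallabel}, I would use that, by construction of the IWAL label-requesting probability, $\E[p_t\mid\mathcal{F}_{t-1}] = \E_{x_t}\big[\max_{h,h'\in\sH_t, y}\ell(h(x_t),y)-\ell(h'(x_t),y)\big]$, which by the triangle inequality is bounded by $2\E_{x_t}\sup_{h\in\sH_t, y}|\ell(h(x_t),y)-\ell(h^*(x_t),y)|$. Since every $h\in\sH_t$ survives the shrinkage rule and $h^*\in\sH_t$, the same concentration bound implies $\rho(h,h^*)\le R(h)-R^* + (\text{slack}) \le 2\big(R^* + \sqrt{8\log(\cdot)/(t-1)}\big)$ (using $R(h)\le R^*+$ small slack from survival, together with $\rho(h,h^*)\le R(h)+R(h^*)$ in the worst case). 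Applying the disagreement coefficient $\theta$ to convert the $\rho$-ball around $h^*$ into a supremum-difference bound then yields exactly $4\theta\big[R^*+\sqrt{8\log(\cdot)/(t-1)}\big]$.

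The main obstacle is the standard one for importance-weighted estimators: the summands $\tfrac{Q_s}{p_s}\ell(h(x_s),y_s)$ have range $1/p_s$ which can be arbitrarily large, so a naive Hoeffding bound is useless. The trick, which is the heart of the argument, is to apply concentration not to $L_t(h)$ individually but to the difference $L_t(h)-L_t(h')$ for $h,h'\in\sH_s$ at every past step $s\le t$; since $|\ell(h(x_s),y_s)-\ell(h'(x_s),y_s)|\le p_s$ by definition of $p_s$, the importance-weighted difference has range at most $1$, and the variance is controlled by $\rho(h,h')$. Handling the fact that $\sH_s$ evolves adaptively (so the pair of surviving hypotheses at time $t$ is random) is managed by taking the union bound over all $h,h'\in\sH$ up front, rather than over the evolving $\sH_t$; the bound then automatically holds for any pair that happened to survive.
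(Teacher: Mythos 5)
The paper does not prove Theorem~\ref{thm:iwal}: it is imported directly from \citet{beygelzimer2009importance}, with the modified metric $\rho(f,g)=\E_{(x,y)\sim\sD}|\ell(f(x),y)-\ell(g(x),y)|$ justified by reference to \citet{cortes2019disgraph}, and no proof appears in the appendix. Your sketch is therefore a reconstruction of that external argument rather than something to match against the paper, but it does capture the essential ingredients of the IWAL analysis correctly: the range-reduction trick of applying concentration to $\tfrac{Q_s}{p_s}[\ell(h(x_s),y_s)-\ell(h'(x_s),y_s)]$ rather than to individual importance-weighted losses (so the increments stay in $[-1,1]$ because $|\ell(h,\cdot)-\ell(h',\cdot)|\le p_s$ whenever $h,h'\in\sH_s$); a martingale concentration bound plus a union bound over all $h,h'\in\sH$ and $t\in[T]$; an induction showing $h^*\in\sH_t$ for all $t$; ERM optimality plus concentration for the generalization bound; and the shrinking rule combined with the disagreement coefficient for the label bound. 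That is the standard architecture of the IWAL proof.

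One local step in your label-complexity argument is mis-stated: you write $\rho(h,h^*)\le R(h)-R^*+(\text{slack})$, but $\rho(h,h^*)=\E|\ell(h)-\ell(h^*)|\ge |R(h)-R^*|$, so $R(h)-R^*$ is a \emph{lower} bound on $\rho(h,h^*)$, not an upper bound. Your own parenthetical contains the correct starting point, namely $\rho(h,h^*)\le R(h)+R(h^*)$; the chain that actually works is $\rho(h,h^*)\le R(h)+R^*\le (R^*+\text{slack})+R^* = 2R^*+\text{slack}\le 2(R^*+\text{slack})$, where $R(h)\le R^*+\text{slack}$ comes from $h$ surviving the shrinkage rule together with the concentration inequality. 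Combined with $p_t\le 2\sup_{h\in\sH_t,\,y}|\ell(h(x_t),y)-\ell(h^*(x_t),y)|$ from the triangle inequality and then the definition of $\theta$, this gives the stated $4\theta[R^*+\text{slack}]$. With that substitution the outline is sound; the remaining gap is only the routine bookkeeping needed to pin down the exact constants in $\Delta_t$.
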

Thus, 
the generalization error of the returned hypothesis $\h h_T$ is 
close to that of the best-in-class, while
the expected number of labels requested after $T$ rounds is in
$O(R^*T)$.

\section{Region-Based Active Learning}
\label{app:disagreement}

The following results are adapted from \cite{cortes2019rbal}.
Lemma~\ref{lemma:dis-coef} relates the region-specific disagreement
coefficients $\theta_k = \theta(\sD_k, \cH)$ to the overall
disagreement coefficient $\theta(\sD,\prodH)$, where
$\prodH= \big\{ \sum_{k = 1}^{\kappa} \one_{x \in \cX_k} h_k(x)\,
\colon\, h_k \in \sH \big\}$. Theorem~\ref{thm:compare_iwal} compares
the learning guarantees of running with $\prodH$ and running \iwal\
within each region separately.

\begin{lemma}\label{lemma:dis-coef}
  The generalized disagreement coefficient $\theta(\sD,\prodH)$ satisfies 
  $\theta(\sD,\prodH)\leq\sum_{k = 1}^{\kappa}\theta(\condsD_k,\sH)$.
\end{lemma}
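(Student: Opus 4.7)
The plan is to unpack the definition of the disagreement coefficient for the product class $\prodH$ and reduce it region by region to the definitions of the region-specific disagreement coefficients $\theta(\condsD_k, \sH)$. The key structural fact is that both the generalization error and the distance $\rho$ decompose additively over the regions, and for a point $x \in \cX_k$ the values of $h \in \prodH$ only depend on the $k$-th component $h_k \in \sH$.

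First I would identify the best-in-class hypothesis of $\prodH$ with respect to $\sD$: since $R(h) = \sum_k \tp_k R_k(h_k)$ for any $h = \sum_k 1_{x \in \cX_k} h_k \in \prodH$, the unique minimizer has the form $h^* = \sum_k 1_{x \in \cX_k} h_k^*$, with $h_k^*$ the best-in-class in $\sH$ under $\condsD_k$. Next I would show the decomposition
\[
\rho(h, h^*) \;=\; \E_{(x,y)\sim \sD}\bigl|\ell(h(x),y) - \ell(h^*(x),y)\bigr| \;=\; \sum_{k=1}^{\kappa} \tp_k\, \rho_k(h_k, h_k^*),
\]
where $\rho_k$ denotes the analogue of $\rho$ under $\condsD_k$. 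This follows immediately by conditioning on the region $\cX_k$ containing $x$.

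The core step is to bound, for any $r>0$, the quantity
$\E_{x\sim \sD_\cX}\!\bigl[\sup_{h\in\prodH,\,\rho(h,h^*)\leq r,\, y\in\cY}|\ell(h(x),y)-\ell(h^*(x),y)|\bigr]$ by conditioning on the region. For $x \in \cX_k$, the integrand only depends on the $k$-th component $h_k$, and one can always enlarge the feasible set by setting $h_j = h_j^*$ for all $j \neq k$. The constraint $\rho(h,h^*) \leq r$ then reduces to $\tp_k \rho_k(h_k, h_k^*) \leq r$, i.e.\ $\rho_k(h_k, h_k^*) \leq r/\tp_k$. Hence
\[
\E_{x\sim \condsD_k}\!\!\sup_{h_k \in \sH,\,\rho_k(h_k,h_k^*)\leq r/\tp_k,\,y}\!\bigl|\ell(h_k(x),y)-\ell(h_k^*(x),y)\bigr| \;\leq\; \theta(\condsD_k, \sH)\cdot \frac{r}{\tp_k},
\]
by the definition of $\theta(\condsD_k, \sH)$ applied with radius $r/\tp_k$.

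Finally, I would combine these region-wise bounds via the total-probability decomposition
$\E_{x\sim\sD_\cX}[\,\cdot\,] = \sum_k \tp_k \,\E_{x\sim \condsD_k}[\,\cdot\,]$. The $\tp_k$ factors cancel exactly, yielding
$\E_{x\sim\sD_\cX}[\sup |\ell - \ell|] \leq \sum_{k=1}^{\kappa} \theta(\condsD_k,\sH)\cdot r$, and so $\theta(\sD, \prodH) \leq \sum_{k=1}^\kappa \theta(\condsD_k, \sH)$ by definition of the disagreement coefficient. The only subtle point — and the main thing to verify carefully — is the weakening of the joint constraint $\rho(h,h^*)\leq r$ to the per-region constraint $\rho_k(h_k,h_k^*)\leq r/\tp_k$; this is where the cancellation of $\tp_k$ originates and where a careless argument could lose a factor of $1/\tp_k$.
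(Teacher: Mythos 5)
Your proof is correct and follows essentially the same route as the paper's: decompose $\rho(h, h^*) = \sum_{k} \tp_k\, \rho_k(h_k, h_k^*)$, observe that for $x \in \cX_k$ the objective depends only on $h_k$ and the relevant feasible set for $h_k$ is $B_k(h_k^*, r/\tp_k)$ (dropping the nonnegative contributions from $j \neq k$), apply the region-wise definition of $\theta(\condsD_k, \sH)$ at radius $r/\tp_k$, and sum so the $\tp_k$ factors cancel. The paper dresses the key step up as a two-sided set identity $B(h^*,r) = \cup_\lambda G_\lambda(h^*,r)$, but as you correctly note, the upper bound only requires the single containment direction that $\tp_k\,\rho_k(h_k,h_k^*) \le \rho(h,h^*) \le r$ forces $h_k \in B_k(h_k^*, r/\tp_k)$.
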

\begin{proof}
  Denote $h^*=\argmin_{h\in\prodH}R(h)$, and $h^*_k = \argmin_{h\in\sH} R_k(h)$.
  Recall that $\condsD_k = \sD|\cX_k$ denotes the conditional distribution 
  of $x$ on $\cX_k$, and that
  $h^*$ is defined as $h^* = \sum_{k = 1}^{\kappa}\one_{x\in X_{k}}h_{k}^{*}$.
  Extending the definitions in Section~\ref{sec:theory}, we define
  $$\rho_k(f,g)=\E_{(x,y)\sim\condsD_k} |\ell(f(x),y)-\ell(g(x),y)|.$$
  Given the hypothesis set $\sH$ and any real $r>0$, define 
  $$B_k(f,r)=\big\{ g\in \sH \colon \rho_k(f,g)\le r\big\}.$$
  For a set of non-negative values $\lambda=\{ \lambda_{1},\dots,\lambda_{\kappa}\}$ , let
  $$G_{\lambda}(h^{*},r)=\Big\{ \sum_{k = 1}^{\kappa}
  1_{x\in X_{k}}g_{k} \colon g_{k}\in B_{k}(h_{k}^{*},\lambda_{k}r)\Big\} .$$

  We first show that, for any $\lambda$ satisfying 
  $\sum_{k = 1}^{\kappa}\tp_k\lambda_{k}\leq1$, $G_{\lambda}(h^{*},r)\subseteq B(h^{*},r)$. 
  Let $g=\sum_{k = 1}^{\kappa}1_{x\in X_{k}}g_{k}$, where $g_k \in B_k(h_k^*, \lambda_k r)$. 
  Then,
  \begin{align*}
     \rho\left(h^{*},g\right) 
     &=\E_{(x,y)\sim \sD}|\ell(h^{*}(x),y)-\ell(g(x),y)| \\
     & =\sum_{k = 1}^{\kappa}\tp_k\E_{(x,y)\sim \condsD_k}
    |\ell(h_{k}^{*}(x),y)-\ell(g_{k}(x),y)|  \\
    & \leq\sum_{k = 1}^{\kappa}\tp_k\lambda_{k}r \leq r. 
  \end{align*}
  Thus, $\set[\big]{ \cup_{\lambda \colon \sum_{k = 1}^{\kappa}\tp_k\lambda_{k}\leq1}
  G_{\lambda}(h^{*},r)} \subseteq B(h^{*},r)$.
  On the other hand, if there exits a hypothesis $h$ such that 
  $$
  h\in B(h^{*},r)\Big\backslash \set[\big]{ \cup_{\lambda \colon \sum_{k = 1}^{\kappa}
  \tp_k\lambda_{k}\leq1}G_{\lambda}(h^{*},r)}~,
  $$
  let this $h$ be of the form $h=\sum_{k = 1}^{\kappa}1_{x\in X_{k}}h_{k}$. Then,
  \begin{align*}
  \rho(h^{*},h) =\sum_{k = 1}^{\kappa}\tp_k\rho_k(h_k^*,h_k)\leq r 
   \Rightarrow \sum_{k = 1}^{\kappa} \tp_k\frac{\rho_k(h_{k}^{*},h_{k})}{r}\leq1.
  \end{align*}
  Obviously, $h_k \in B_k(h_k^*, \rho_k(h_k^*,h_k))$. Thus, let 
  $\lambda=\{ \frac{\rho_1(h_{1}^{*},h_{1})}{r},\dots,
  \frac{\rho_p(h_{\kappa}^{*},h_{\kappa})}{r}\} $, then
  $\sum_{k = 1}^{\kappa}\tp_k\lambda_{k}\leq1$, and $h\in G_{\lambda}(h^{*},r)$ by definition. 
  We have a contradiction. Therefore, 
  \[
    \set[\Big]{ \cup_{\lambda \colon \sum_{k = 1}^{\kappa}\tp_k \lambda_{k}\leq1}
    G_{\lambda}(h^{*},r)} = B(h^{*},r)~.
  \]
  Given the equivalence above, for any $k\in[\kappa]$,
  \begin{align}
    \sH \cap B(h^{*},r) 
    & = \sH \cap \{ \cup_{\lambda \colon \sum_{k = 1}^{\kappa}\tp_{k}\lambda_{k}\leq1}
    G_{\lambda}(h^{*},r)\} \nonumber \\
    & = \sH \cap \{ \cup_{\lambda_k \leq 1/\tp_k} B_k(h_k^*, \lambda_k r)\} 
    \label{eq:H_cap_B2}\\
    & = B_{k}(h_{k}^{*},r/\tp_k)\label{eq:H_cap_B3}~.
  \end{align}
  Equation~\eqref{eq:H_cap_B2} follows from the definition of $G_\lambda(h^*,r)$.
  Putting everything together, we have for any $r\ge0$, 
  \begin{align}
    & \E_{x\sim D}\sup_{h\in B(h^{*},r)}\sup_{y}
    |\ell(h(x),y)-\ell(h^{*}(x),y)| \nonumber \\
    &=  \sum_{k = 1}^{\kappa}\tp_k\E_{x\sim \condsD_k}\sup_{h\in B(h^{*},r)}\sup_{y}
    |\ell(h(x),y)-\ell(h^{*}(x),y)| \nonumber \\
    &=  \sum_{k = 1}^{\kappa}\tp_k\E_{x\sim \condsD_k}
    \sup_{y, h_k\in B_k(h_k^{*},\frac{r}{\tp_k} )} 
    |\ell(h_k(x),y)-\ell(h_k^{*}(x),y)| \label{eq:discoef1}\\
    &\leq  \sum_{k = 1}^{\kappa}\tp_k\theta(\condsD_k, \sH)r/\tp_k \label{eq:discoef2}\\
    &= \Big(\sum_{k = 1}^{\kappa}\theta(\sD_k ,\sH)\Big)r\nonumber .
  \end{align}
  Equation~\eqref{eq:discoef1} holds due to the equivalence in~\eqref{eq:H_cap_B3}, 
  and inequality~\eqref{eq:discoef2} follows from the definition of 
  $\theta(\condsD_k,\sH)$.

  Finally, recall the definition of $\theta(\sD,\prodH)$:
  \begin{align*}
    \theta&(\sD,\prodH) = \inf \Bigl\{\theta \colon \forall r\ge 0,\\
          &     \E_{x\sim\sD}\sup_{h\in B(h^{*},r)}\sup_{y}
    |\ell(h(x),y)-\ell(h^{*}(x),y)|\le\theta r\Bigl\}.
  \end{align*}
  Therefore $\theta(\sD,\prodH)\leq\sum_{k = 1}^{\kappa}\theta(\condsD_k,\sH_k)$, 
  which concludes the proof.
\end{proof}

Combining Lemma~\ref{lemma:dis-coef} with the learning guarantee of \iwal\ (Theorem \ref{thm:iwal}), 
we obtain the following result.

\begin{theorem}\label{thm:compare_iwal}
Assume $\theta_k = \theta(\condsD_k,\sH)$ is the same across all regions $\cX_k, k \in [\kappa]$.
Consider running with \iwal\ with $\prodH$ (Method 1) and running \iwal\ with $\cH$ on
each region separately (Method 2). 
Then, the hypothesis returned 
by both methods admit comparable generalization error guarantees, 
but on average running with $\prodH$ would request up to $\kappa$ times more labels.
\end{theorem}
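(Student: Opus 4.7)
The plan is to instantiate Theorem~\ref{thm:iwal} for both methods and then compare the two sides term by term, using Lemma~\ref{lemma:dis-coef} to handle the disagreement coefficient.

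First, for Method 1, I would apply Theorem~\ref{thm:iwal} directly to \iwal\ run with the product class $\prodH$. Since a hypothesis in $\prodH$ is specified by choosing one element of $\sH$ per region, $|\prodH| = |\sH|^\kappa$, so $\log|\prodH| = \kappa \log|\sH|$. This gives
\[
R(\widehat h_T) \le R^*_{\prodH} + O\!\left(\sqrt{\tfrac{\kappa\log(|\sH|/\delta)}{T}}\right),
\]
where $R^*_{\prodH} = \min_{h\in\prodH} R(h) = \sum_{k=1}^\kappa \tp_k R_k^*$, and an expected label count of at most $4\,\theta(\sD,\prodH)\,R^*_{\prodH}\,T + O(\sqrt{T})$. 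By Lemma~\ref{lemma:dis-coef} and the assumption $\theta_k=\theta$, we have $\theta(\sD,\prodH)\le\kappa\theta$, giving an upper bound on labels of $4\kappa\theta R^*_{\prodH}T + O(\sqrt{T})$.

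Next, for Method 2, I would apply Theorem~\ref{thm:iwal} separately within each $\cX_k$, using a union bound over the $\kappa$ regions (replacing $\delta$ by $\delta/\kappa$). Conditioning on $T_k \approx \tp_k T$ (justified by a standard Bernstein/Hoeffding concentration for the region counts), each region's output satisfies $R_k(\widehat h_{k,T_k}) \le R_k^* + O(\sqrt{\log(\kappa|\sH|/\delta)/(\tp_k T)})$. Aggregating via $R(\widehat h_T) = \sum_k \tp_k R_k(\widehat h_{k,T_k})$ and applying Cauchy–Schwarz on $\sum_k \tp_k \sqrt{1/(\tp_k T)} = \tfrac{1}{\sqrt{T}}\sum_k\sqrt{\tp_k}\le\sqrt{\kappa/T}$ yields a generalization bound of $R^*_{\prodH} + O(\sqrt{\kappa/T})$, which matches Method 1 up to logs.

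For label complexity under Method 2, summing the per-region bound $4\theta R_k^* T_k + O(\sqrt{T_k})$ gives, in expectation, $4\theta\sum_k \tp_k R_k^* T + O(\sqrt{\kappa T}) = 4\theta R^*_{\prodH} T + O(\sqrt{\kappa T})$. Comparing the leading terms, Method 1 can use up to a factor $\kappa$ more labels than Method 2, which is precisely the claim.

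The main obstacle I anticipate is handling the per-region sample sizes $T_k$ rigorously: Theorem~\ref{thm:iwal} is stated for a deterministic horizon, whereas here $T_k$ is random. Controlling this requires either conditioning on the high-probability event $T_k \in [\tp_k T/2, 2\tp_k T]$ via a Chernoff bound and absorbing the failure probability into the final $\delta$, or reformulating the per-region guarantee with respect to the random stopping time. The other bookkeeping item is ensuring the logarithmic factors in $|\prodH|$ on the one side and the union bound over $\kappa$ regions on the other combine comparably, so that the generalization bounds indeed match up to constants, as claimed by ``comparable guarantees.''
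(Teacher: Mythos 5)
Your proposal is correct and follows essentially the same approach as the paper: apply Theorem~\ref{thm:iwal} to $\prodH$ for Method~1 (with $|\prodH|=|\sH|^\kappa$ inflating the log factor and Lemma~\ref{lemma:dis-coef} bounding $\theta(\sD,\prodH)\le\kappa\theta$), apply it per region with a union bound for Method~2, replace $T_k$ by $T\tp_k$ plus lower-order fluctuation, aggregate using $\sum_k\sqrt{\tp_k}\le\sqrt{\kappa}$, and compare leading terms to extract the factor-$\kappa$ gap in label complexity. Your explicit acknowledgment of the random-$T_k$ issue is handled in the paper exactly as you suggest, by substituting $T_k = T\tp_k + O(\sqrt{T})$.
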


\begin{proof}
  Denote $h^*=\argmin_{h\in\prodH}R(h)$.
  Let $N = |\sH|$, and $\theta_0 = \theta(\condsD_k, \sH)$, for all $k\in[\kappa]$,
  so that $|\prodH| = N^\kappa$ and, from Lemma~\ref{lemma:dis-coef}, $\theta(\sD,\prodH)\leq \kappa\theta_0$. 
  According to the learning guarantee of \iwal, with probability at least 
  $1 - \delta$, Method 1 (running with $\prodH$) satisfies
  \begin{align}
    & R(h_{T}^{(1)})
    \leq R(h^{*}) + O\Big(\sqrt{\frac{\log (T N^{2\kappa}/\delta)}{T}}\Big), 
    \label{eq:naiwal_reg}\\
    & \tau_T^{(1)}  \leq 4 \kappa \theta_0
    \Big[R(h^{*})T + O\Big(\sqrt{T\log(TN^{2\kappa}/\delta)}\, \Big) \Big] .\label{eq:naiwal_label}
  \end{align}
  In addition, with probability at least $1 - \delta$, Method 2 
  (running \iwal\ within each region separately) satisfies
  \begin{align}
    & R(h_{T}^{(2)}) 
     \leq R(h^*) + 
    \sum_{k = 1}^\kappa \tp_k \, O\Big(\sqrt{\frac{\log (T|N|^2
      \kappa/\delta)}{T_k}}\Big), \mspace{-5mu}
    \label{eq:riwal_reg}\\
    & \tau_T ^{(2)} 
     \leq \sum_{k = 1}^\kappa 4 \theta_0 \Big[R_k(h^{*})T \tp_k 
    + O(\sqrt{2T \tp_k \log (2TN^2 \kappa/\delta})\Big] \nonumber \\
    &\quad \ \  = 4\theta_0 \Big[R(h^*)T + \sum_{k = 1}^\kappa O(\sqrt{2T \tp_k 
    \log (2TN^2 \kappa/\delta})\Big].\label{eq:riwal_label}
  \end{align}
  Replacing $T_k$ with $ T \tp_k + O(\sqrt{T})$ in the RHS of~\eqref{eq:riwal_reg}, 
  and using the fact that $\sum_{k = 1}^\kappa \sqrt{\tp_k} \leq \sqrt{\kappa}$, we obtain 
  \begin{align}\label{eq:riwal_reg2}
    R(h_{T}^{(2)}) &\leq R(h^*) + 
    O\Big(\sqrt{\frac{\kappa \log (T|N|^2 \kappa/\delta)}{T}}\Big).
  \end{align}
  Comparing the upper bound on the generalization error of 
  Method 2~(Eq. \eqref{eq:riwal_reg2}) to that of Method 1~(Eq. \eqref{eq:naiwal_reg}), 
  we conclude that the two algorithms admit comparable learning guarantees.

  On the other hand, comparing the proportion of labels requested per round, we have
  \begin{align*}
    \tau_T^{(1)} /T &\leq 4\kappa \theta_0 R(h^*) + 
    O\bigg(\frac{1}{\sqrt{T}}\bigg),\\
    \tau_T^{(2)} /T &\leq 4\theta_0 R(h^*) + 
    O\bigg(\frac{1}{\sqrt{T}}\bigg).
  \end{align*}
  Thus, Method 1 may request up to $\kappa$ times more labels than Method 2.
\end{proof}

\section{Proofs}
\label{app:proof}

For simplicity of presentation, all results are stated and proven under the assumption that the loss function $\ell$ is $\mu$-Lipschitz with $\mu \leq 1$. 
This is the case, e.g., for hinge loss and logistic loss.

\begin{replemma}{lemma:split_concentrate}
  With probability at least $1 - \delta/4$, for all binary trees with (at most) $\kappa$ leaf nodes,
  the improvement in the minimal empirical error by splitting 
  concentrates around the improvement in the best-in-class error:
   \begin{align*}
       & \Big|
     \big[R_k(h_k^*) - R_k(h_{lr}^*)\big] - 
     \big[L_{k, t}(\h h_{k, t}) - L_{k, t}(\h h_{lr,t})\big]\Big| \\
     & \leq \sqrt{\frac{2\slack}{T_{k, t}}}.
     \ignore{     & \leq \sqrt{\frac{2\kappa D \log(2|\sH|^3 T_{k, t} (T_{k, t}+1) 
     \kappa TD/\delta)}{T_{k, t}}}.}
   \end{align*}
\end{replemma}
\begin{proof}

    We first assume that the splitting threshold $c$ only takes values 
    in pre-specified sets. To be more concrete, when splitting along coordinate 
    $d$, the threshold only takes one of the $C$ pre-specified values:
    $c\in \Theta^d = \{\theta^d_1, \cdots, \theta^d_C\}$, 
    where $\theta^d_1,\cdots,\theta^d_C\in\Rset$ discretize the $d$-th coordinate.
    Given this assumption, we can upper bound the number of possible binary trees with 
    at most $\kappa$ regions.  Note that for there to be $\kappa$ regions, there must be $\kappa-1$ splits.
    Also note that, at each split, there are at most $C\times D$ possible splitting tuples of $(d,c)$,
    where $D$ is the number of features and $C$ is the number of possible thresholds.
    At the $k$-th split, $k\leq \kappa-1$, one first chooses which leaf node 
    to split on (there are $k$ of them), and then picks a splitting tuple,
    thus there are $kCD$ possible splitting outcomes.
    By the multiplication rule in probability, there are a total of 
    $$
    (CD) \times (2CD) \times (3CD) \cdots \times ((\kappa-1)CD) \leq  (\kappa CD)^{\kappa}
    $$
    possible binary trees with $\kappa$ regions.

    We prove this Lemma as follows. We first fix a binary tree and prove 
    concentration inequalities that hold for every internal node of that tree. 
    Next, we take a union bound over 
    the $(\kappa CD)^{\kappa}$ trees to extend these inequalities to hold every 
    node of every tree with the given splitting thresholds.
    Finally, we extend that to trees with arbitrary thresholds using 
    a standard covering number argument.

    Fix a binary tree as well as an intermediate region $k$ during the split phase.
    Furthermore, fix a $T_k>0$ and condition on the event $T_{k, t}=T_k$.
    Then we can drop the time subscript $t$ from notation, 
    since the tail probability will be determined by $T_k$ only. 
    To avoid clutter in the notation, we re-index the sample 
    points $x_1,\cdots, x_T$
    in such a way that the first $T_k$ of them fall in region $\cX_k$.

    Define the composite hypothesis set
    $\sH^2 = \big\{1_{x\in\cX_l} h_1 + 1_{x\in\cX_r}h_2\colon
    h_1,h_2\in\sH\big\}$; then $|\sH^2| = |\sH|^2$. Moreover, 
    \[
        h_{lr}^* = \argmin_{h\in\sH^2} R_k(h), \quad
        \h h_{lr} = \argmin_{h\in\sH^2} L_k(h)\,.
    \]

    Fix a pair of hypotheses $f\in \sH$, $g \in \sH^2$.
    For brevity, define $\ell(h,h'; x,y) = \ell(h(x),y) - \ell(h'(x),y)$, and
    then the random variable 
    \[
        %    Z_t = \frac{Q_t}{p_t}\big[ \ell(f(x_t),y_t) - \ell(g(x_t),y_t)\big].
        Z_t = \frac{Q_t}{p_t} \ell(f,g; x_t, y_t). \,
    \]
    Then, $\{Z_t, t\in[T_k]\}$ are i.i.d.\ random variables, 
    since when the hypothesis set $\sH$ is fixed we have
    $p_t = p(x_t)$, being $p(x)$ the average disagreement of $\sH$ on $x$.
    Thus $p_t$ only depends on $x_t$ and $\sH$, and is independent of the past 
    (unlike the standard \iwal).

    By definition, $|Z_t|\leq 1$ since at point $x_t$, 
    \[
        \max_{f\in\sH, g\in\sH^2} \ell(f,g; x_t, y_t) 
        =\max_{f\in\sH, g\in\sH} \ell(f,g; x_t, y_t) \leq p_t,
    \]
    where recall the label request probability 
    \[
        %p_t = \max_{h,h' \in \sH,\,y\in\cY} \ell(h(x_t),y_t) - \ell(h'(x_t),y_t).
        p_t = \max_{h,h' \in \sH,\,y_t\in\cY} \ell(h,h'; x_t, y_t).
    \]
    Furthermore,
    \[
        \E_{\substack{Q_t \sim p_t \\ (x_t,y_t)\sim\sD|\cX_k}} [Z_t ]= 
        R_k(f) - R_k(g).
    \]
    Applying Hoeffding's inequality to $Z_t$ yields
    \begin{align*}
        & \Pr\bigg(\bigg| \sum_{t=1}^{T_k} Z_t - \E[Z_t] \bigg|
        \geq T_k \Delta_{T_k}\bigg) 
        \leq 2e^{-\frac{T_k \Delta_{T_k}^2}{2}}  \\
        & = \frac{\delta}{4T_k (T_k+1)|\sH|^3}\,,
    \end{align*}
    where
    $\Delta_{T_k}=\sqrt{\frac{2\log(8|\sH|^3 T_k
    (T_k+1)/\delta)}{T_k}}$. A union bound over all possible
    values of $T_k$ and all pairs of $(f,g) \in \sH \times \sH^2$
    allows us to conclude that, with probability at least $1 - \delta/4$, for all
    $T_k$ %(thus for all $t$) 
    and all $(f,g)$,
    \begin{equation}\label{eq:azuma2}
        | R_k(f) - R_k(g) - L_k(f) + L_k(g)|\leq \Delta_{T_k}.
    \end{equation}
    Thus, 
    \begin{align*}
        R_k(h_k^*) - R_k(h_{lr}^*) 
        &\geq R_k(h_k^*) - R_k(\h h_{lr})  \\
        & \geq L_k(h_k^*) - L_k(\h h_{lr}) - \Delta_{T_k} \\
        &\geq L_k(\h h_k) - L_k(\h h_{lr}) - \Delta_{T_k}\,,
    \end{align*}
    where the first inequality follows from the definition of $h_{lr}^*$,
    the second inequality follows from \eqref{eq:azuma2}
    (since $h_k^*\in\sH$ and $\h h_{lr} \in \sH^2$), and
    the last inequality follows from the definition of $\h h_{k}$.

    Similarly, 
    \begin{align*}
        R_k(h_k^*) - R_k(h_{lr}^*) 
        &\leq R_k(\h h_k) - R_k( h_{lr}^*) \\
        &\leq L_k(\h h_k) - L_k( h_{lr}^*) + \Delta_{T_k} \\
        &\leq L_k(\h h_k) - L_k(\h h_{lr}) + \Delta_{T_k}\,.
    \end{align*}

    To take a union bound over at most $\kappa$ regions, as well as over the $(\kappa CD)^{\kappa}$ 
    possible binary trees, we replace $\delta$ with $\frac{\delta}{\kappa (\kappa CD)^{\kappa}}$
    in the expression of $\Delta_{T_k}$. Thus, we have the concentration results of Eq.~\eqref{eq:azuma2} 
    hold uniformly over all $\kappa$ leaf nodes and over all binary trees constructed from the pre-specified 
    thresholds, with
    $\Delta_{T_k}=\sqrt{\frac{2\kappa\log(8|\sH|^3 T_{k, t} (T_{k, t}+1) \kappa CD/\delta)}{T_{k, t}}}$. 

    \ignore{Finally, we set $C = 1/\e$, and thus the covering number is $O(1/{\e^D})$. 
    Further replacing $\delta$ with $\delta \e^D$, and setting $\e = O(1/\sqrt{T})$, we have 
    $\Delta_{T_k}=\sqrt{\frac{2\kappa D \log(2|\sH|^3 T_{k, t} (T_{k, t}+1) \kappa TD/\delta)}{T_{k, t}}} $.
    By the covering number argument, extending from the pre-specified splitting thresholds to arbitrary binary trees
only adds an $\e = O(1/\sqrt{T})$ term to the right-hand side of the concentration result, which is thus omitted.}

    Finally, by a standard argument, with the $\mu$-Lipschitzness of the loss, with $\mu \leq 1$,
    the family of losses of trees with any threshold can be covered by the losses 
    of those with thresholds in $\Theta$, where $\Theta$ is defined by values
    separated by $\e$ for each dimension, which has cardinality $(1/\e)^D$.
    Further replacing $\delta$ with $\delta \e^D$, 
    and setting $\e = 1/T$, we have 
    $\Delta_{T_k}=\sqrt{\frac{2\kappa D \log(8|\sH|^3 T_{k, t} (T_{k, t}+1) \kappa TD/\delta)}{T_{k, t}}} $.
    Upper bounding $T_{k, t} (T_{k, t}+1)$ with $T^2$ yields 
    Lemma~\ref{lemma:split_concentrate}.
\end{proof}

\begin{repcorollary}{thm:split}
  With probability at least $1 - \delta/4$, for all splits made by \arbal,
  the improvement in the best-in-class error is at least $\gamma_t$,
  where $\gamma_t$ is the threshold at the time of split.
\end{repcorollary}

\begin{proof}
  Let \arbal\ split at time $t$ region $\cX_k$ with threshold $\gamma_t$. 
  From Lemma~\ref{lemma:split_concentrate}, with probability at least 
  $1 - \delta/4$, for any split that \arbal\ makes, 
  \begin{align*}
      & \tp_k  \big(R_k(h_k^*) - R_k(h_{lr}^*) \big) \\
    & \geq  \tp_k
    \Big(L_{k, t}(\h h_{k, t}) - L_k(\h h_{lr,t}) 
    - \eta_{k,t}\Big)
%    &    - \sqrt{\frac{2\kappa D \log(8|\sH|^3 T_{k, t} (T_{k, t}+1)\kappa TD/\delta)}{T_{k, t}}}\Big) 
     \geq \gamma_t\,,
  \end{align*}
  where the last inequality follows from the definition of $\gamma_t$
  and the splitting criterion.
\end{proof}

\ignore{We give another version of the splitting criterion based on Bernstein inequality,
which refines the slack term in Corollary~\ref{thm:split}.
\begin{theorem}\label{thm:split_bern}
Let region $\cX_{k}$ be split at time $t$ into $\cX_l$ and $\cX_r$ with
threshold $\gamma$.  
% For any $\delta > 0$, 
Then, with probability at least $1 - \delta$,
  \begin{align*}
    \tp_k \big(R_k(h_k^*) - R_k(h_{lr}^*) \big) 
     \geq \tp_k (\Delta L_{k, t}) \geq \gamma,
  \end{align*}
  where 
  \begin{align*}
    \Delta L_{k, t} &= L_{k, t}(\h h_{k, t}) - L_k(\h h_{lr,t})\\
       & - \frac{4\log\big[\frac{4T_{k, t} (T_{k, t}+1)|\sH|^3}{\delta}\big]}{3T_{k, t}} \nonumber 
     - \sqrt{\frac{2 \h p_{k, t} \log\big[\frac{4T_{k, t} (T_{k, t}+1)|\sH|^3}{\delta}\big]}{T_k}} \nonumber 
     - \bigg(\frac{ \log\big[\frac{4T_{k, t} (T_{k, t}+1)|\sH|^3}{\delta}\big]}{T_{k, t}}\bigg)^{3/4}\,,
  \end{align*}
  and where $\h p_{k, t}$ is the average probability of requesting labels on $\cX_k$:
  $$
  \h p_{k, t} = \frac{\sum_{s=1,x_s \in \cX_k}^t p_s }{T_{k, t}}.
  $$
\end{theorem}

\begin{proof}
  We define the same quantities as in the proof of Theorem~\ref{thm:split}, and proceed similarly with $T_k$.
  Consider the variance of $Z_t$:
  \begin{align*}
    \var_{\substack{Q_t \sim Bernoulli(p_t) \\ (x_t,y_t)\sim\sD|\cX_k}}[Z_t] 
    &\leq \E_{\substack{Q_t \sim Bernoulli(p_t) \\ (x_t,y_t)\sim\sD|\cX_k}} \Big[\frac{Q_t}{p_t^2} \ell(f,g;x_t,y_t)^2\Big] \\
    & \leq \E_{\substack{Q_t \sim Bernoulli(p_t) \\ (x_t,y_t)\sim\sD|\cX_k}} \Big[\frac{Q_t}{p_t^2} p_t^2\Big] \\
    & \leq \E_{\substack{Q_t \sim Bernoulli(p_t) \\ (x_t,y_t)\sim\sD|\cX_k}} [Q_t] \\
    & = \E_{(x_t,y_t)\sim \sD|\cX_k} [p_t].
  \end{align*}
  For simplicity, we denote by $c = \E_{(x_t,y_t)\sim \sD|\cX_k} [p_t]$.
  By Hoeffding's inequality, with probability at least $1 - \delta/2$,
  for all $T_k\geq 1$, 
  \begin{align*}
     c \leq 
     \frac{1}{T_k} \Big(\sum_{s=1}^{T_k} p_s \Big) + \sqrt{\frac{\log\big[\frac{2T_k(T_k+1)}{\delta}\big]}{2T_k}}.
  \end{align*}
  Apply Bernstein's inequality to $Z_t-\E[Z_t]$. Since $|Z_t - \E[Z_t]|\leq 2$, we have 
  \begin{align}\label{eq:ept}
     \Pr\bigg(\frac{1}{T_k} \bigg| \sum_{t=1}^{T_k} Z_t - \E[Z_t] \bigg|
    \geq \e \bigg) 
     \leq 2e^{-\frac{T_k \e^2/2}{c + 2\e /3}}.
  \end{align}
  Setting the right-hand side to $\frac{\delta}{2T_k (T_k+1)|\sH|^3}$, the following holds 
  with probability $1-\frac{\delta}{2T_k (T_k+1)|\sH|^3}$,
  \begin{align*}
    & \frac{1}{T_k} \bigg| \sum_{t=1}^{T_k} Z_t - \E[Z_t] \bigg| 
     \leq \frac{4\log\big[\frac{4T_k (T_k+1)|\sH|^3}{\delta}\big]}{3T_k} 
    + \sqrt{\frac{2c \log\big[\frac{4T_k (T_k+1)|\sH|^3}{\delta}\big]}{T_k}} \,.
  \end{align*}
  A union bound over all possible values of $T_k$ and 
  all pairs of $(f,g) \in \sH \times \sH^2$, combined with \eqref{eq:ept} allows us to conclude that, 
  with probability at least $1 - \delta$, for all
  $T_k$ and all $(f,g)$,
  \begin{align}\label{eq:bernstein}
    | R_k(f)& - R_k(g) - L_k(f) + L_k(g)|\nonumber \\
    & \leq \frac{4\log\big[\frac{4T_k (T_k+1)|\sH|^3}{\delta}\big]}{3T_k} 
    + \sqrt{\frac{2c \log\big[\frac{4T_k (T_k+1)|\sH|^3}{\delta}\big]}{T_k}} \nonumber \\
    & \leq \frac{4\log\big[\frac{4T_k (T_k+1)|\sH|^3}{\delta}\big]}{3T_k} 
    + \sqrt{\frac{2\big[\frac{\sum_{s=1}^{T_k} p_s}{T_k}\big] \log\big[\frac{4T_k (T_k+1)|\sH|^3}{\delta}\big]}{T_k}} 
     + \bigg(\frac{ \log\big[\frac{4T_k (T_k+1)|\sH|^3}{\delta}\big]}{T_k}\bigg)^{3/4}.
  \end{align}
  Denote by $\Delta_{T_k}$ the above upper bound. Then,
  \begin{align*}
     R_k(h_k^*) - R_k(h_{lr}^*) 
    & \geq R_k(h_k^*) - R_k(\h h_{lr}) \\
    & \geq L_k(h_k^*) - L_k(\h h_{lr}) - \Delta_{T_k} \\
    & \geq L_k(\h h_k) - L_k(\h h_{lr}) - \Delta_{T_k}\, ,
  \end{align*}
  where the first inequality follows from the definition of $h_{lr}^*$,
  the second inequality follows from \eqref{eq:bernstein}
  (since $h_k^*\in\sH$ and $\h h_{lr} \in \sH^2$), and
  the last inequality follows from the definition of $\h h_{k}$.
\end{proof}
}

We now proceed to proving Theorem \ref{thm:splitiwal2}, but first
we show a version of Theorem~\ref{thm:splitiwal2} with random
quantities in it.
\begin{theorem}
\label{thm:splitiwal}
 Assume \arbal\ runs with a fixed $\gamma$ and has split the input 
 space into $K$ regions. Then, with probability at least $1 - \delta/2$,
  \begin{align}
    R(\h h_T) 
   &  \leq \sum_{k = 1}^{K} \tp_k \bigg[ R_k^* 
+ 4\sqrt{\mfrac{2\slack}{T_k}}\bigg] \label{err1}\\
  & \leq R^* - \gamma (K-1) + \sum_{k = 1}^{K} 4 \tp_k \sqrt{\mfrac{2\slack}{T_k}},\label{err2}
\end{align}
where $T_k$ is the total number of unlabeled samples in region $k$ up
to round $T$. Moreover, with probability at least $1 - \delta/2$,
\begin{align}
  & \sum_{t=1}^T \E_{x\sim\sD_{\cX}} \big[p_t |\cF_{t-1}\big] 
  \leq \min \{2\theta r_0, 1\}\tau \, \nonumber \\
  & +  \sum_{k = 1}^{K} 4\theta_k \bigg[R_k^* T_k' + 
  4\sqrt{2T'_k \slack}\bigg],\label{eq:splitlabel}
  \end{align}
  where $T_k'$ is the total number of unlabeled samples in region
  $k$ from round $\tau+1$ to $T$.
\end{theorem}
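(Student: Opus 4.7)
The plan is to treat the two phases of \arbal\ separately: during the split phase the algorithm acts like \iwal\ on the (unshrunk) hypothesis set $\sH$ at each fresh region, while during the \iwal\ phase the partition is frozen and a bona fide \iwal\ subroutine runs on each $\cX_k$ with $T_k'$ samples. The generalization bound will then be obtained by applying Theorem~\ref{thm:iwal} regionally and stitching the pieces together with the loss decomposition $R(h)=\sum_{k}\tp_k R_k(h)$, after which Corollary~\ref{thm:split} is telescoped over the $K-1$ splits to pass from $\sum_k \tp_k R_k^\ast$ to $R^\ast - \gamma(K-1)$. The label complexity bound will be proved by a different argument in the two phases, using the disagreement coefficient at the full radius $r_0$ in phase one and the \iwal\ label bound \eqref{eq:iwallabel} in phase two.

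For inequality \eqref{err1}, I would condition on the good events defining the final partition and invoke Theorem~\ref{thm:iwal} on each of the $K\le\kappa$ regions separately, with confidence parameter $\delta/(4\kappa)$, so that the $\log$ factor inside the \iwal\ concentration term is absorbed by $\slack=\kappa D\log[8T^3|\sH|^3\kappa D/\delta]$ after a union bound over regions. This yields $R_k(\h h_{k,T})\le R_k^\ast + 4\sqrt{2\slack/T_k}$ simultaneously for all $k$, and averaging with weights $\tp_k$ gives \eqref{err1}. For \eqref{err2}, I would prove by induction on the number of splits that $\sum_{k=1}^K \tp_k R_k^\ast \le R^\ast - \gamma (K-1)$: each time \arbal\ partitions a current region $\cX_{k}$ into $\cX_l\cup\cX_r$, Corollary~\ref{thm:split} (applied uniformly with probability $1-\delta/4$) guarantees that $\tp_k R_k(h_k^\ast) - \tp_l R_l(h_l^\ast) - \tp_r R_r(h_r^\ast)\ge \gamma$, so the sum of region-wise best-in-class errors strictly decreases by at least $\gamma$ at each split. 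Telescoping over the $K-1$ splits gives the claim, and substituting into \eqref{err1} yields \eqref{err2}.

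For the label complexity bound \eqref{eq:splitlabel}, I would split the sum $\sum_{t=1}^T\E[p_t\mid \cF_{t-1}]$ at $t=\tau$. During the split phase ($t\le\tau$) the hypothesis set hosted on the active region is never shrunk and remains $\sH$, so $p_t=\max_{h,h',y}[\ell(h(x_t),y)-\ell(h'(x_t),y)]$. By the triangle inequality through $h^\ast$, this is at most $2\sup_{h\in\sH,y}|\ell(h(x_t),y)-\ell(h^\ast(x_t),y)|$, hence the disagreement coefficient definition at radius $r_0=\max_{h\in\sH}\rho(h,h^\ast)$ gives $\E[p_t\mid\cF_{t-1}]\le 2\theta r_0$; combined with the trivial bound $p_t\le 1$ it contributes at most $\min\{2\theta r_0,1\}\tau$. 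During the \iwal\ phase, each region $\cX_k$ runs an independent \iwal\ instance on $T_k'$ samples, so the per-region label bound \eqref{eq:iwallabel} of Theorem~\ref{thm:iwal} delivers $\sum_{t>\tau,\,x_t\in\cX_k}\E[p_t\mid\cF_{t-1}]\le 4\theta_k[R_k^\ast T_k' + 4\sqrt{2T_k'\slack}]$ after absorbing the per-region logarithmic factor into $\slack$ via a union bound over the at most $\kappa$ regions. Summing over $k$ completes \eqref{eq:splitlabel}.

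The main obstacle is bookkeeping rather than new ideas: making the concentration event in Lemma~\ref{lemma:split_concentrate} compatible with the \iwal\ events of Theorem~\ref{thm:iwal} per region, so that a single $\slack$ simultaneously controls (i) the uniform deviation used to justify every possible split, (ii) the \iwal\ generalization term on every one of the $K\le\kappa$ surviving regions, and (iii) the \iwal\ label-complexity term on those regions. This is achieved by replacing $\delta$ with $\delta/(\text{poly}(\kappa, T, D,|\sH|))$ inside each application of Theorem~\ref{thm:iwal}, which is exactly what the definition $\slack=\kappa D\log[8T^3|\sH|^3\kappa D/\delta]$ is engineered for. The final step is then to take a union bound over the split-phase event (Corollary~\ref{thm:split}) and the \iwal-phase events to obtain overall probability at least $1-\delta/2$ for each of the two conclusions of the theorem.
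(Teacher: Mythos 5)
Your high-level plan matches the paper's proof: apply the \iwal\ guarantee region-by-region for \eqref{err1}, telescope Corollary~\ref{thm:split} across the $K-1$ splits for \eqref{err2}, and split the label-complexity sum at $t=\tau$, using the triangle inequality through $h^*$ together with the disagreement-coefficient definition for the first phase and \eqref{eq:iwallabel} for the second. That part is fine.

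There is, however, a genuine gap in how you justify the per-region application of Theorem~\ref{thm:iwal} and the per-region label bound. You write ``condition on the good events defining the final partition and invoke Theorem~\ref{thm:iwal} on each of the $K\le\kappa$ regions separately, with confidence parameter $\delta/(4\kappa)$.'' This does not work: the partition $\cX_1,\ldots,\cX_K$ is a \emph{data-dependent} object, chosen from the very samples whose concentration you then invoke, so you cannot simply condition on it and apply Theorem~\ref{thm:iwal} as if the $K$ regions were fixed a priori. What is actually needed, and what the paper does (mirroring the proof of Lemma~\ref{lemma:split_concentrate}), is a \emph{uniform} concentration bound over all partitions that could possibly be produced: a union bound over the (at most) $(\kappa C D)^\kappa$ binary trees with discretized thresholds, followed by a Lipschitzness-based covering argument of cardinality $(1/\e)^D$ with $\e=1/T$ to pass to arbitrary real thresholds. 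It is precisely these two steps that produce the $\kappa D$ factor and the $\kappa D$ inside the $\log$ in $\slack = \kappa D\log[8T^3|\sH|^3\kappa D/\delta]$; a union bound over only the $\kappa$ surviving regions yields at best a $\log\kappa$, not a $\kappa D$. Relatedly, you describe the $\delta$-rescaling as $\delta/\mathrm{poly}(\kappa,T,D,|\sH|)$, but the number of trees is $(\kappa C D)^\kappa$, i.e.\ exponential in $\kappa$, which is exactly why $\kappa$ appears \emph{outside} the logarithm in $\slack$. The same uniformity-over-partitions issue arises for the per-region label-complexity bound in the second phase. In short, the structure and constants of your argument are right, but the step that makes the regional application of Theorem~\ref{thm:iwal} legitimate is not ``bookkeeping'': it is the tree-union-bound plus covering argument, and the conditioning you propose in its place is not a valid substitute.
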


\begin{proof}
  By Eq~\eqref{eq:iwalgen} in Theorem~\ref{thm:iwal}, for 
  a fixed binary tree and for a fixed region $k$ resulting 
  from the binary tree, with probability at least 
  $1 - \delta/4$, for all $T_k>0$,
  \[
    R(\h h_{k, T}) - R(h_k^*) \leq 
    2\sqrt{\frac{8\log(8T_k(T_k+1)|\sH|^2 /\delta)}{T_k}}\,.
  \]
  Using the same technique as in the proof of 
  Lemma~\ref{lemma:split_concentrate}, 
  we take a union bound over the (at most) $\kappa$ regions
  as well as over the $(\kappa CD)^{\kappa}$ 
  possible binary trees and use the Lipschitzness argument,
  we obtain the first inequality~\eqref{err1} 
  with probability at least $1 - \delta/4$.
  To simplify the notation, we have upper bounded the term in the log,
  $T_k(T_k+1)|\sH|^2 \leq T^2|\sH|^3$,
  to match that in $\slack$.

  The second inequality~\eqref{err2} follows from
  Corollary~\ref{thm:split}, that
  each split gives at least $\gamma$ improvement in the error
  of best-in-class predictors. It follows that with probability at
  least $1 - \delta/4$,
  \[
    \sum_{k = 1}^{K} \tp_k R_k^* \leq R^* - \gamma(K-1).
  \]
  A union bound over \eqref{err1} and~\eqref{err2} gives
  the first statement of Theorem~\ref{thm:splitiwal}.

  The statement about label complexity~\eqref{eq:splitlabel} follows
  from the analysis of \iwal's label complexity. Recall that in
  \iwal, with probability at least $1 - \delta$,
  \begin{align*}
    \E_{x\sim \sD_\cX} \big[p_t | \cF_{t-1}\big] 
    \leq 4\theta \Biggl(R^* + \sqrt{\frac{8\log(2T^2|\sH|^2 /\delta)}{t-1}}\Biggl),
  \end{align*}
  where we upper bound $(t-1)t$ by $T^2$ since $t\in[T]$. Within
  the split phase, the space of hypotheses remains $\sH$. It is
  easy to show that, among the first $\tau$ rounds, by the
  definition of the disagreement coefficient $\theta$ and the triangle
  inequality,
  \begin{equation}\label{eq:splitlabel1}
    \sum_{t=1}^\tau \E_{x\sim\sD_\cX} \big[p_t | \cF_{t-1}\big] 
    \leq \min\{ 2 \theta r_0, 1\}\, \tau .
  \end{equation}
  After $\tau$ rounds, the label complexity follows directly 
  from Eq. \eqref{eq:iwallabel} in Theorem~\ref{thm:iwal}: 
  by the same argument for the generalization bound \eqref{err1},
  with probability at least $1 - \delta/4$, we have
   \begin{align}\label{eq:splitlabel2}
     & \sum_{t=\tau+1}^T \E_{x\sim\sD_\cX} \big[p_t | \cF_{t-1}\big] \nonumber \\
     & \leq \sum_{k = 1}^{K} 4\theta_k \bigg[R_k^* T_k' + 
       \sum_{s=T_k-T'_k+1}^{T_k} 
       \sqrt{ \mfrac{8 \slack}{s-1}}
     \bigg] \nonumber \\
     & \leq \sum_{k = 1}^{K} 4\theta_k \bigg[R_k^* T_k' + 
     2\sqrt{8T'_k\,\slack}
   \bigg],
   \end{align}
   where the last inequality uses
   $\sum_{i=a}^b 1/\sqrt{i} \leq 2 (\sqrt{b} - \sqrt{a-1}) \leq
   2\sqrt{b-a+1}$. Combining~\eqref{eq:splitlabel1}
   with~\eqref{eq:splitlabel2} concludes the proof.
\end{proof}

The learning guarantees in Theorem~\ref{thm:splitiwal} depend on the
random quantities $T_k$ and $T_k'$. We can further apply Chernoff's
inequality, and relate these random quantities to their expectations.

\begin{theorem}[Chernoff]
\label{thm:chernoff}
Let $X_1,\cdots, X_m$ be independent random variables drawn according
to some distribution $\sD$ with mean $p$ and support included in
$[0,1]$. Then, for any $\gamma \in [0,\frac{1}{p}-1]$, the following
holds for $\h p=\frac{1}{m}\sum_{i=1}^m X_i$:
  \begin{align*}
    \Pr[\h p \geq (1+\gamma)p] & \leq e^{-\frac{mp\gamma^2}{3}},\\
    \Pr[\h p \leq (1-\gamma)p] & \leq e^{-\frac{mp\gamma^2}{2}}.
  \end{align*}
\end{theorem}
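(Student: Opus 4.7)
The plan is to apply the classical Cramér--Chernoff moment generating function (MGF) method. Setting $S_m = \sum_{i=1}^m X_i = m \h p$, I would first apply Markov's inequality to $e^{t S_m}$ for a parameter $t > 0$ to be optimized, giving
\[
    \Pr[\h p \geq (1+\gamma) p] \;=\; \Pr\big[e^{t S_m} \geq e^{t(1+\gamma) m p}\big] \;\leq\; e^{-t(1+\gamma) m p}\, \E\big[e^{t S_m}\big].
\]
By independence of the $X_i$, the MGF factorizes as $\E[e^{t S_m}] = \prod_{i=1}^m \E[e^{t X_i}]$, so the problem reduces to a one-variable MGF bound.

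Next I would exploit the convexity of $x \mapsto e^{tx}$ on $[0,1]$: the chord inequality yields $e^{tx} \leq 1 + x(e^t - 1)$ for every $x \in [0,1]$. Taking expectations and using $\E[X_i] = p$ gives $\E[e^{t X_i}] \leq 1 + p(e^t - 1) \leq \exp\!\big(p(e^t - 1)\big)$, hence $\E[e^{t S_m}] \leq \exp\!\big(m p (e^t - 1)\big)$. Plugging this back and optimizing in $t$ (the optimum is $t = \log(1+\gamma)$ for the upper tail and $t = -\log(1-\gamma)$ for the lower tail, using Markov on $e^{-t S_m}$ with $t > 0$) produces the classical multiplicative bounds
\begin{align*}
    \Pr[\h p \geq (1+\gamma) p] & \leq \exp\!\big(-m p \, [(1+\gamma) \log(1+\gamma) - \gamma]\big), \\
    \Pr[\h p \leq (1-\gamma) p] & \leq \exp\!\big(-m p \, [(1-\gamma) \log(1-\gamma) + \gamma]\big).
\end{align*}

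The final step is to simplify these rate functions to the stated $\gamma^2/3$ and $\gamma^2/2$ forms by verifying the elementary analytic inequalities $(1+\gamma)\log(1+\gamma) - \gamma \geq \gamma^2/3$ for $\gamma \in [0, 1]$, and $(1-\gamma)\log(1-\gamma) + \gamma \geq \gamma^2/2$ for $\gamma \in [0, 1)$. The restriction $\gamma \in [0, 1/p - 1]$ merely ensures that $(1+\gamma) p \leq 1$, so the upper-tail event is non-degenerate. The MGF chain and the choice of optimizer are mechanical; the main (though entirely routine) obstacle is confirming the two convex-analytic inequalities with the exact constants $3$ and $2$, which is done by a second-order Taylor expansion with Lagrange remainder applied to $f(\gamma) = (1\pm\gamma)\log(1\pm\gamma) \mp \gamma$ around $\gamma = 0$.
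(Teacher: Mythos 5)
The paper does not include a proof of this theorem; it cites the result as a classical Chernoff bound, and the statement matches Theorem~D.4 in Mohri, Rostamizadeh, and Talwalkar's \emph{Foundations of Machine Learning} (the likely source given the author overlap). Your moment generating function derivation is the standard textbook argument, and the two analytic inequalities you isolate are exactly the correct reduction.

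There is, however, a range mismatch in the upper-tail half that you should not pass over. You establish $(1+\gamma)\log(1+\gamma) - \gamma \geq \gamma^2/3$ only for $\gamma \in [0,1]$, while the theorem's stated range is $\gamma \in [0, 1/p - 1]$, which exceeds $1$ whenever $p < 1/2$. Your remark that the upper endpoint ``merely ensures the upper-tail event is non-degenerate'' understates its role: for $1 < \gamma \leq 1/p - 1$ your argument simply does not apply. In fact the inequality reverses once $\gamma$ exceeds roughly $1.9$ (check $\gamma = 2$: $3\log 3 - 10/3 \approx -0.04 < 0$), and the claimed bound $e^{-mp\gamma^2/3}$ is then genuinely false: with $m=1$, $X_1 \sim \mathrm{Bernoulli}(p)$ for small $p$, and $\gamma = 1/p - 1$, the left-hand side equals $\Pr[\widehat{p} = 1] = p$, which vastly exceeds $e^{-(1-p)^2/(3p)}$. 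This imprecision is inherited from the source statement rather than introduced by you, and it is harmless here because the paper applies the theorem only in the small-deviation regime $\gamma < 1$ (in the proof of Theorem~\ref{thm:splitiwal2} the sample-size condition $T \geq 4\slack/\min_k \tp_k$ pins the relevant deviation below $1$). Still, a proof of the theorem as literally stated would have to either restrict to $\gamma \leq 1$ or replace the rate by the uniformly valid $mp\gamma^2/(2 + 2\gamma/3)$. Your lower-tail argument is fine across the whole relevant range: $(1-\gamma)\log(1-\gamma) + \gamma \geq \gamma^2/2$ holds on all of $[0,1)$ since $g''(\gamma) = \gamma/(1-\gamma) \geq 0$ with $g(0)=g'(0)=0$, and the event is vacuous for $\gamma \geq 1$.
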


Now we prove Theorem~\ref{thm:splitiwal2}.
\begin{reptheorem}{thm:splitiwal2}
Assume that a run of \arbal\ over $T$ rounds has split the input space into
$K$ regions.  Then, for any $\delta > 0$, with probability at least
$1 - \delta$, the following inequality holds:
  \begin{align*}
    R(\h h_T)
    \leq R_U +  \sqrt{\frac{32 K \slack}{T} }
    + \frac{16K\slack}{T},
  \end{align*}
where $R_U = R^* - \gamma (K-1)$ 
is an upper bound on the best-in-class error obtained by \arbal.
Moreover, with probability at least $1 - \delta$, the expected number of 
labels requested, $\tau_T= \sum_{t=1}^T \E_{x_t\sim\sD_{\cX}} \big[p_t |\cF_{t-1}\big]$,
satisfies
  \begin{align*}
    \tau_T
    & \leq \min \set{2\theta r_0, 1} \tau 
    + 4\theta_{\max}(T - \tau) 
    \Big[ R_U 
    + 8 \sqrt{\mfrac{K \slack}{T - \tau}} \Big] \\
& +  \sqrt{32}K\slack .
  \end{align*}
\end{reptheorem}
\begin{proof}
 Given a total of $T$ samples and a fixed partition, we have $\E[T_k] = T \tp_k$. By
 Theorem~\ref{thm:chernoff}, with probability $1 - \delta/4$, for all
 $k\in[\kappa]$,
  \[
      \frac{T_k}{T} \geq \tp_k \Big(1- \sqrt{\frac{2\log(\frac{4\kappa}{\delta})}{T\tp_k}}\Big).
  \]
  By the same covering number argument in Lemma~\ref{lemma:split_concentrate},
  with probability at least $1-\delta/4$, for all partitions and all $k\in[\kappa]$,
  \[
      \frac{T_k}{T} \geq \tp_k \Big(1- \sqrt{\frac{2\kappa D\log(\frac{4\kappa TD}{\delta})}{T\tp_k}}\Big)
      \geq \tp_k \Big(1- \sqrt{\frac{2\slack}{T\tp_k}}\Big).
  \]
  It follows that when
  $T\geq \frac{4\slack}{\min_{k\in[K]} \tp_k}$ (or
  equivalently there are at least $4\slack$ points in each
  region, which can be easily satisfied), we have
  \[
    \frac{\tp_k}{\sqrt{T_k}} \leq \sqrt{\frac{\tp_k}{T}} 
    + \frac{2\sqrt{2\slack}}{T}.
  \]
  Plugging into Theorem~\ref{thm:splitiwal}, a union bound implies
  that with probability at least $1 - \delta$,
   \begin{align*}
    R(\h h_T) 
   & \leq R^* - \gamma (K-1)
   + \sum_{k = 1}^{K} 4 \tp_k \sqrt{\frac{2\slack}{T_k}}\\
   & \leq R^* - \gamma (K-1)
   + \sum_{k = 1}^{K} 4 \sqrt{\frac{2 \tp_k \slack}{T} } 
   + \frac{16K\slack}{T}.
   \end{align*}
   Furthermore, by Theorem~\ref{thm:chernoff}, with probability at least $1 - \delta/4$,
   for all $k\in[\kappa]$,
   \begin{align*}
       T_k' \leq (T-\tau) \tp_k + \sqrt{3(T-\tau)\tp_k \slack} ,
   \end{align*}
   which implies that (using the inequality $\sqrt{x+y}\leq \sqrt{x}+y/(2\sqrt{x})$) 
   \begin{align*}
       \sqrt{T_k'} \leq \sqrt{(T-\tau) \tp_k} + \sqrt{\slack}.
   \end{align*}
   Plugging back into Theorem~\ref{thm:splitiwal}, with probability at least $1 - \delta$, we have
   \begin{align*}
     & \sum_{t=1}^T \E_{x\sim\sD_{\cX}} \big[p_t |\cF_{t-1}\big] \\
     & \leq \min \{2\theta r_0, 1\} \tau  + \sum_{k = 1}^{K} 4\theta_k 
       \bigg[R_k^* T_k' + 
   4\sqrt{2T'_k \slack}\bigg] \\
   & \leq \min \{2\theta r_0, 1\}\tau 
   + \sum_{k = 1}^{K} 4\theta_k 
   \bigg[R_k^* (T-\tau)\tp_k  \\
       & + R_k^*\sqrt{3(T-\tau)\tp_k \slack}  \\
       &\ \ + 4\sqrt{2(T-\tau)\tp_k \slack} + 4\sqrt{2}\slack \bigg]\\
   & \leq \min \{2\theta r_0, 1\}\tau 
   + \sum_{k = 1}^{K} \! 4\theta_k \!
   \bigg[\!R_k^* (T-\tau)\tp_k \!+ \\
   &  8\sqrt{(T-\tau)\tp_k \slack}\bigg] 
   + 4K \sqrt{2}\slack \,,
 \end{align*}
 where the last inequality uses fact that $R_k^*\leq 1$ and the assumption that $4\theta_k \leq 1$,
since otherwise the label complexity bound~\eqref{eq:splitlabel} is vacuous.

Using the inequality that $\sum_{k = 1}^{K} \sqrt{\tp_k} \leq \sqrt{K}$, and 
$\sum_{k = 1}^{K} \tp_k R_k^* \leq R^* - \gamma(K-1)$, we further upper bound
the above results. For the generalization error,
   \begin{align*}
       R(\h h_T) & \leq R^* - \gamma (K-1)
       +  \sqrt{\frac{32 K \slack}{T} } 
       + \frac{16K\slack}{T}.
   \end{align*}
For the expected number of labels,
\begin{align*}
  & \sum_{t=1}^T \E_{x\sim\sD_{\cX}} \big[p_t |\cF_{t-1}\big] \\
  & \leq \min \{2\theta r_0, 1\}\tau 
  + \sum_{k = 1}^{K} \!4\theta_k \!
  \bigg[\! R_k^* (T-\tau)\tp_k \!+ \\
  & \ \ 8\sqrt{(T-\tau)\tp_k \slack}\bigg] 
  + 4K \sqrt{2}\slack \\
  & \leq \min \{2\theta r_0, 1\}\tau 
  + 4\theta_{\max} \bigg[ \Big(\sum_{k = 1}^{K}  \tp_k R_k^*\Big)(T-\tau)  \\
  & \ \   + 8\sum_{k = 1}^{K} \sqrt{(T-\tau)\tp_k \slack} \bigg]
  + 4K \sqrt{2} \slack\\
  & \leq \min \{2\theta r_0, 1\}\tau  
  + 4\theta_{\max}(T-\tau) 
   \bigg[ R^* - \gamma(K-1) + \\
   & 8\sqrt{\frac{K \slack}{T-\tau}} \bigg] 
   +  \sqrt{32} K\slack .
\end{align*}
\end{proof}

\paragraph{Determining a fixed $\gamma$.}
The natural question arises as to how to set the value of threshold $\gamma$.
Comparing Theorem~\ref{thm:splitiwal2} to the generalization bound
of \iwal\ (Theorem~\ref{thm:iwal}), in order for \arbal\ to achieve
improved guarantees whenever it decides to split ($K\geq 2$), we need to have
\begin{align*}
    & R^* - \gamma (K-1)
 + \sum_{k = 1}^{K} 2 \sqrt{\frac{8 \tp_k \slack}{T} } \\
 & \leq R^* + 2\sqrt{\frac{8\log(2T(T+1)|\sH|^2 /\delta)}{T}}~,
\end{align*}
where we have dropped the lower order term $O(1/T)$. Neglecting
the small differences in the log terms, this turns out to be equivalent to the
following condition:
\[
2 \Delta_T \big[ \sum_{k = 1}^{K} \sqrt{\tp_k} -1 \big] \leq \gamma (K-1)~,
\]
where $\Delta_T=\sqrt{8 \slack/{T} }$.
Since $\sum_{k = 1}^{K}\sqrt{\tp_k} \leq \sqrt{K}$, 
and since this analysis applies only when \arbal\ decides to split ($K \geq 2$),
in order to guarantee improvement over \iwal\ in the
generalization bound for any possible value of $K$ that \arbal\ may select, 
it is sufficient to impose 
\begin{align*}
  \gamma & \geq 2 \Delta_T \bigg[ \frac{\sqrt{K}-1}{K- 1} \bigg]
      = \frac{2\Delta_T}{\sqrt{K}+1} \\
      &      \Rightarrow \gamma \geq \frac{2\Delta_T}{\sqrt{2}+1} = \frac{2}{\sqrt{2}+1}\,\sqrt{\frac{8 \slack}{T} }~.
\end{align*}

We first prove a more general version of Proposition~\ref{lemma:first_split_time} as follows.
\begin{lemma}\label{lemma:split_time}
  Let \arbal\ run with $\gamma_t = \Pr(\cX_{k_t}) \gain/2$. 
  With probability at least $1 - \delta/2$, for any region $\cX_k$
  created during the split phase, it will be split before round 
  $\Big\lceil{2\Slack{2} \big(\frac{4}{\gain} +1 \big)^2}/{\tp_k}\Big\rceil$
  unless \arbal\ has reached the end of the split phase.
\end{lemma}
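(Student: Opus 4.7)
The plan is to combine Lemma~\ref{lemma:split_concentrate} with the assumption of Section~\ref{subsec:adagamma} to argue that, once enough samples have landed in region $\cX_k$, the confidence gap $\gamma_{d^*,c^*}$ provably exceeds the adaptive threshold $\gamma_t = \tp_k\gain/2$ and a split must occur; the remaining work is then a concentration argument translating ``enough samples'' into a bound on the round index $t$.

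First, I would invoke Lemma~\ref{lemma:split_concentrate}, which already yields, with probability at least $1-\delta/4$ and uniformly over every intermediate region $\cX_k$ and every time $t$,
\begin{align*}
  L_{k,t}(\h h_{k,t}) - L_{k,t}(\h h_{lr,t}) \;\geq\; \bigl[R_k(h_k^*) - R_k(h_{lr}^*)\bigr] - \sqrt{\tfrac{2\slack}{T_{k,t}}}.
\end{align*}
Applied to the candidate split $(d^*,c^*)$ furnished by the assumption, namely $R_k(h_k^*) - R_k(h_{lr^*}^*) \geq \gain$, the value computed in \splitregion\ satisfies
\begin{align*}
  \gamma_{d^*,c^*} \;\geq\; \tp_k\bigl(\gain - 2\sqrt{\tfrac{2\slack}{T_{k,t}}}\bigr).
\end{align*}
Comparing with $\gamma_t = \tp_k\gain/2$, the algorithm is forced to split $\cX_k$ as soon as $T_{k,t} \geq 32\slack/\gain^2$.

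Next, I would bound the first round at which $T_{k,t}$ reaches this threshold. Each incoming $x_s$ falls in $\cX_k$ independently with probability $\tp_k$, so Chernoff's inequality (Theorem~\ref{thm:chernoff}), together with the same union bound over possible regions used in the proof of Lemma~\ref{lemma:split_concentrate}, gives $T_{k,t}\geq t\tp_k - \sqrt{2t\tp_k\,\slack}$ simultaneously for all $t$ and all regions possibly created during the split phase, with probability at least $1-\delta/4$. Using the identity $\bigl(\sqrt{32\slack/\gain^2}+\sqrt{2\slack}\bigr)^2 = 2\slack(4/\gain+1)^2$ and the monotonicity of $x \mapsto x - \sqrt{2\slack\,x}$ for $x$ beyond the minimum, choosing $t \geq 2\slack(4/\gain+1)^2/\tp_k$ implies $t\tp_k - \sqrt{2t\tp_k\,\slack} \geq 32\slack/\gain^2$, hence $T_{k,t} \geq 32\slack/\gain^2$. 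A union bound on the two failure events of probability $\delta/4$ yields the claimed $1-\delta/2$ confidence.

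The main obstacle is the concentration step for $T_{k,t}$. Because the regions created by \arbal\ are data-dependent, a naive Chernoff bound applied to a fixed region is insufficient; one must union-bound over all binary trees of at most $\kappa$ leaves and over all axis-aligned threshold choices per coordinate, paralleling the covering argument in Lemma~\ref{lemma:split_concentrate}. The resulting $\log$-factors are absorbed into $\slack$, which is precisely why the final expression features $\slack$ rather than a bare $\log(1/\delta)$ term, and why the clean form $(4/\gain+1)^2$ emerges from the algebraic identity above. Once this bookkeeping is in place, the rest of the proof is routine.
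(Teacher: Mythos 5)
Your proof is correct and follows essentially the same route as the paper's: invoke Lemma~\ref{lemma:split_concentrate} to show that the good split's confidence gap clears the threshold once $T_{k,t}\geq 32\slack/\gain^2$, use Chernoff together with the covering argument to lower-bound $T_{k,t}$ by $t\tp_k - \sqrt{2t\tp_k\slack}$, solve for $t$, and union-bound. The only cosmetic difference is that you finish the quadratic step via the exact identity $\bigl(\sqrt{32\slack/\gain^2}+\sqrt{2\slack}\bigr)^2=2\slack(4/\gain+1)^2$ plus monotonicity of $x\mapsto x-\sqrt{2\slack x}$, whereas the paper completes the square and chains $\sqrt{x+y}\leq\sqrt{x}+\sqrt{y}$; both yield the same bound.
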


\begin{proof}
  Fix a binary tree and a region $\cX_k$ that is split during the split phase, 
  and assume that splitting $\cX_k$ into $\cX_l \cup\cX_r$ 
  satisfies the assumption above. Recall that by assumption, such a split always exists. 
  Then by Lemma~\ref{lemma:split_concentrate}, with probability at least $1 - \delta/4$, 
  the corresponding empirical improvement satisfies
  \begin{align*}
      &L_{k, t}(\h h_{k, t}) - L_{k, t}(\h h_{lr,t}) + \Delta(T_{k, t}) \\
      & \geq R_k(h_k^*) - R_k(h_{lr}^*) \geq \gain ,
\end{align*}
where $\Delta(T_{k, t}) = \sqrt{{2\slack}/{T_{k, t}}}$.
Thus, for \arbal\ to split $\cX_k$ into $\cX_l,\cX_r$, it is sufficient to have
\begin{align*}
  & L_{k, t}(\h h_{k, t}) - L_{k, t}(\h h_{lr,t}) - \Delta(T_{k, t}) \geq \gain/2\\
  & \Leftarrow L_{k, t}(\h h_{k, t}) - L_{k, t}(\h h_{lr,t}) - \Delta(T_{k, t})  \\
  & \geq \gain - 2\Delta(T_{k, t}) \geq \gain/2\\
  & \Leftarrow \Delta(T_{k, t}) \leq \frac{\gain}{4} 
    \Leftarrow T_{k, t} \geq \frac{32\slack}{\gain^2}.
\end{align*}
Furthermore, by Theorem~\ref{thm:chernoff} and the covering number argument, 
with probability at least $1 - \delta/4$, 
for all $t \in [T]$ and all possible partitions with at most $\kappa$ regions,
\begin{align*}
    T_{k, t} \geq t\tp_k \Big(1- \sqrt{\frac{2\slack}{t\tp_k}}\Big),
\end{align*}
where $\tp_k = \Pr(\cX_k)$.
Thus, to split $\cX_k$ into $\cX_r\cup \cX_r$, it is sufficient to have 
\begin{align*}
    T_{k, t} \geq t\tp_k \Big(1- \sqrt{\mfrac{2\slack}{t\tp_k}}\Big) 
    \geq \mfrac{32\slack}{\gain^2}.
\end{align*}
Solving the quadratic inequality and using the fact that $\sqrt{x+y} \leq \sqrt{x} + \sqrt{y}$ when $x,y>0$,
we can write
\begin{align*}
  & t\tp_k \Big(1- \sqrt{\mfrac{2\slack}{t\tp_k}}\Big) 
    \geq \mfrac{32\slack}{\gain^2}\\
    & \Leftarrow \bigg(\sqrt{t\tp_k} - {\sqrt{\slack/2}} \bigg)^2 
    \geq \mfrac{32\slack}{\gain^2} 
    + \mfrac{\slack}{2}\\
  & \Leftarrow \sqrt{t\tp_k} 
    \geq \sqrt{\mfrac{\slack}{2}} + 
    \sqrt{\mfrac{32\slack}{\gain^2} 
    + \mfrac{\slack}{2}} \\
  & \Leftarrow \sqrt{t\tp_k} 
    \geq \sqrt{\mfrac{\slack}{2}} + 
    \sqrt{\mfrac{32\slack}{\gain^2}} 
    + \sqrt{\mfrac{\slack}{2}} \\
    & \Leftarrow t\tp_k \geq \bigg(\sqrt{2\slack}+
    \sqrt{\mfrac{32\slack}{\gain^2}}\, \bigg)^2\\
  & \Leftarrow t\tp_k \geq 2\slack\Big(\mfrac{4}{\gain} +1 \Big)^2
\end{align*}
Thus, for \arbal\ to split $\cX_k$, it is sufficient to have 
\[
    t \geq \Big\lceil{2\slack \Big(\frac{4}{\gain} +1 \Big)^2}/{\tp_k}\Big\rceil~.
\]
In other words, with probability at least $1 - \delta/2$, \arbal\ splits
region $\cX_k$ before time 
$\Big\lceil{2\slack \big(\frac{4}{\gain} +1 \big)^2}/{\tp_k}\Big\rceil$.
The statement holds for all (at most $\kappa-1$) splits.
\end{proof}

Thus, Lemma~\ref{lemma:split_time} provides an upper bound on the split time
for each region created during the split phase. 
In particular, for the original input space $\cX_k = \cX$, $\tp_k = 1$,
and we recover the result of Proposition~\ref{lemma:first_split_time}.
Combining Lemma~\ref{lemma:split_time} with the assumption
on the minimal subregion size after each split,
we can derive a lower bound on the number of splits
\arbal\ makes. 
\begin{repcorollary}{cor:num_splits}
  Let \arbal\ run with $\gamma_t = \Pr(\cX_{k_t}) \gain/2$. Then, with probability 
  at least $1 - \delta/2$, \arbal\ splits more than 
  $ \min \set[\Big]{\log_{1/c} \Big[\frac{\tau}
          {2\Slack{2} (4/{\gain} +1 )^2}\Big], \kappa-1}$
  times by the end of the split phase.
\end{repcorollary}
\begin{proof}
Assume that \arbal\ has only split $S < \kappa-1$ times by the end of $\tau$ rounds.
Then by assumption, the size of any subregion $\cX_k$ satisfies $\Pr(\cX_k)\geq c^{S}$.
According to Lemma~\ref{lemma:split_time}, if
\begin{align*}
    & \frac{2\Slack{2} \big(4/{\gain} +1 \big)^2}{\tp_k} 
 \leq \frac{2\Slack{2} \big(4/{\gain} +1 \big)^2}{c^{S}}
\leq \tau,
\end{align*}
then \arbal\ must have split the smallest region, thus have more than $S$ splits.
To avoid the contradiction, the number of splits $S$ must be at least 
\begin{align*}
    S \geq \log_{1/c} \bigg[\frac{\tau}{2\Slack{2}
  \big(4/{\gain} +1 \big)^2}\bigg].
\end{align*}
  \ignore{Finally, since \arbal\ cannot split more than $\kappa-1$ times, we have 
$ S \geq \min \set[\Big]{\log_{1/c} \Big[\frac{\tau}
  {2\kappa\log[\frac{16|\sH|^3 T^3 \kappa D}{\delta}] 
  (4/{\gain} +1 )^2}\Big], \kappa-1}$.}

Finally, since \arbal\ cannot split more than $\kappa-1$ times, we have 
$
S \geq \min \set[\Big]{\log_{1/c} \Big[\frac{\tau}
        {2\Slack{2} (4/{\gain} +1 )^2}\Big], \kappa-1}.
$
\end{proof}

Finally, with Corollary~\ref{cor:num_splits} handy, we can derive
an upper bound on the final best-in-class error after \arbal's split phase,
or equivalently a lower bound on the improvement from a single region's
best-in-class error $R^*$. 
We present the full learning guarantees
of \arbal\ with adaptive $\gamma_t$ in the following Theorem~\ref{thm:splitiwal3}.
For simplicity, we assume that the lower bound on $S$ does not exceed 
the hard constraint of $\kappa-1$, 
so that we can get rid of the $\min\{\cdot\}$ operator.

\begin{theorem}\label{thm:splitiwal3}
Assume a run of \arbal\ over $T$ rounds with $\gamma_t = \Pr(\cX_{k_t}) \gain/2$.
Then, with probability at least $1 - 3\delta/2$,
\begin{align*}
  R(\h h_T) & \leq R_{U}
   + 2 \sqrt{\frac{8 K \slack}{T} }
   + \frac{16K\slack}{T}~,\\
  \tau_T & \leq \min \{2\theta r_0, 1\}\tau    
         + 4\theta_{\max}(T-\tau) \Big[R_U  
    + 8 \sqrt{\frac{K \slack}{T-\tau}}\Big]\\
&  +  \sqrt{32}K \slack.
\end{align*}
where 
$ R_U = R^*-\frac{\gain c }{2(1 - c)} 
\big(1-{2\Slack{2}
  (\frac{4}{{\gain}} +1 )^2}/{\tau}\big)$
is an upper bound on the best-in-class error obtained by \arbal.
\end{theorem}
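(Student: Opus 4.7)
The plan is to combine Theorem~\ref{thm:splitiwal2} with the adaptive-threshold machinery of Corollary~\ref{thm:split} and Corollary~\ref{cor:num_splits}. The generalization and label-complexity bounds of Theorem~\ref{thm:splitiwal3} have the same shape as those of Theorem~\ref{thm:splitiwal2}; only the definition of $R_U$ changes. So the work lies in lower bounding the total best-in-class error improvement accumulated across all splits.

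First, I would replay the proof of Theorem~\ref{thm:splitiwal2}, observing that the term $\gamma(K-1)$ there is just the telescoped sum of the per-split thresholds obtained from Corollary~\ref{thm:split}. Because Corollary~\ref{thm:split} is already stated for the threshold at the time of split, the exact same derivation, applied with $\gamma_{t_i}=\Pr(\cX_{k_i})\gain/2$ at split $i$, yields the two inequalities of the theorem with $R_U$ replaced by $R^* - \tfrac{\gain}{2}\sum_{i=1}^{S}\Pr(\cX_{k_i})$, where $S$ denotes the (random) number of splits produced by \arbal. This whole step holds with probability at least $1-\delta$.

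Second, I would lower bound $\sum_{i=1}^{S}\Pr(\cX_{k_i})$ using the minimal-subregion-size assumption $\min\{\Pr(\cX_l),\Pr(\cX_r)\}\geq c\Pr(\cX_k)$. Over all admissible binary trees with $S$ internal nodes and root of unit measure, the sum is minimized by the maximally unbalanced chain in which \arbal\ always splits the smallest currently available region; an induction on $S$ --- noting that any ``balancing'' swap of subtrees can only enlarge the sum of internal-node measures --- confirms this extremal structure. Evaluating the resulting geometric series yields an expression of the form $\tfrac{c}{1-c}(1-c^{S})$, which, after the $\gain/2$ factor, matches the first part of the stated $R_U$.

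Third, I would invoke Corollary~\ref{cor:num_splits} to replace $c^S$ by its explicit upper bound $2\slack(4/\gain+1)^2/\tau$, which holds with probability at least $1-\delta/2$. Substituting gives the formula for $R_U$ in Theorem~\ref{thm:splitiwal3}. The label-complexity inequality then transfers verbatim from Theorem~\ref{thm:splitiwal2}, since that bound is already expressed in terms of $R_U$. A final union bound over the two high-probability events --- probability $\geq 1-\delta$ from the reconstructed Theorem~\ref{thm:splitiwal2} argument and $\geq 1-\delta/2$ from Corollary~\ref{cor:num_splits} --- produces the stated overall confidence of $1-3\delta/2$.

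The main obstacle I anticipate is the combinatorial step two: rigorously verifying that the unbalanced chain minimizes $\sum_i \Pr(\cX_{k_i})$ among all admissible trees. A convenient reformulation is the identity $\sum_i \Pr(\cX_{k_i}) = \sum_{\ell \in \text{leaves}} \Pr(\cX_\ell)\, d(\ell)$, reducing the question to a constrained optimization over leaf-depth/leaf-measure pairs subject to the split-ratio constraint in $[c,1-c]$; the minimum is attained by concentrating mass in a ``large'' sibling at each level while pushing the chain of small siblings to maximal depth. Threading this through carefully, while keeping the constants aligned with the stated form of $R_U$, is the only genuinely delicate part of the argument.
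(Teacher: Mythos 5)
Your proof has the right high-level structure and arrives at the stated bounds, and your final union-bound arithmetic ($1-\delta$ for the reconstructed Theorem~\ref{thm:splitiwal2} events plus $1-\delta/2$ for Corollary~\ref{cor:num_splits}, giving $1-3\delta/2$) matches the paper. The genuine difference lies in Step~2, where you pursue an extremal-tree argument: you reformulate $\sum_i \Pr(\cX_{k_i})$ as $\sum_{\ell}\Pr(\cX_\ell)\,d(\ell)$ over leaves, and then try to show that the maximally unbalanced chain minimizes this quantity among all admissible binary trees under the split-ratio constraint. The paper avoids this combinatorial optimization entirely by a much simpler \emph{pointwise} observation: by the minimal-subregion-size assumption, every leaf after $s-1$ splits has measure at least $c^{s-1}$, hence whatever region \arbal\ happens to split at step~$s$ satisfies $\Pr(\cX_{k_s})\geq c^{s-1}\geq c^{s}$, and the geometric sum follows immediately, with no claim about which tree is extremal. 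Your route, if carried out rigorously, would actually produce a slightly tighter bound (the chain sum is $\tfrac{1-c^S}{1-c}$, not $\tfrac{c(1-c^S)}{1-c}$, so your geometric-series evaluation has an off-by-one; the paper's pointwise $c^s$ is itself conservative by the same factor, which is why the stated $R_U$ carries the extra $c$), but it also requires proving the extremality of the chain, which you correctly flag as the delicate part. That extremality proof is in fact true --- $f(k)=\tfrac{1-c^k}{1-c}$ is concave, and under the constraint $p\in[c,1-c]$ the minimum of $1+pf(a)+(1-p)f(b)$ over $a+b=S-1$ is attained at $p=c$, $a=S-1$, $b=0$ --- but it adds work that the paper's argument shows to be unnecessary. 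In short: correct plan, same conclusion, but the combinatorial detour in Step~2 can be replaced by a one-line pointwise bound, which is how the paper proceeds.
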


\begin{proof}
  When \arbal\ splits region $\cX_k$, by Corollary~\ref{thm:split},
  with high probability, the global best-in-class error is 
  improved by at least 
  $\Pr(\cX_k)\big[R_k(h_k^*) - R_k(h_{lr}^*) \big] \geq \Pr(\cX_k)\gain/2$,
  which means the global improvement depends on the size of the splitting region. 
  Assume \arbal\ has made $S=K-1$ splits into $K$ regions. Again, by assumption, 
  at $s$-th split, $s\in[S]$, the size of the splitting region must be at least $c^{s}$. 
  Thus, with probability at least $1 - \delta/2$, the improvement in the 
  best-in-class error after $S$ splits is at least 
\begin{align*}
    & \frac{\gain}{2} \Big(\sum_{s=1}^{S} c^{s} \Big) 
  = \frac{\gain c}{2}\Big(\frac{1 - c^{S}}{1 - c}\Big) \\
  &  \geq \frac{\gain c }{2(1 - c)} \Big(1- 
  \frac{2\Slack{2} \big(\frac{4}{\gain} +1 \big)^2}
  {\tau}\Big),
\end{align*}
where the last inequality follows from the lower bound of $S$ in Corollary~\ref{cor:num_splits}.
Thus, the best-in-class error after the splits made by \arbal\ is upper bounded by
\begin{align*}
  R^*-\frac{\gain c }{2(1 - c)} 
  \Big(1 - \frac{2 \Slack{2} 
  \big(\frac{4}{\gain} +1 \big)^2}{\tau}\Big),
\end{align*}
where $R^*$ is the global best-in-class error on $\cX$ before splitting. 
The rest of the proof follows from the proof of Theorem~\ref{thm:splitiwal2}.
\end{proof}

Theorem~\ref{thm:splitiwal3} relates the improvement in the best-in-class error 
with $c, \rho$ and $\tau$. When $c$ increases, which means \arbal\ tends to split more evenly,
then there are more improvements in the best-in-class error, partially because 
there are likely to be more splits. Similarly, when $\tau$ or $\rho$ increases,
then there tends to be a larger improvement in the best-in-class error, as expected.

\clearpage
\section{More experimental results}
\label{app:moreexp}
This appendix contains all plots omitted from the main body of the paper.

\begin{table}[t]
  \caption{Binary classification dataset summary: $N$ denotes the number of samples, 
  $D$ the number of features (or input space dimension), 
  and $r$ the relative size of the minority class. 
  Datasets are ordered by increasing $N$.}
  \label{tb:data_summary}
  %\vskip -0.1in
  \centering
  \begin{tabular}{lrrr}\toprule
    Dataset & $N\ \ \ \ $ &\ \ $D\ \ $ & $r\ \ \ $\\ \midrule
      \texttt{\small kin8nm}        &8,192       &8      &0.491\\
      \texttt{\small bank8fm}       &8,192       &8      &0.404\\
      \texttt{\small puma8NH}       &8,192       &8      &0.498\\
      \texttt{\small visualizing\_soil} &8,641   &4      &0.450\\
      \texttt{\small delta\_elevators}  &9,517   &6      &0.497\\
      \texttt{\small jm1}           &10,880      &21     &0.193\\
      \texttt{\small phishing}      &11,055      &68     &0.443\\
      \texttt{\small mnist35}       &11,552      &784    &0.469 \\
      \texttt{\small egg}           &14,980      &14     &0.449\\
      \texttt{\small elevators}     &16,599      &18     &0.309\\
      \texttt{\small magic04}       &19,020      &10     &0.352\\
      \texttt{\small house16H}      &22,784      &16     &0.296\\
      \texttt{\small nomao }        &34,465      &118    &0.286\\
      \texttt{\small fried}         &40,768      &10     &0.499\\
      \texttt{\small mv}            &40,768      &12     &0.403\\
      \texttt{\small shuttle}       &43,500      &9      &0.216\\
      \texttt{\small electricity}   &45,312      &14     &0.425\\
      \texttt{\small a9a  }         &48,842      &123    &0.239\\
      \texttt{\small ijcnn1 }       &49,990      &22     &0.097\\
      \texttt{\small codrna }       &59,535      &8      &0.333\\ 
      \texttt{\small runorwalk}     &88,588      &6      &0.499\\
      \texttt{\small higgs}         &98,049      &28     &0.471\\
      \texttt{\small MiniBooNE}     &130,064     &50     &0.281\\
      \texttt{\small skin  }        &245,057     &3      &0.208\\
      \texttt{\small covtype}       &581,012     &54     &0.488\\
  \bottomrule
  \end{tabular}
  \vskip -0.2in
\end{table}

In Table~\ref{tb:data_summary}, we show summary statistics for all datasets used in our experiments.

In Figures~\ref{fig:expmis_1}-\ref{fig:expmis_5}, we present the following results for 25 datasets
under $\tau=800$ and $\kappa=20$:
the results of comparing fixed vs. adaptive $\gamma_t$,
the results of \arbal\ as contrasted to the
baselines described in the main text, that is, \oriwal, \iwal, and \margin,
and the results of \arbal, using binary tree vs. hierarchical clustering splitting method.

Adaptive $\gamma_t$ yields superior prediction performance over fixed $\gamma$
on most datasets, except for \texttt{\small {jm1}} and \texttt{\small {elevators}},
where fixed $\gamma$ rarely splits yet adaptive $\gamma_t$ fully splits into $20$ regions,
suggesting that $\gamma_t$ is overly aggressive due to the simplification of the slack term
in the splitting criterion.
In the remaining plots, we show the performance of $\gamma_t$ except for
the \texttt{\small {jm1}} and \texttt{\small {elevators}} where we use fixed $\gamma$.

\clearpage
\begin{figure*}[ht]
\begin{center}
\subfigure{\centering\includegraphics[width=0.3\textwidth,,trim= 5 10 10 5,clip=true]{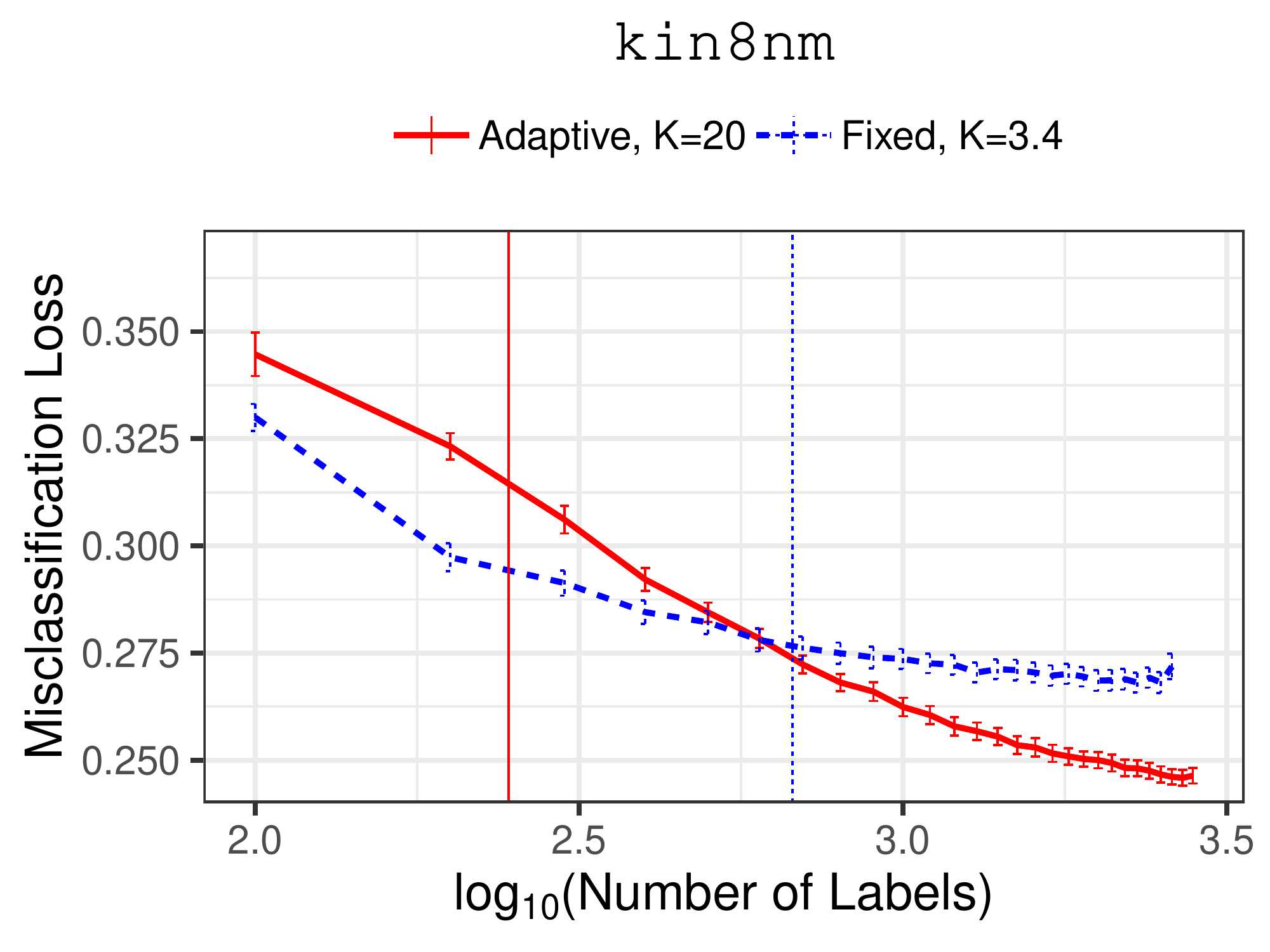}}
\subfigure{\centering\includegraphics[width=0.3\textwidth,,trim= 5 10 10 5,clip=true]{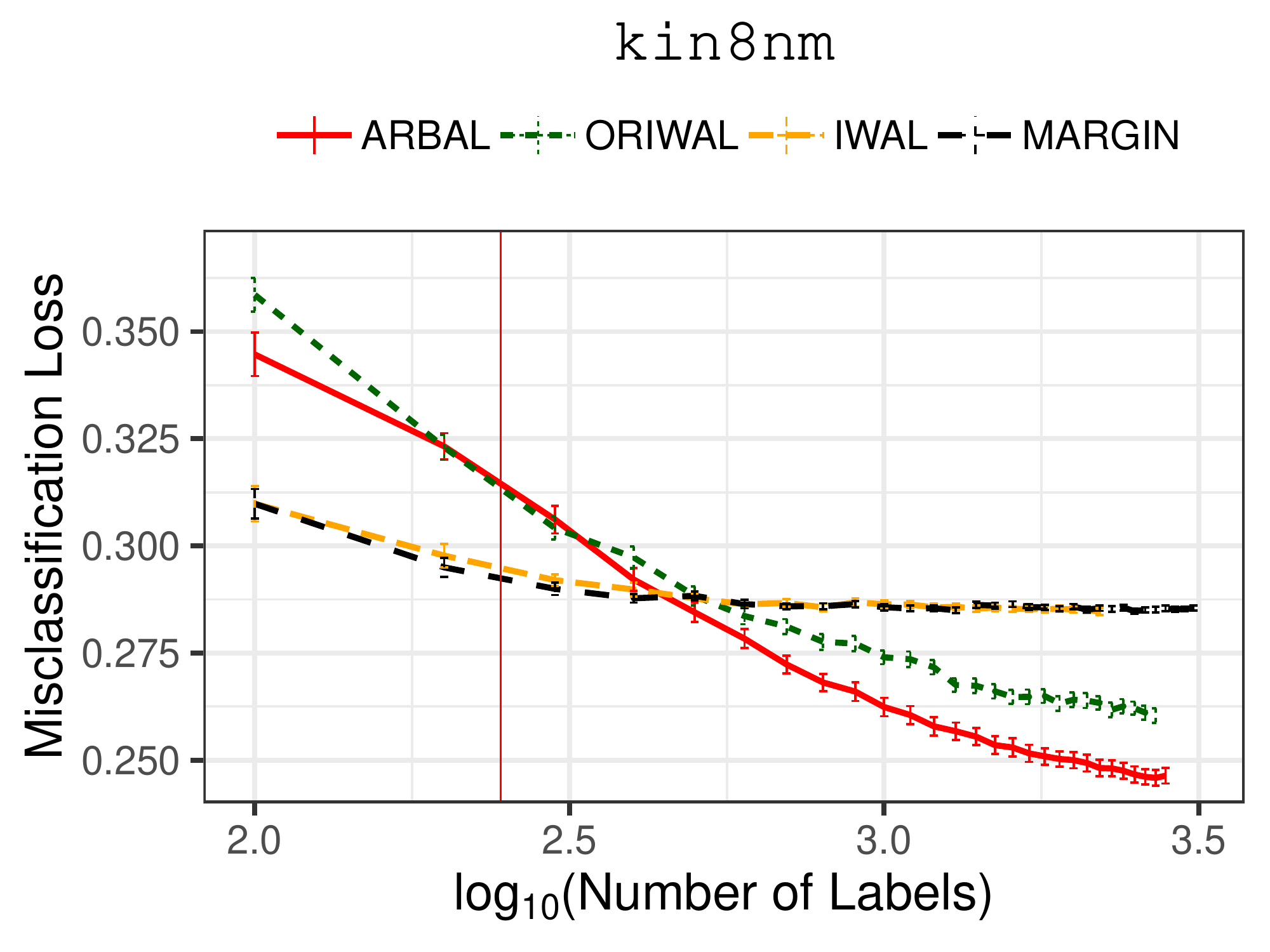}}
\subfigure{\centering\includegraphics[width=0.3\textwidth,,trim= 5 10 10 5,clip=true]{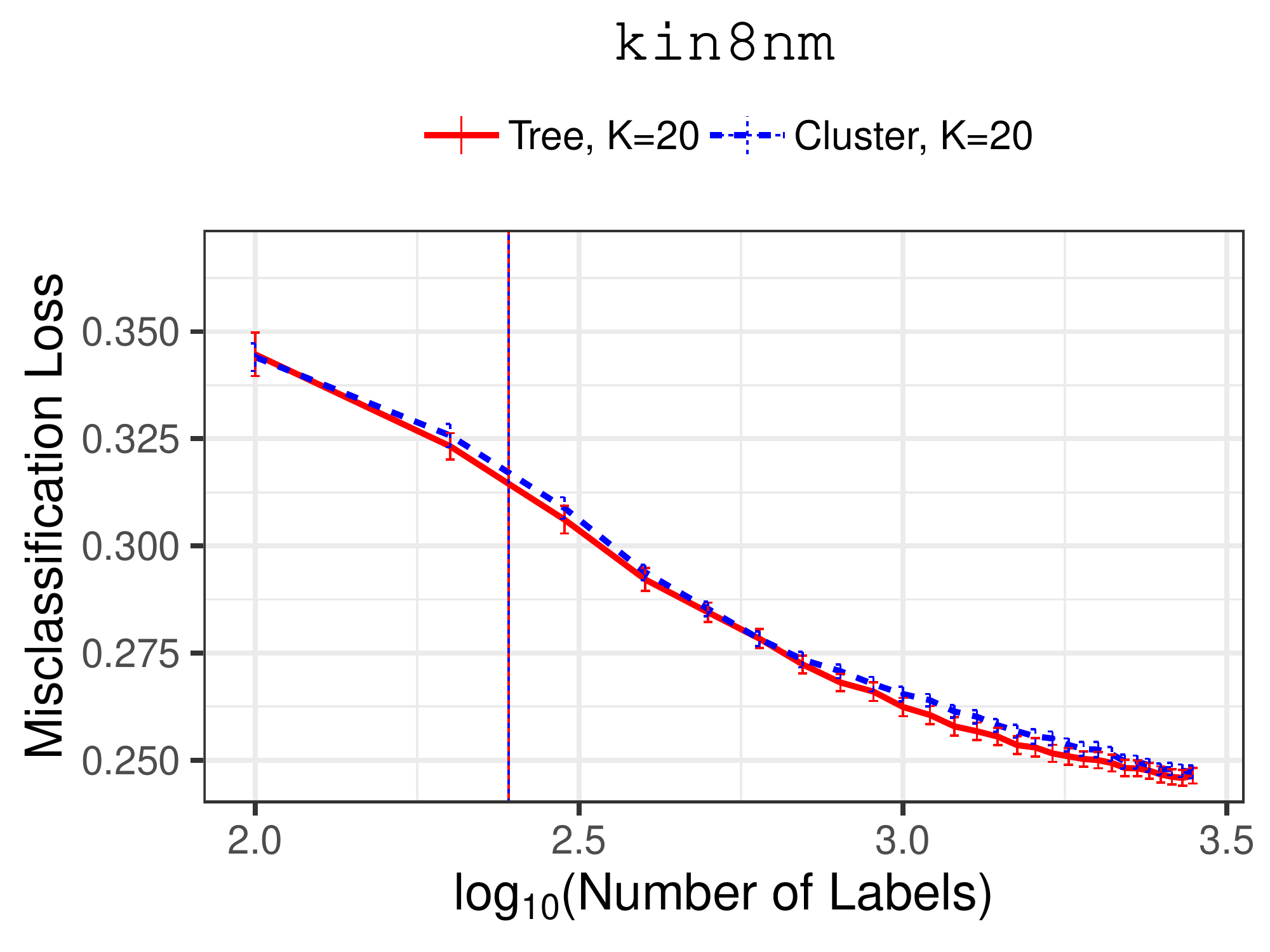}}\\
\subfigure{\centering\includegraphics[width=0.3\textwidth,,trim= 5 10 10 5,clip=true]{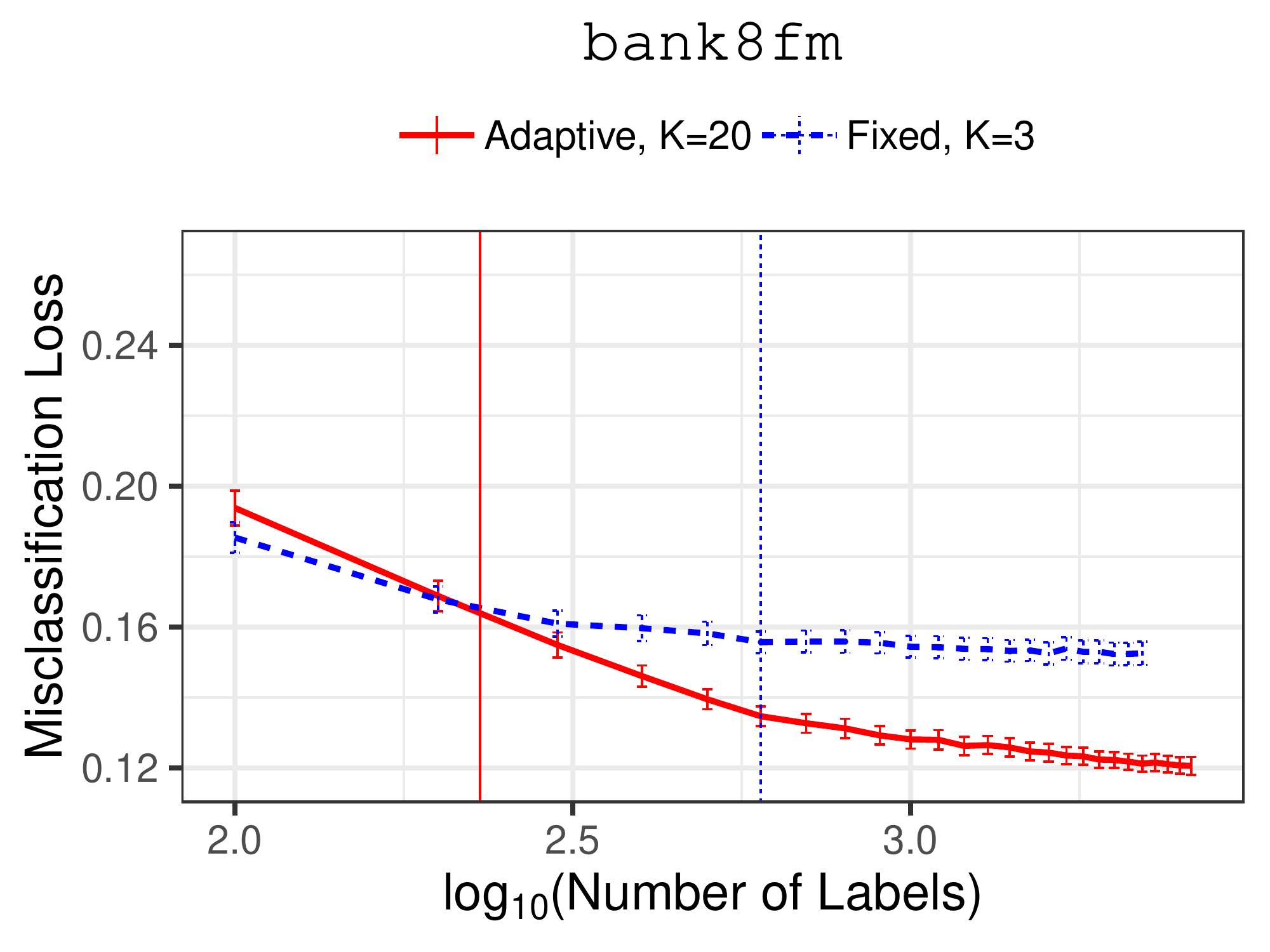}}
\subfigure{\centering\includegraphics[width=0.3\textwidth,,trim= 5 10 10 5,clip=true]{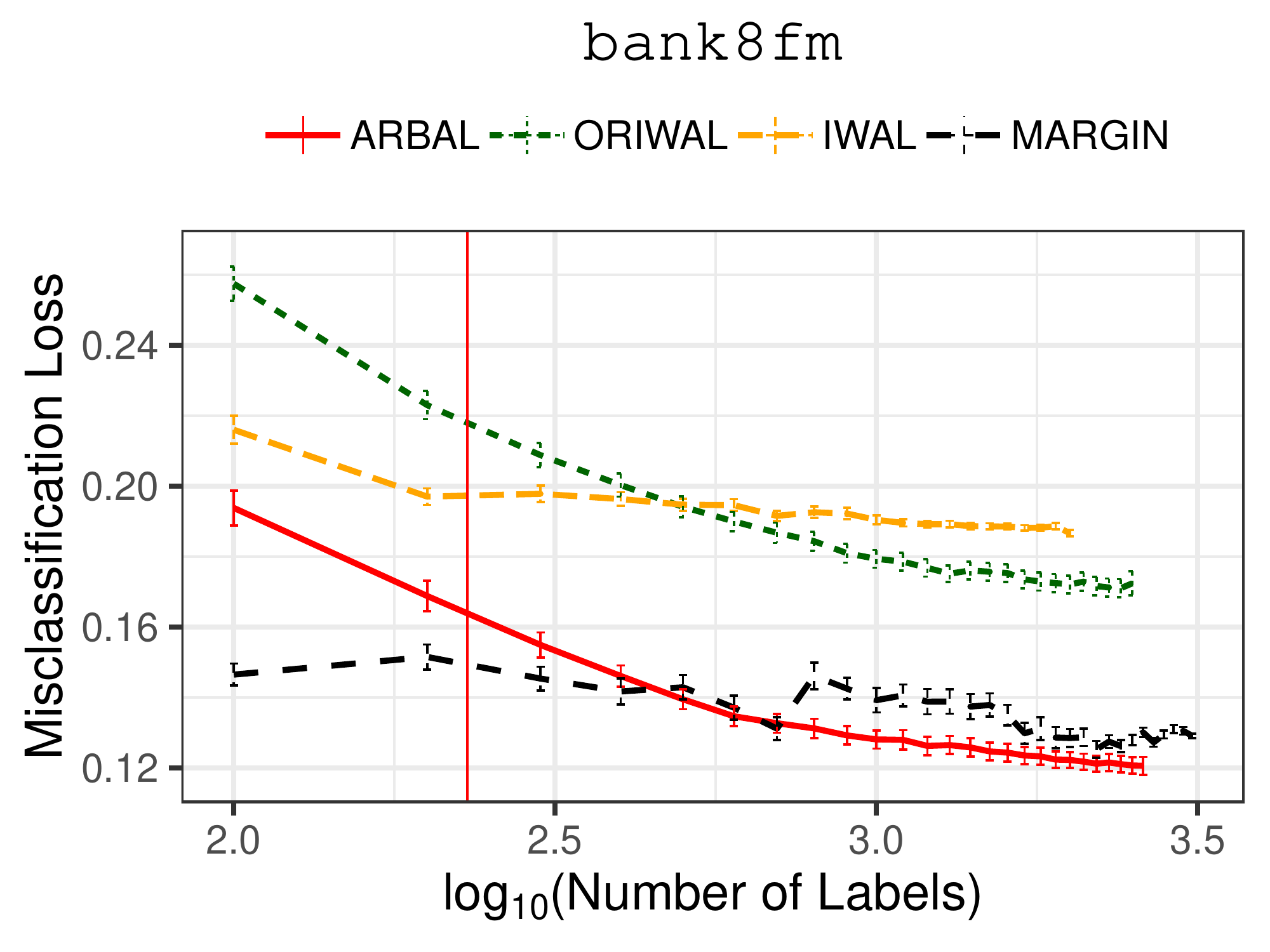}}
\subfigure{\centering\includegraphics[width=0.3\textwidth,,trim= 5 10 10 5,clip=true]{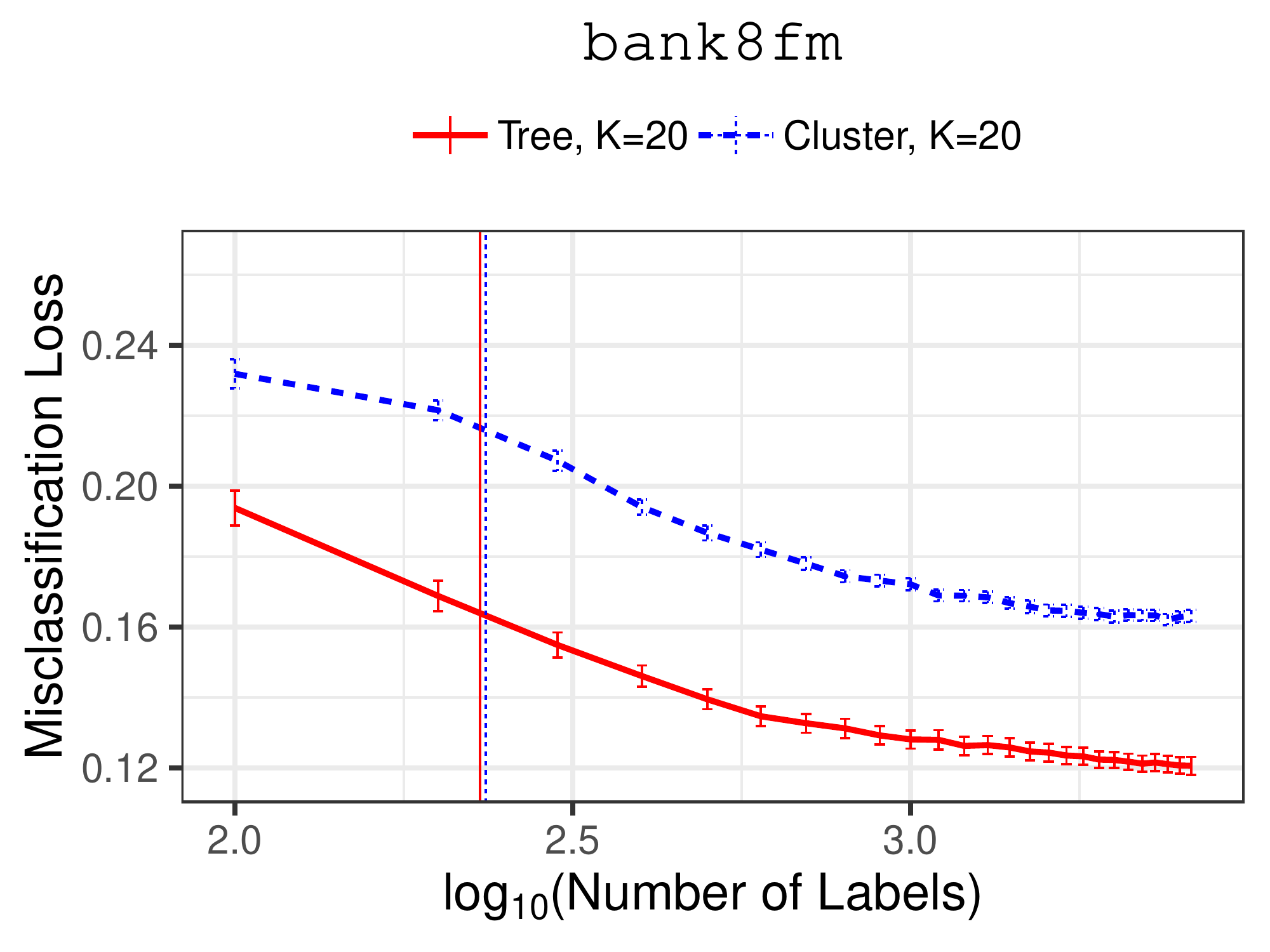}}\\
\subfigure{\centering\includegraphics[width=0.3\textwidth,,trim= 5 10 10 5,clip=true]{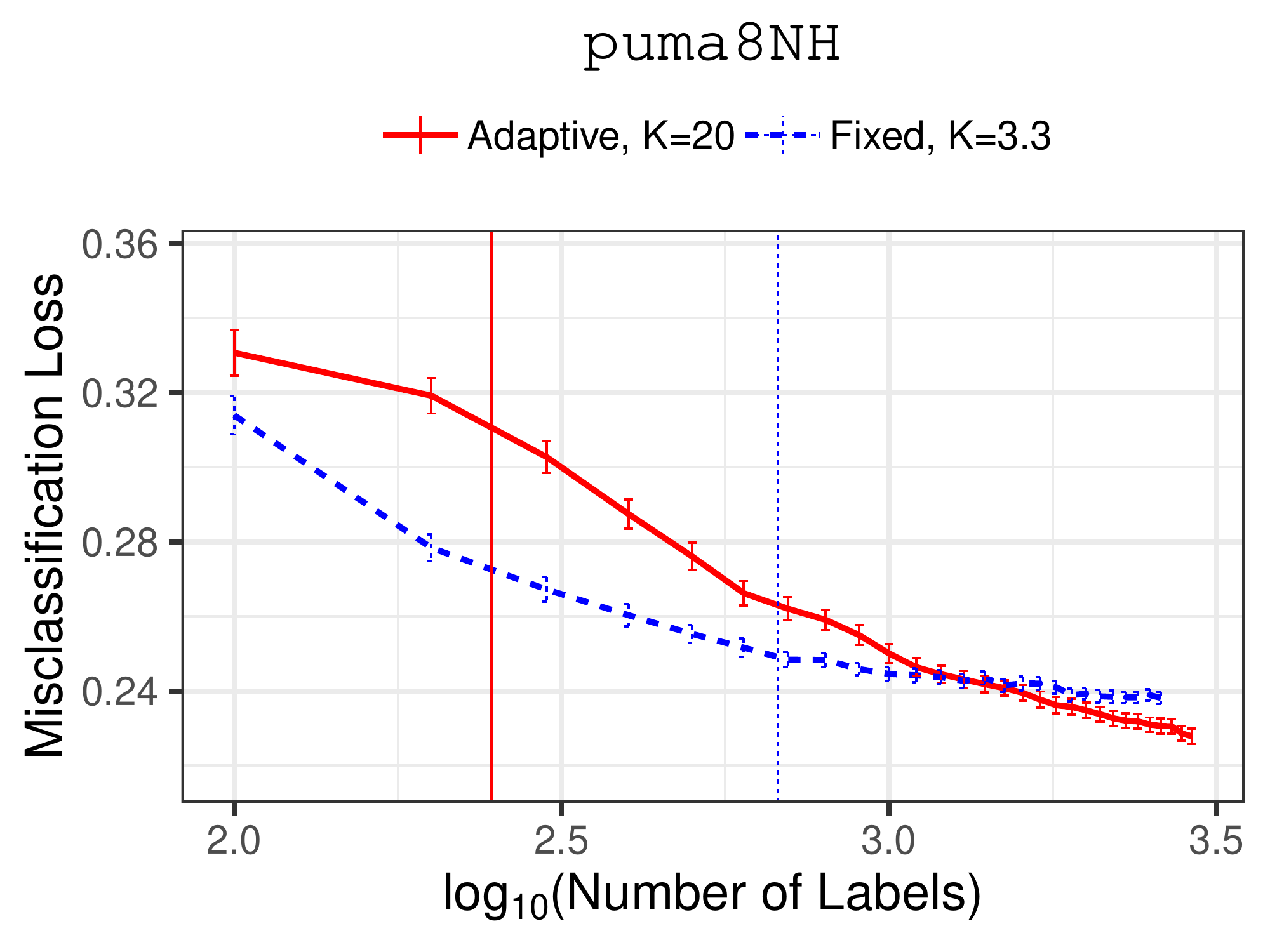}}
\subfigure{\centering\includegraphics[width=0.3\textwidth,,trim= 5 10 10 5,clip=true]{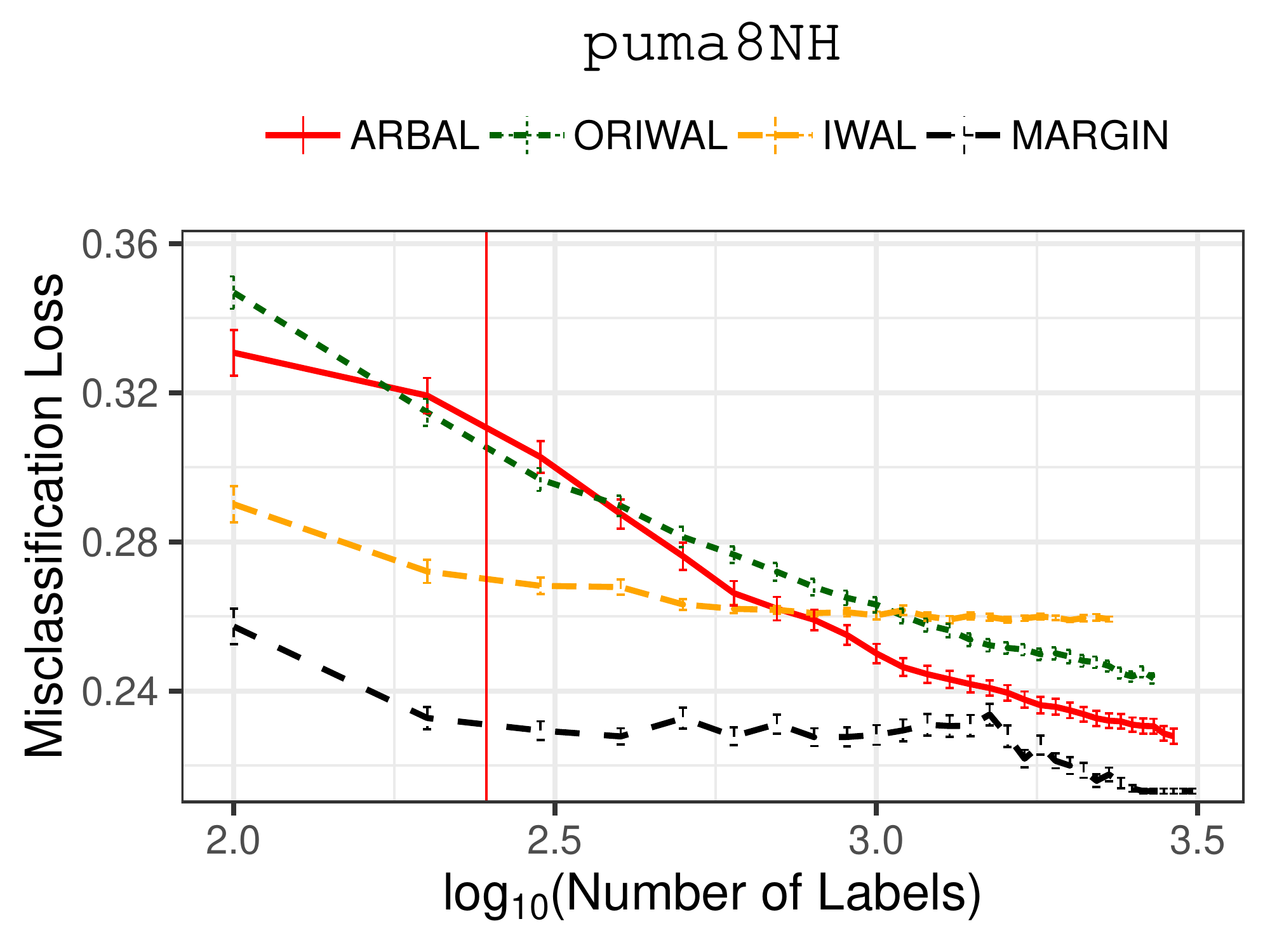}}
\subfigure{\centering\includegraphics[width=0.3\textwidth,,trim= 5 10 10 5,clip=true]{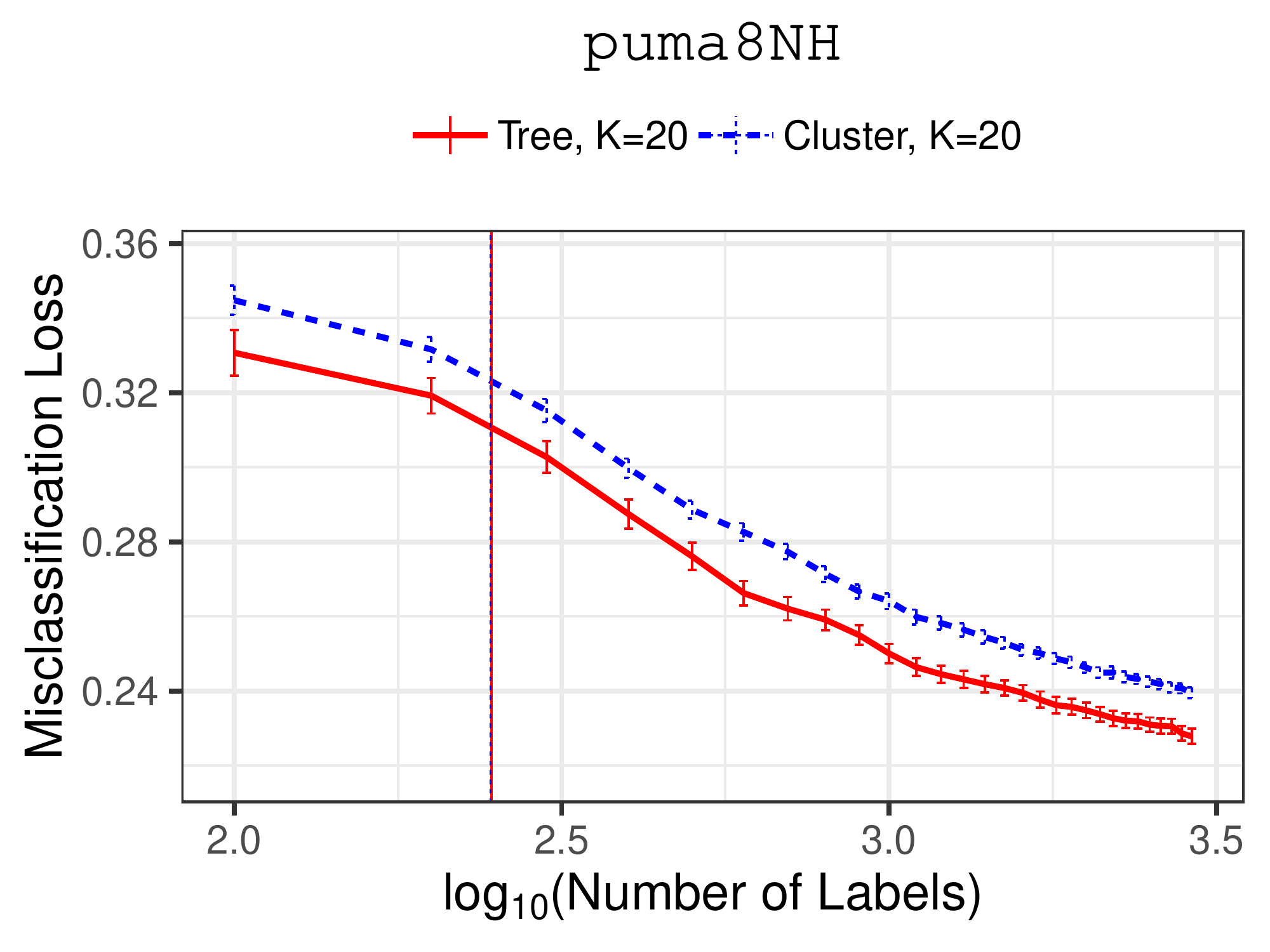}}\\
\subfigure{\centering\includegraphics[width=0.3\textwidth,,trim= 5 10 10 5,clip=true]{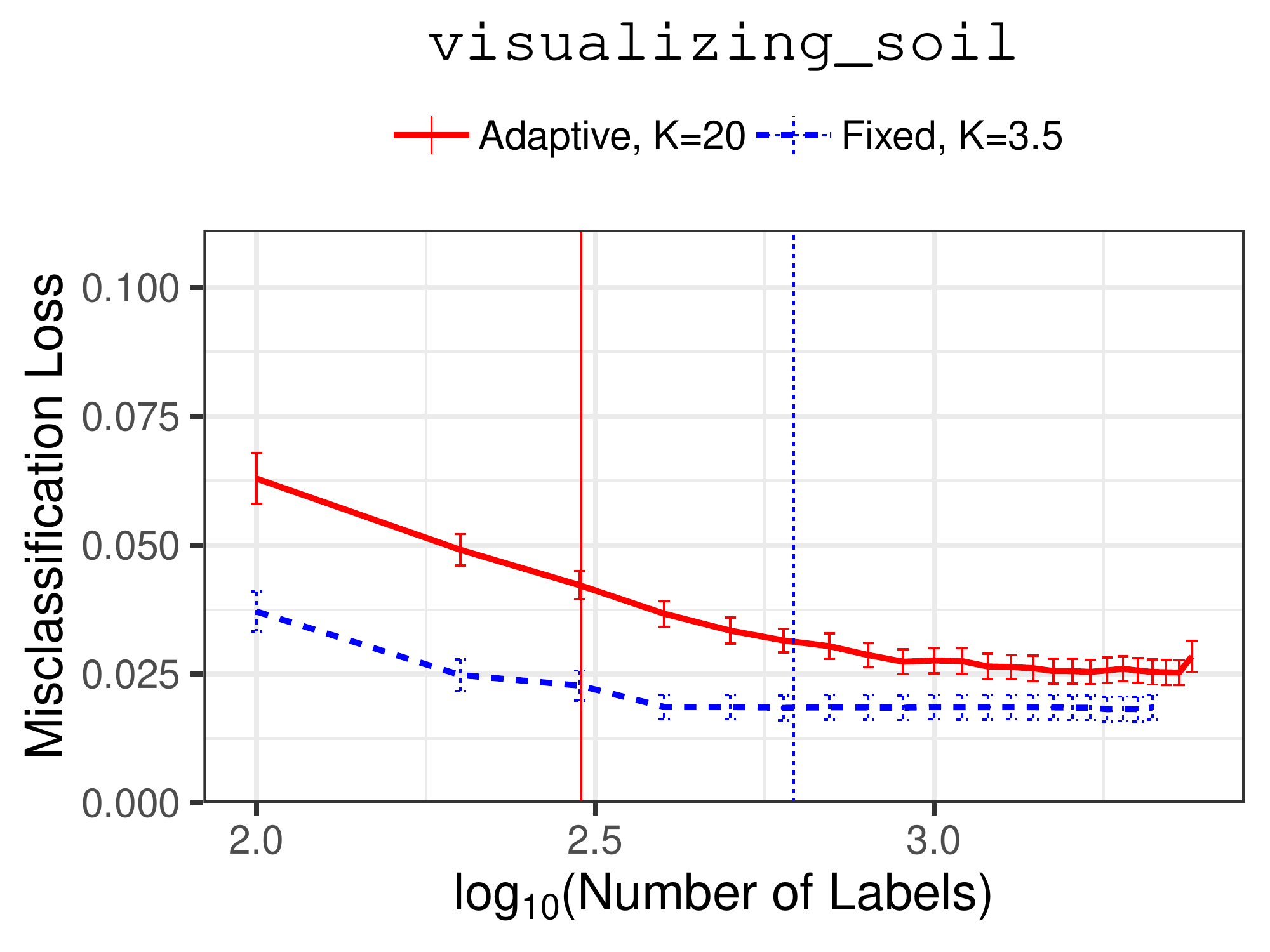}}
\subfigure{\centering\includegraphics[width=0.3\textwidth,,trim= 5 10 10 5,clip=true]{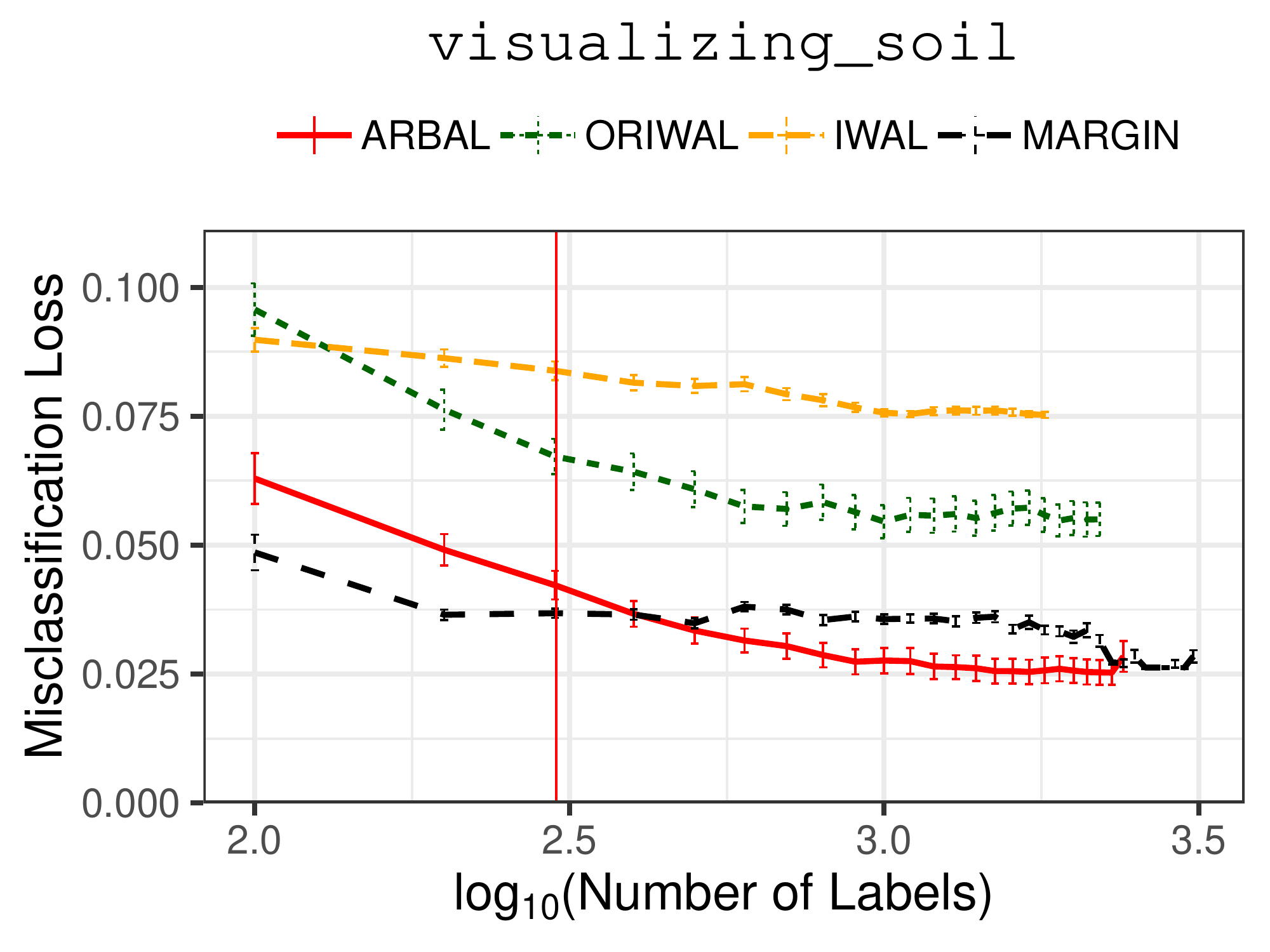}}
\subfigure{\centering\includegraphics[width=0.3\textwidth,,trim= 5 10 10 5,clip=true]{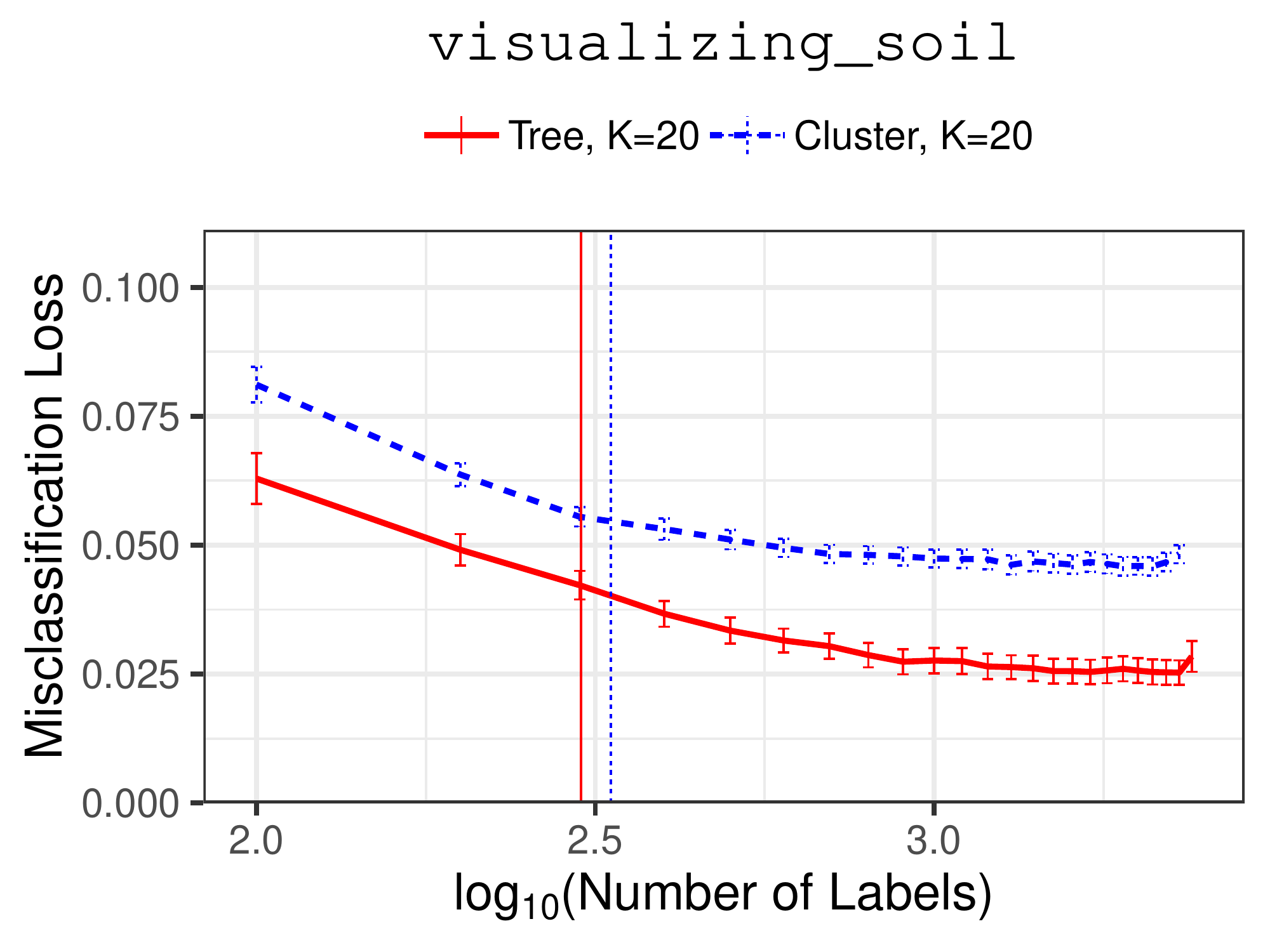}}\\
\subfigure{\centering\includegraphics[width=0.3\textwidth,,trim= 5 10 10 5,clip=true]{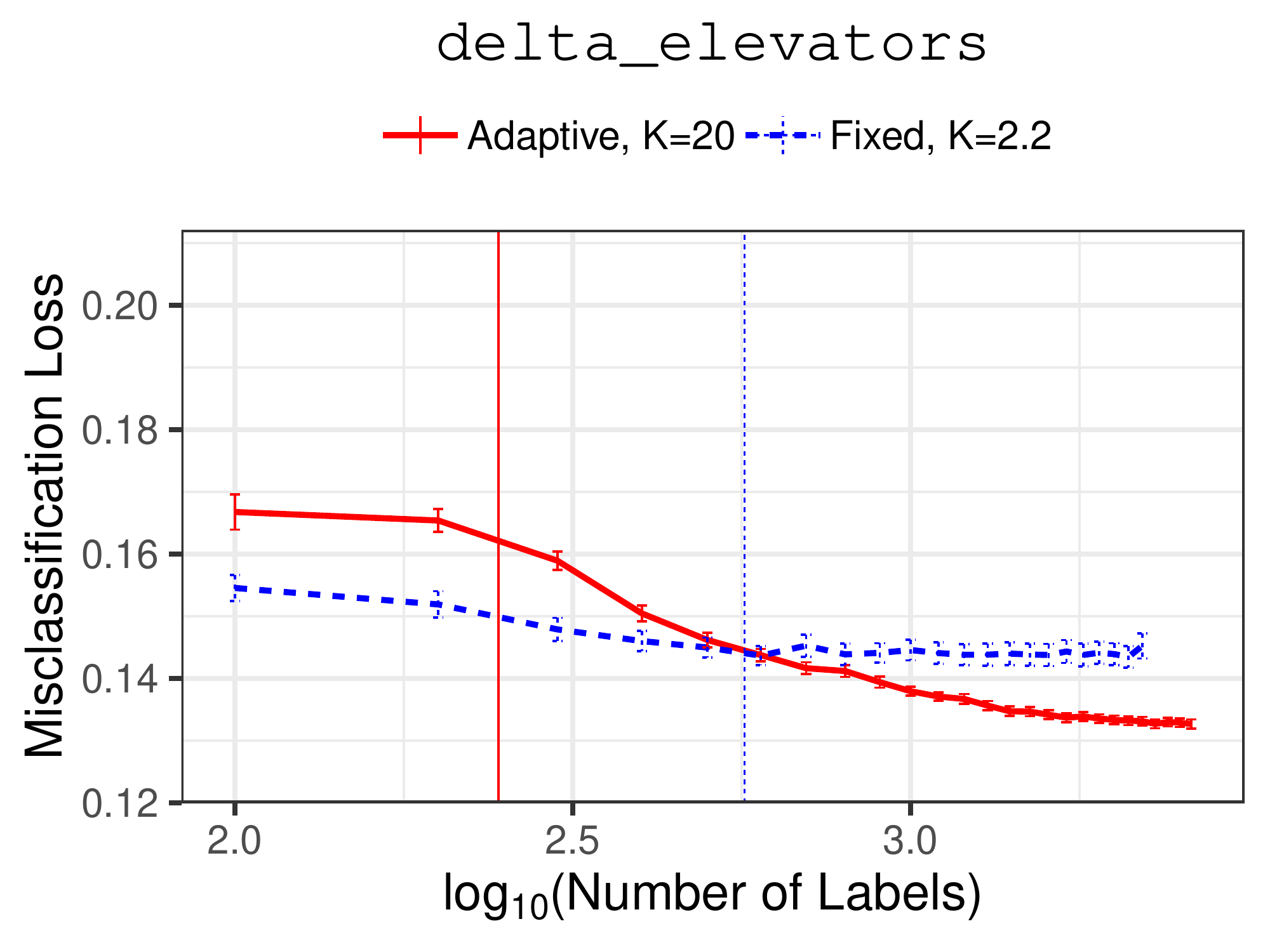}}
\subfigure{\centering\includegraphics[width=0.3\textwidth,,trim= 5 10 10 5,clip=true]{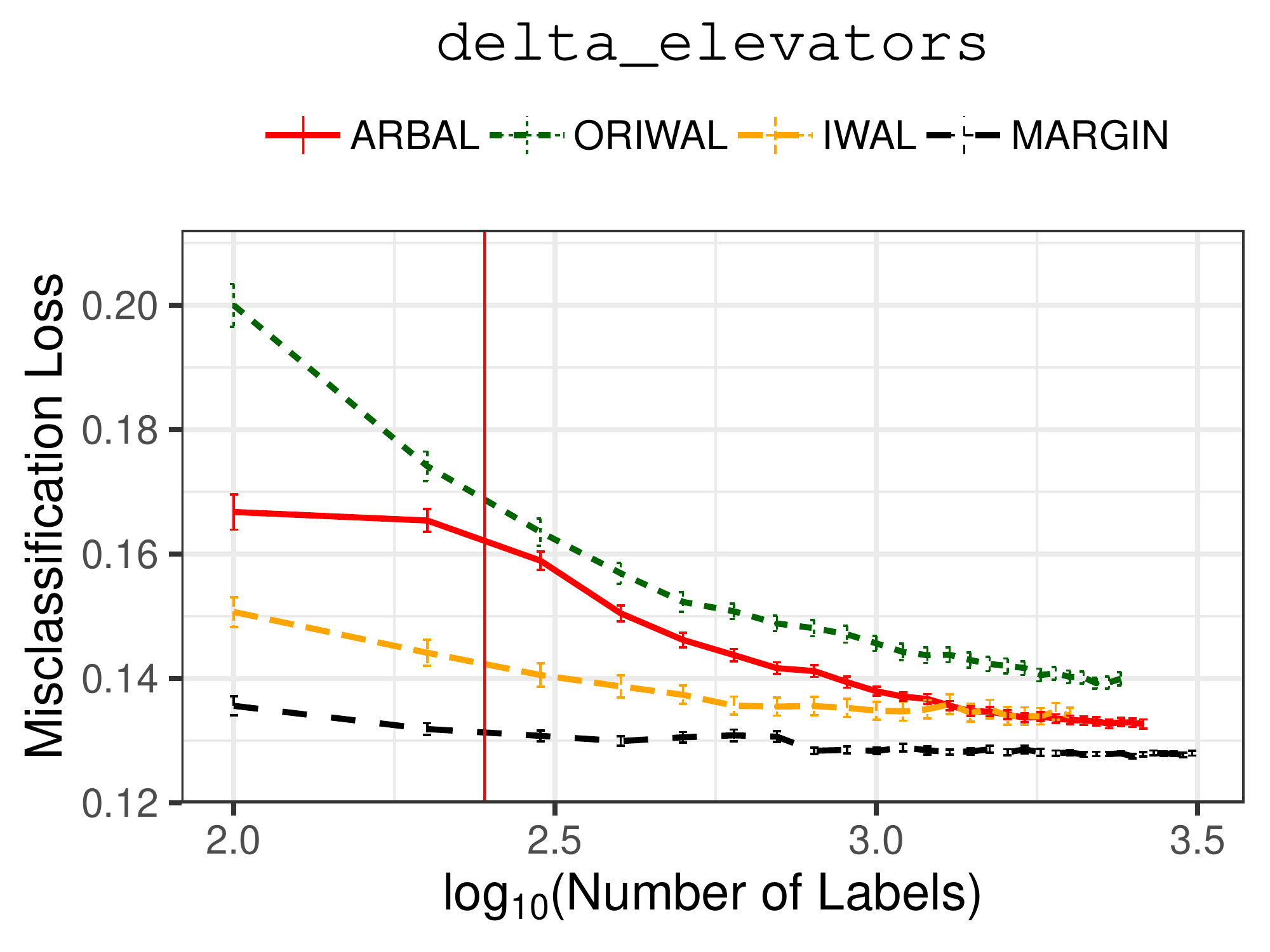}}
\subfigure{\centering\includegraphics[width=0.3\textwidth,,trim= 5 10 10 5,clip=true]{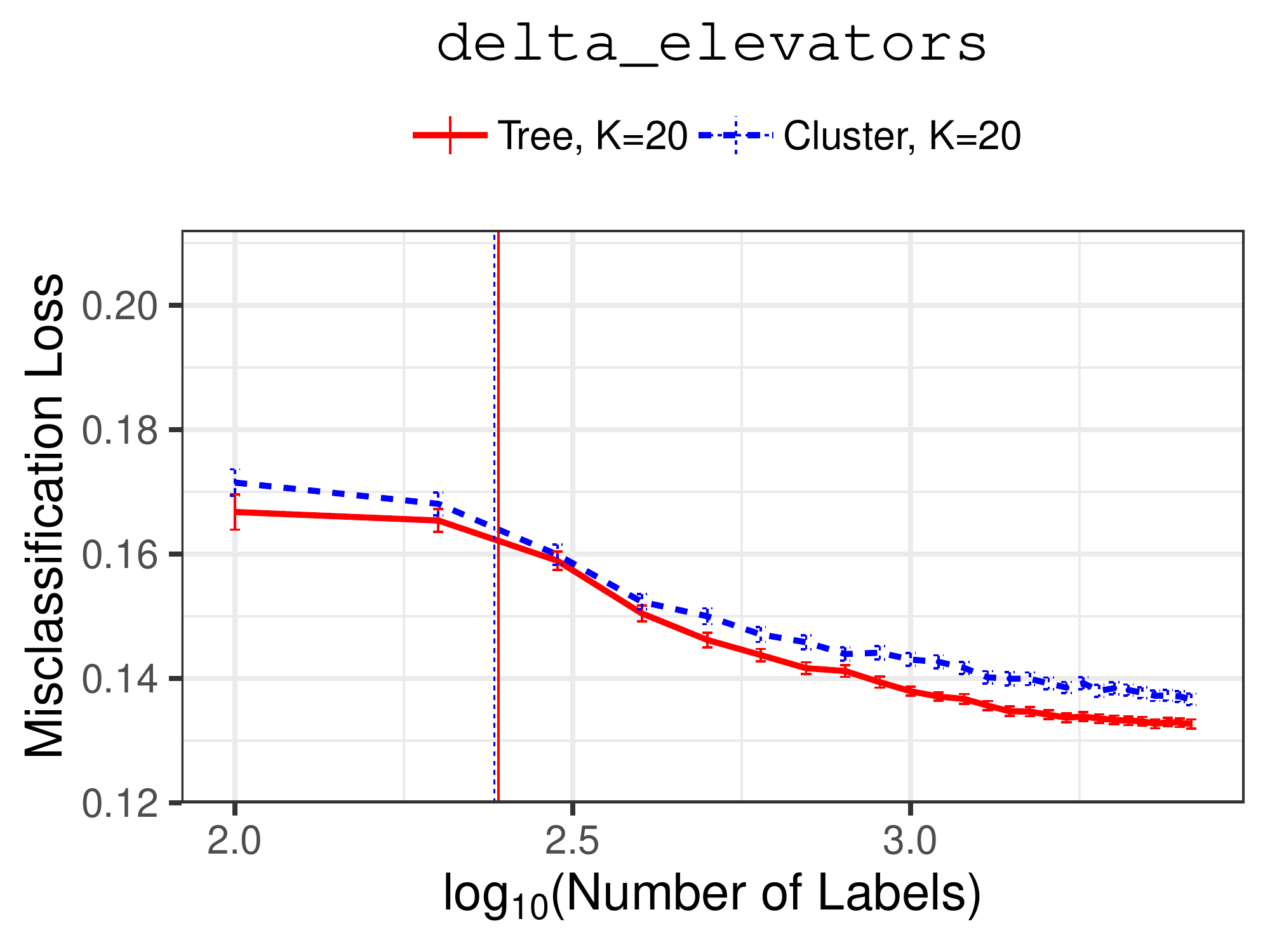}}\\
\end{center}
\caption{Misclassification loss on hold out test data versus number of labels requested ($\log_{10}$ scale). 
Left: \arbal\ with fixed and adaptive threshold $\gamma$.
Middle: \arbal, \riwal, \iwal, and \margin.
Right: \arbal\ with different partitioning methods: binary tree and hierarchical clustering.
For $\kappa=20$ and $\tau=800$,
dataset \texttt{\small{kin8nm}}, \texttt{\small{bank8fm}}, \texttt{\small{puma8NH}}, \texttt{\small{visualizing soil}}, \texttt{\small{delta elevators}}.
For left and right plots, we give the average number of resulting regions $K$ in the legend. The vertical lines indicate
when \arbal\ transits from the first to the second phase.}
\label{fig:expmis_1}
\vskip -0.2in
\end{figure*}
\begin{figure*}[ht]
\begin{center}
\subfigure{\centering\includegraphics[width=0.3\textwidth,,trim= 5 10 10 5,clip=true]{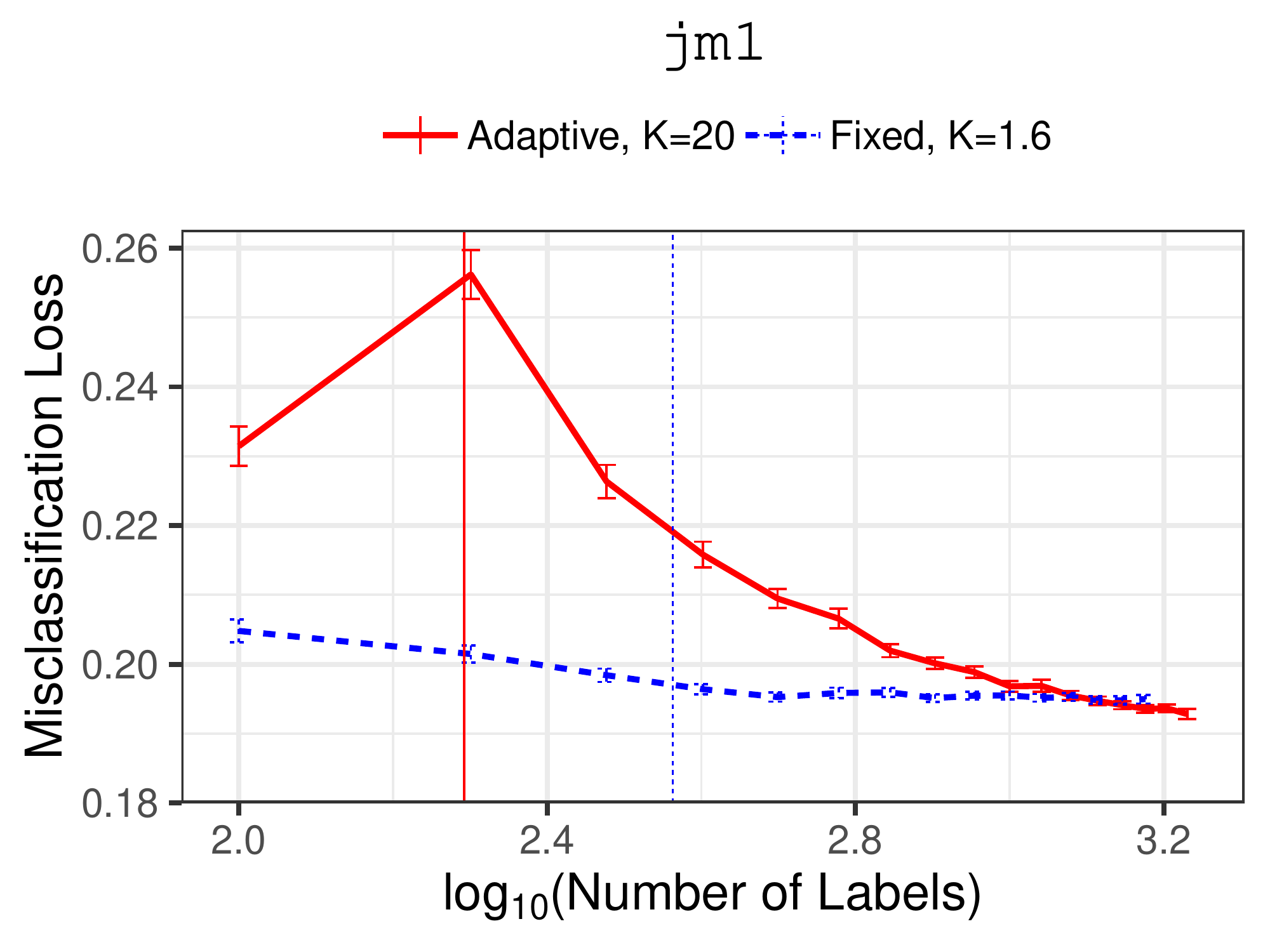}}
\subfigure{\centering\includegraphics[width=0.3\textwidth,,trim= 5 10 10 5,clip=true]{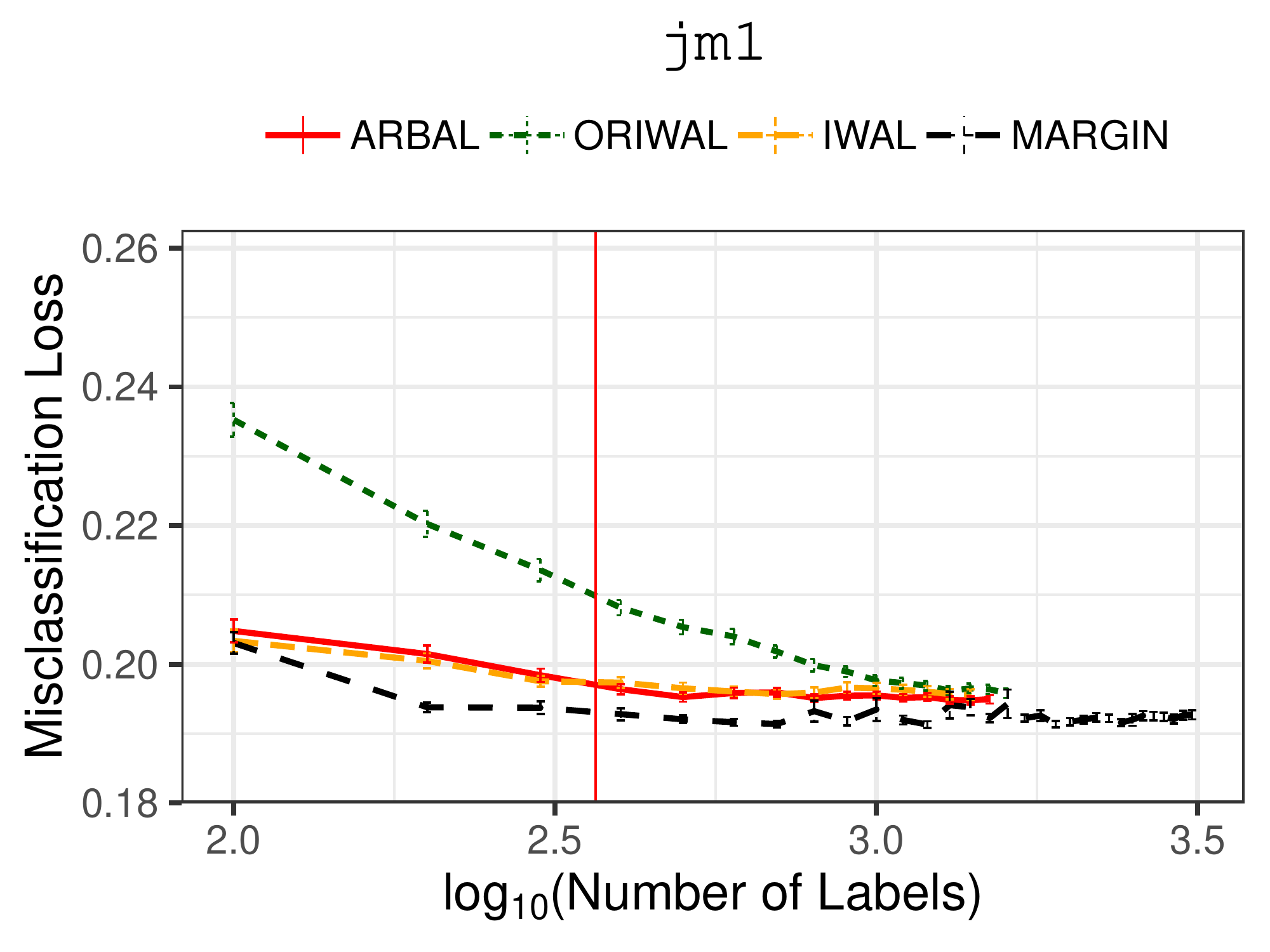}}
\subfigure{\centering\includegraphics[width=0.3\textwidth,,trim= 5 10 10 5,clip=true]{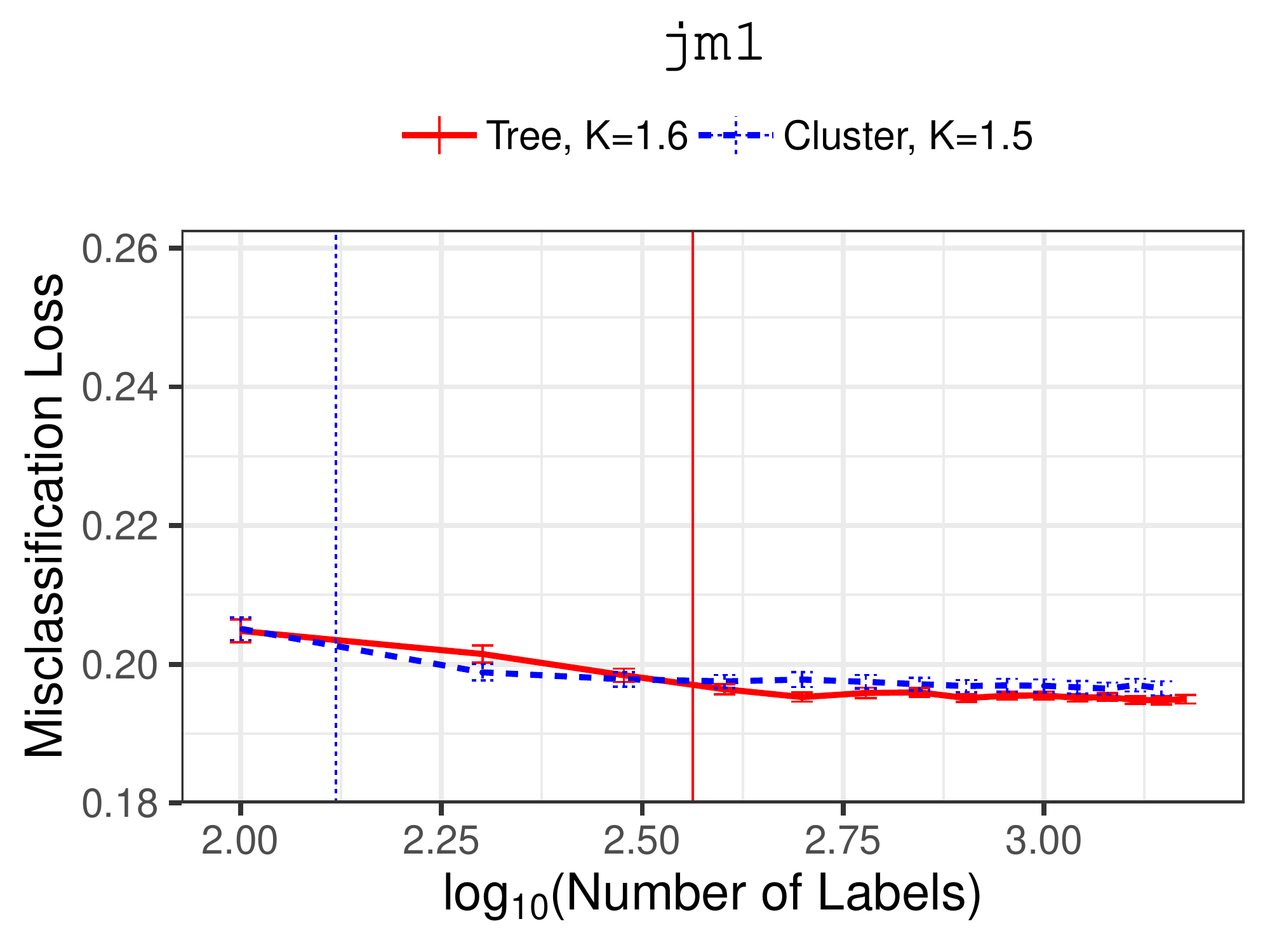}}\\
\subfigure{\centering\includegraphics[width=0.3\textwidth,,trim= 5 10 10 5,clip=true]{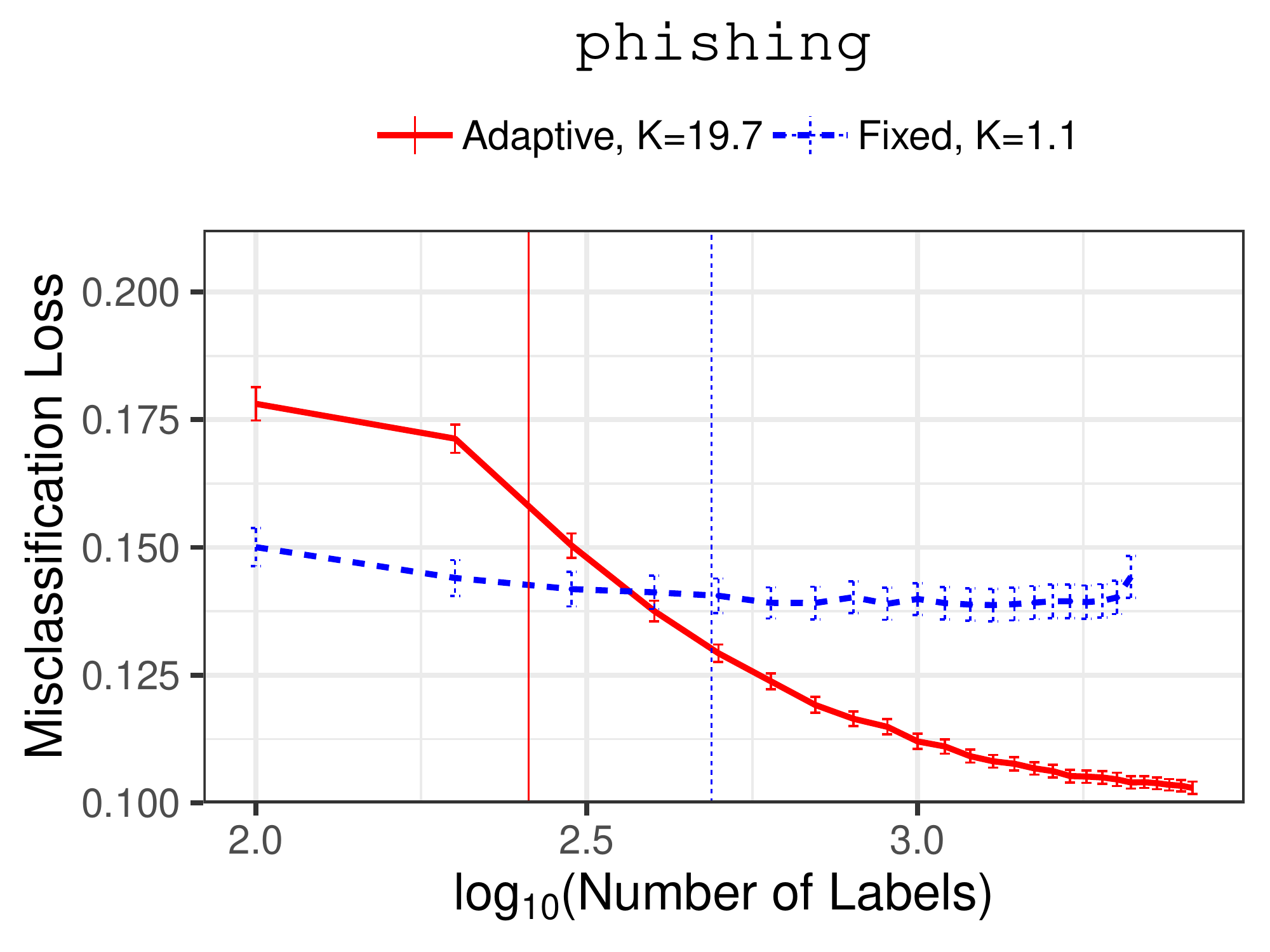}}
\subfigure{\centering\includegraphics[width=0.3\textwidth,,trim= 5 10 10 5,clip=true]{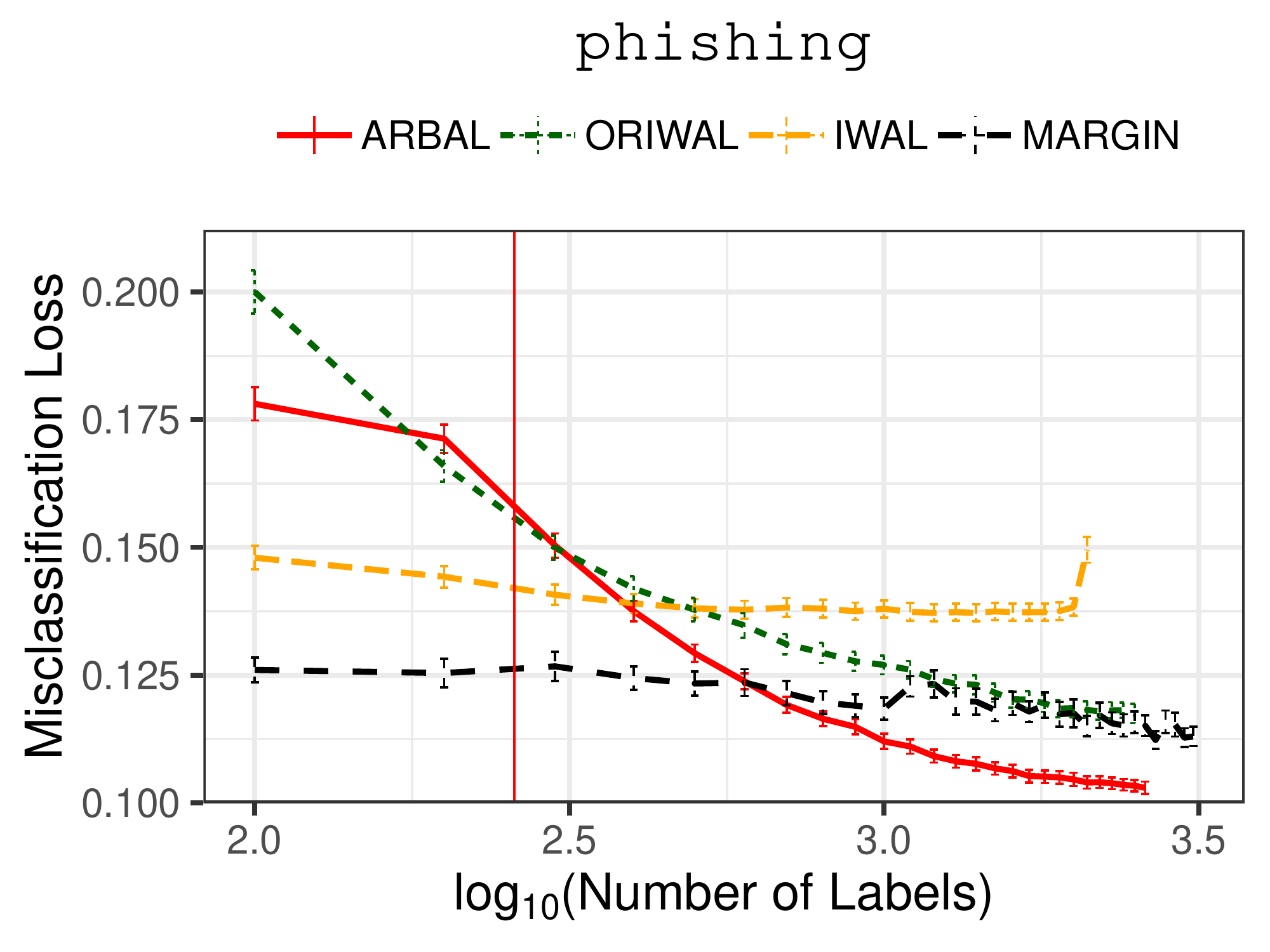}}
\subfigure{\centering\includegraphics[width=0.3\textwidth,,trim= 5 10 10 5,clip=true]{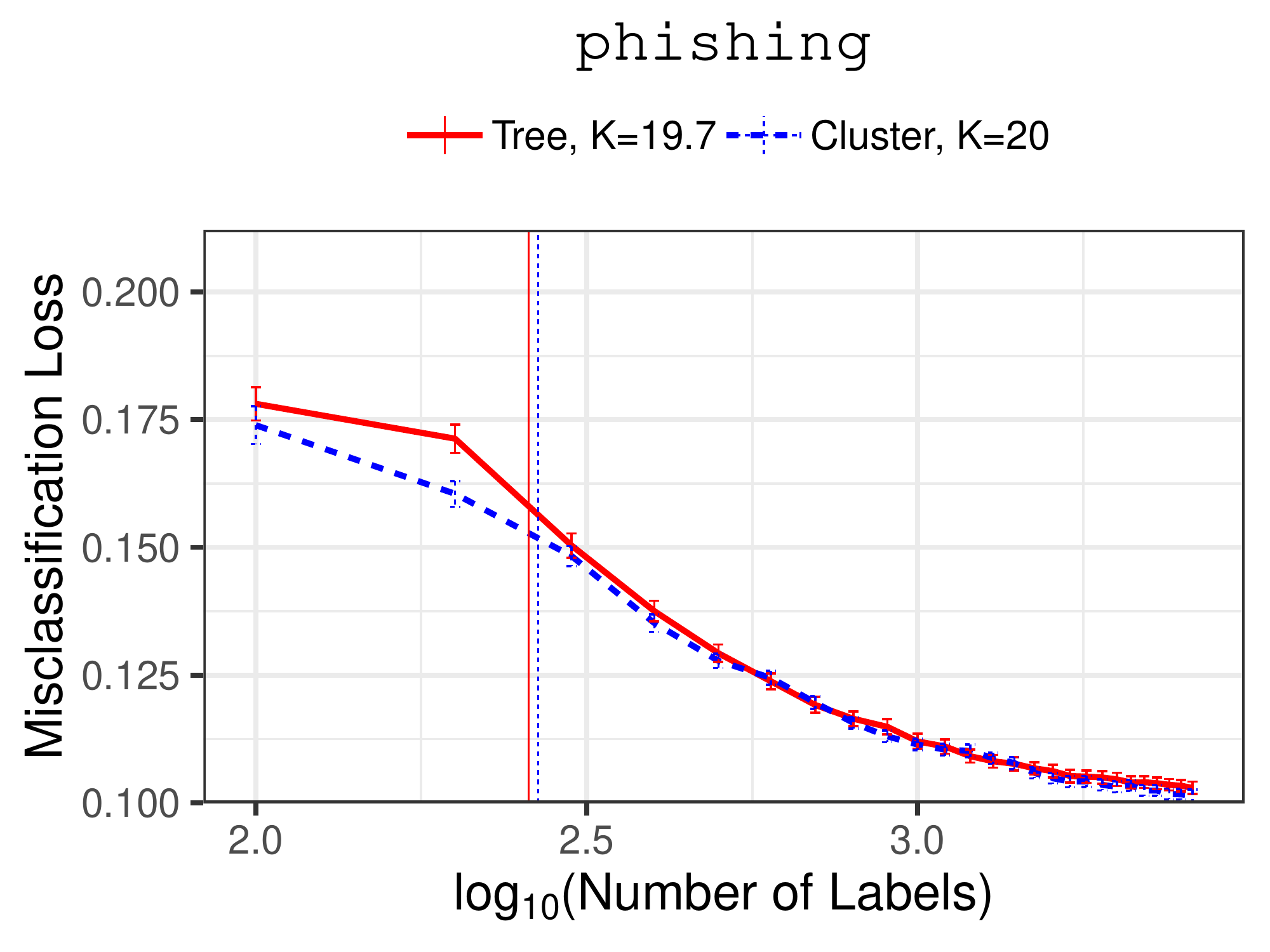}}\\
\subfigure{\centering\includegraphics[width=0.3\textwidth,,trim= 5 10 10 5,clip=true]{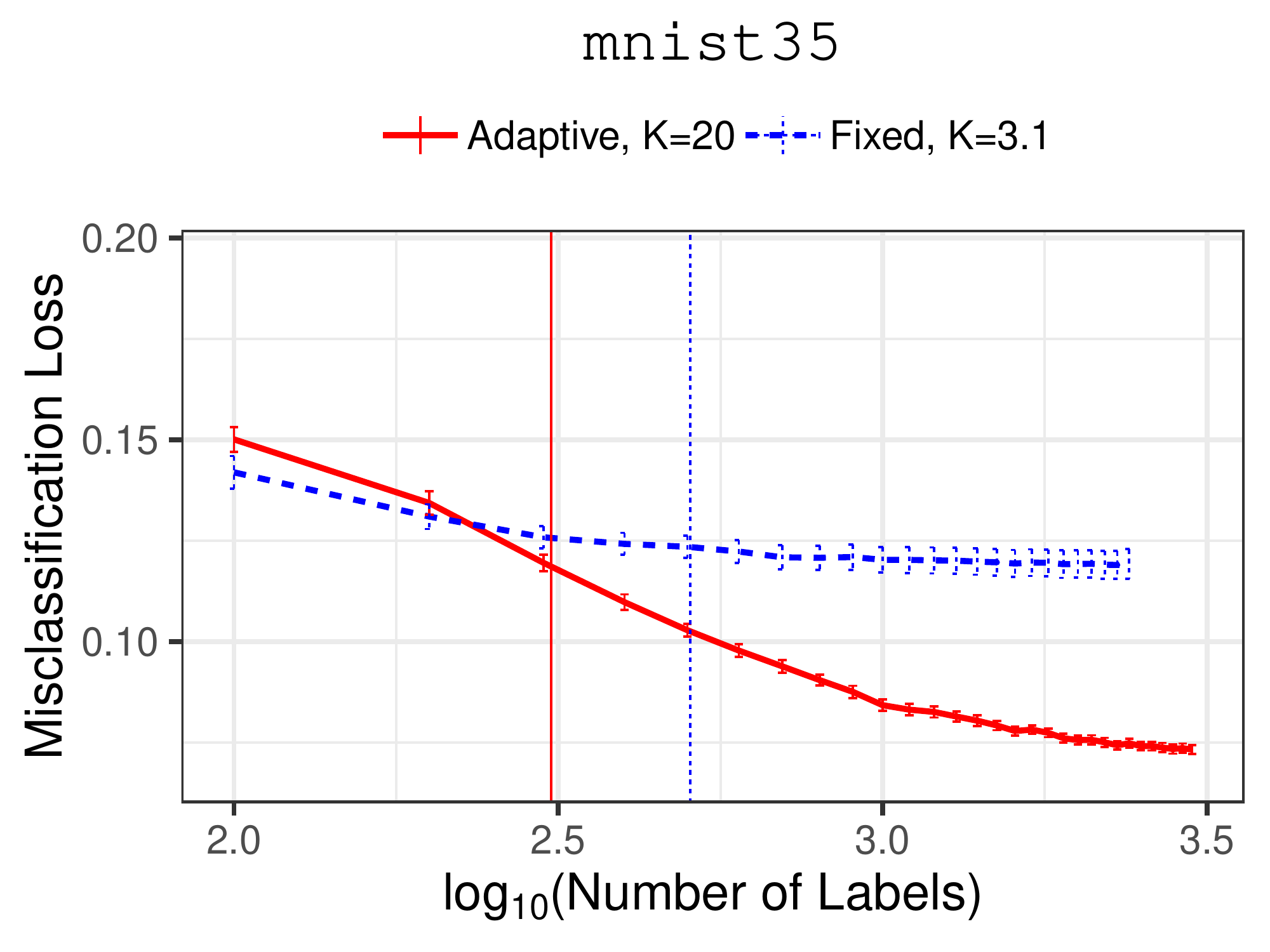}}
\subfigure{\centering\includegraphics[width=0.3\textwidth,,trim= 5 10 10 5,clip=true]{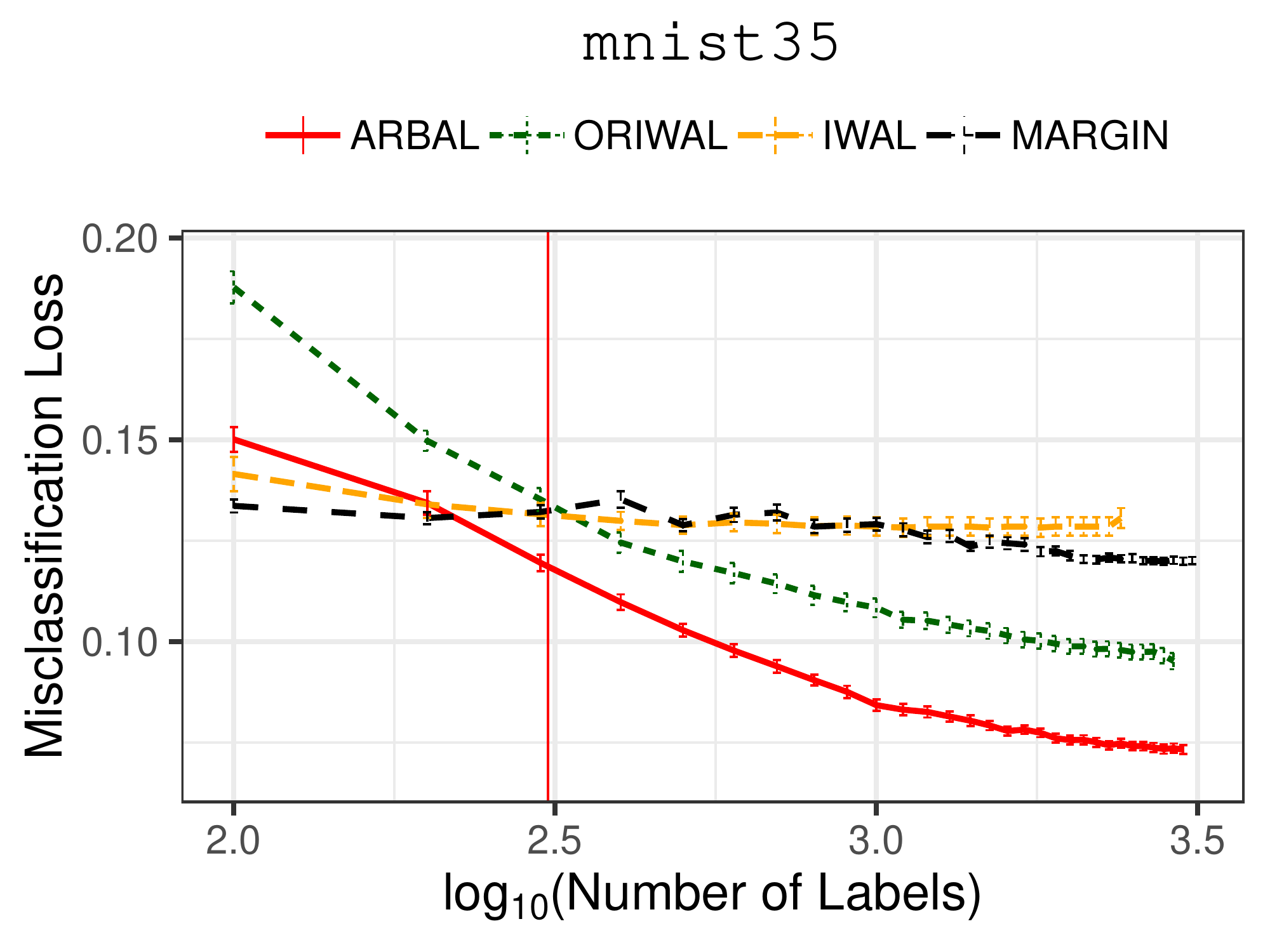}}
\subfigure{\centering\includegraphics[width=0.3\textwidth,,trim= 5 10 10 5,clip=true]{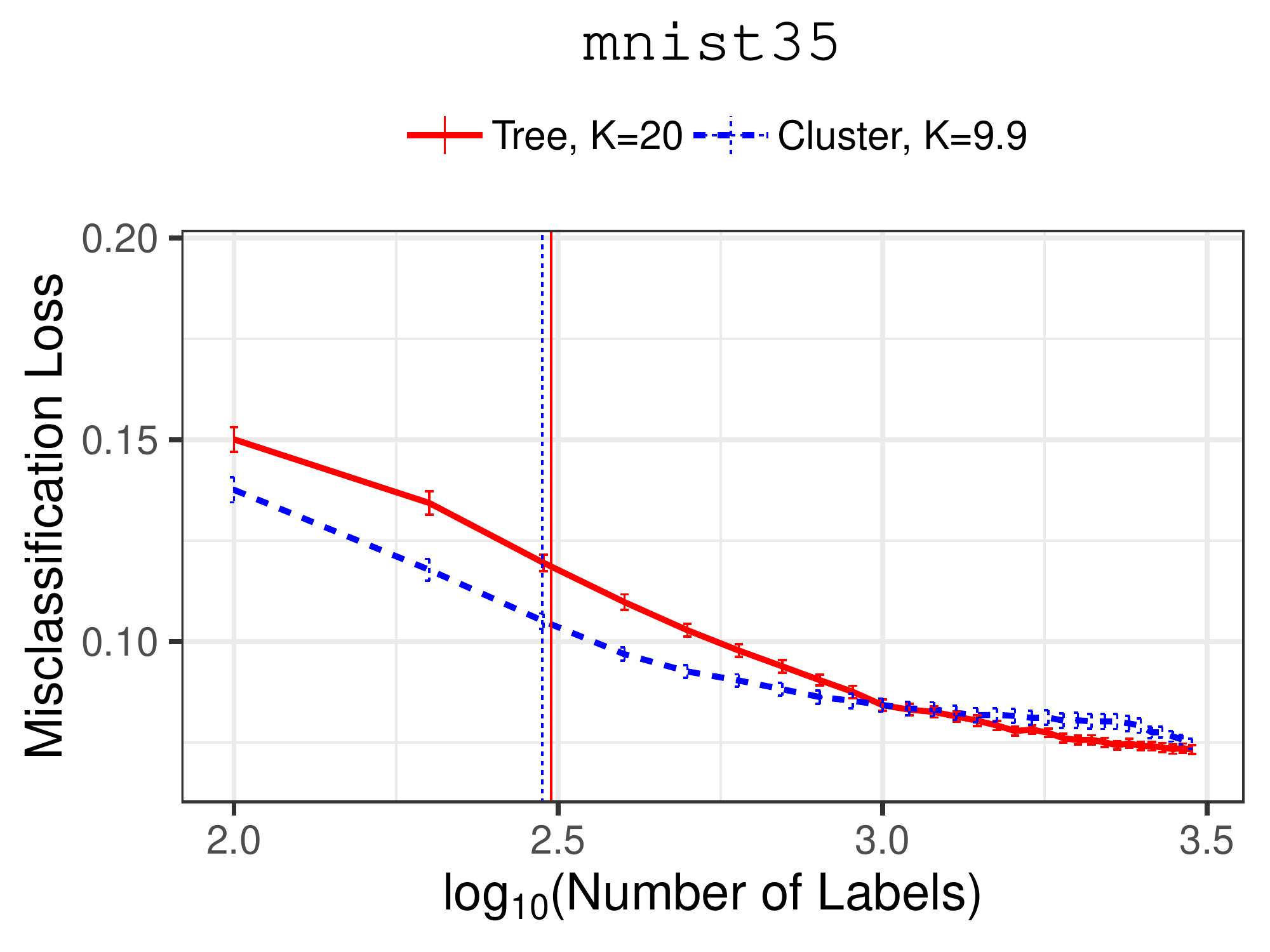}}\\
\subfigure{\centering\includegraphics[width=0.3\textwidth,,trim= 5 10 10 5,clip=true]{figures/compare_gamma/gamma_800_20_egg}}
\subfigure{\centering\includegraphics[width=0.3\textwidth,,trim= 5 10 10 5,clip=true]{figures/compare_baselines/loss_misclass_vs_labels_tau800_k20_egg}}
\subfigure{\centering\includegraphics[width=0.3\textwidth,,trim= 5 10 10 5,clip=true]{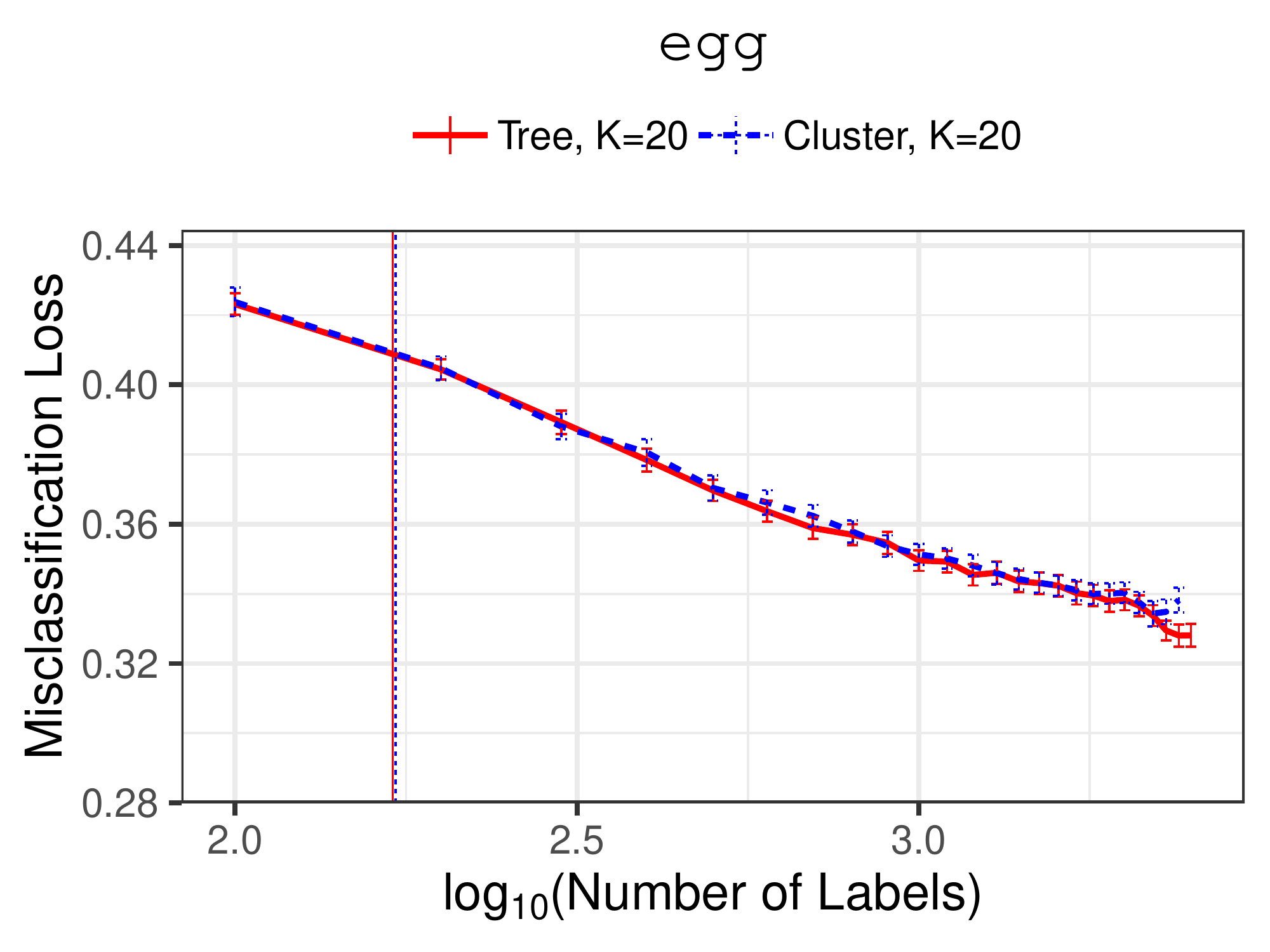}}\\
\subfigure{\centering\includegraphics[width=0.3\textwidth,,trim= 5 10 10 5,clip=true]{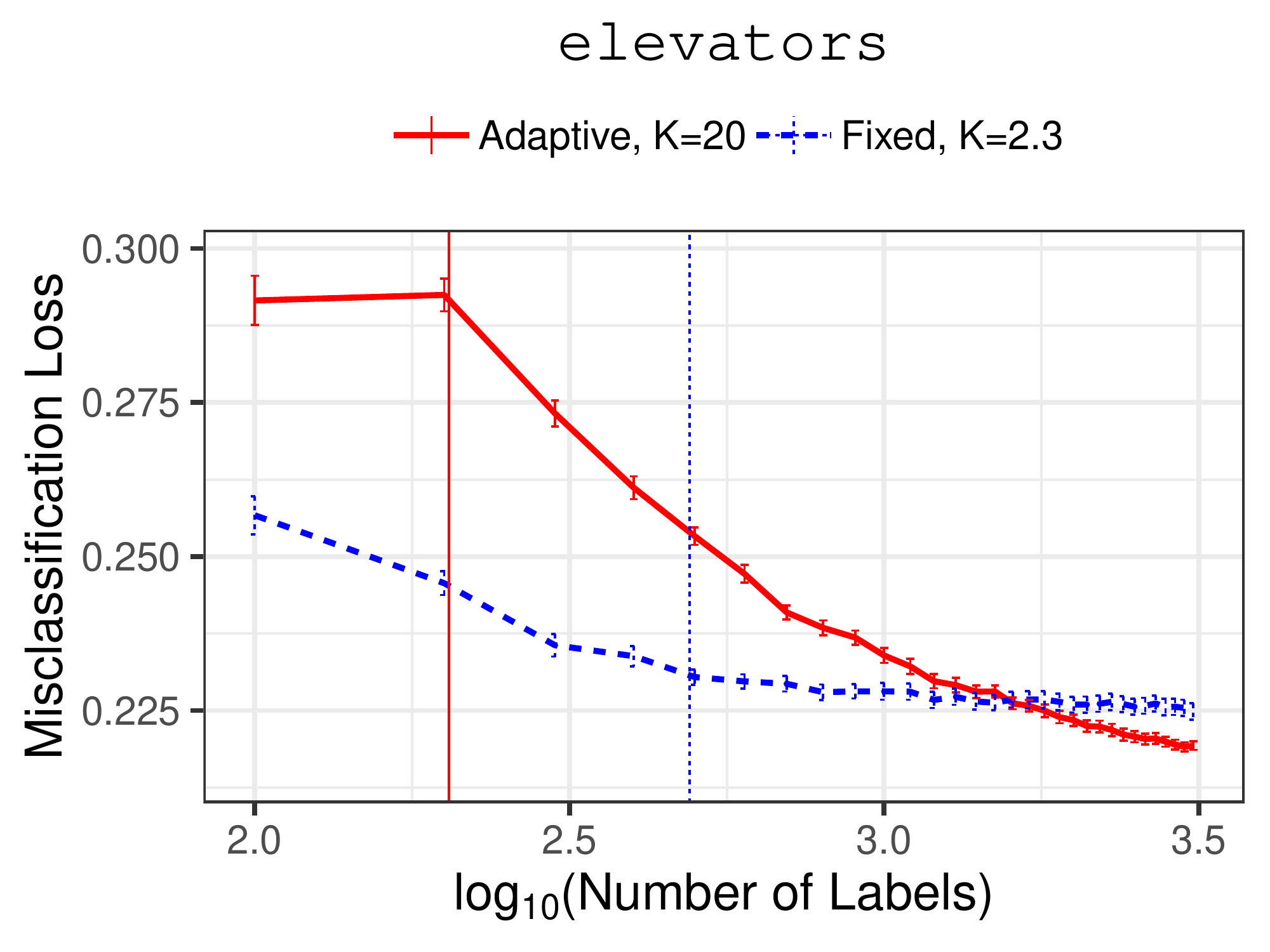}}
\subfigure{\centering\includegraphics[width=0.3\textwidth,,trim= 5 10 10 5,clip=true]{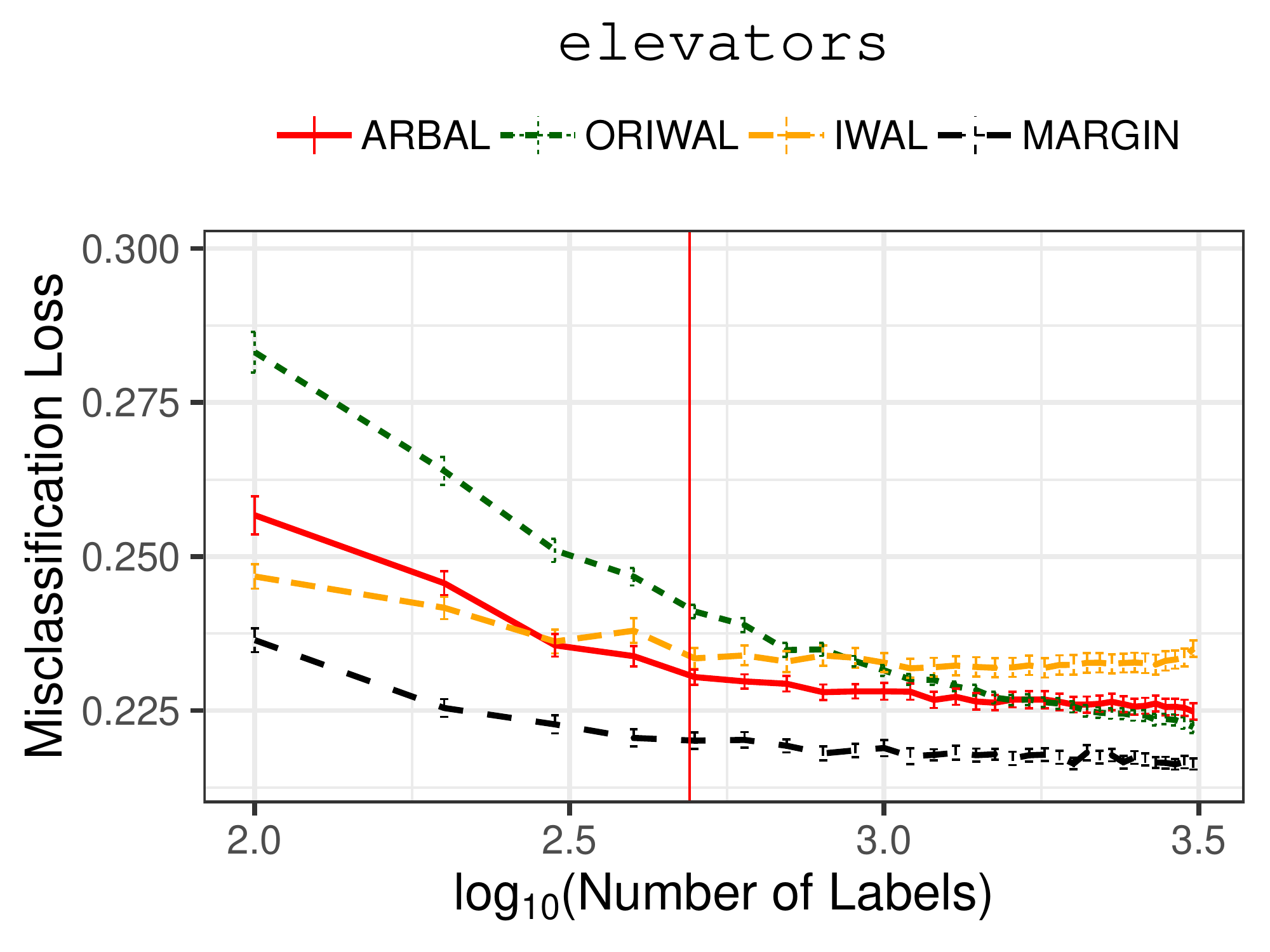}}
\subfigure{\centering\includegraphics[width=0.3\textwidth,,trim= 5 10 10 5,clip=true]{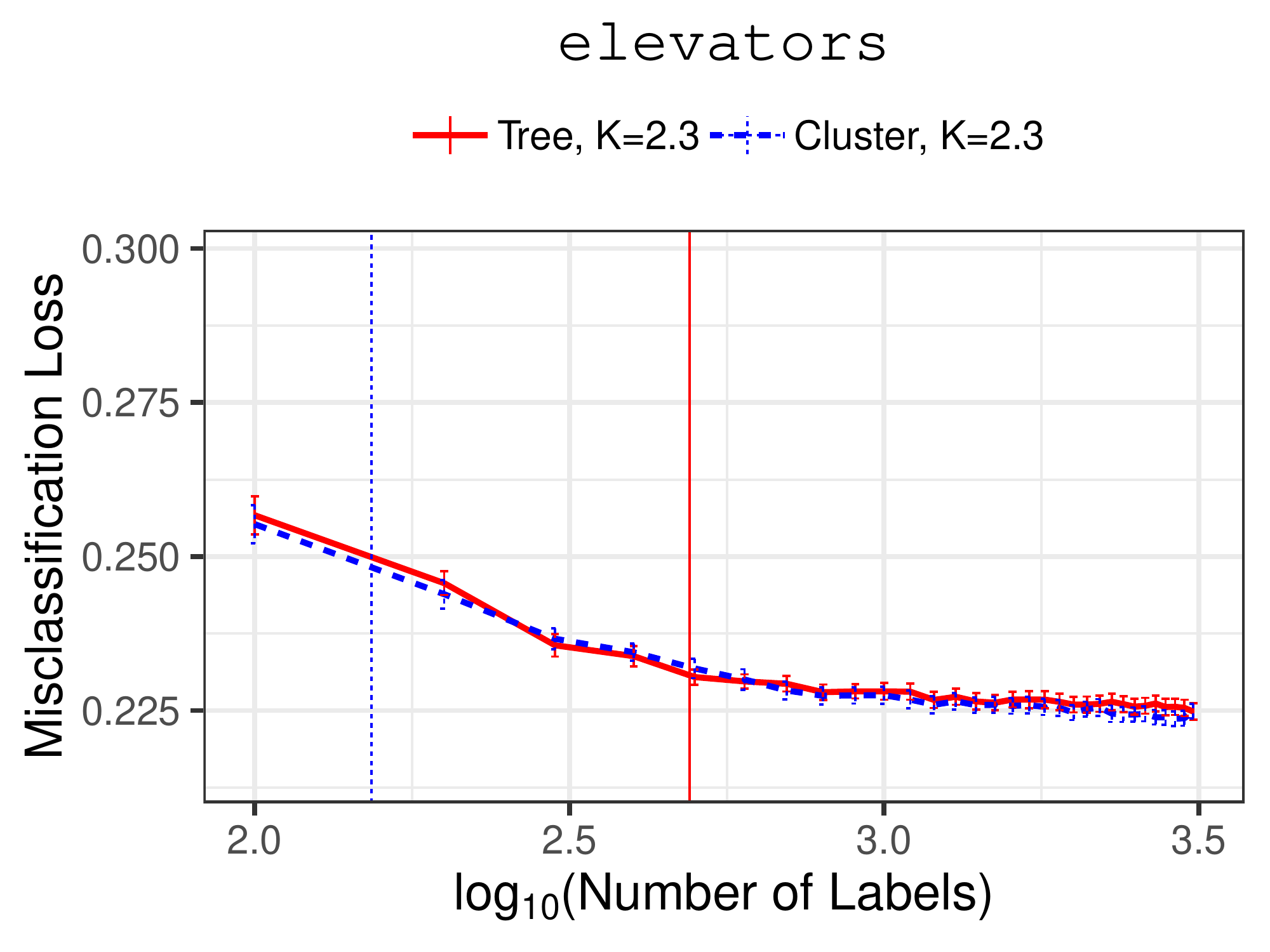}}\\
\end{center}
\caption{Misclassification loss on hold out test data versus number of labels requested ($\log_{10}$ scale). 
Left: \arbal\ with fixed and adaptive threshold $\gamma$.
Middle: \arbal, \riwal, \iwal, and \margin.
Right: \arbal\ with different partitioning methods: binary tree and hierarchical clustering.
For $\kappa=20$ and $\tau=800$,
dataset \texttt{\small{jm1}}, \texttt{\small{phishing}}, \texttt{\small{mnist35}}, \texttt{\small{egg}}, \texttt{\small{elevators}}.
For left and right plots, we give the average number of resulting regions $K$ in the legend. The vertical lines indicate
when \arbal\ transits from the first to the second phase.}
\label{fig:expmis_2}
\vskip -0.2in
\end{figure*}
\begin{figure*}[ht]
\begin{center}
\subfigure{\centering\includegraphics[width=0.3\textwidth,,trim= 5 10 10 5,clip=true]{figures/compare_gamma/gamma_800_20_magic04}}
\subfigure{\centering\includegraphics[width=0.3\textwidth,,trim= 5 10 10 5,clip=true]{figures/compare_baselines/loss_misclass_vs_labels_tau800_k20_magic04}}
\subfigure{\centering\includegraphics[width=0.3\textwidth,,trim= 5 10 10 5,clip=true]{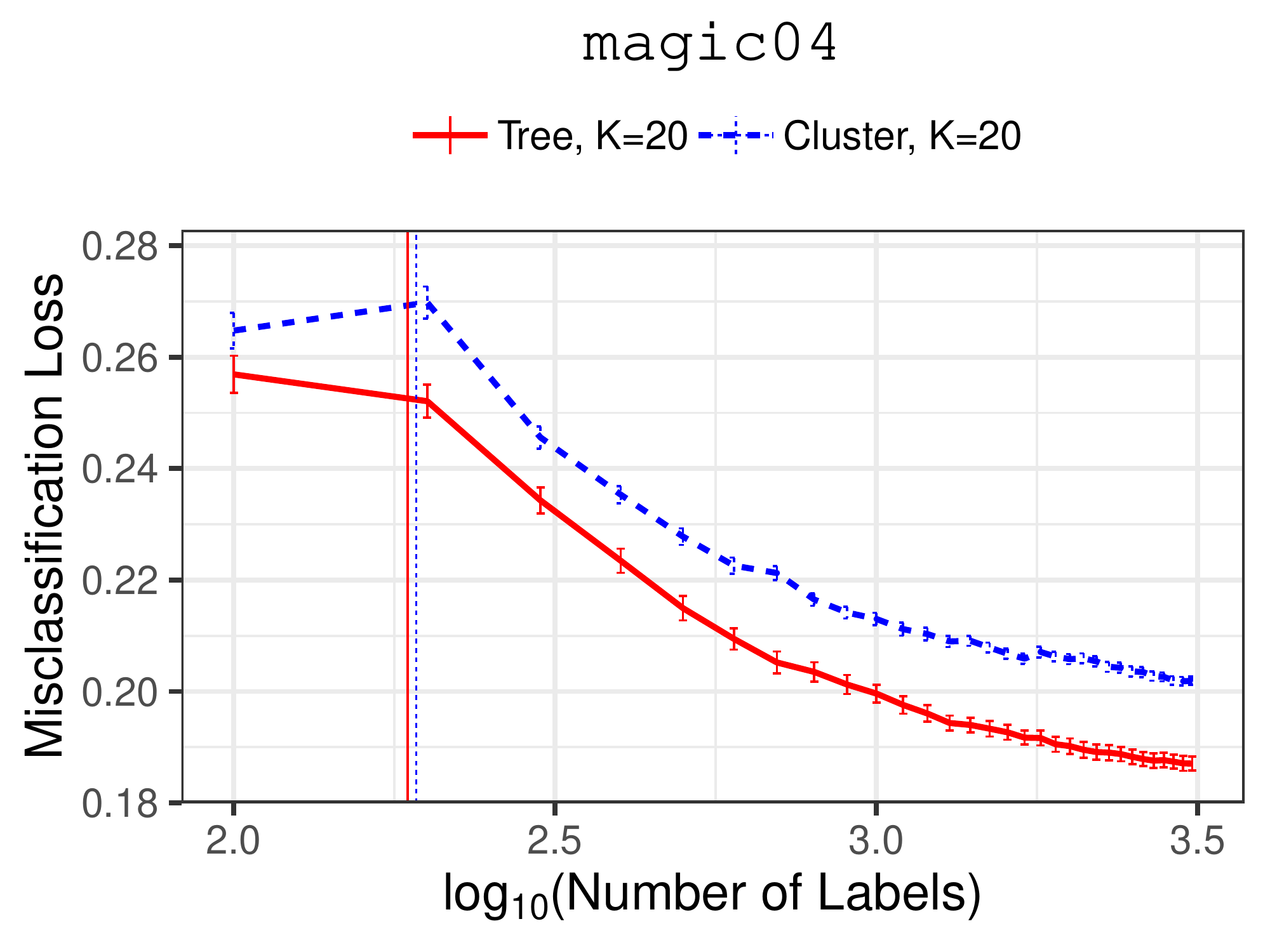}}\\
\subfigure{\centering\includegraphics[width=0.3\textwidth,,trim= 5 10 10 5,clip=true]{figures/compare_gamma/gamma_800_20_house16H}}
\subfigure{\centering\includegraphics[width=0.3\textwidth,,trim= 5 10 10 5,clip=true]{figures/compare_baselines/loss_misclass_vs_labels_tau800_k20_house16H}}
\subfigure{\centering\includegraphics[width=0.3\textwidth,,trim= 5 10 10 5,clip=true]{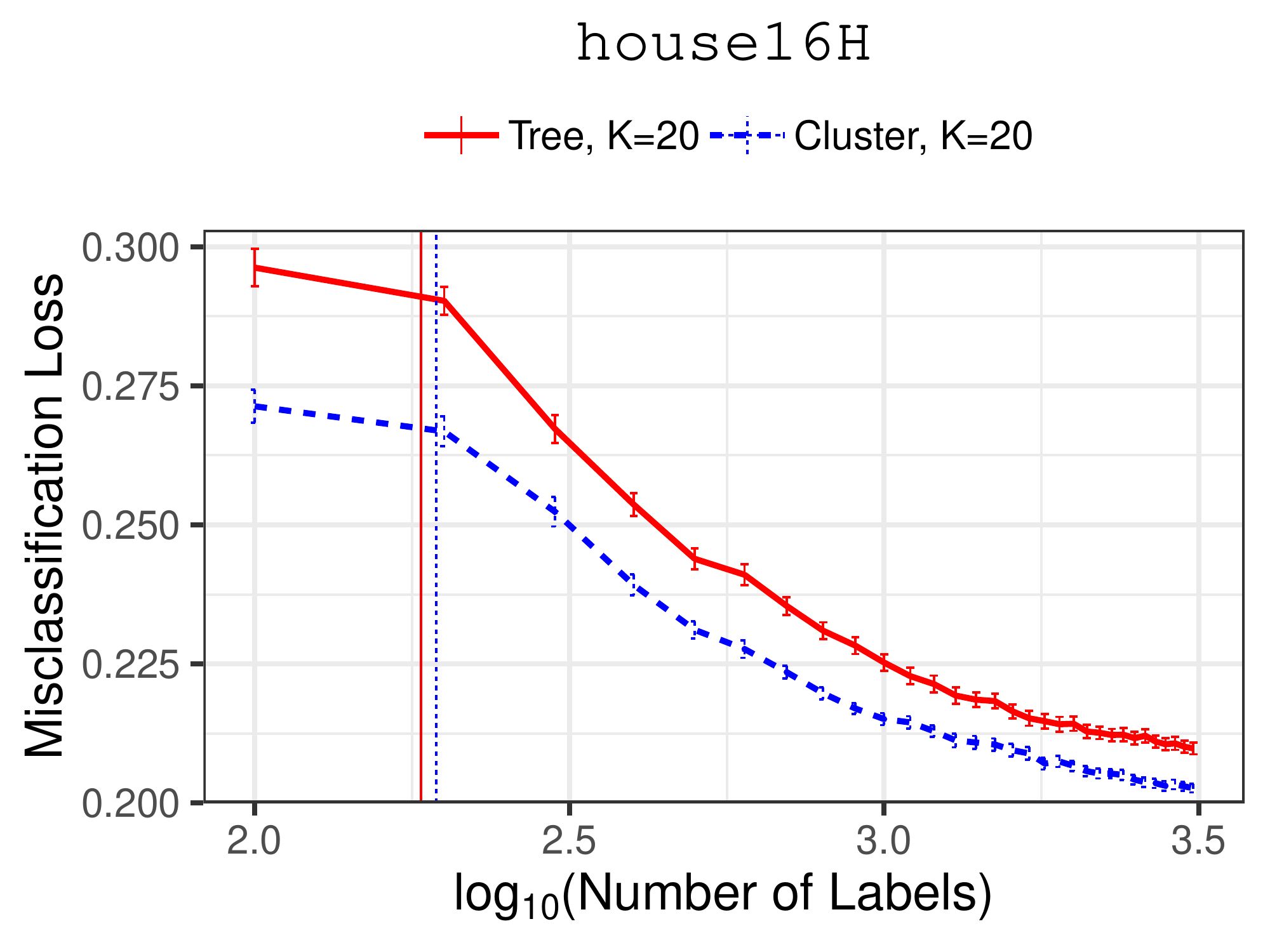}}\\
\subfigure{\centering\includegraphics[width=0.3\textwidth,,trim= 5 10 10 5,clip=true]{figures/compare_gamma/gamma_800_20_nomao}}
\subfigure{\centering\includegraphics[width=0.3\textwidth,,trim= 5 10 10 5,clip=true]{figures/compare_baselines/loss_misclass_vs_labels_tau800_k20_nomao}}
\subfigure{\centering\includegraphics[width=0.3\textwidth,,trim= 5 10 10 5,clip=true]{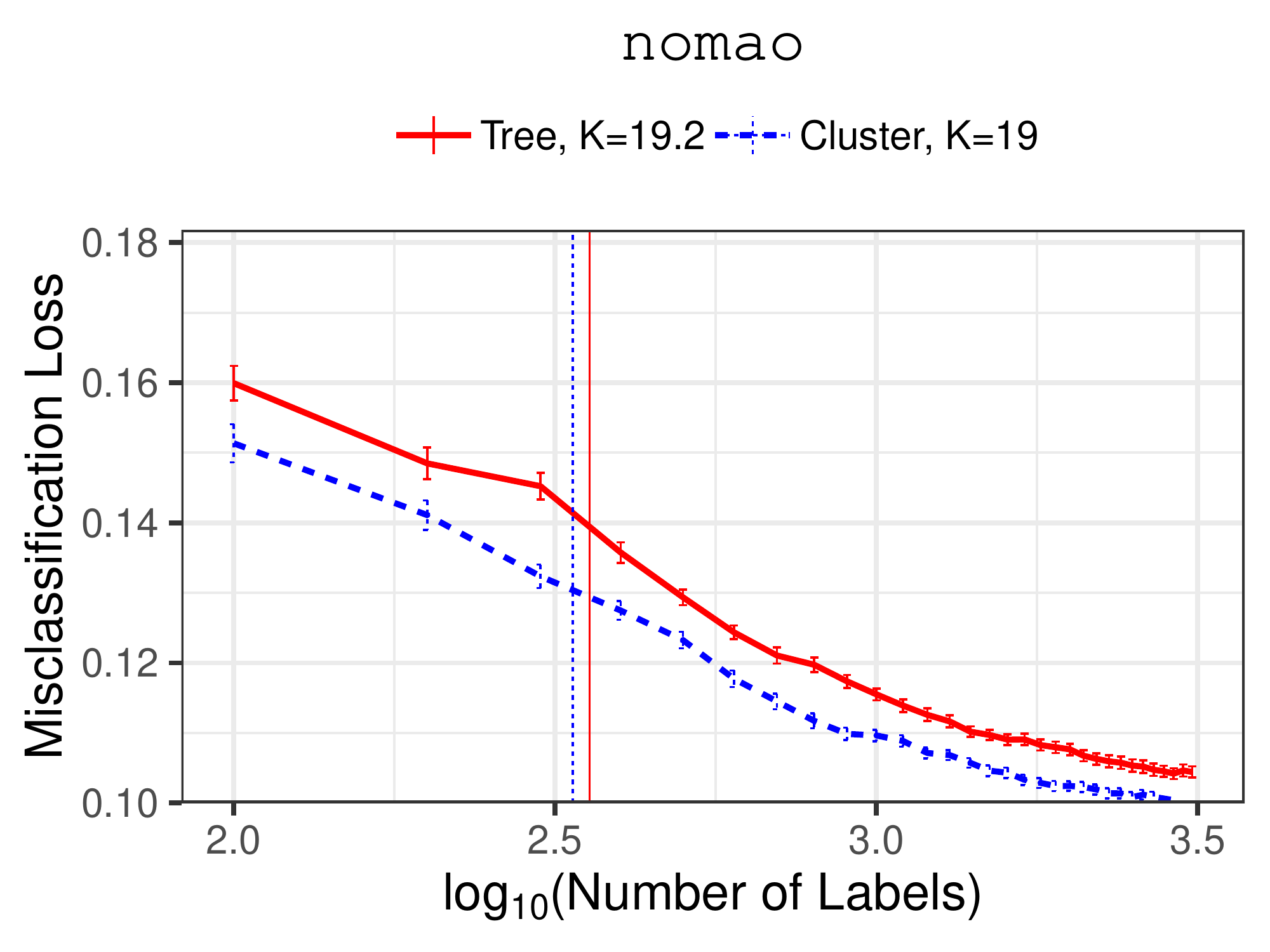}}\\
\subfigure{\centering\includegraphics[width=0.3\textwidth,,trim= 5 10 10 5,clip=true]{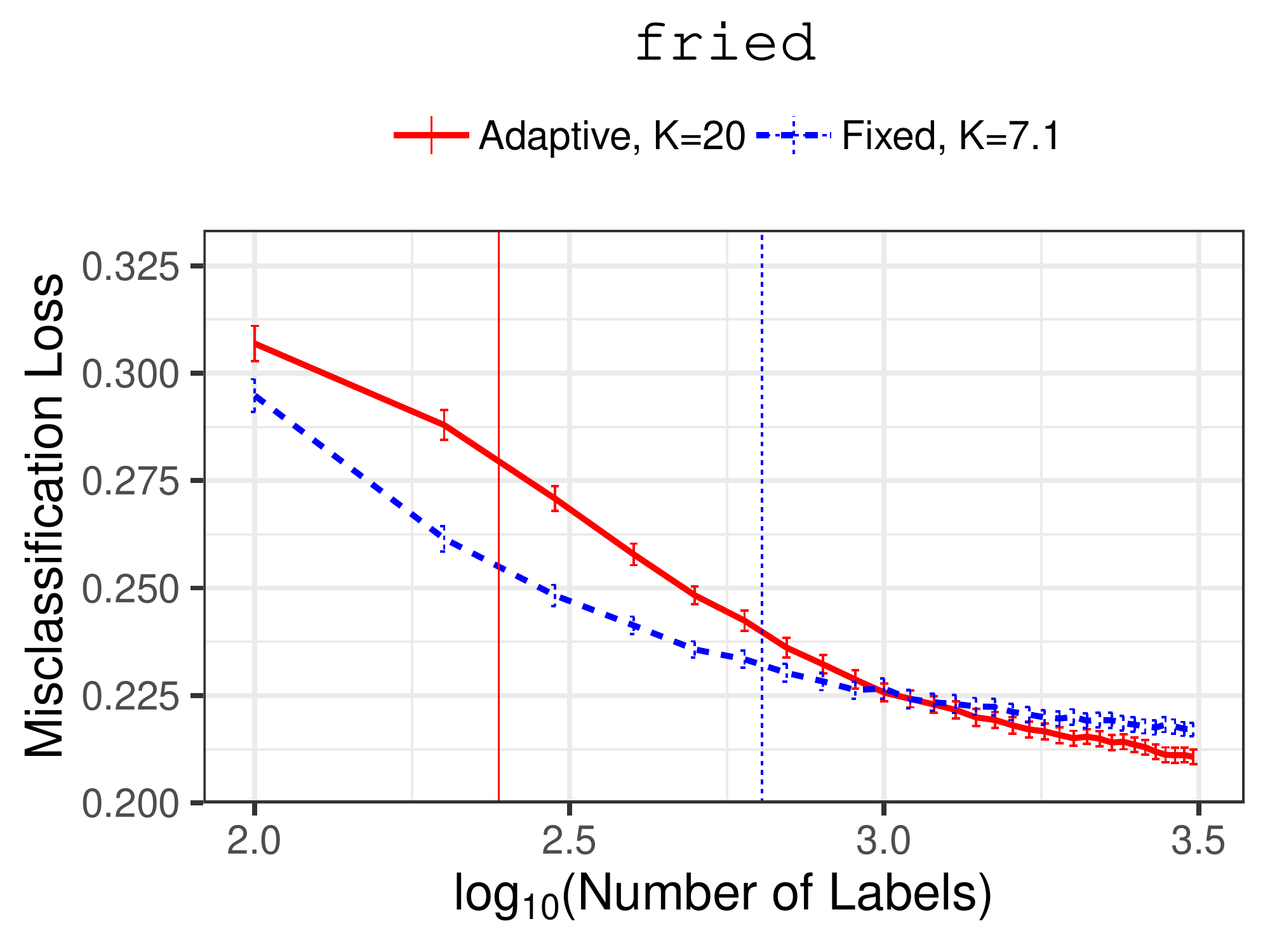}}
\subfigure{\centering\includegraphics[width=0.3\textwidth,,trim= 5 10 10 5,clip=true]{figures/compare_baselines/loss_misclass_vs_labels_tau800_k20_fried}}
\subfigure{\centering\includegraphics[width=0.3\textwidth,,trim= 5 10 10 5,clip=true]{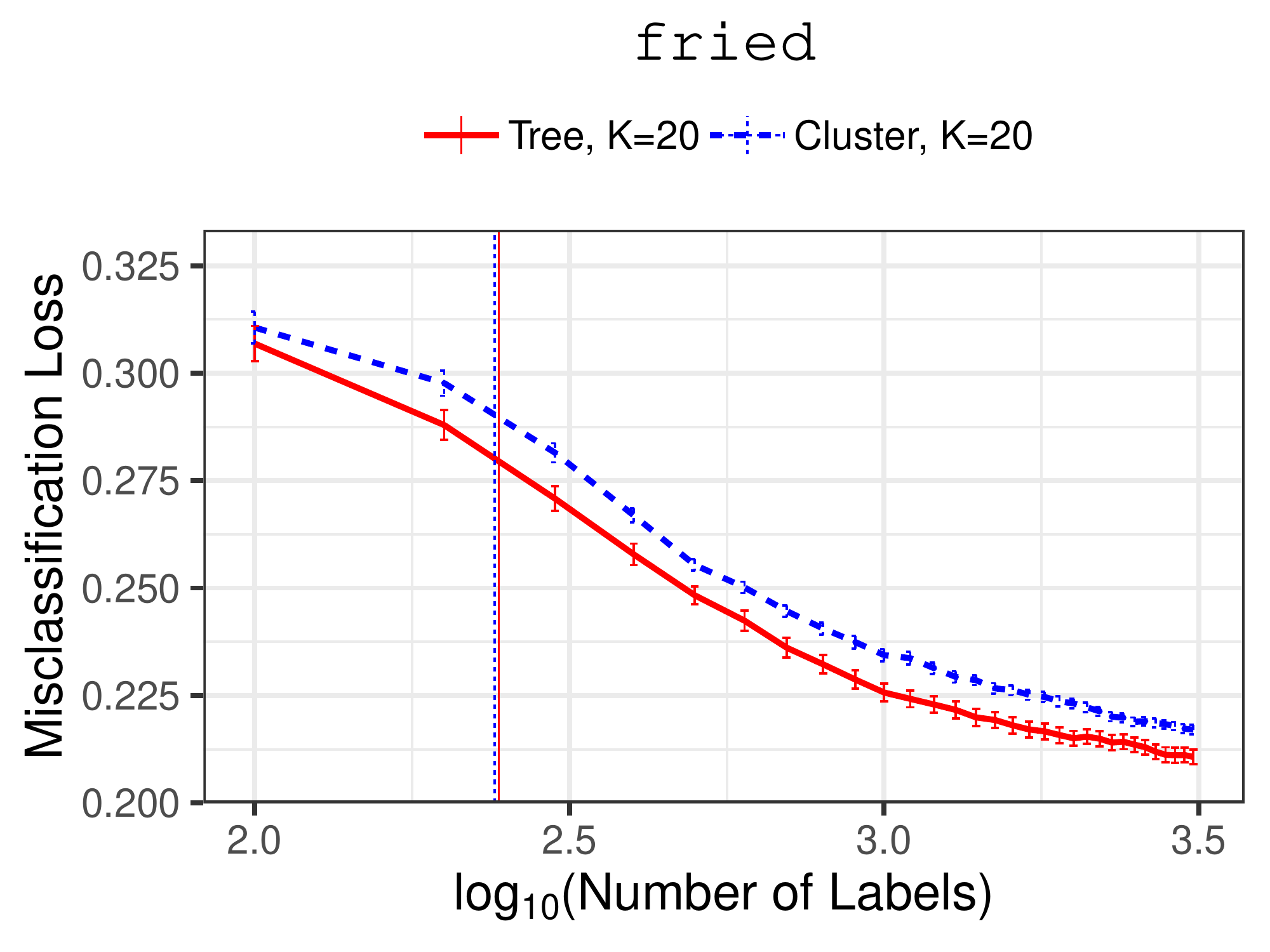}}\\
\subfigure{\centering\includegraphics[width=0.3\textwidth,,trim= 5 10 10 5,clip=true]{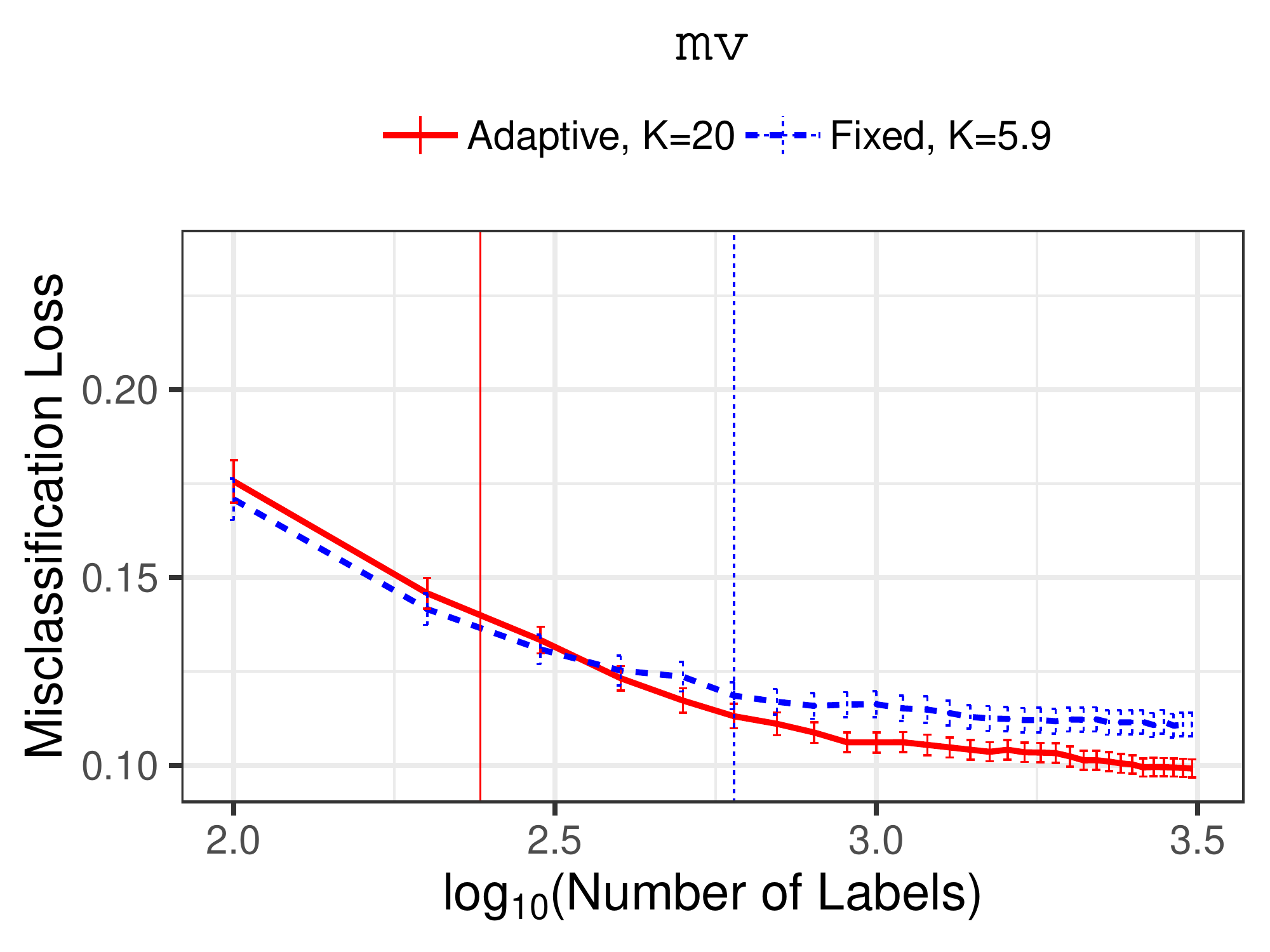}}
\subfigure{\centering\includegraphics[width=0.3\textwidth,,trim= 5 10 10 5,clip=true]{figures/compare_baselines/loss_misclass_vs_labels_tau800_k20_mv}}
\subfigure{\centering\includegraphics[width=0.3\textwidth,,trim= 5 10 10 5,clip=true]{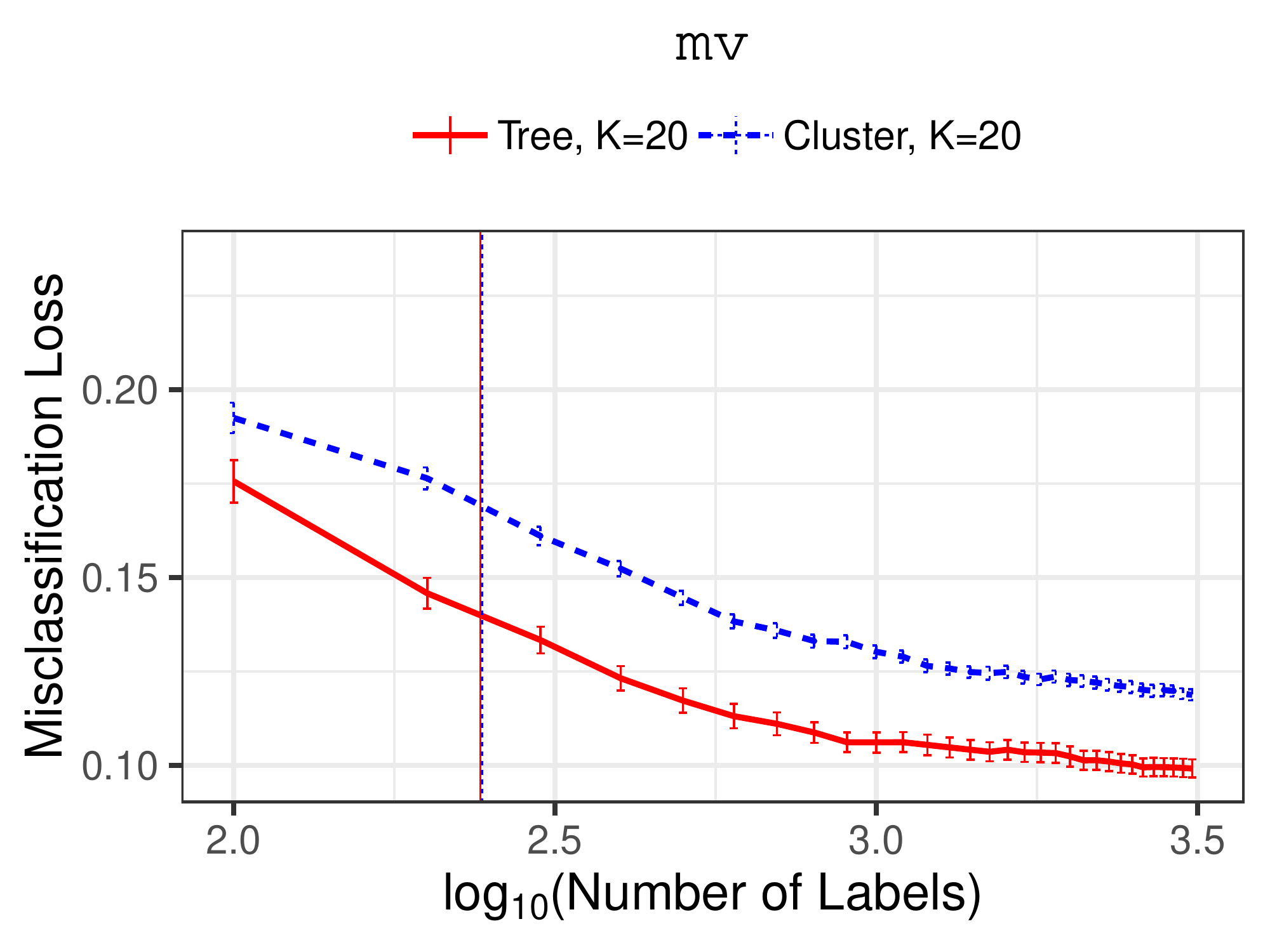}}\\
\end{center}
\caption{Misclassification loss on hold out test data versus number of labels requested ($\log_{10}$ scale). 
Left: \arbal\ with fixed and adaptive threshold $\gamma$.
Middle: \arbal, \riwal, \iwal, and \margin.
Right: \arbal\ with different partitioning methods: binary tree and hierarchical clustering.
For $\kappa=20$ and $\tau=800$,
dataset \texttt{\small{magic04}}, \texttt{\small{house16H}}, \texttt{\small{nomao}}, \texttt{\small{fried}}, \texttt{\small{mv}}.
For left and right plots, we give the average number of resulting regions $K$ in the legend. The vertical lines indicate
when \arbal\ transits from the first to the second phase.}
\label{fig:expmis_3}
\vskip -0.2in
\end{figure*}
\begin{figure*}[ht]
\begin{center}
\subfigure{\centering\includegraphics[width=0.3\textwidth,,trim= 5 10 10 5,clip=true]{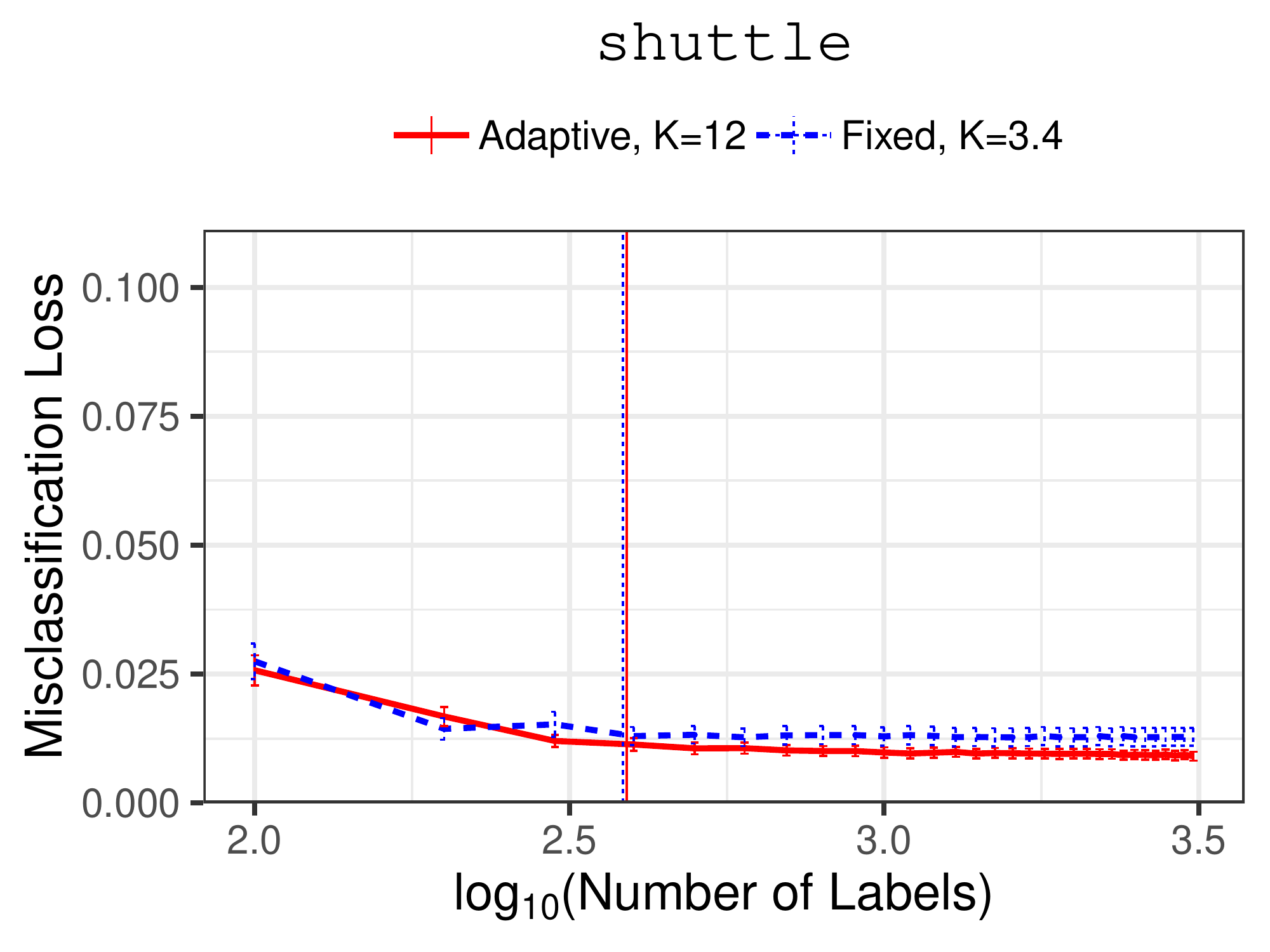}}
\subfigure{\centering\includegraphics[width=0.3\textwidth,,trim= 5 10 10 5,clip=true]{figures/compare_baselines/loss_misclass_vs_labels_tau800_k20_shuttle}}
\subfigure{\centering\includegraphics[width=0.3\textwidth,,trim= 5 10 10 5,clip=true]{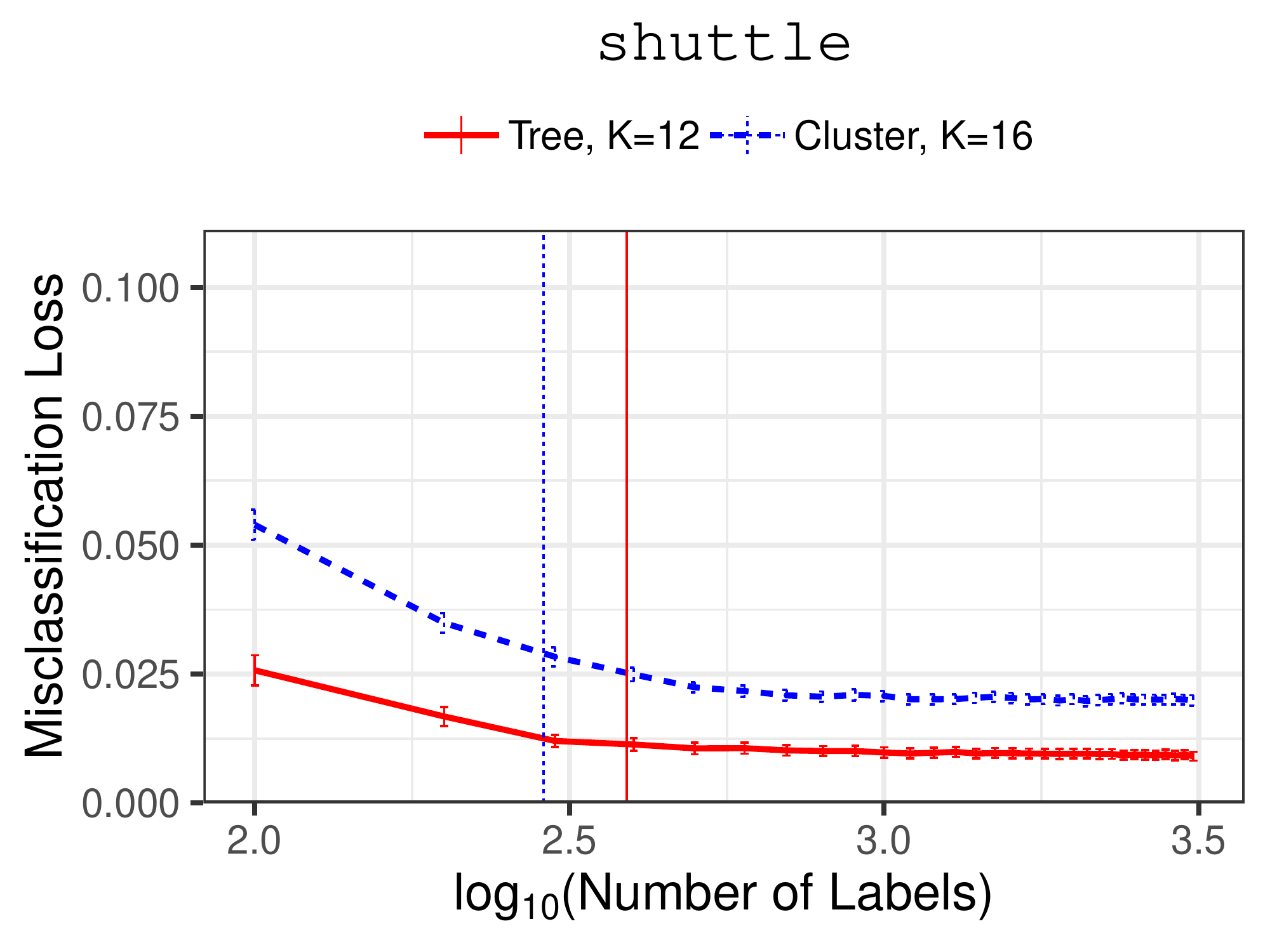}}\\
\subfigure{\centering\includegraphics[width=0.3\textwidth,,trim= 5 10 10 5,clip=true]{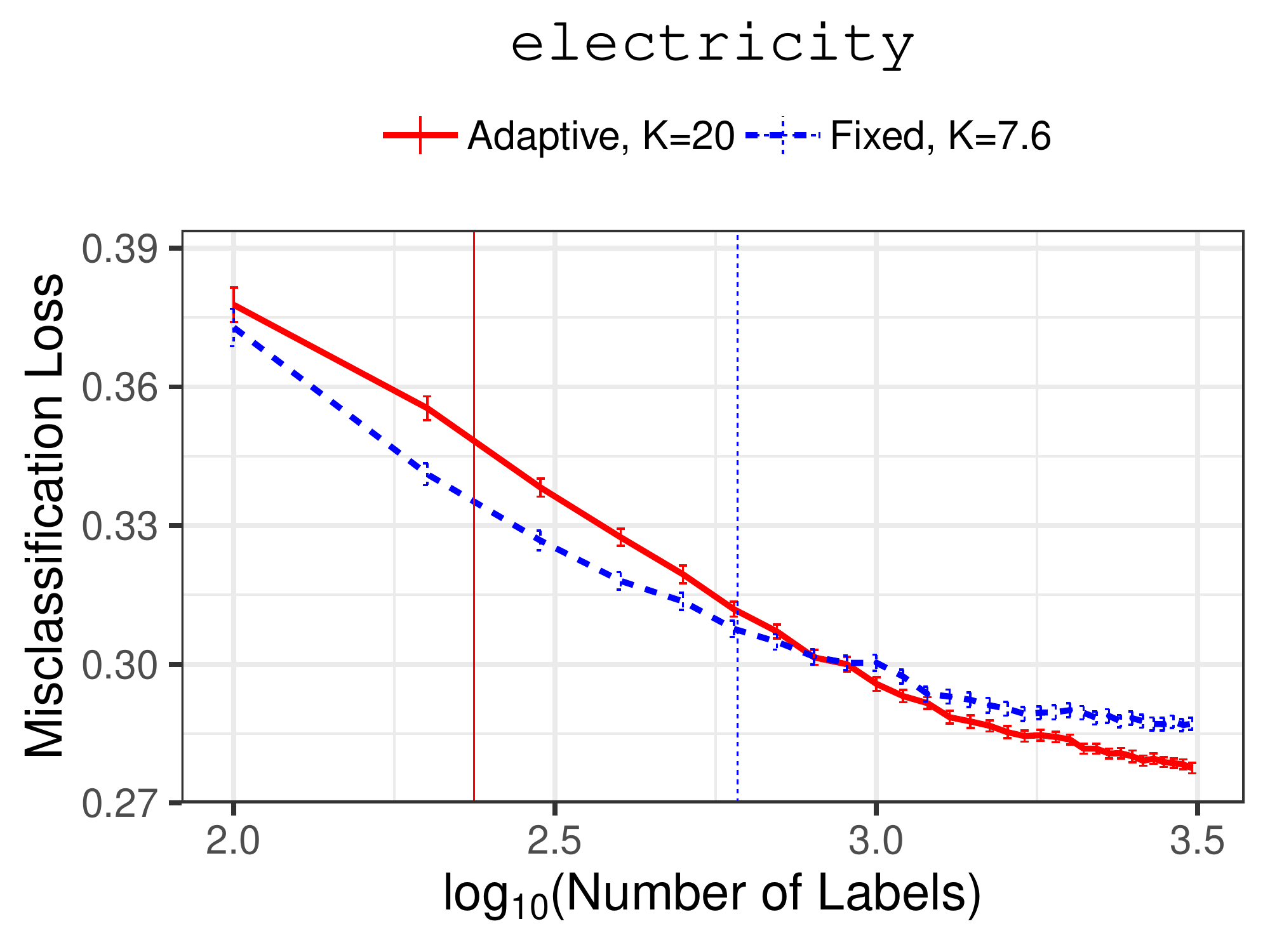}}
\subfigure{\centering\includegraphics[width=0.3\textwidth,,trim= 5 10 10 5,clip=true]{figures/compare_baselines/loss_misclass_vs_labels_tau800_k20_electricity}}
\subfigure{\centering\includegraphics[width=0.3\textwidth,,trim= 5 10 10 5,clip=true]{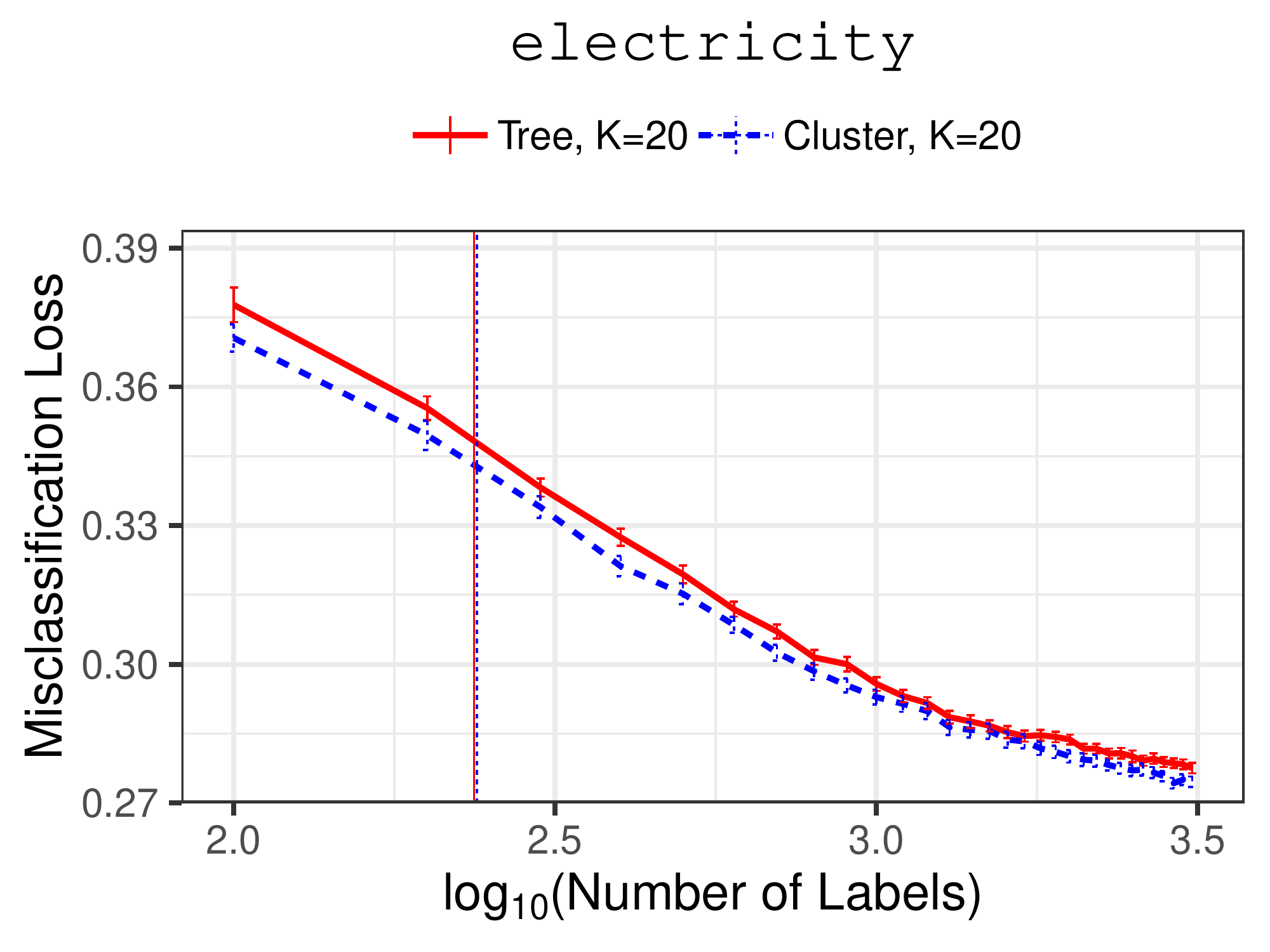}}\\
\subfigure{\centering\includegraphics[width=0.3\textwidth,,trim= 5 10 10 5,clip=true]{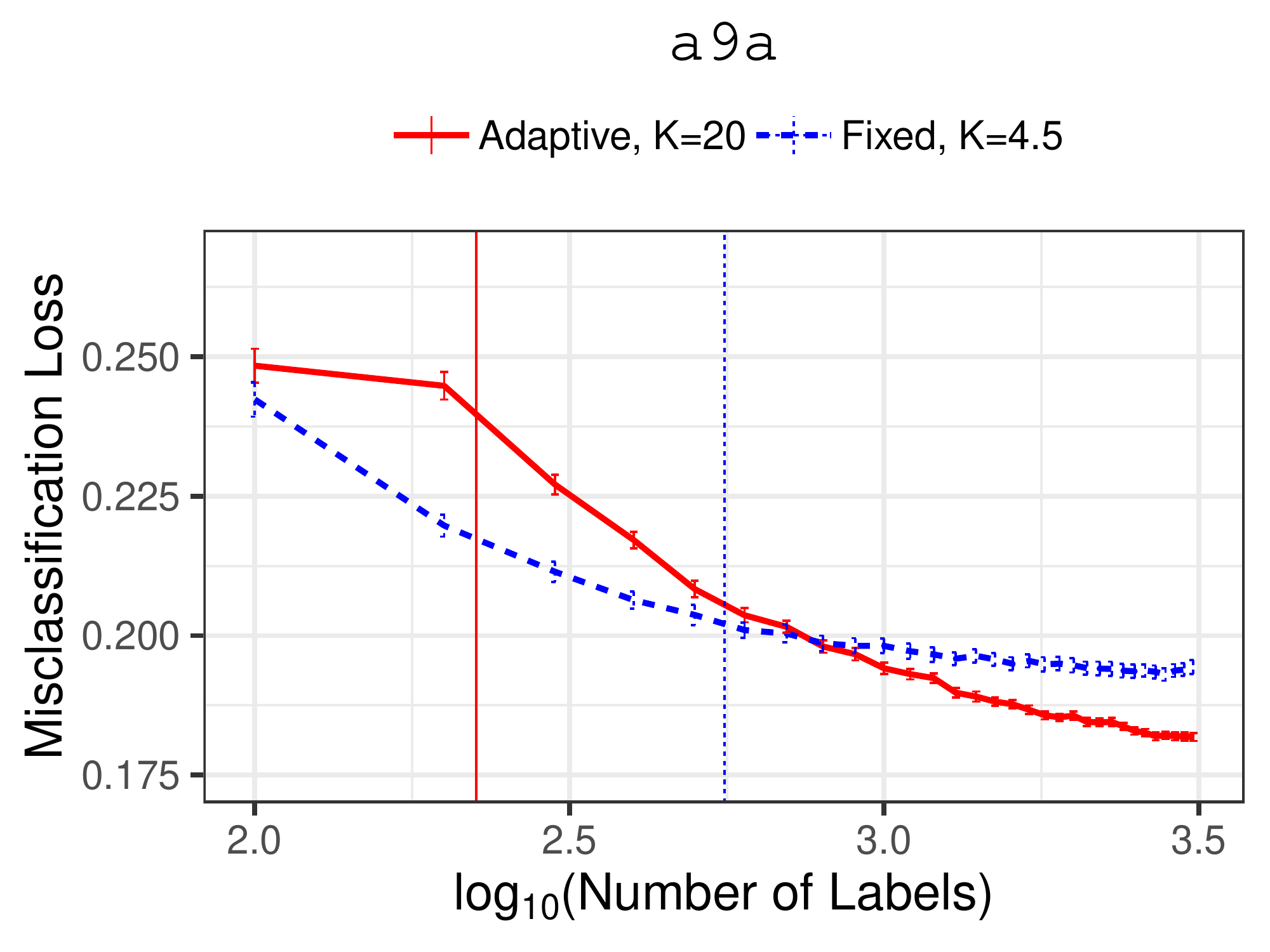}}
\subfigure{\centering\includegraphics[width=0.3\textwidth,,trim= 5 10 10 5,clip=true]{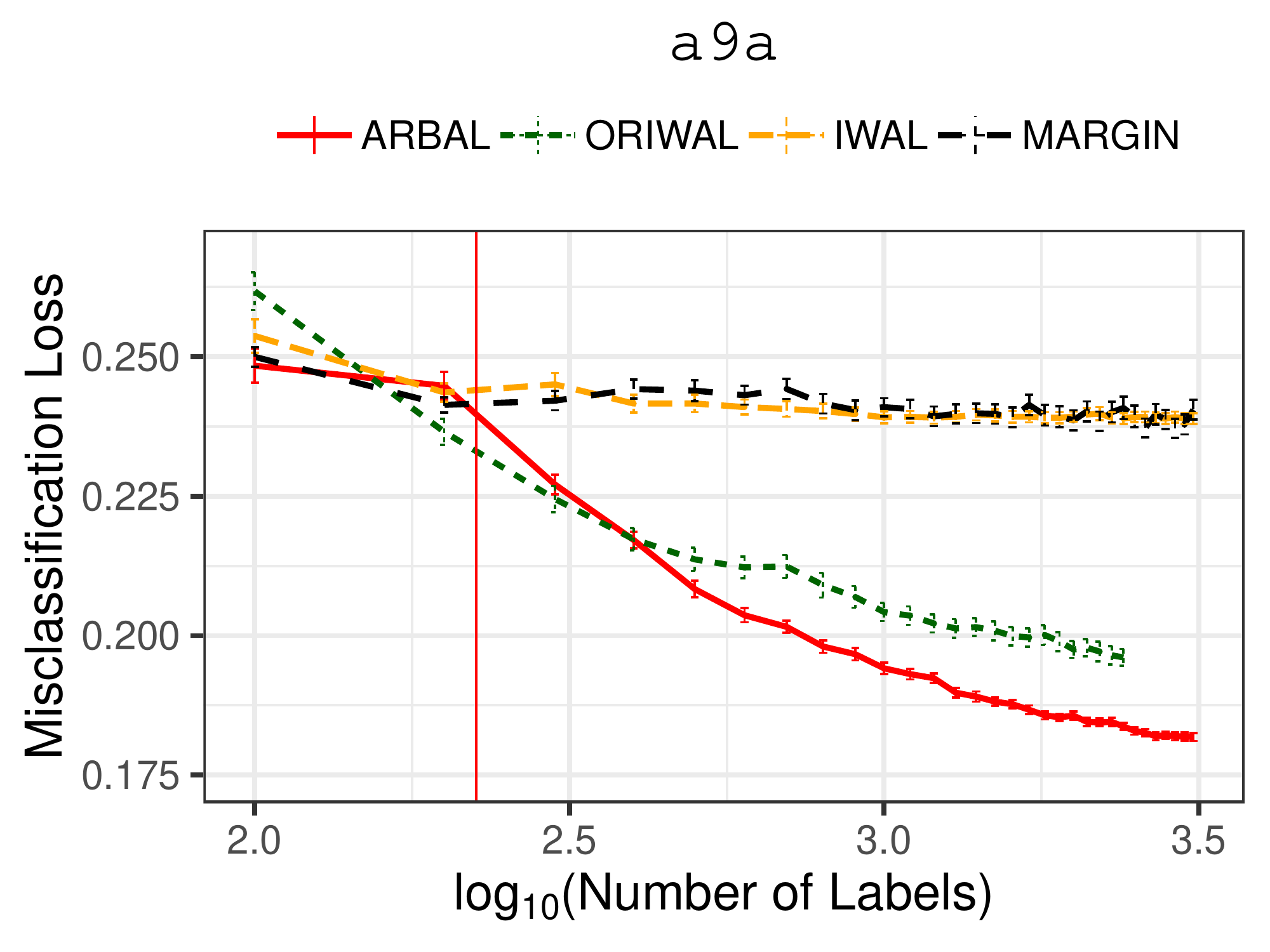}}
\subfigure{\centering\includegraphics[width=0.3\textwidth,,trim= 5 10 10 5,clip=true]{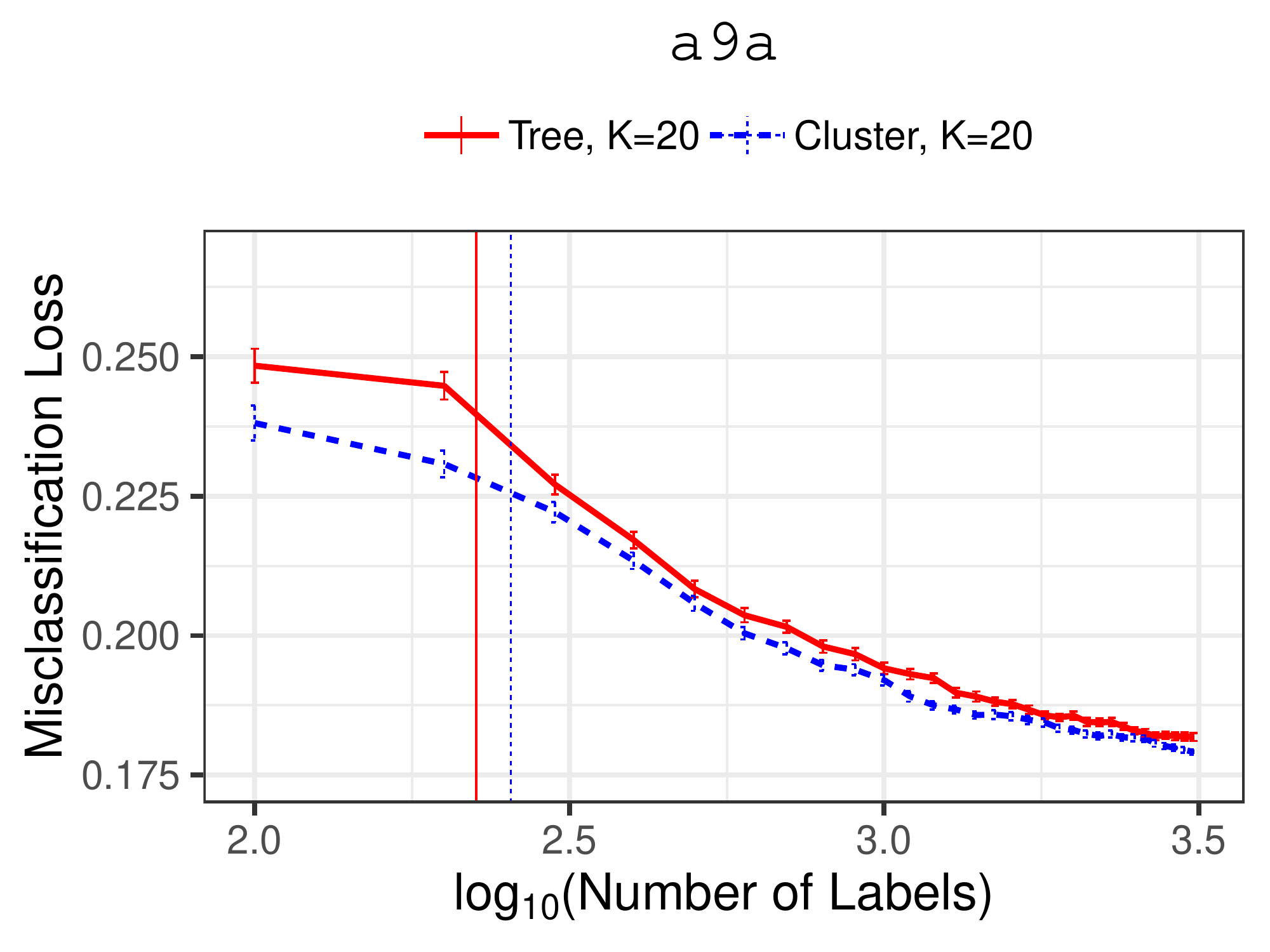}}\\
\subfigure{\centering\includegraphics[width=0.3\textwidth,,trim= 5 10 10 5,clip=true]{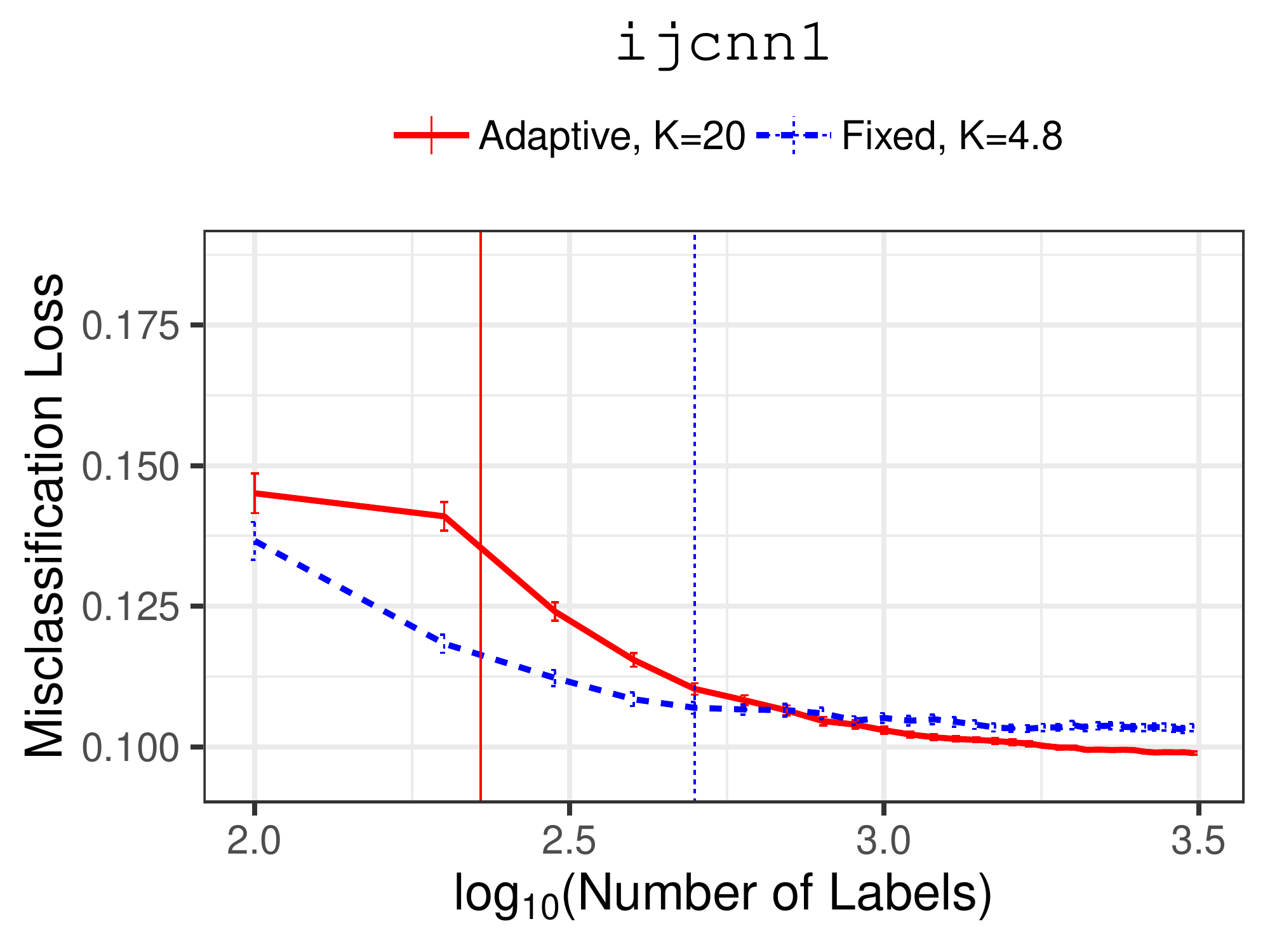}}
\subfigure{\centering\includegraphics[width=0.3\textwidth,,trim= 5 10 10 5,clip=true]{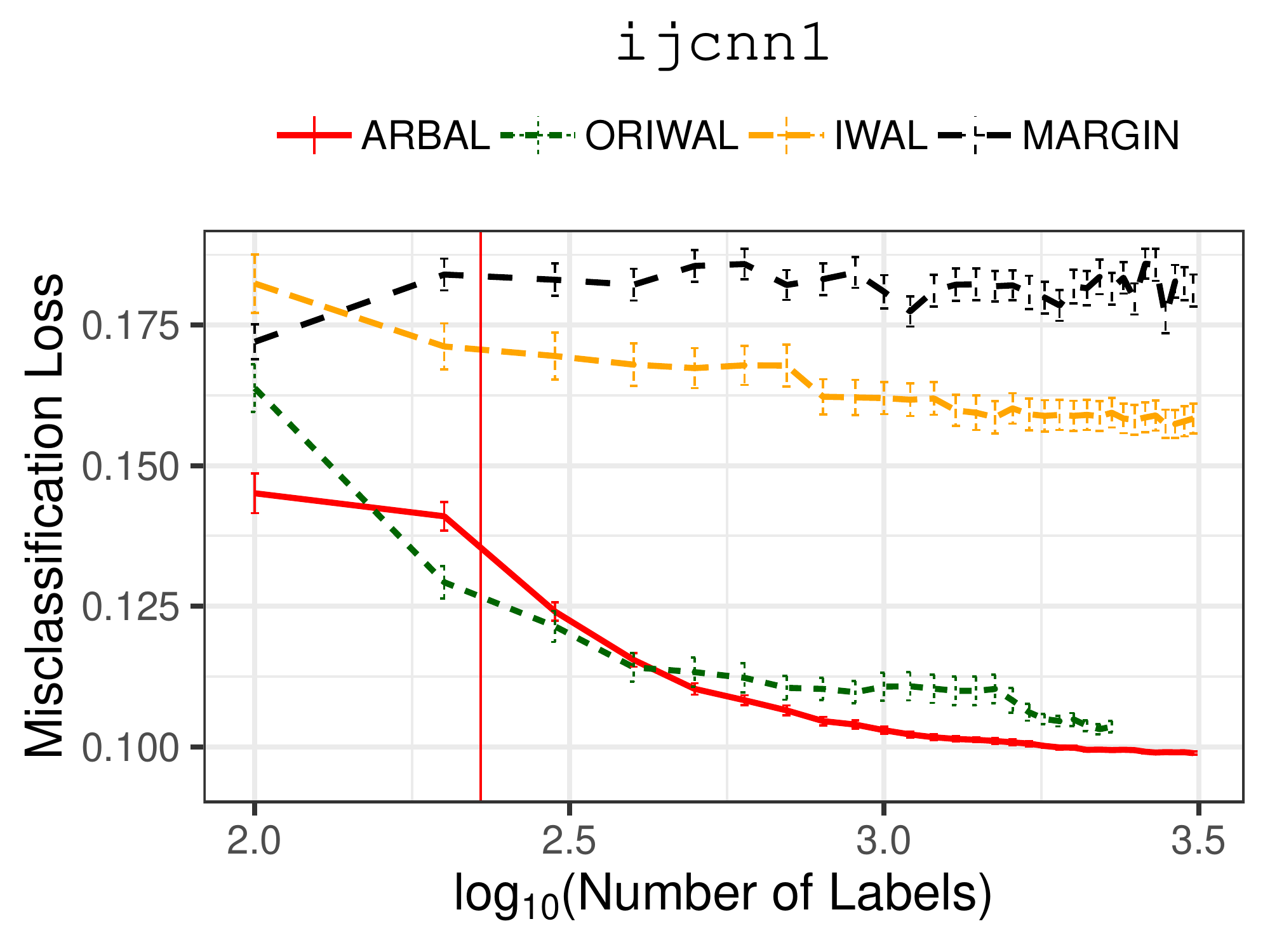}}
\subfigure{\centering\includegraphics[width=0.3\textwidth,,trim= 5 10 10 5,clip=true]{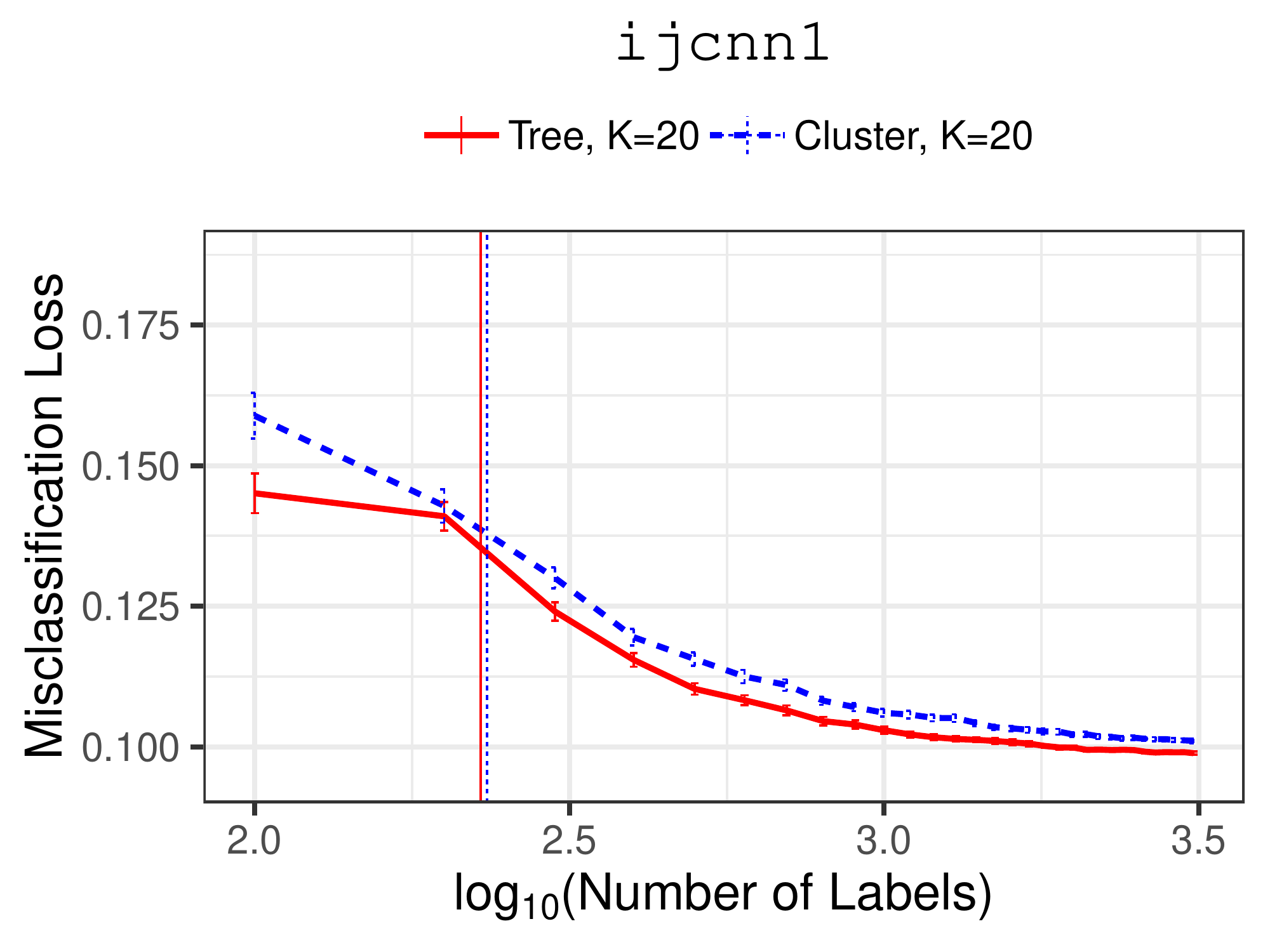}}\\
\subfigure{\centering\includegraphics[width=0.3\textwidth,,trim= 5 10 10 5,clip=true]{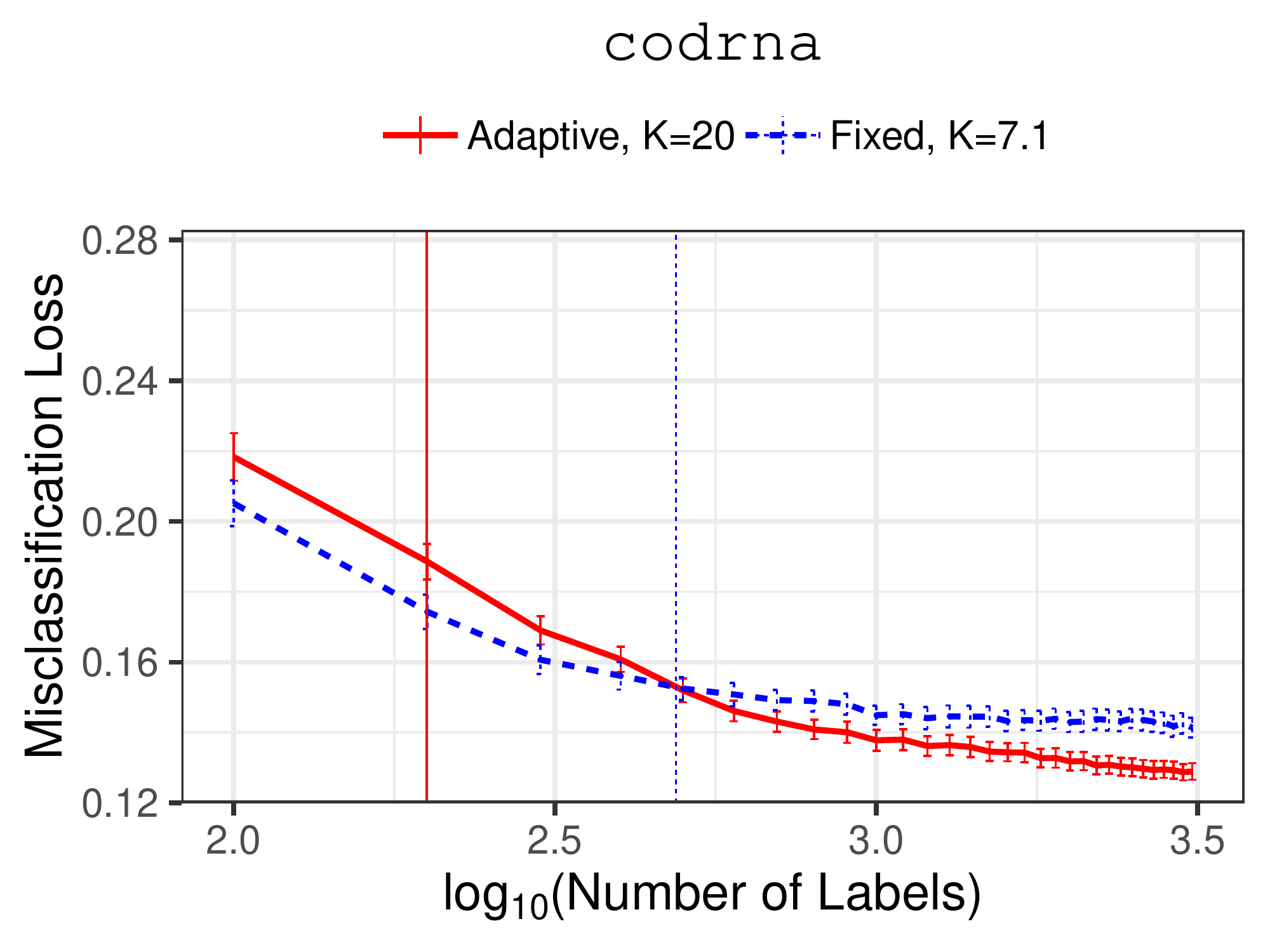}}
\subfigure{\centering\includegraphics[width=0.3\textwidth,,trim= 5 10 10 5,clip=true]{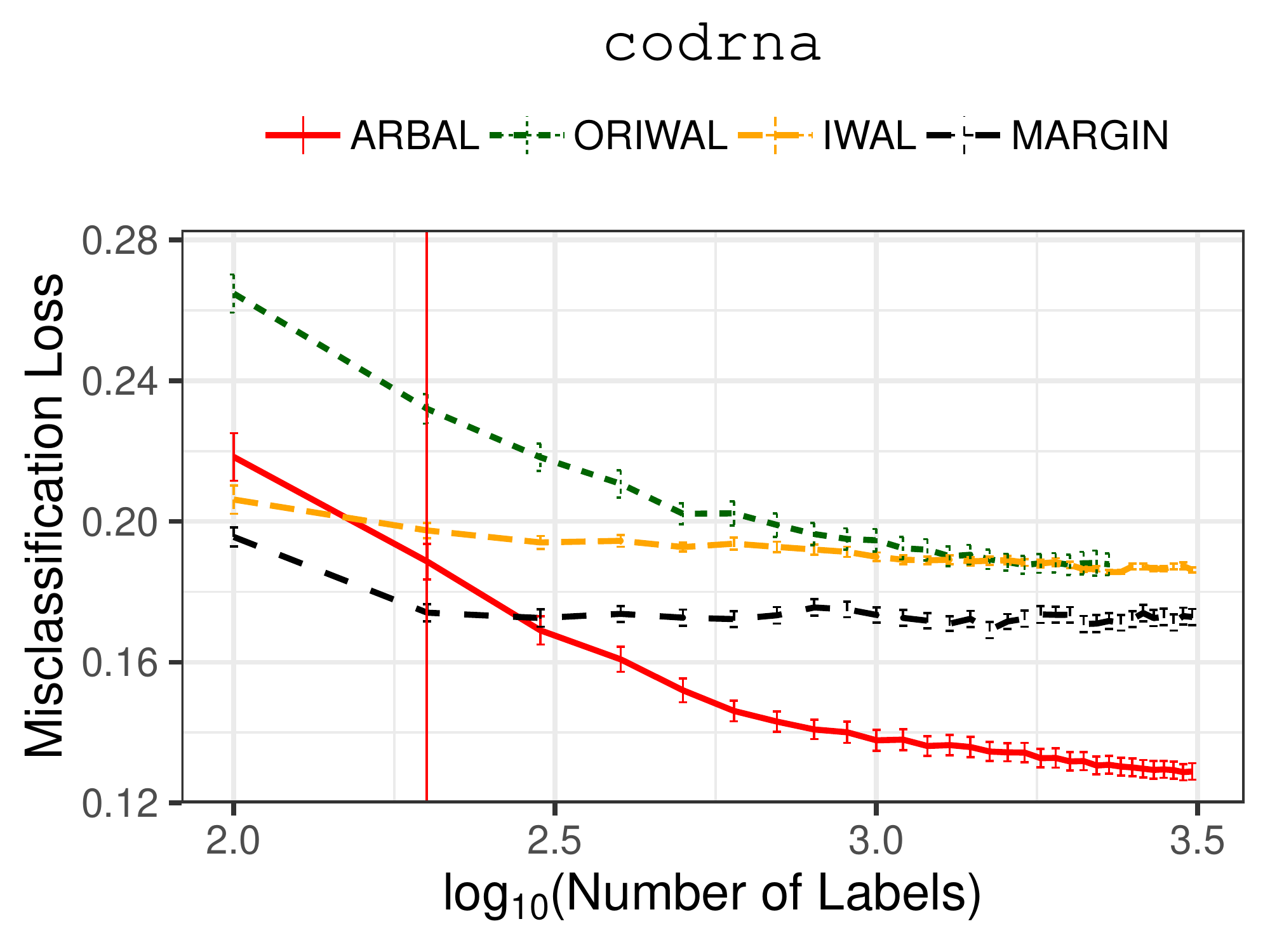}}
\subfigure{\centering\includegraphics[width=0.3\textwidth,,trim= 5 10 10 5,clip=true]{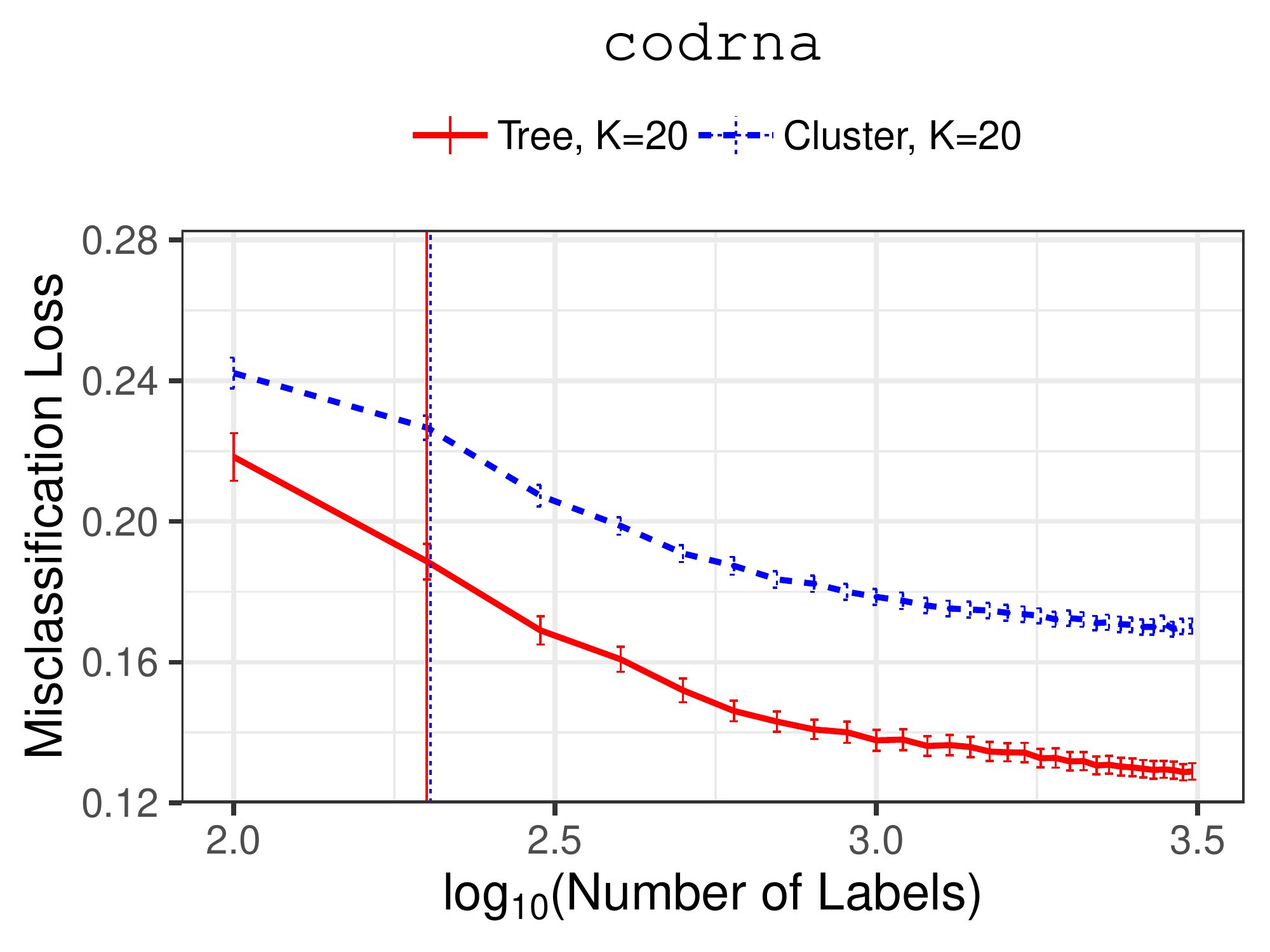}}\\
\end{center}
\caption{Misclassification loss on hold out test data versus number of labels requested ($\log_{10}$ scale). 
Left: \arbal\ with fixed and adaptive threshold $\gamma$.
Middle: \arbal, \riwal, \iwal, and \margin.
Right: \arbal\ with different partitioning methods: binary tree and hierarchical clustering.
For $\kappa=20$ and $\tau=800$,
dataset \texttt{\small{shuttle}}, \texttt{\small{electricity}}, \texttt{\small{a9a}}, \texttt{\small{ijcnn1}}, \texttt{\small{codrna}}.
For left and right plots, we give the average number of resulting regions $K$ in the legend. The vertical lines indicate
when \arbal\ transits from the first to the second phase.}
\label{fig:expmis_4}
\vskip -0.2in
\end{figure*}
\begin{figure*}[ht]
\begin{center}
\subfigure{\centering\includegraphics[width=0.3\textwidth,,trim= 5 10 10 5,clip=true]{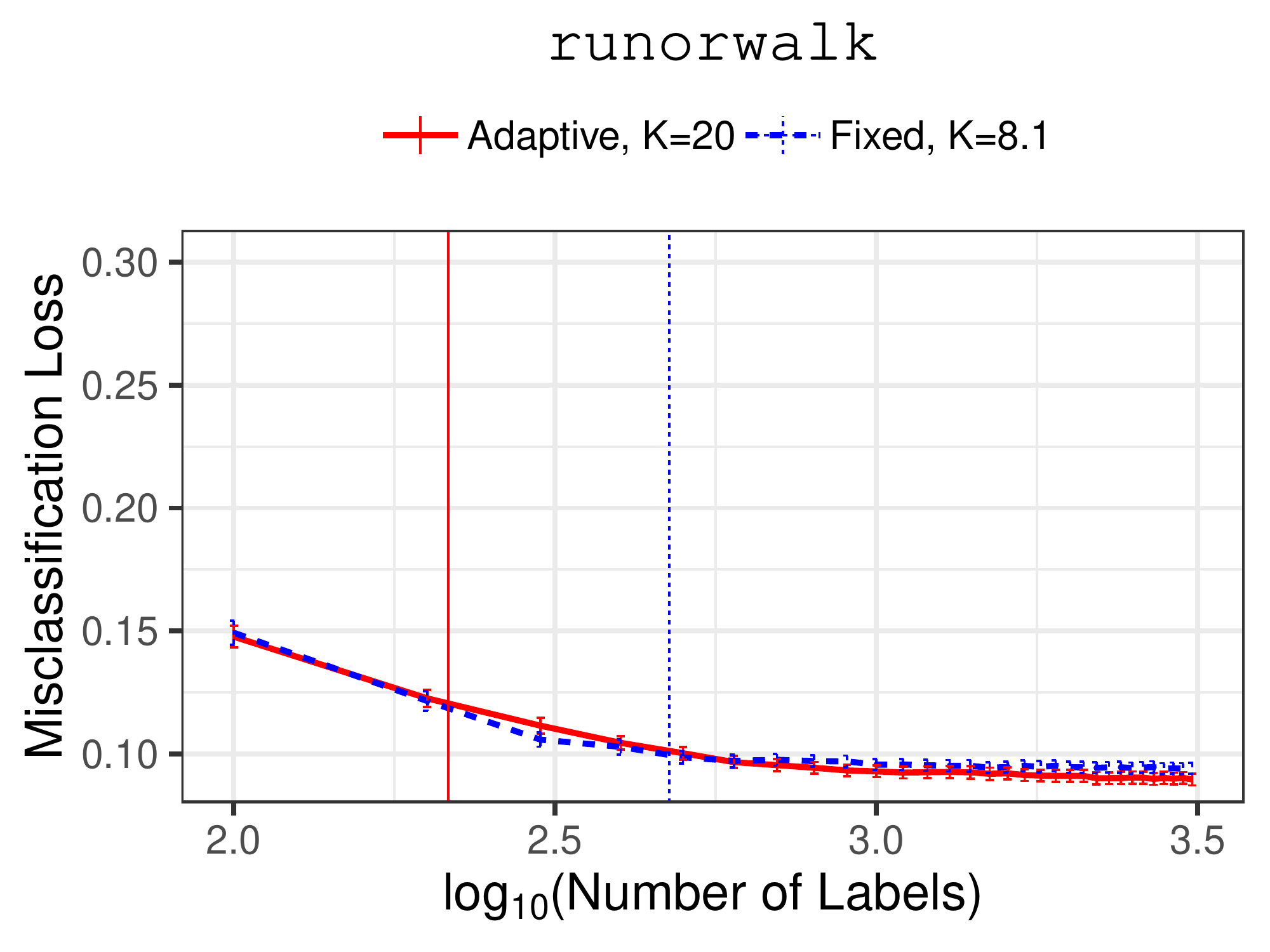}}
\subfigure{\centering\includegraphics[width=0.3\textwidth,,trim= 5 10 10 5,clip=true]{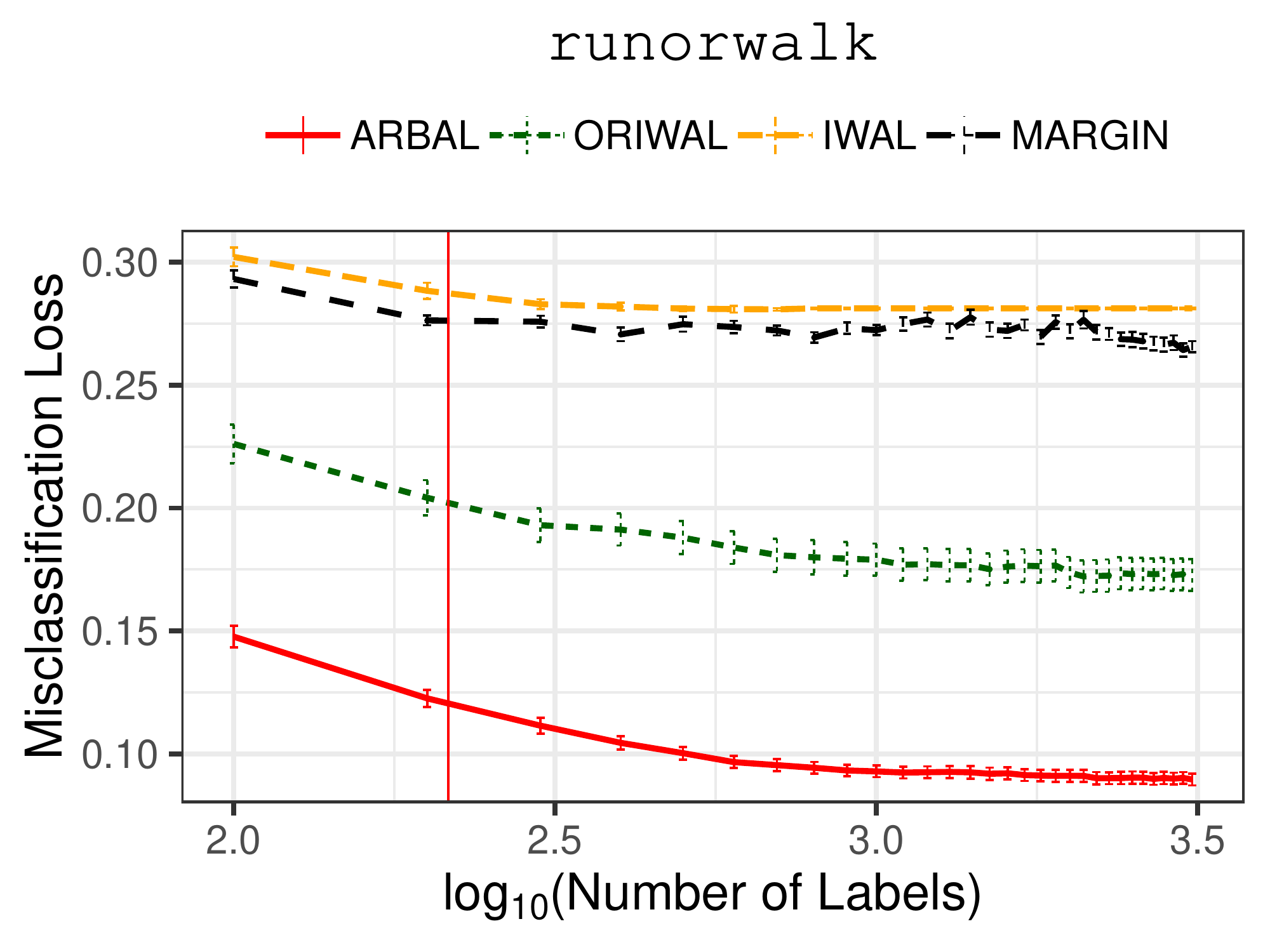}}
\subfigure{\centering\includegraphics[width=0.3\textwidth,,trim= 5 10 10 5,clip=true]{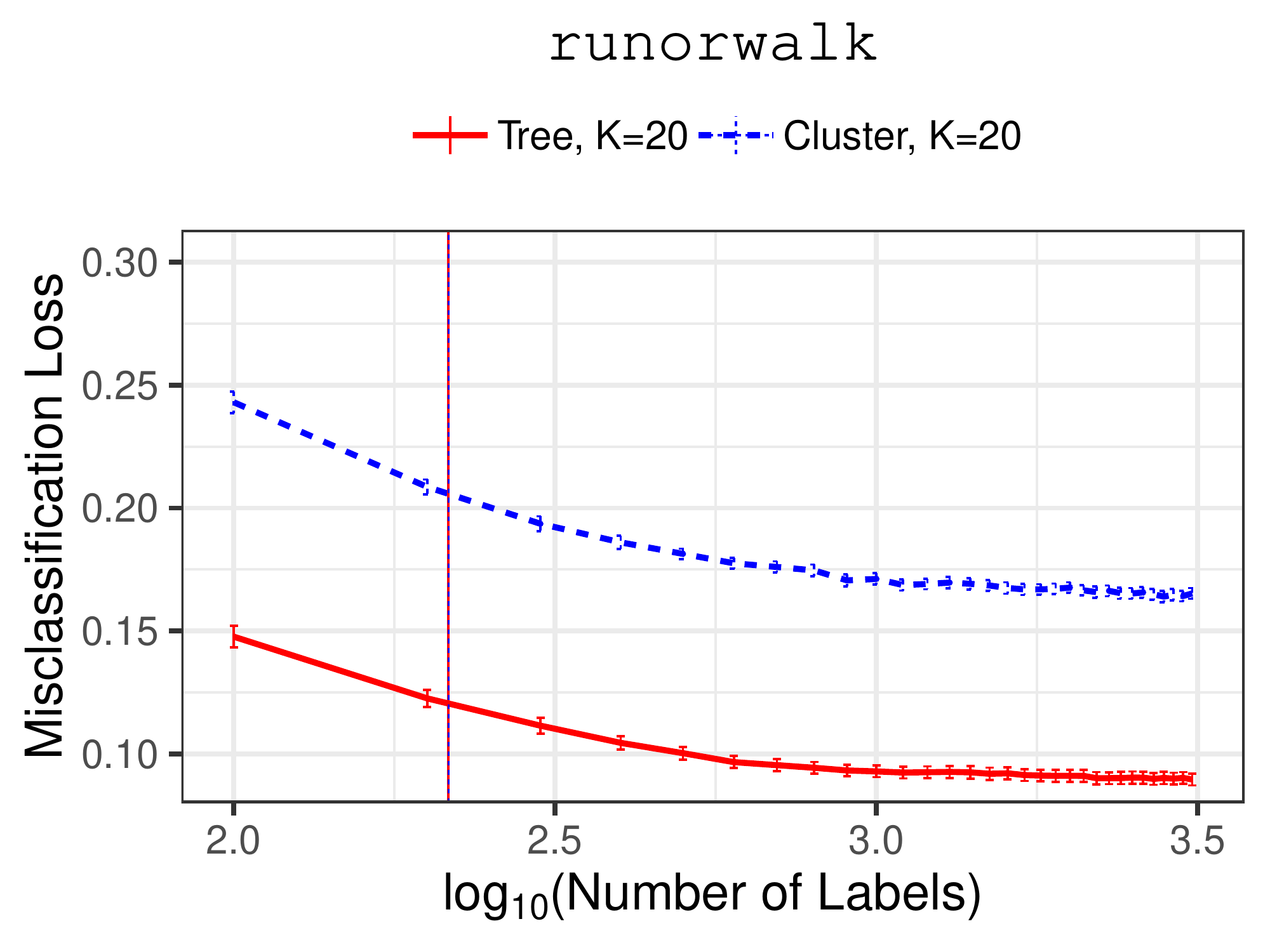}}\\
\subfigure{\centering\includegraphics[width=0.3\textwidth,,trim= 5 10 10 5,clip=true]{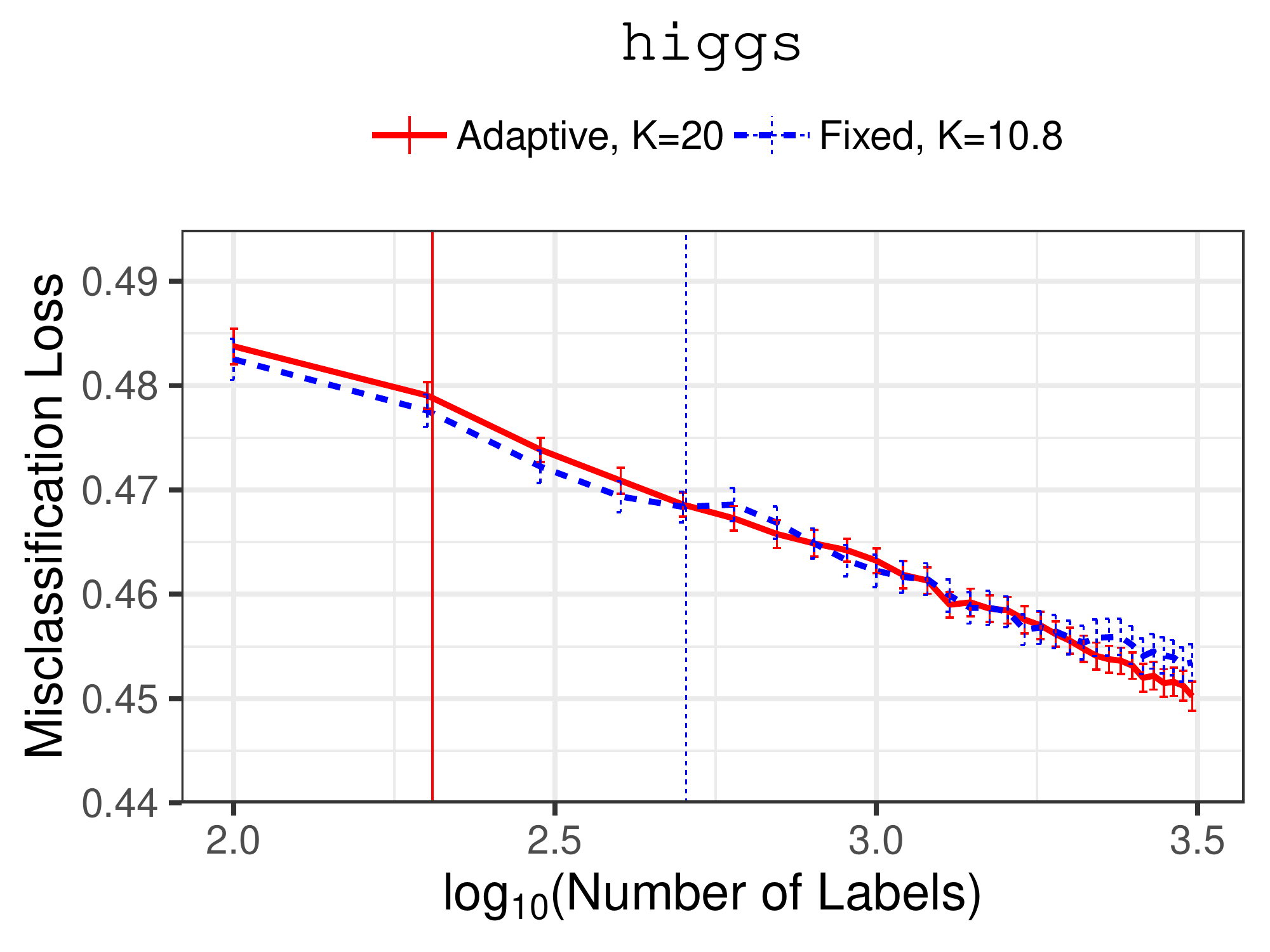}}
\subfigure{\centering\includegraphics[width=0.3\textwidth,,trim= 5 10 10 5,clip=true]{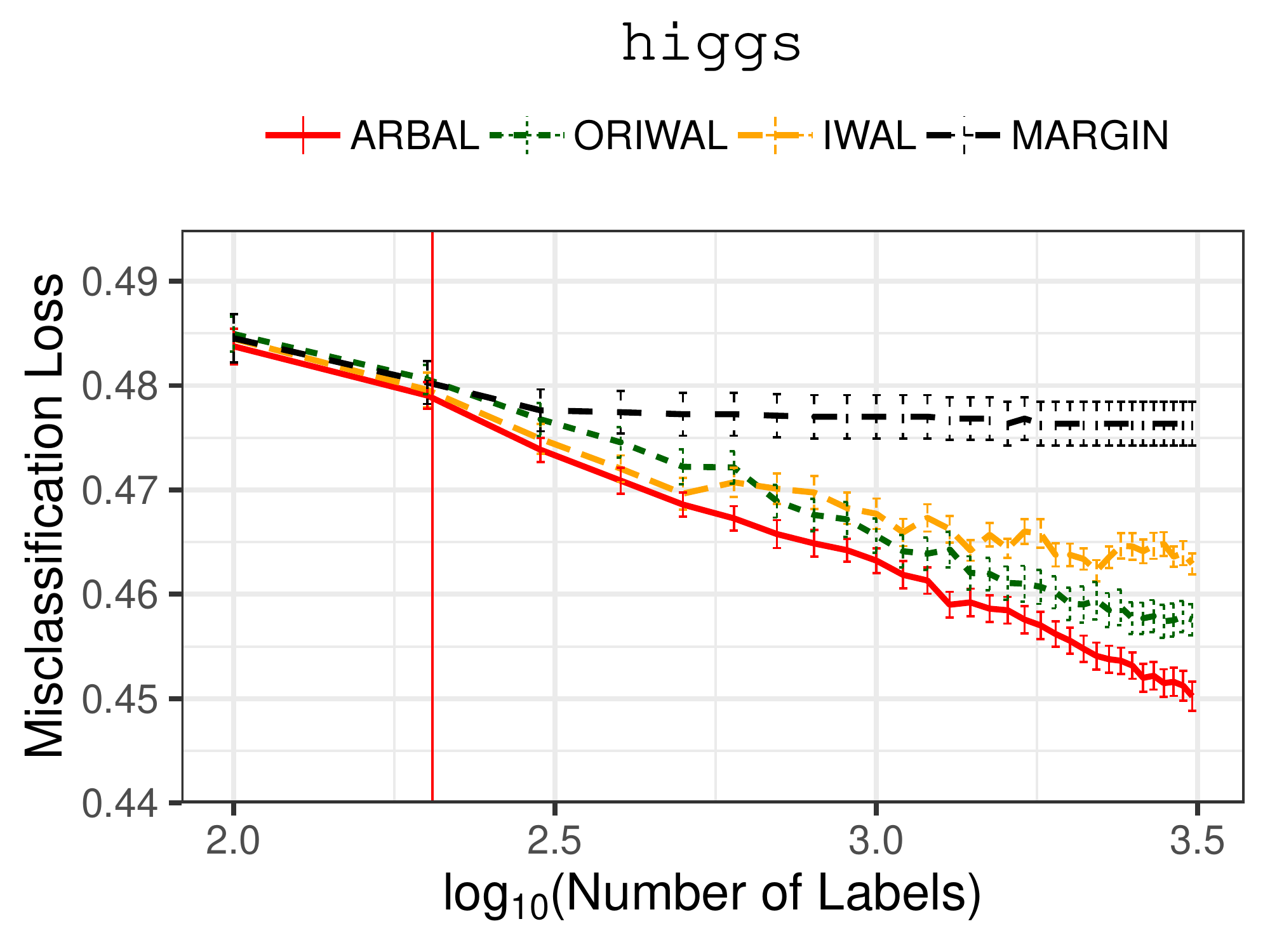}}
\subfigure{\centering\includegraphics[width=0.3\textwidth,,trim= 5 10 10 5,clip=true]{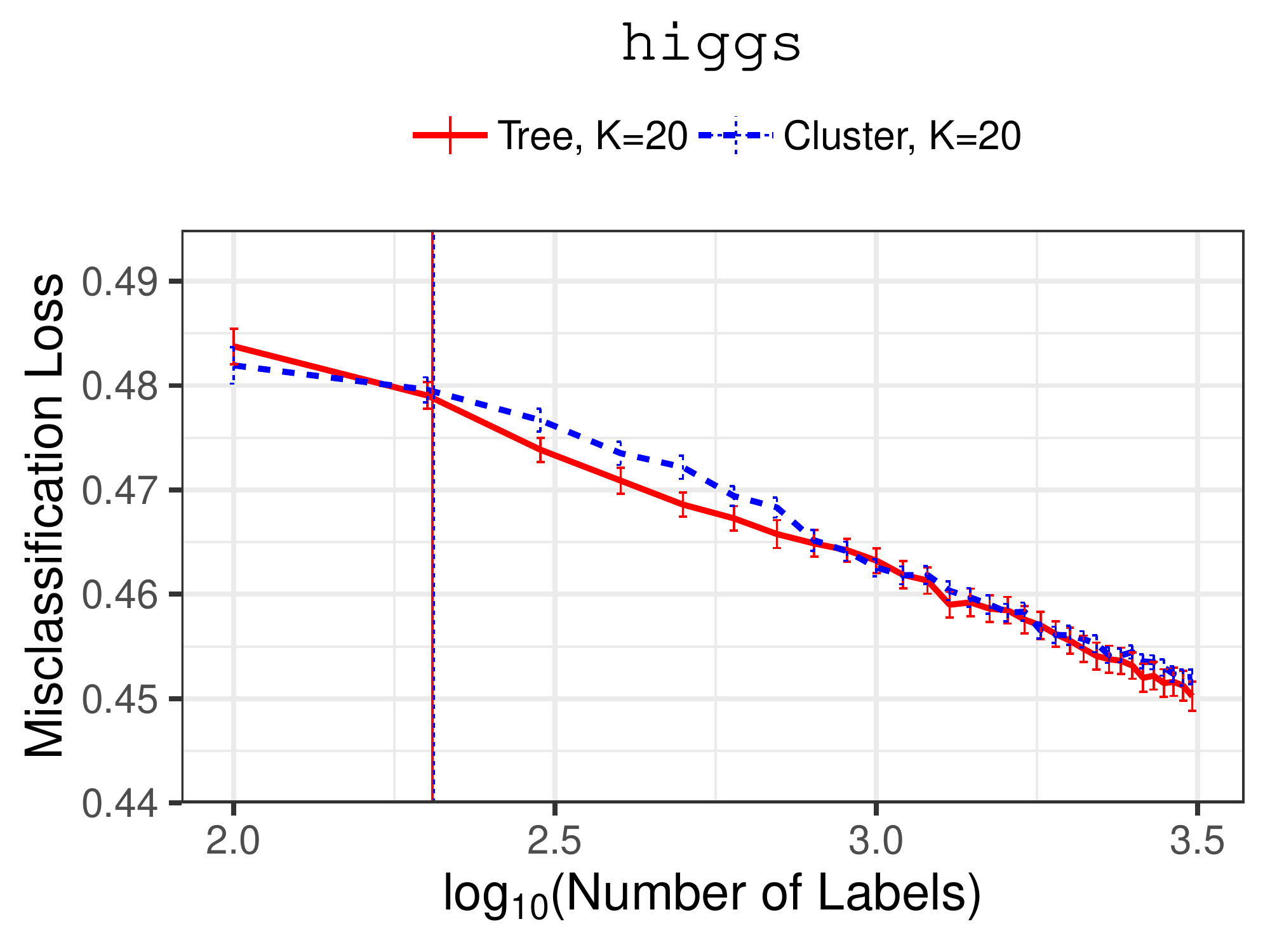}}\\
\subfigure{\centering\includegraphics[width=0.3\textwidth,,trim= 5 10 10 5,clip=true]{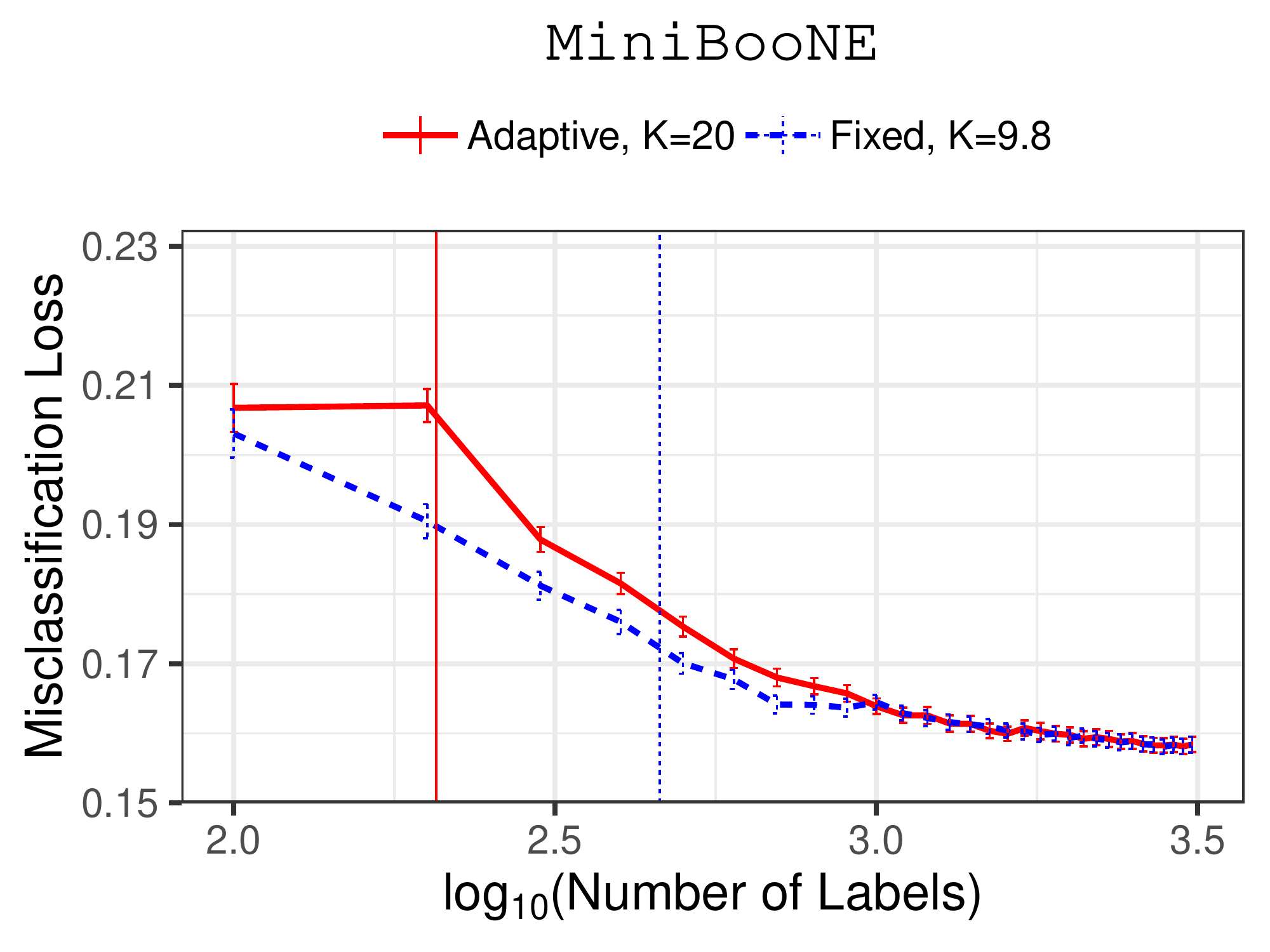}}
\subfigure{\centering\includegraphics[width=0.3\textwidth,,trim= 5 10 10 5,clip=true]{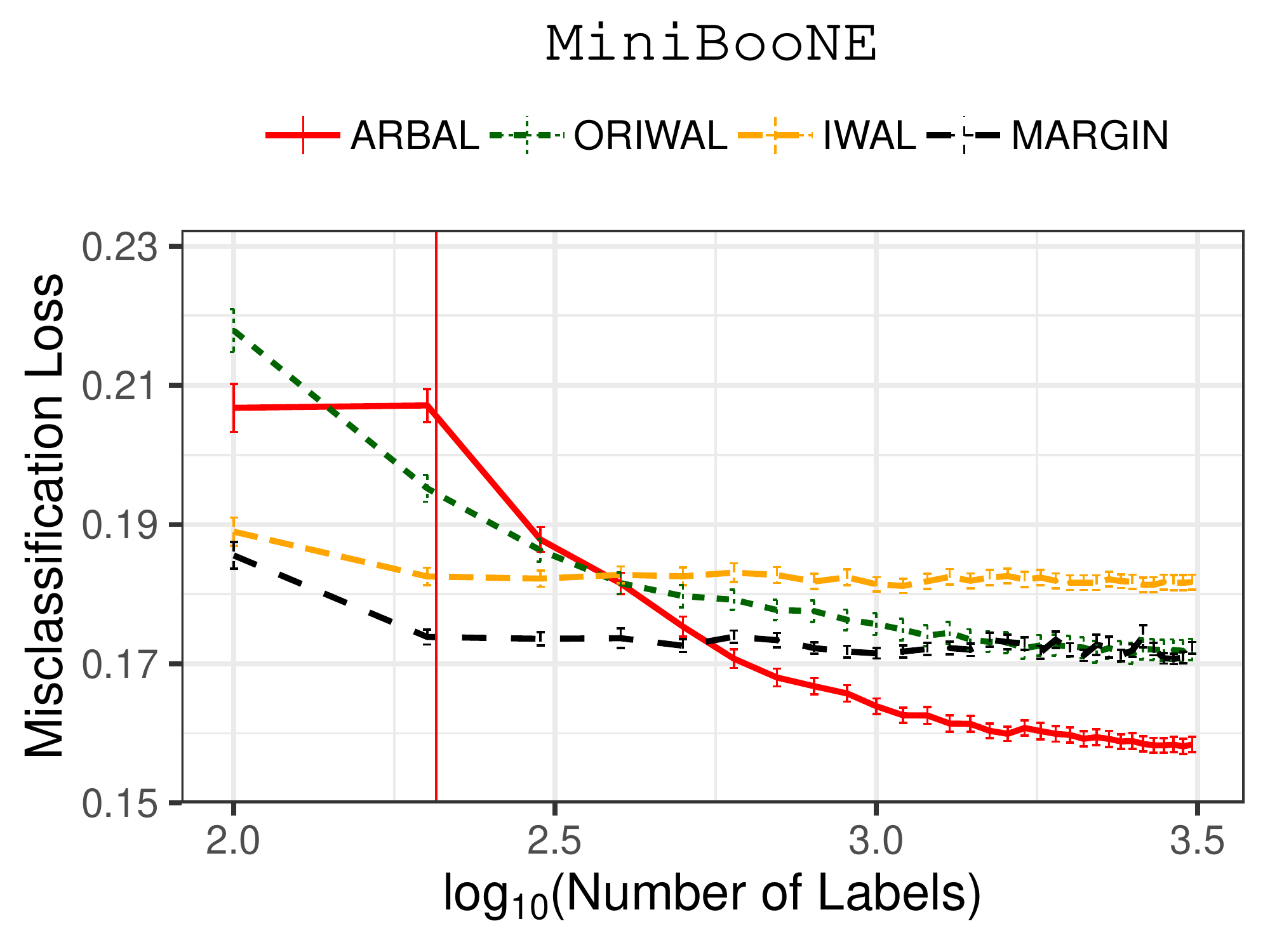}}
\subfigure{\centering\includegraphics[width=0.3\textwidth,,trim= 5 10 10 5,clip=true]{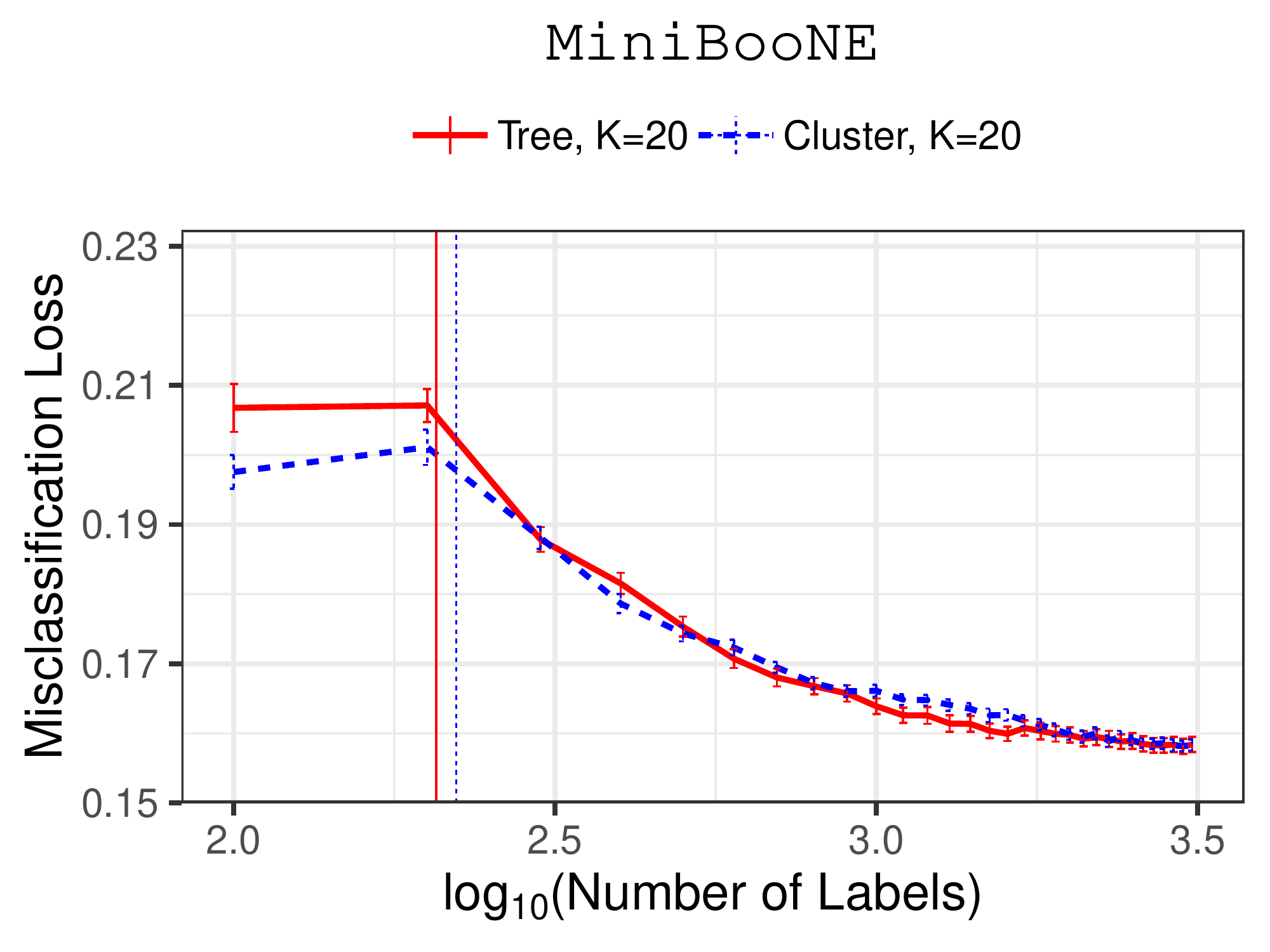}}\\
\subfigure{\centering\includegraphics[width=0.3\textwidth,,trim= 5 10 10 5,clip=true]{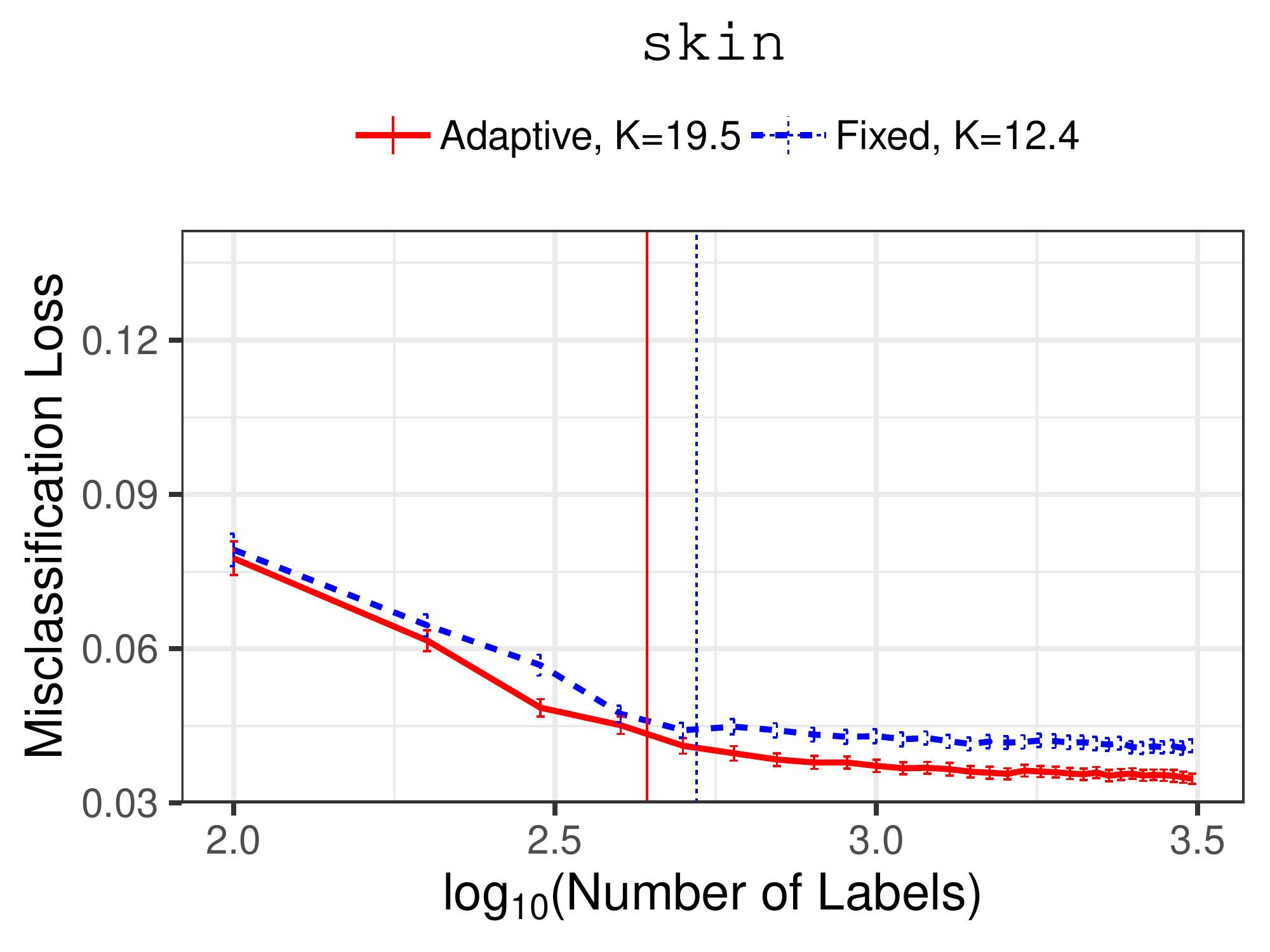}}
\subfigure{\centering\includegraphics[width=0.3\textwidth,,trim= 5 10 10 5,clip=true]{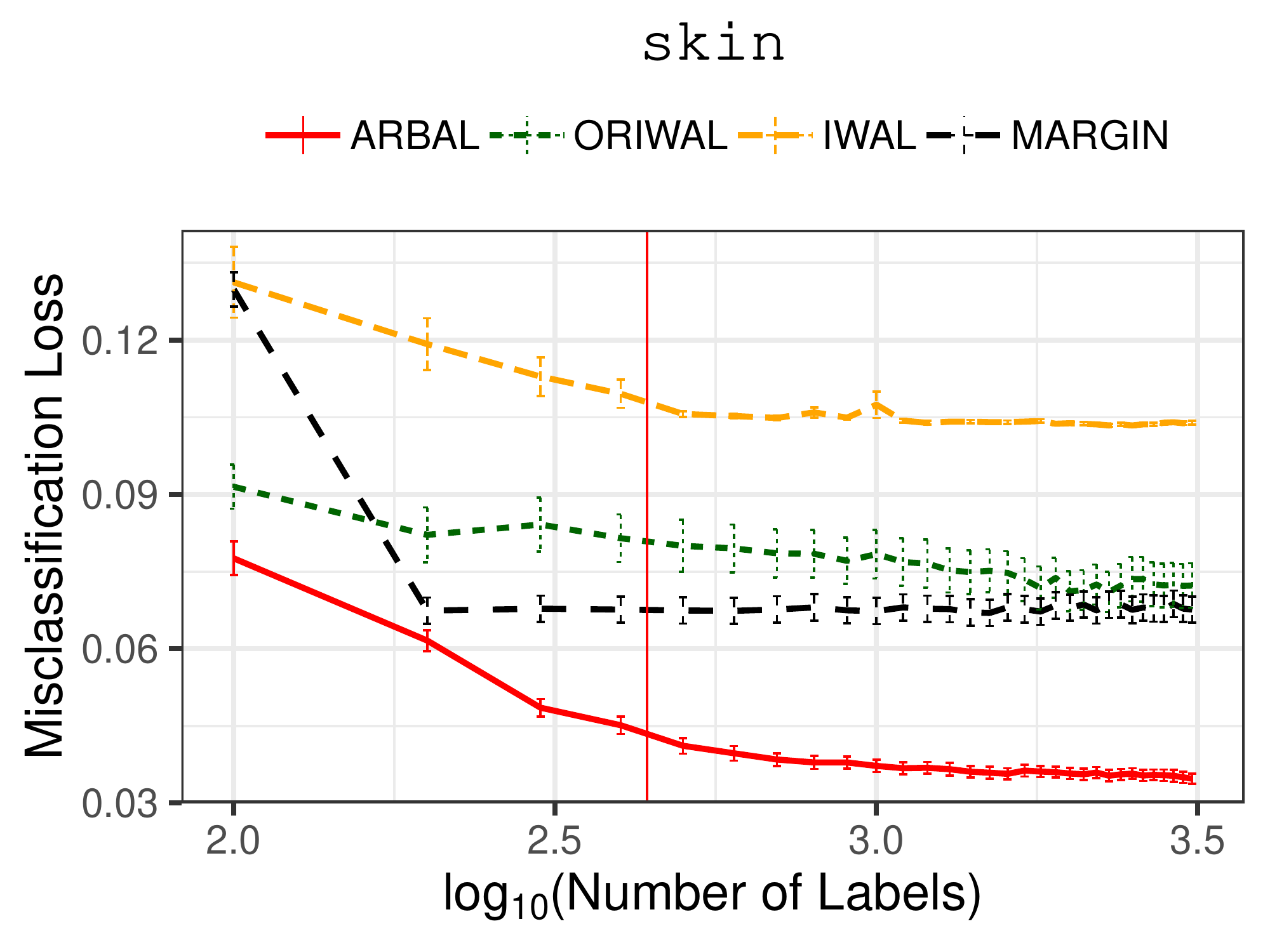}}
\subfigure{\centering\includegraphics[width=0.3\textwidth,,trim= 5 10 10 5,clip=true]{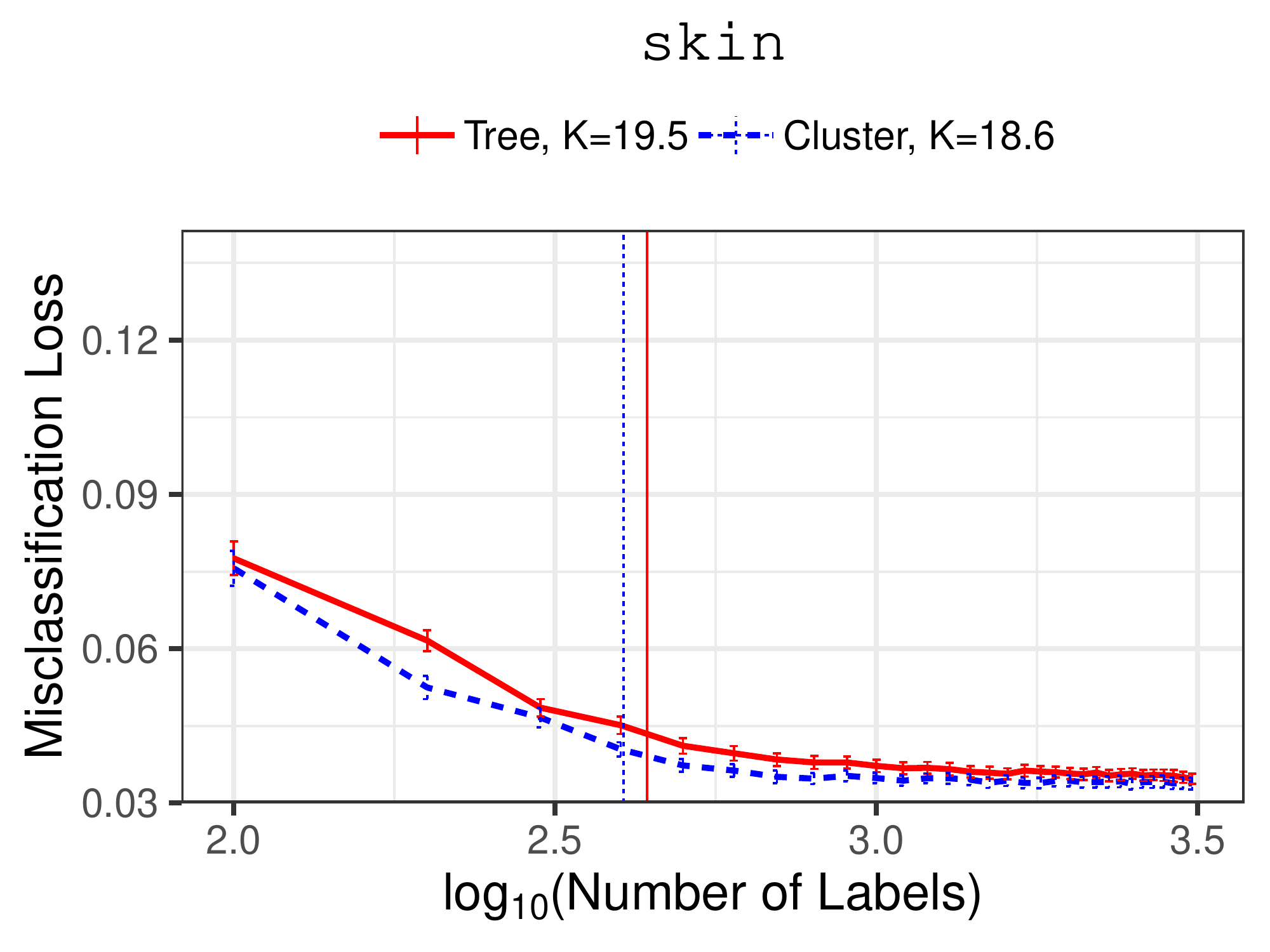}}\\
\subfigure{\centering\includegraphics[width=0.3\textwidth,,trim= 5 10 10 5,clip=true]{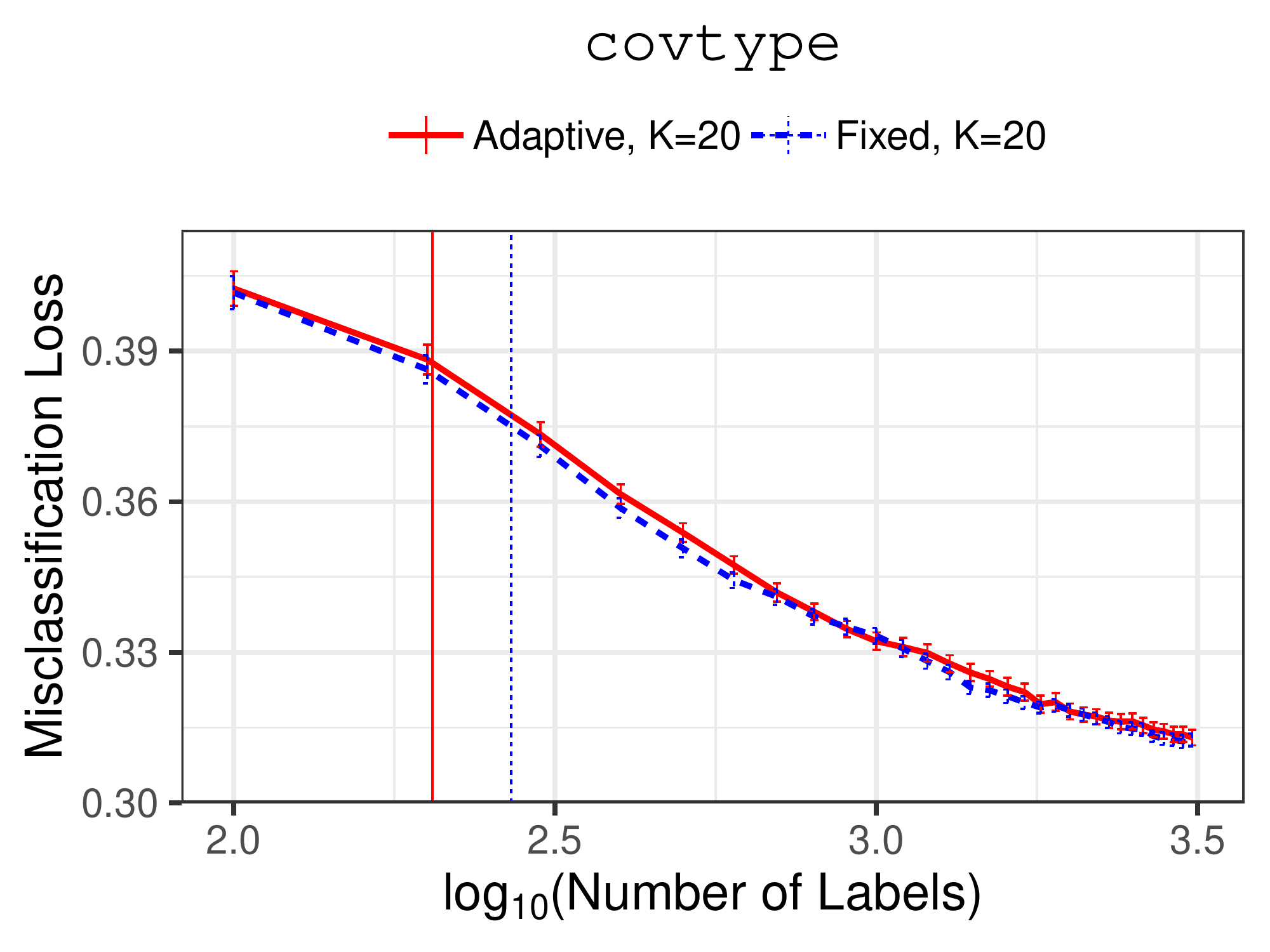}}
\subfigure{\centering\includegraphics[width=0.3\textwidth,,trim= 5 10 10 5,clip=true]{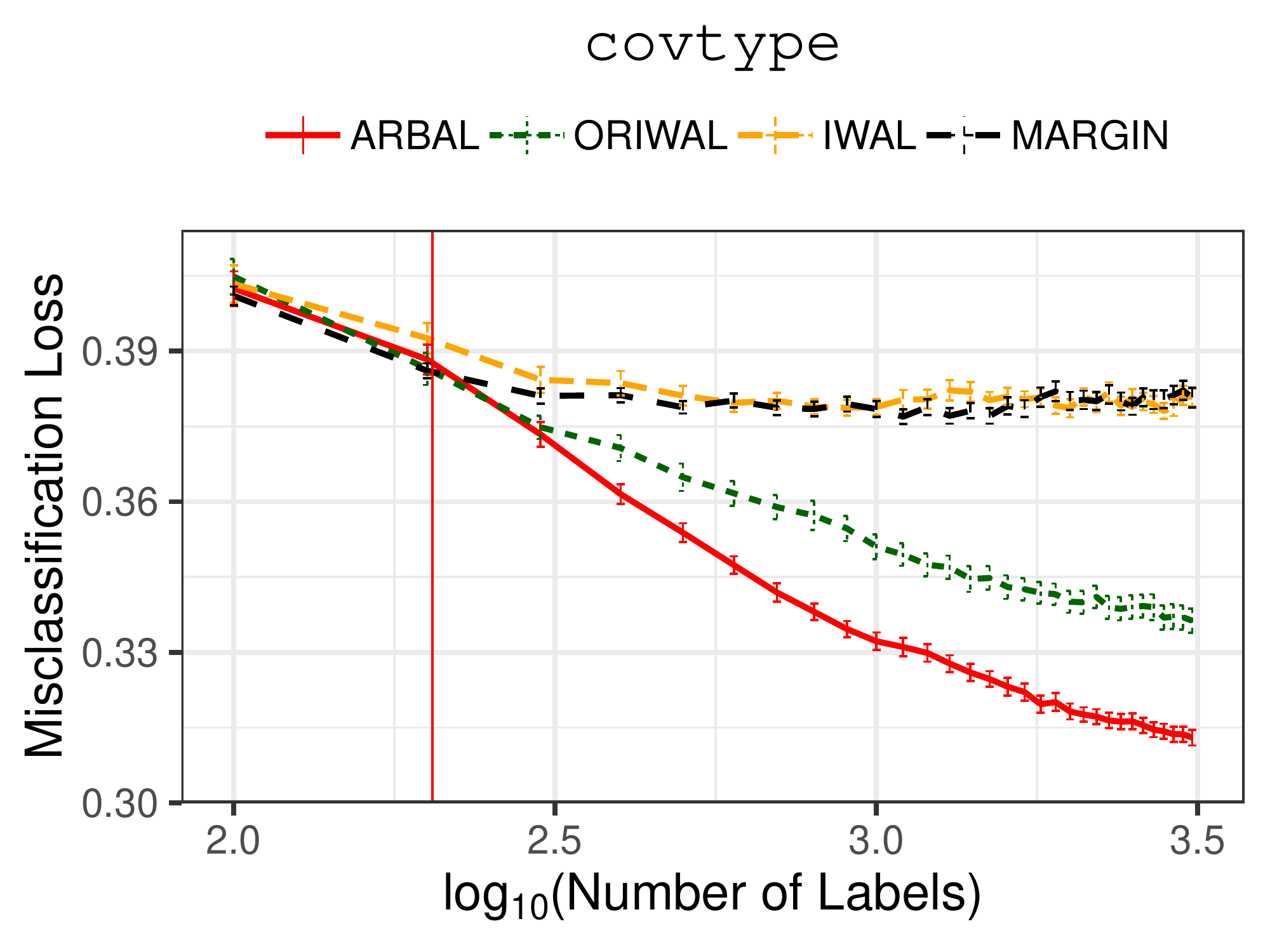}}
\subfigure{\centering\includegraphics[width=0.3\textwidth,,trim= 5 10 10 5,clip=true]{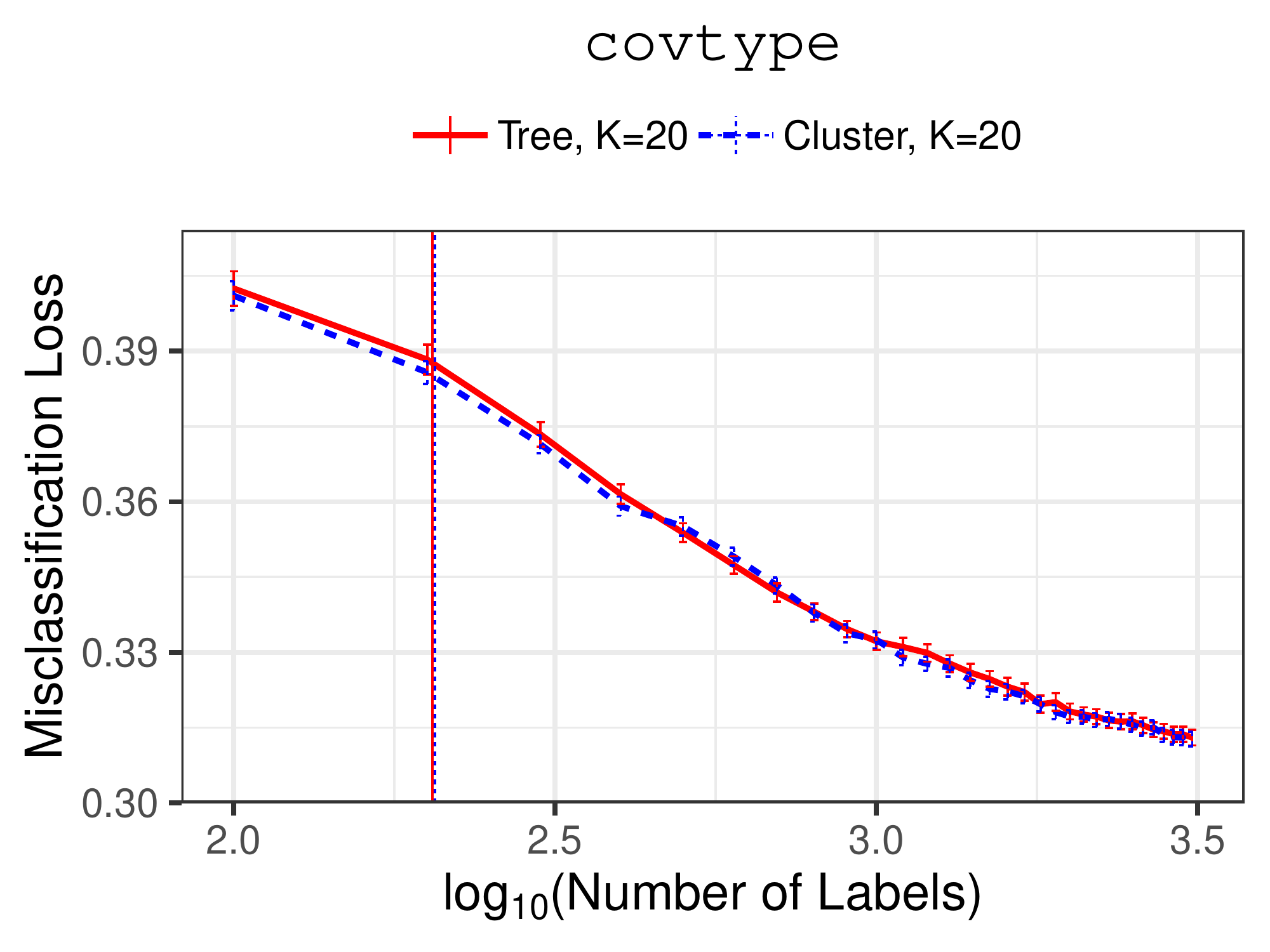}}\\
\end{center}
\caption{Misclassification loss on hold out test data versus number of labels requested ($\log_{10}$ scale). 
Left: \arbal\ with fixed and adaptive threshold $\gamma$.
Middle: \arbal, \riwal, \iwal, and \margin.
Right: \arbal\ with different partitioning methods: binary tree and hierarchical clustering.
For $\kappa=20$ and $\tau=800$,
dataset \texttt{\small{runorwalk}}, \texttt{\small{higgs}}, \texttt{\small{MiniBooNE}}, \texttt{\small{skin}}, \texttt{\small{covtype}}.
For left and right plots, we give the average number of resulting regions $K$ in the legend. The vertical lines indicate
when \arbal\ transits from the first to the second phase.}
\label{fig:expmis_5}
\vskip -0.2in
\end{figure*}
\end{document}